\def\ArxivVersion{1}
\def\BlackTextVersion{1}
\ifdefined\ArxivVersion
\documentclass[10pt,a4paper]{article}
\usepackage[margin=25mm]{geometry}
\usepackage[T1]{fontenc}
\usepackage{lmodern}
\usepackage[skip=4pt plus 1pt minus 1pt,indent=0pt]{parskip}
\AddToHook{env/abstract/begin}{%

}
\usepackage{microtype}
\usepackage[authoryear,round]{natbib}
\setcitestyle{citesep={;},aysep={,},yysep={;}}
\else
  \documentclass{article} 
  \usepackage{iclr2027_conference,times}
\fi
\usepackage{booktabs}
\usepackage{amssymb}
\usepackage{tabularx}
\usepackage{placeins}
\usepackage{tikz}
\usetikzlibrary{arrows.meta,calc,fit,positioning}

\usepackage{amsmath,amsfonts,bm}

\def\eqref#1{equation~\ref{#1}}
\def\Eqref#1{Equation~\ref{#1}}

\def\1{\bm{1}}

\DeclareMathAlphabet{\mathsfit}{\encodingdefault}{\sfdefault}{m}{sl}
\SetMathAlphabet{\mathsfit}{bold}{\encodingdefault}{\sfdefault}{bx}{n}

\newcommand{\KL}{D_{\mathrm{KL}}}

\newcommand{\MASK}{\mathsf{MASK}}
\newcommand{\dTV}{d_{\mathrm{TV}}}
\newcommand{\sqHellinger}{h^2}
\newcommand{\HellingerAffinity}{\operatorname{Aff}}
\newcommand{\KLDivergence}[2]{%
  \KL\!\left(#1\middle\Vert #2\right)}
\newcommand{\ChiSquareDivergence}[2]{%
  \chi^2\!\left(#1\middle\Vert #2\right)}
\newcommand{\PositivePart}[1]{\left[#1\right]_+}

\newcommand{\PositionSet}{[N]}
\newcommand{\Vocab}{\mathcal V}
\newcommand{\VocabSize}{V}
\newcommand{\TargetLaw}{P}
\newcommand{\InstanceTargetLaw}[1]{P^{#1}}
\newcommand{\TargetForest}{F}
\newcommand{\TargetEdges}{E}
\newcommand{\TargetRoots}{\operatorname{Roots}(\TargetForest)}
\newcommand{\TokenOrder}{\prec}
\newcommand{\ParentOf}[1]{\operatorname{pa}(#1)}

\newcommand{\PosMarginal}[1]{\pi_{#1}}
\newcommand{\EndpointKernel}[2]{K_{#1\to #2}}
\newcommand{\BoundaryRow}[4]{\mu_{#1\to #2}(\cdot\mid #3,#4)}
\newcommand{\TailSet}[2]{\mathcal T_{#1}(#2)}
\newcommand{\DirectedResponse}[3]{\Omega_{#1\to #2}(#3)}

\newcommand{\ResponseConstant}{L}
\newcommand{\ResponseExponent}{\alpha}
\newcommand{\EdgeSignal}{\omega}
\newcommand{\FrequencyConstant}{C}

\newcommand{\VocabGrowthExponent}{\nu}
\newcommand{\RankTailExponent}{s}

\newcommand{\RankedToken}[2]{a_{#1,#2}}

\newcommand{\FrozenOracle}{q}
\newcommand{\ExactRow}[2]{\mu_{#2}(\cdot\mid #1)}
\newcommand{\ExactRowMass}[3]{\mu_{#2}(#3\mid #1)}
\newcommand{\OracleRow}[2]{q_{#2}(\cdot\mid #1)}
\newcommand{\OracleRowMass}[3]{q_{#2}(#3\mid #1)}
\newcommand{\InstanceOracleRow}[3]{q_{#3}^{#1}(\cdot\mid #2)}
\newcommand{\InstanceOracleRowMass}[4]{q_{#3}^{#1}(#4\mid #2)}
\newcommand{\OracleClass}{\mathcal Q^{\mathrm{Hel}}}
\newcommand{\OracleRadius}{\varepsilon^{\mathrm H}}
\newcommand{\HellingerCase}{case~(i)}
\newcommand{\HellingerCaseTitle}{Case~(i)}

\newcommand{\MaskSet}[1]{M(#1)}
\newcommand{\ObservedSet}[1]{O(#1)}
\newcommand{\AllMaskState}{y^\perp}
\newcommand{\HistoryAt}[1]{H_{#1}}
\newcommand{\History}{\HistoryAt{G}}
\newcommand{\ResidualPositionsAt}[1]{U_{#1}}
\newcommand{\ResidualPositions}{\ResidualPositionsAt{G}}
\newcommand{\ResidualForestAt}[1]{F_{#1}}
\newcommand{\ResidualForest}{\ResidualForestAt{G}}

\newcommand{\OpVisibleHistory}[1]{h^{\mathrm{op}}_{#1-}}
\newcommand{\OpSubmittedState}[1]{y^{\mathrm{op}}_{#1}}
\newcommand{\OpVisibleObservation}[1]{o^{\mathrm{vis}}_{#1}}

\newcommand{\ControllerSeed}{W}
\newcommand{\SamplerOutput}{\widehat X}
\newcommand{\SamplerOutputLaw}[2]{\widehat P_{#1}^{#2}}
\newcommand{\SafeSamplerOutputLaw}[2]{\widehat P_{#1}^{\mathrm{safe},#2}}
\newcommand{\ExactCommitOutputLaw}[1]{P_{#1}^{\otimes}}
\newcommand{\ExactBatchLaw}[1]{P^{\otimes}(\cdot\mid #1)}
\newcommand{\ExactBatchMass}[2]{P^{\otimes}(#2\mid #1)}
\newcommand{\OracleBatchLaw}[1]{\widehat P^{\otimes}(\cdot\mid #1)}
\newcommand{\OracleBatchMass}[2]{\widehat P^{\otimes}(#2\mid #1)}
\newcommand{\ExactSuffixLaw}[1]{P^{\mathrm{suf}}(\cdot\mid #1)}
\newcommand{\ExactSuffixMass}[2]{P^{\mathrm{suf}}(#2\mid #1)}
\newcommand{\OracleSuffixLaw}[1]{\widehat P^{\mathrm{suf}}(\cdot\mid #1)}
\newcommand{\OracleSuffixMass}[2]{\widehat P^{\mathrm{suf}}(#2\mid #1)}
\newcommand{\SuffixAffinity}[1]{\operatorname{Aff}_{\mathrm{suf}}(#1)}

\newcommand{\InstanceOracle}[1]{q^{#1}}
\newcommand{\SeedRisk}{\mathcal R_{\mathrm{TV}}}
\newcommand{\RobustRisk}{\overline{\mathcal R}_{\mathrm{TV}}}
\newcommand{\TargetTolerance}{\varepsilon}
\newcommand{\SharedClass}{\mathfrak F^{\mathrm{sh}}}
\newcommand{\SharedRisk}{\mathcal R_{\mathrm{TV}}^{\mathrm{sh}}}

\newcommand{\CounterfactualQueries}{Q_{\mathrm{cf}}}
\newcommand{\MarginalQueryCharge}{Q_{\mathrm{marg}}}
\newcommand{\ReportedQueries}{Q_{\mathrm{nc}}}
\newcommand{\CommitRounds}{R}
\newcommand{\QueryBudget}{\overline Q}
\newcommand{\RoundBudget}{\overline R}
\newcommand{\PackedSampler}{\mathcal A_{\mathrm{pack}}}

\newcommand{\RandomizedPackedSampler}{\mathcal A_{\mathrm{pack}}^{\mathrm{rnd}}}

\newcommand{\RowTVError}{\varepsilon_0}

\newcommand{\ScreenResolution}{\delta}
\newcommand{\TailTolerance}{\delta_{\mathrm{tail}}}
\newcommand{\ThresholdFloor}{\underline t}
\newcommand{\StandardThresholdFloor}[1]{4\max\{\RowTVError,#1/\VocabSize\}}
\newcommand{\TailThreshold}{t}

\newcommand{\DegreeCutoff}{d}
\newcommand{\ChunkCount}{J}
\newcommand{\DegreeExponent}{\gamma_{\mathrm{deg}}}

\newcommand{\PreprocessData}{\mathsf{Prep}}
\newcommand{\ThresholdGridDepth}{J^{\mathrm{grid}}}
\newcommand{\ThresholdGrid}{\mathcal G}
\newcommand{\FeasibleThresholdGrid}{\mathcal G^{\mathrm{feas}}}
\newcommand{\NoisyMarginal}[1]{\widetilde\pi_{#1}}
\newcommand{\TokenBank}[1]{\widehat{\mathcal B}_{#1}}
\newcommand{\BankToken}[2]{\widehat a_{#1,#2}}
\newcommand{\BankSize}[1]{\ell_{#1}}
\newcommand{\MaxBankSize}{\ell_{\max}}
\newcommand{\TailRepresentative}[1]{b_{#1}}

\newcommand{\DraftFiller}{f}
\newcommand{\DraftToken}[1]{f_{#1}}
\newcommand{\DraftGenerator}{\mathsf{Draft}}

\newcommand{\TailColumn}{\mathsf T}
\newcommand{\ScreenColumnSet}{\mathcal I^{\mathrm{src}}}
\newcommand{\ScreenColumnCount}{\Lambda}
\newcommand{\PackedColumn}[2]{\varphi_{#1}(#2;\DraftFiller)}

\newcommand{\OracleDepth}{D}

\newcommand{\ReadoutChunkSize}{B^{\mathrm{rd}}}

\newcommand{\ColorCount}{p}
\newcommand{\ColoringCount}{M}
\newcommand{\ColorSet}{\mathcal C^{\mathrm{val}}}
\newcommand{\ColoringIndexSet}{\mathcal I^{\mathrm{col}}}

\newcommand{\ColorHash}[2]{h_{#1}(#2)}

\newcommand{\RandomColorCount}{p_{\mathrm{rnd}}}
\newcommand{\RandomColoringCount}{M_{\mathrm{rnd}}}
\newcommand{\ScreenCallCap}{T_{\mathrm{scr}}}
\newcommand{\ScreenFailureBudget}{\delta^{\mathrm{fail}}}
\newcommand{\HardRoundCap}{\RoundBudget}
\newcommand{\ScreenSuccess}[1]{\mathsf{Iso}(#1)}
\newcommand{\ColorClassAt}[3]{\mathcal C_{#1,#2}(#3)}
\newcommand{\ColorClass}[2]{\ColorClassAt{#1}{#2}{\History}}
\newcommand{\ColorClassSize}[2]{n_{#1,#2}(\History)}
\newcommand{\ReadoutChunkAt}[4]{\mathcal C_{#1,#2,#3}(#4)}
\newcommand{\ReadoutChunk}[3]{\ReadoutChunkAt{#1}{#2}{#3}{\History}}
\newcommand{\ReadoutChunkOf}[2]{\mathcal C_{#1}(#2)}
\newcommand{\ReadoutColorValue}{c^{\mathrm{rd}}}
\newcommand{\SourceColorValue}{c^{\mathrm{src}}}

\newcommand{\ProbeStateAt}[5]{y_{#1,#2,#3}(#4,\DraftFiller,#5)}
\newcommand{\ProbeState}[4]{\ProbeStateAt{#1}{#2}{#3}{\History}{#4}}
\newcommand{\ProbeObservedRow}[4]{q^{\mathrm{scr}}_{#1,#2}(#3\!\to\!#4\mid\History,\DraftFiller)}
\newcommand{\ProbeExactRow}[4]{\mu^{\mathrm{scr}}_{#1,#2}(#3\!\to\!#4\mid\History,\DraftFiller)}
\newcommand{\ObservedRowFamily}[3]{\mathcal Q_{#1}(#2\!\to\!#3\mid\History)}
\newcommand{\ExactRowFamily}[3]{\mathcal P_{#1}(#2\!\to\!#3\mid\History)}
\newcommand{\ScreenDiameter}[3]{D_{#1}(#2,#3\mid\History)}
\newcommand{\ScreenVote}[3]{Y_{#1}(#2,#3\mid\History)}
\newcommand{\CandidateRow}[1]{\widetilde A_{#1}(\History)}
\newcommand{\RowSelector}{\mathsf S}
\newcommand{\ScreenTranscript}{\mathcal T}
\newcommand{\SelectedPackedSampler}{\RandomizedPackedSampler[\RowSelector]}
\newcommand{\ScreenRowsAt}[1]{A(#1)}
\newcommand{\ScreenRowAt}[2]{A_{#2}(#1)}
\newcommand{\ScreenRows}{\ScreenRowsAt{\History}}
\newcommand{\ScreenRow}[1]{\ScreenRowAt{\History}{#1}}

\newcommand{\PackedScreenName}{PackedScreen}
\newcommand{\CommitName}{Commit}
\newcommand{\CommitOperation}{\operatorname{\CommitName}}
\newcommand{\InfeasibleOutcome}{\textsc{Infeasible}}
\newcommand{\ClaimCount}[1]{\operatorname{cl}_{#1}(\History)}
\newcommand{\PeelSet}{\mathcal P(\History)}
\newcommand{\FrozenPeelSet}{\overline{\mathcal P}}
\newcommand{\HighDegreeSetAt}[1]{\mathcal V^{\mathrm{hi}}(#1)}
\newcommand{\HighDegreeSet}{\HighDegreeSetAt{\History}}
\newcommand{\HighDegreeCount}[1]{n_{#1}^{\mathrm{hi}}}
\newcommand{\PeelPhaseCap}{T_{\mathrm{peel}}}
\newcommand{\ChunksPerColoring}[1]{J_{#1}}
\newcommand{\ScreenQueries}{Q_{\mathrm{scr}}(\History)}
\newcommand{\TerminalHistory}{\HistoryAt{\star}}
\newcommand{\TerminalPositions}{\ResidualPositionsAt{\star}}
\newcommand{\CertifiedEdges}{\widehat E_\star}
\newcommand{\CertifiedForest}{\widehat F}

\newcommand{\ComponentCentroid}[1]{v_{#1}^{\mathrm{cen}}}

\newcommand{\HardFamily}{\mathfrak I^{\mathrm{hard}}}
\newcommand{\HardMinimax}{\mathfrak R_{\mathrm{TV}}^{\mathrm{hard}}}
\newcommand{\HardVocabSize}{\VocabSize}
\newcommand{\HardBankSize}{m}
\newcommand{\TriggerMass}{\rho}
\newcommand{\EdgeCoupling}{\eta}

\newcommand{\HardCandidate}[1]{a_{#1}}
\newcommand{\HardBackground}{\mathcal B^{\mathrm{bg}}}
\newcommand{\HardMarginal}{\pi}
\newcommand{\HiddenMatching}{\mathcal M}
\newcommand{\TriggerVector}{\Theta}
\newcommand{\TriggerIndex}[1]{\Theta_{#1}}
\newcommand{\TriggerFeature}[1]{\phi_{#1}}
\newcommand{\TriggerVariance}{\sigma_\rho^2}
\newcommand{\MateInMatching}[1]{\operatorname{mate}_{\HiddenMatching}(#1)}

\newcommand{\HardEdgeLaw}[2]{P_{#1}^{#2}}
\newcommand{\HardIndependentEdgeLaw}[1]{P_{#1}^{0}}
\newcommand{\HardTargetLaw}[1]{P^{#1}}
\newcommand{\HardKernel}[3]{K_{#1\to #2}^{#3}}
\newcommand{\HardOracle}[1]{q^{#1}}
\newcommand{\HardOracleRow}[3]{\InstanceOracleRow{#1}{#2}{#3}}
\newcommand{\HardOracleRowMass}[4]{\InstanceOracleRowMass{#1}{#2}{#3}{#4}}
\newcommand{\HardAlgorithmClass}[2]{\mathfrak A(#1,#2)}

\newcommand{\HardInstancePrior}{\Pi}
\newcommand{\HardOutputLaw}[2]{\widehat P_{#2}^{#1}}
\newcommand{\HardMidpointLaw}[2]{\mathbb M_{#2}^{#1}}
\newcommand{\HardTerminalRoundCount}[3]{T_{#2}^{#1}(#3)}
\newcommand{\HardExactBatchLaw}[2]{P_{#1}^{#2}}
\newcommand{\HardProductBatchLaw}[2]{\widehat P_{#1}^{#2}}
\newcommand{\HardLocalAffinity}[2]{a_{#1}(#2)}
\newcommand{\HardMidpointKernel}[2]{m_{#1}^{#2}}
\newcommand{\HardEdgeDistance}{d_{\mathrm{edge}}}
\newcommand{\HardTest}[2]{\mathsf{Test}_{#2}(#1)}
\newcommand{\HardHit}[2]{\mathsf{Hit}_{#2}(#1)}
\newcommand{\UnresolvedVertices}[1]{\mathcal U_{#1}}
\newcommand{\UnresolvedVertexCount}[1]{n_{#1}}
\newcommand{\ActiveBatch}[1]{B_{#1}^{\circ}}
\newcommand{\ActiveBatchSize}[1]{n_{#1}^{\circ}}
\newcommand{\ActiveRoundCount}{R^{\circ}}
\newcommand{\CollisionCount}[1]{Z_{#1}}
\newcommand{\TotalCollisions}{Z}
\newcommand{\InternalEdgeCount}[1]{Z_{#1}}
\newcommand{\CollisionExponential}[1]{\mathsf M_{#1}}
\newcommand{\GenieDeletions}{D_{\mathrm{gen}}}
\newcommand{\RemainingTriggers}[2]{\mathcal C_{#1,#2}}

\newcommand{\AnalyticHistory}[1]{\mathcal H_{#1-}}
\newcommand{\AnalyticField}[1]{\mathcal F_{#1-}}
\newcommand{\SurvivorState}[1]{\mathsf S_{#1}}

\newcommand{\RetiredRecordValue}{\mathfrak r}

\newcommand{\SurvivorConfigA}{\mathfrak z}

\newcommand{\OpAnalyticHistory}[1]{\mathcal H^{\mathrm{op}}_{#1-}}
\newcommand{\OpAnalyticField}[1]{\mathcal F^{\mathrm{op}}_{#1-}}
\newcommand{\OpUnresolvedMatching}[1]{\mathcal M_{#1}^{\mathrm{unres}}}

\newcommand{\OpUnresolvedVertices}[1]{\mathcal U^{\mathrm{op}}_{#1}}
\newcommand{\OpRemainingTriggers}[2]{\mathcal C^{\mathrm{op}}_{#1,#2}}
\newcommand{\OpSurvivorState}[1]{\mathsf S^{\mathrm{op}}_{#1}}
\newcommand{\OpRetiredRecord}[1]{\mathsf{Ret}_{#1}}
\newcommand{\OpObservation}[1]{\mathsf{Obs}_{#1}}
\newcommand{\OpSurvivorLikelihood}[4]{\ell_{#1,#2}(#3\,;#4)}

\newcommand{\KLRowBudget}{\kappa}
\newcommand{\KLTargetTolerance}{\varepsilon^{\mathrm{KL}}}

\newcommand{\KLOracleRadius}{\sqrt{\KLTargetTolerance/2}}
\newcommand{\KLCase}{case~(ii)}
\newcommand{\KLCaseTitle}{Case~(ii)}
\newcommand{\KLOracleClass}{\mathcal Q^{\mathrm{KL}}}
\newcommand{\KLSeedRisk}{\mathcal R_{\mathrm{KL}}}
\newcommand{\JointReferenceLaw}[1]{\mathbb P_{#1}^{\mathrm{joint}}}
\newcommand{\JointBatchLaw}[1]{p(\cdot\mid #1)}
\newcommand{\ScreenFailureEvent}{\mathsf{Fail}}

\newcommand{\HellingerBenchmarkSignal}{\frac12
  \left(\frac{(\OracleRadius)^2}{N}\right)^{1/3}}
\newcommand{\KLBenchmarkSignal}{\frac12
  \left(\frac{\KLTargetTolerance}{2N}\right)^{1/3}}

\newcommand{\PathOddsA}{r_{\mathsf A}}
\newcommand{\PathOddsB}{r_{\mathsf B}}

\newcommand{\EffectiveTC}{\mathcal D_{\mathrm{eff}}}
\newcommand{\MaskedPairInformation}{\mathcal I}

\usepackage{amsthm}
\usepackage{algorithm}
\usepackage{algpseudocode}
\usepackage{float}

\makeatletter
\def\theHALG@line{\thealgorithm.\arabic{ALG@line}}
\makeatother

\algrenewcommand\algorithmicrequire{\textbf{Input:}}
\algrenewcommand\algorithmicensure{\textbf{Output:}}

\usepackage{hyperref}
\usepackage[capitalise,noabbrev,nameinlink]{cleveref}
\usepackage{amsthm}
\usepackage{needspace}

\theoremstyle{plain}
\newtheorem{theorem}{Theorem}[section]
\newtheorem{lemma}[theorem]{Lemma}
\newtheorem*{restatedlemmabase}{Lemma~\restatedlemmanumber}

\newtheorem{proposition}[theorem]{Proposition}
\newcommand{\AppLowDegreeScreenRef}{%
  \cref{app:lem:low-degree-screen}%
}
\newtheorem{corollary}[theorem]{Corollary}

\theoremstyle{definition}
\newtheorem{definition}[theorem]{Definition}
\newtheorem*{definition*}{Definition}
\newtheorem{assumption}[theorem]{Assumption}

\theoremstyle{remark}
\newtheorem{remark}[theorem]{Remark}

\crefname{assumption}{Assumption}{Assumptions}
\Crefname{assumption}{Assumption}{Assumptions}
\usepackage{url}

\theoremstyle{remark}
\newtheorem*{proposalcontextbase}{Remark}
\newenvironment{proposalcontext}[1]
  {\begin{proposalcontextbase}[#1]}
  {\end{proposalcontextbase}}

\theoremstyle{definition}
\newtheorem{example}{Example}[section]
\newtheorem*{example*}{Example}
\crefname{example}{Example}{Examples}
\Crefname{example}{Example}{Examples}

\newcommand{\DeclareAlgoBlock}[3]{%
  \expandafter\newcommand\csname AlgoBlockCode@#1\endcsname{#2}%
  \expandafter\newcommand\csname AlgoBlockName@#1\endcsname{#3}}
\DeclareAlgoBlock{preprocess}{1}{Preprocess \& initialize}
\DeclareAlgoBlock{draft}{2}{Draft \& color}
\DeclareAlgoBlock{screen}{3}{Packed screen}
\DeclareAlgoBlock{peel}{4}{Claim \& peel}
\DeclareAlgoBlock{terminal}{5}{Terminal graph}
\DeclareAlgoBlock{centroid}{6}{Centroid commits}
\DeclareAlgoBlock{chunks}{3.1}{Readout chunks}
\DeclareAlgoBlock{probes}{3.2}{Packed probes}
\DeclareAlgoBlock{aggregate}{3.3}{Votes \& row selection}

\newcommand{\AlgoStepPrefix}{Step}
\newcommand{\AlgoBlockCode}[1]{\csname AlgoBlockCode@#1\endcsname}
\newcommand{\AlgoBlockName}[1]{\csname AlgoBlockName@#1\endcsname}
\makeatletter
\newcommand{\AlgoBlockHeading}[1]{%
  \ALG@closeloops%
  \hspace*{\ALG@tlm}%
  \textbf{\AlgoStepPrefix~\AlgoBlockCode{#1}.\ \AlgoBlockName{#1}}}
\makeatother
\newcommand{\AlgoBlockTarget}[1]{%
  \hypertarget{algo-block-#1}{\AlgoBlockHeading{#1}}}
\DeclareRobustCommand{\AlgoBlockRef}[1]{%
  \hyperlink{algo-block-#1}{\AlgoStepPrefix~\AlgoBlockCode{#1}}}
\DeclareRobustCommand{\AlgoBlockNamedRef}[1]{%
  \hyperlink{algo-block-#1}{\AlgoStepPrefix~\AlgoBlockCode{#1} (\AlgoBlockName{#1})}}

\newcommand{\AlgoGuardName}{Round-cap fallback}
\DeclareRobustCommand{\AlgoGuardRef}{\hyperlink{algo-guard}{\textsc{Guard}}}
\newcommand{\AlgoGuardTarget}{%
  \hypertarget{algo-guard}{\emph{\textup{\textsc{Guard}}: \AlgoGuardName}}}

\ifdefined\BlackTextVersion
\NewCommandCopy\ManuscriptOriginalColor\color
\ExplSyntaxOn
\RenewDocumentCommand \color { o m }
  {
    \IfNoValueTF {#1}
      {
        \str_case:nnF {#2}
          {
            {red}  {\ManuscriptOriginalColor{black}}
            {blue} {\ManuscriptOriginalColor{black}}
          }
          {\ManuscriptOriginalColor{#2}}
      }
      {\ManuscriptOriginalColor[#1]{#2}}
  }
\ExplSyntaxOff
\AddToHook{env/tikzpicture/begin}{\let\color\ManuscriptOriginalColor}
\hypersetup{hidelinks}

\fi

\newif\ifincludeexponents
\includeexponentsfalse
\ifdefined\WithExponentSupplement
  \includeexponentstrue
\fi

\title{Counterfactual Probing for Parallel\\
Unmasking with Hidden Forest Structure}
\ifdefined\ArxivVersion
  \author{%
    Ryotaro Kawata\textsuperscript{1}\thanks{\href{mailto:kawata-ryotaro725@g.ecc.u-tokyo.ac.jp}{\nolinkurl{kawata-ryotaro725@g.ecc.u-tokyo.ac.jp}}}\qquad
    Satoshi Hayakawa\textsuperscript{1}\thanks{\href{mailto:hayakawa@mist.i.u-tokyo.ac.jp}{\nolinkurl{hayakawa@mist.i.u-tokyo.ac.jp}}}\qquad
    Taiji Suzuki\textsuperscript{1,2}\thanks{\href{mailto:taiji@mist.i.u-tokyo.ac.jp}{\nolinkurl{taiji@mist.i.u-tokyo.ac.jp}}}\\[0.5em]
    \normalsize\textsuperscript{1}The University of Tokyo\\
    \normalsize\textsuperscript{2}RIKEN AIP
  }
  \date{}
  \hypersetup{
    pdftitle={Counterfactual Probing for Parallel Unmasking with Hidden Forest Structure},
    pdfauthor={Ryotaro Kawata, Satoshi Hayakawa, Taiji Suzuki}
  }
\else
  \author{Anonymous Authors}
\fi

\begin{document}

\maketitle

\begin{abstract}
Masked generative models offer parallel token prediction, but accurate parallel
sampling must account for dependencies among tokens. When dependencies
are unknown, finding safe batches also costs model evaluations.
We study whether total evaluations, including discovery, can be sublinear
in sequence length $N$; sublinear sequential depth then follows.
We consider discrete distributions
with hidden forest structure, accessed through a fixed approximate conditional
oracle. Under explicit regularity conditions and uniform Hellinger error bounds,
for any fixed target accuracy $\TargetTolerance\in(0,1/8]$ and sufficiently
large $N$, our sampler achieves seed-averaged total-variation error at most
$\TargetTolerance$, with total masked-state submissions and sequential depth
both bounded by $\widetilde O(N^C\TargetTolerance^{-a})$ for constants
$0<C<1$ and $a>0$. These guarantees use polynomial vocabulary size and an
edge-response lower bound set by $N$ and $\TargetTolerance$.
The sampler shares evaluations of hypothetical reveals across dependence
tests to identify safe parallel batches without requiring full recovery
of the hidden forest. A tunable
parameter trades probing cost against irreversible commit rounds. In the same
class, any admissible irreversible product-commit sampler attaining
the same seed-averaged accuracy requires $\Omega(N^c\TargetTolerance^b)$ counterfactual submissions
or commit rounds in the worst case, for constants $c,b>0$.
\end{abstract}

\section{Introduction}
\label{sec:introduction}

Masked generative models predict missing tokens from a partially revealed
sequence, allowing many positions to be processed in one model
evaluation~\citep{austin2021structured,sahoo2024simple,nie2025large}.
Turning these predictions into a joint sample is less straightforward.
Drawing several tokens independently can miss dependencies between them.
With exact conditionals, revealing one token at a time samples exactly by
the chain rule, in any fixed order, but uses $N$ model evaluations in $N$
sequential steps for a sequence of length $N$.
The question is whether parallel unmasking can reduce this total cost.

Choosing which tokens to reveal together is itself part of the problem.
Confidence, entropy, and adaptive ordering guide practical
samplers~\textcolor{red}{\citep{chang2022maskgit,benhamu2025accelerated,kim2025train,hayakawa2026demystifying}}.
Yet the singleton predictions at one masked state do not, in general,
determine the dependence among its masked positions.
Evaluating hypothetical reveals can supply additional information:
\textcolor{red}{existing methods compare candidate continuations or test contextual
independence across positions
\citep{lee2025lookahead,azangulov2025parallel}.}
\textcolor{red}{These evaluations also contribute to the sampling cost.}
These methods do not establish a sublinear total-evaluation bound at fixed
sampling accuracy. We therefore ask:
\begin{quote}
\emph{Can a sampler discover enough dependence structure and generate an
accurate sample with $o(N)$ model evaluations in total, at fixed accuracy?}
\end{quote}

\textcolor{red}{To study this question, we consider distributions that factorize
over hidden forests, classical models of discrete dependence
\citep{chowliu1968dependence,tan2011markovforest}.
For a known forest, exact conditionals allow exact sampling in
$O(\log N)$ rounds by revealing one centroid per residual component.
Our goal is to exploit this conditional separation while accounting for
the cost of discovering safe batches.}

\textbf{\textcolor{red}{Key strategy: shared counterfactual probing.}}
\textcolor{red}{We compare singleton predictions from a fixed approximate
conditional oracle under hypothetical token fillings, without committing
the tested values
(Figure~\ref{fig:intro-counterfactual-20260920}).
Sharing evaluations across many dependence tests reduces discovery cost.
The resulting local structure guides generation, progressively simplifying
the remaining forest. To generate tokens, we independently sample a batch
from the oracle's singleton conditionals and commit it permanently.
We count evaluations used for both probing and generation.}

\begin{figure}[!t]
\centering
\includegraphics[width=\linewidth]{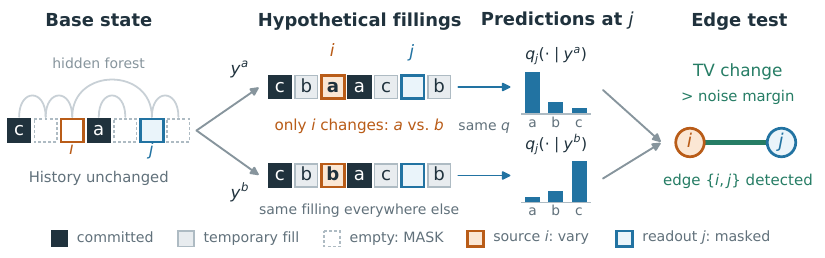}
\caption{Counterfactual edge probing; faint arcs show the hidden forest.
 Keeping the committed state unchanged,
 vary only source $i$ between two hypothetical fillings and compare the
 predicted rows at masked readout $j$. With the same fully filled background,
 a response above the oracle-noise margin detects $\{i,j\}$.
 Section~\ref{sec:main-packed-screen} packs and aggregates such tests.}
\label{fig:intro-counterfactual-20260920}
\end{figure}

\textbf{Main results, informally.}
\textcolor{red}{\emph{Under the target and oracle assumptions in
Section~\ref{sec:main-setup}, for any fixed sufficiently small accuracy
$\TargetTolerance>0$ and all sufficiently large $N$, there is a sampler
with seed-averaged total-variation (TV) error at most $\TargetTolerance$
using $\widetilde O(N^{2/3+1/(9\RankTailExponent)})$ evaluations in total,
which also gives sublinear sequential depth.
Here $\RankTailExponent>1$ is the rank-frequency exponent.
Conversely, every admissible irreversible product-commit sampler meeting
the same accuracy target uniformly over the class requires worst-case
budgets of $\Omega(N^c)$ probes or $\Omega(N^c)$ commit rounds for some
$c>0$ (Theorems~\ref{main:thm:upper-curve}--\ref{main:thm:lower-envelope}).}}

\textcolor{red}{The model assumes detectable edge responses, controlled tail responses,
a rank-frequency envelope, and polynomial vocabulary.
Its uniform oracle accuracy also suffices
for the $N$-step baseline to meet the output target
(Appendix~\ref{app:comparison-unmasking-errors}).}

Existing distribution-general samplers achieve smaller depth exponents
with sufficient total budgets of $O(N)$ or $O(N\log N)$
\citep{anari2024parallel,anari2026autospeculation}.
Dependence-adaptive guarantees can be sublinear on weakly dependent
instances \citep{li2025convergence,zhao2026adaptation}.
\textcolor{red}{Our guarantee also covers paths with constant dependence per
edge and total correlation $\Theta(N)$
(Proposition~\ref{app:prop:path-in-class}; Section~\ref{sec:related-work}).}

\textcolor{red}{Section~\ref{sec:main-algorithm} combines random-color probes,
majority certification of low-degree neighborhoods, singleton peeling of the
high-degree core, and parallel centroid commits, without first recovering
the original forest.}
A public degree cutoff trades probing work against commit rounds.

\section{Problem formulation}
\label{sec:main-setup}

\textbf{Public and hidden information.}
\textcolor{red}{The vocabulary and its order, class parameters, accuracy target,
and algorithm parameters are public: known to the sampler and fixed before
the hidden instance is chosen.}
The target, forest, exact marginals, ranks, and response witnesses are hidden.
For the main results, fix $0<\TargetTolerance\le1/8$ before taking
$N\to\infty$; we suppress dependence on $N$.
The appendix retains the general finite setup.

For laws $p,q$ on a finite alphabet, write
$\dTV(p,q)=\frac12\sum_a|p(a)-q(a)|$,
$\sqHellinger(p,q)=1-\sum_a\sqrt{p(a)q(a)}$, and
$\KLDivergence{p}{q}=\sum_a p(a)\log(p(a)/q(a))$ (natural logarithms).
Write $\Delta^\circ$ for the strictly positive probability simplex.

\subsection{Hidden forest and regularity}
\label{sec:main-target-law}
\label{sec:main-response}

\begin{definition}[Forest target and committed history]
\label{main:def:forest-target}
Let $\PositionSet=\{1,\ldots,N\}$ and let $\Vocab$ be a vocabulary of
size $\VocabSize$ with public order $\TokenOrder$.
The target $X\in\Vocab^N$ has a strictly positive law
$\TargetLaw\in\Delta^\circ(\Vocab^N)$ and a hidden undirected forest
$\TargetForest=(\PositionSet,\TargetEdges)$\textcolor{red}{~\citep{tan2011markovforest,bhattacharyya2023chowliu}}.
Write $\PosMarginal{i}$ for the marginal of $X_i$.
A committed history $\History=(G,x_G)$ leaves residual positions
$\ResidualPositions=\PositionSet\setminus G$ and forest
$\ResidualForest=\TargetForest[\ResidualPositions]$
(\cref{app:def:target-law,app:eq:history}).
\end{definition}

\begin{assumption}[Forest factorization]
\label{main:ass:forest-structure}
Choose one root per component, with root set $\TargetRoots$ and parent
$\ParentOf{j}$ for each nonroot. Assume the factorization
(\cref{app:ass:forest-structure}):
\begin{equation}
\TargetLaw(x)
 =\prod_{u\in\TargetRoots}\PosMarginal{u}(x_u)
  \prod_{j\notin\TargetRoots}
  \TargetLaw(X_j=x_j\mid X_{\ParentOf{j}}=x_{\ParentOf{j}}).
\label{main:eq:forest-factorization}
\end{equation}
\end{assumption}

\begin{definition}[Tail sets and directed response]
\label{main:def:response-quantities}
\textcolor{red}{For distinct positions $i,j$, call $i$ the \emph{source}
whose token value is varied and $j$ the masked \emph{readout} whose
conditional distribution is inspected. For a complete boundary}
$z\in\Vocab^{\PositionSet\setminus\{i,j\}}$, write
$\TailSet{i}{t}=\{a:\PosMarginal{i}(a)\le t\}$ and
$\BoundaryRow{i}{j}{a}{z}
 =\mathcal L_{\TargetLaw}(X_j\mid X_i=a,\,
 X_{\PositionSet\setminus\{i,j\}}=z)$.
The directed response is
$\DirectedResponse{i}{j}{z}
 =\max_{a,b\in\Vocab}\dTV(\BoundaryRow{i}{j}{a}{z},
 \BoundaryRow{i}{j}{b}{z})$ (\cref{app:def:response-quantities}).
\end{definition}

{\color{red}
\begin{assumption}[Regularity and scaling for the main results]
\label{main:ass:regularity-scaling}
\label{main:ass:response-regularity}
Let $\RankTailExponent>1$ and $\EdgeSignal\in(0,1]$ be public.
\begingroup
\setlength{\leftmargini}{2pc}
\begin{itemize}
\item \phantomsection\label[assumption]{main:ass:rt}
\textbf{(RT) Tail response.}
For all distinct $i,j$, $t\in[0,1]$, and complete boundaries $z$,
$\max_{a,b\in\TailSet{i}{t}}
 \dTV\bigl(\BoundaryRow{i}{j}{a}{z},\BoundaryRow{i}{j}{b}{z}\bigr)
 \le t$.
For tails of size at most one, take the maximum as zero.
Tail tokens are thus interchangeable within TV error $t$
(\cref{app:ass:response-regularity}).

\item \phantomsection\label[assumption]{main:ass:uen}
\textbf{(UEN) Edge nondegeneracy.}
Every edge
$\{u,v\}\in\TargetEdges$ and complete boundary $z$ satisfy
$\min\{\DirectedResponse{u}{v}{z},\DirectedResponse{v}{u}{z}\}
 \ge\EdgeSignal$.
The witnessing token pair may depend on the direction and boundary
(\cref{app:ass:response-regularity}).

\item \phantomsection\label[assumption]{main:ass:frequency-envelope}
\textbf{(RF) Rank-frequency envelope.}
Rank tokens $\RankedToken{i}{k}$ by decreasing $\PosMarginal{i}$
(ties use $\TokenOrder$). For every position $i$ and rank $k$,
$\PosMarginal{i}(\RankedToken{i}{k})
 \le k^{-\RankTailExponent}+\VocabSize^{-1}$
(\cref{app:ass:rate-regime}). No lower frequency bound or adjacent-rank
separation is assumed. This bounds tokens above thresholds exceeding
$\VocabSize^{-1}$ and, with RT, explicit token choices
(\cref{sec:main-algorithm-flow}).

\item \phantomsection\label[assumption]{main:ass:main-scaling}
\textbf{Scaling.}
For a public $\VocabGrowthExponent>1/3$, take
$\VocabSize=N^{\VocabGrowthExponent+o(1)}$
and set $\EdgeSignal=(\TargetTolerance^2/(32N))^{1/3}$
(\cref{app:cor:fixed-accuracy-main}).
\end{itemize}
\endgroup
\end{assumption}
}

\subsection{Oracle access and sampling objective}
\label{sec:main-oracle}

One oracle evaluation is one \emph{masked-state submission}; a
\emph{probe} is a noncommit counterfactual submission.
Hypothetical-reveal tests also appear in PUNT~\citep{azangulov2025parallel};
Definition~\ref{main:def:algorithm-risk-resources} charges probes to the sampling budget.

\begin{definition}[Frozen conditional oracle]
\label{main:def:frozen-oracle}
A masked state is $y\in(\Vocab\cup\{\MASK\})^N$ with observed set
$\ObservedSet{y}=\{i:y_i\ne\MASK\}$.
For a masked readout $j\notin\ObservedSet{y}$, its exact row is
$\ExactRow{y}{j}
 =\mathcal L_{\TargetLaw}(X_j\mid
 X_{\ObservedSet{y}}=y_{\ObservedSet{y}})$.
\begingroup\color{red}
The oracle $\FrozenOracle=\{\FrozenOracle_j\}_{j\in\PositionSet}$ is fixed
and deterministic (\cref{app:def:frozen-oracle}). One submission returns
probability vectors for a requested subset $S$ of masked positions:
\[
 (y,S)\ \xrightarrow{\text{one masked-state submission}}\quad
 \bigl\{\OracleRow{y}{j}\bigr\}_{j\in S},
 \qquad S\subseteq\PositionSet\setminus\ObservedSet{y}.
\]
Each row is the full vector
$\OracleRow{y}{j}=(\OracleRowMass{y}{j}{a})_{a\in\Vocab}\in\Delta^\circ(\Vocab)$;
the submission costs one evaluation, regardless of $|S|$.
\endgroup
\end{definition}

\begingroup\color{red}
\begin{definition}[Sampling operations, resources, and risk]
\label{main:def:algorithm-risk-resources}
An admissible algorithm uses $\FrozenOracle$ adaptively without remasking
committed positions (\cref{app:def:admissible-algorithm}).
At $\History=(G,x_G)$, both operations preserve $y_G=x_G$:
\begingroup
\setlength{\leftmargini}{2pc}
\begin{itemize}
\item \emph{Probe.} Submit $(y,S)$ with
$S\subseteq\PositionSet\setminus\ObservedSet{y}$ and inspect the probabilities
$\{\OracleRowMass{y}{j}{a}:j\in S,\ a\in\Vocab\}$ returned together.
The state $y$ may temporarily reveal residual coordinates;
only the transcript changes, while $(G,x_G)$ stays fixed.
\item \emph{Commit.} Choose a nonempty batch $B\subseteq\ResidualPositions$
from prior information, before the reply. Submit $(y,B)$ with all residual
positions masked, receive $\{\OracleRow{y}{j}\}_{j\in B}$, and sample
$z_B\sim\bigotimes_{j\in B}\OracleRow{y}{j}$ to commit permanently.
\end{itemize}
\endgroup
Batches eventually partition $\PositionSet$.
Committing counterfactual values or using verify-and-accept steps is not allowed.

\emph{Resources (\cref{app:def:resources}).}
Let $s=1,2,\ldots$ index the submissions actually performed, and let
$s_{\mathrm{pre}}$ index the distinguished preprocessing all-mask readout
($0$ if absent). Define
\[
\begin{aligned}
 \CounterfactualQueries
 &:=\bigl|\{s:s\ne s_{\mathrm{pre}},\ \text{submission }s\text{ is a probe}\}\bigr|,\\
 \CommitRounds
 &:=\bigl|\{s:\text{submission }s\text{ performs a nonempty product commit}\}\bigr|.
\end{aligned}
\]
The construction's total is $1+\CounterfactualQueries+\CommitRounds$,
including its one initial all-mask readout. Repeated states and parallel
submissions count separately; positions and tests sharing one submission do not.
The oracle depth $\OracleDepth$ counts sequential oracle stages; returned probabilities and
local computation are not charged separately.

\emph{Output law and risk.}
The algorithm's decision rule chooses submissions and batches from available
information and a seed $\ControllerSeed$ independent of the hidden instance
and primitive commit-draw randomness. Conditional on $\ControllerSeed=w$,
the output law $\SamplerOutputLaw{w}{\FrozenOracle}$ integrates the commit draws
(\cref{app:def:output-risk}). Its seed-averaged risks are
$\SeedRisk(\mathcal A;\TargetLaw,\FrozenOracle)
 =\mathbb E_{\ControllerSeed}
   \dTV(\TargetLaw,\SamplerOutputLaw{\ControllerSeed}{\FrozenOracle})$
and $\KLSeedRisk(\mathcal A;\TargetLaw,\FrozenOracle)
 =\mathbb E_{\ControllerSeed}\KLDivergence{\TargetLaw}
   {\SamplerOutputLaw{\ControllerSeed}{\FrozenOracle}}$
(\cref{app:def:kl-oracle-risk}).
\end{definition}
\endgroup

The risks average divergence over decision-rule seeds, before mixing their
output laws. Convexity gives
$\dTV(\TargetLaw,\mathbb E_{\ControllerSeed}
 \SamplerOutputLaw{\ControllerSeed}{\FrozenOracle})
 \le\SeedRisk(\mathcal A;\TargetLaw,\FrozenOracle)$:
the upper bound also controls the mixed output, while the lower bound
concerns the stated seed-averaged criterion.

\begin{assumption}[Uniform frozen-oracle accuracy: two cases]
\label{main:ass:oracle-accuracy}
For the fixed public accuracy $\TargetTolerance$, assume one of the
following bounds uniformly over all valid state--readout pairs.

\emph{\HellingerCaseTitle\ (Hellinger, A2; \cref{app:ass:oracle-accuracy}):}
$\sqHellinger(\ExactRow{y}{j},\OracleRow{y}{j})
 \le\TargetTolerance^2/(8N)$.

\emph{\KLCaseTitle\ (forward KL; \cref{app:def:kl-oracle-risk}):}
$\KLDivergence{\ExactRow{y}{j}}{\OracleRow{y}{j}}
 \le\TargetTolerance^2/(4N)$.
\end{assumption}
\textcolor{red}{Both cases imply row-TV error at most
$\TargetTolerance/(2\sqrt N)$, uniformly over counterfactual and commit states.
This uniform requirement is stronger than average prediction accuracy
of a trained model.}

\begin{definition}[Target--oracle classes]
\label{main:def:shared-rate-class}
\textcolor{red}{For each case, take all target--oracle tuples satisfying
\cref{main:ass:forest-structure,main:ass:regularity-scaling}
and the corresponding condition of \cref{main:ass:oracle-accuracy}.}
For \HellingerCase, denote this class by $\SharedClass$ and write
$(\InstanceTargetLaw{I},\InstanceOracle{I})$ for the target and oracle of
instance $I\in\SharedClass$ (\cref{app:cor:fixed-accuracy-main}).
\end{definition}

\section{Main results}
\label{sec:main-results}

\subsection{Sublinear sampling is possible}
\label{sec:main-upper}

The positive result controls total submissions and sequential depth.
The public algorithm parameter $\DegreeCutoff$ trades discovery submissions
against irreversible commit rounds; target degrees may be arbitrary.

\begin{theorem}[Existence of a sublinear oracle sampler]
\label{main:thm:upper-curve}
For either target--oracle class in Definition~\ref{main:def:shared-rate-class}
and every public integer $9\le\DegreeCutoff\le N-1$, there is an admissible
sampler $\PackedSampler$, depending only on the public class data and
$\DegreeCutoff$, with the following guarantees for all sufficiently large $N$.
Uniformly over the respective target--oracle class, on every execution path,
\begin{equation}
 \CounterfactualQueries
 =\widetilde O\!\left(
   \DegreeCutoff^2(N/\TargetTolerance^2)^{1/(3\RankTailExponent)}
 \right),
 \qquad
 \CommitRounds,\OracleDepth=\widetilde O(N/\DegreeCutoff).
 \label{main:eq:benchmark-upper}
\end{equation}
Here $\widetilde O$ hides powers of $\log N$, with $\TargetTolerance$ fixed.
In \HellingerCase,
$\sup_{I\in\SharedClass}
 \SeedRisk(\PackedSampler;\InstanceTargetLaw{I},\InstanceOracle{I})
 \le\TargetTolerance$.
In \KLCase, uniformly over its class,
$\KLSeedRisk(\PackedSampler;\TargetLaw,\FrozenOracle)\le\TargetTolerance^2$.
\end{theorem}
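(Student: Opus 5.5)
The proof is constructive: build $\PackedSampler$ from the blocks announced in the introduction, then bound resources and error separately.

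\emph{Error.} We reduce the error bound to an exact-conditional comparison. Suppose every committed batch $B$ at history $\History$ is \emph{safe}, meaning it is an independent set of the residual forest $\ResidualForest$. Then Assumption~\ref{main:ass:forest-structure} makes the exact conditional law of $X_B$ given $x_G$ a product of the exact rows. Two facts follow.
\begin{itemize}
\item Sampling with exact rows over safe batches reproduces $\TargetLaw$ exactly.
\item Each oracle row costs at most $\TargetTolerance^2/(8N)$ in squared Hellinger distance. Hellinger affinity is multiplicative over products and the batches partition $\PositionSet$, so chaining gives $\sqHellinger(\TargetLaw,\SamplerOutputLaw{w}{\FrozenOracle})\lesssim N\cdot\TargetTolerance^2/(8N)$. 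With the constants chosen this yields TV at most $\TargetTolerance/2$ on the safe event.
\end{itemize}
For \KLCase, the chain rule for KL along the commit sequence gives the bound $\TargetTolerance^2/4$ directly.

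The remaining error is the probability, over the seed $\ControllerSeed$, that some committed batch is unsafe. It must be bounded by $\TargetTolerance/2$. For KL, the unsafe event must instead be handled by a \AlgoGuardRef-style fallback to singleton commits, which keeps KL finite, together with a small-probability bound.

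\emph{Screen correctness: the main obstacle.} At a residual state, fix a fully filled draft background $\DraftFiller$, a random coloring of sources, and a token bank for each source. The token bank comes from the preprocessing readout: tokens of marginal above a threshold $t$, plus one tail representative. By (RF) it has size $\widetilde O(t^{-1/\RankTailExponent})$. Take $t\approx\ScreenResolution\approx\EdgeSignal/4$, so that (RT) makes tail tokens interchangeable up to less than the margin. A single probe then varies all sources of one color class simultaneously to one bank value, reading out all masked positions of the complementary class at once.

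For a true edge $\{i,j\}$ with $i$ the only colored neighbour of $j$, (UEN) gives response at least $\EdgeSignal$. The Markov property on the forest, given the full background, makes the readout depend only on the tree neighbours of $j$. Oracle noise is at most $\TargetTolerance/(2\sqrt N)$ per row. The scaling $\EdgeSignal=(\TargetTolerance^2/(32N))^{1/3}$ puts this noise well below $\EdgeSignal/4$, so thresholding at $\EdgeSignal/2$ separates edges from nonedges whenever the coloring isolates the neighbourhood.

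Random colorings with $\ColorCount\asymp\DegreeCutoff$ colors isolate a fixed low-degree neighbour set with constant probability. Taking $\widetilde O(\log N)$ colorings and a majority vote, a Chernoff and union bound over $N^2$ pairs and all rounds makes the failure probability $o(1)$, and in particular below $\TargetTolerance/2$ for large $N$.

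The delicate point is that the voted rows must be certified only for vertices of degree at most $\DegreeCutoff$. Higher-degree vertices must be detected, via claim counts exceeding $\DegreeCutoff$, rather than misread. This is where I expect the hardest work, namely exhibiting the diameter/vote statistic $\ScreenDiameter{\cdot}{\cdot}{\cdot}$ that is robust when neighbourhoods are not isolated.

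\emph{Resources.}
\begin{itemize}
\item \emph{Screen calls.} Each screen call uses $\widetilde O(\ColorCount\cdot\ColorCount\cdot\MaxBankSize)$ probes: colorings times classes times bank values. This is $\widetilde O(\DegreeCutoff^2 t^{-1/\RankTailExponent})$, and with $t\asymp\EdgeSignal$ it equals $\widetilde O(\DegreeCutoff^2(N/\TargetTolerance^2)^{1/(3\RankTailExponent)})$.
\item \emph{Number of screen calls.} Only $\widetilde O(1)$ screen calls are needed. After the screen, the high-degree set has at most $2N/\DegreeCutoff$ vertices, since a forest has fewer than $N$ edges. These vertices are peeled by singleton commits, costing $O(N/\DegreeCutoff)$ rounds. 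The low-degree remainder has certified edges forming a known forest.
\item \emph{Centroid commits.} On that known forest, parallel centroid commits take $O(\log N)$ rounds. Each round commits one centroid per component as an independent set.
\end{itemize}
Summing gives $\CommitRounds,\OracleDepth=\widetilde O(N/\DegreeCutoff)$ on every path, since these bounds are enforced by the round caps.
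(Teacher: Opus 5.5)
\textbf{Main gap: identifying and peeling the high-degree core.} Your plan to identify all high-degree vertices from one screen and peel them in a single pass does not work. The sampler never sees degrees, and a readout of degree above $\DegreeCutoff$ has an uncertified row. The only reliable evidence about $v$ is the claims $v\in\ScreenRow{u}$ from its \emph{low-degree} neighbours $u$. A high-degree vertex whose neighbours are all high-degree (for example, an internal vertex two or more levels above the leaves of a complete $(\DegreeCutoff+2)$-ary tree) gets no reliable claim, so one screen cannot see it. With your threshold ``claims $>\DegreeCutoff$'', a path of high-degree vertices each carrying exactly $\DegreeCutoff$ leaves need never be peeled. An empty peel set then does not certify that the core is gone, even on a successful screen, and your terminal graph would be wrong.

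You do not need a vote statistic that is robust for non-isolated readouts. The paper never certifies high-degree rows; it only caps them. What is missing is \emph{iterated} claim-based peeling:
\begin{itemize}
\item cap every returned row at $\DegreeCutoff$ entries, leaving rows of size at most $\DegreeCutoff$ untouched;
\item commit $\PeelSet=\{v:\ClaimCount{v}>\DegreeCutoff/2\}$ as singletons;
\item rescreen at the new history.
\end{itemize}
On a successful screen the cap gives $\sum_v\ClaimCount{v}=\sum_u|\ScreenRow{u}|\le\sum_u d_{\ResidualForest}(u)<2|\ResidualPositions|$. Hence $|\PeelSet|<4N/\DegreeCutoff$ even though high-degree rows are arbitrary. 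An unpeeled vertex of degree above $\DegreeCutoff$ has at most $\DegreeCutoff/2$ low-degree neighbours, so it has more than $\DegreeCutoff/2$ neighbours inside the high-degree subforest. The degree sum of that subforest then shrinks the high-degree count by a factor $4/\DegreeCutoff$ per phase, and an empty peel set certifies that no high-degree vertex remains. After $O(\log N/\log(\DegreeCutoff/8))$ phases, a final screen makes every row exact. Your resource totals survive only through this contraction argument.

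\textbf{Error analysis.} There are three further problems.
\begin{enumerate}
\item Safety must mean at most one vertex per component of $\ResidualForest$, not an independent set. Two nonadjacent vertices joined through a masked residual vertex need not be conditionally independent given $x_G$. Your centroid batches do meet the correct condition.
\item ``TV at most $\TargetTolerance/2$ on the safe event'' is not a statement about the output law. You need a comparison process: an analysis-only rule that follows the sampler until the first failed screen, then commits singletons. This rule is always safe, so adaptive Hellinger composition puts it within $\TargetTolerance/2$ of $\TargetLaw$. It couples to the sampler with disagreement probability at most the screen-failure probability.
\item In case~(ii), no coupling bound transfers to KL. Your singleton fallback cannot be implemented, since the sampler cannot in general detect a failed screen, and it would break the pathwise round cap. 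The paper's cap fallback is one product commit of all remaining positions, and its cost must be charged to the failure probability.
\end{enumerate}

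For the KL case the missing tool is the joint-reference decomposition. Run the same decision rule but draw each batch from its true joint conditional. Then $\KLDivergence{\TargetLaw}{\SamplerOutputLaw{w}{\FrozenOracle}}$ equals the reference expectation of the summed batch total correlations plus the row-KL terms. The row terms total at most $\TargetTolerance^2/4$. The total correlation vanishes on successful paths but is bounded only by $N\log\VocabSize$ otherwise. So the screen-failure probability under the reference process must be $O(\TargetTolerance^2/(N\log\VocabSize))$, which is polynomially small rather than merely $o(1)$. $O(\log N)$ fresh colorings per screen still achieve this.
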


In particular, choosing
$\DegreeCutoff=\lceil(N\EdgeSignal^{1/\RankTailExponent})^{1/3}\rceil$
gives the common balanced bound
\begin{equation}
 1+\CounterfactualQueries+\CommitRounds,\ \OracleDepth
 =\widetilde O\!\left(
 N^{2/3+1/(9\RankTailExponent)}/\TargetTolerance^{2/(9\RankTailExponent)}
 \right)=o(N).
 \label{main:eq:balanced-upper}
\end{equation}
\textcolor{red}{Since $\RankTailExponent>1$, both total submissions and depth
are sublinear at fixed admissible accuracy.
Section~\ref{sec:main-algorithm} gives the construction and
\cref{app:cor:fixed-accuracy-main} the finite calibration and logarithmic factors.
Table~\ref{tab:resource-comparison} compares bounds under the respective
target and oracle assumptions.}
\begin{table}[!htb]
\centering
\caption{Sufficient bounds at fixed positive accuracy and polynomial vocabulary.
Mean: seed-averaged divergence; exp.: expected resources (ours are pathwise).
TC/DTC: total/dual total correlation; $\widehat T$: a supplied bound on either.
Path: Proposition~\ref{app:prop:path-in-class}.
Oracle conditions, accuracy dependence, and markers $a,b$:
Appendices~\ref{app:comparison}--\ref{app:oracle-comparison}.}
\label{tab:resource-comparison}
\setlength{\tabcolsep}{3pt}
\begin{tabularx}{\linewidth}{@{}>{\raggedright\arraybackslash}p{2.4cm}
 >{\raggedright\arraybackslash}X
 >{\raggedright\arraybackslash}p{2.85cm}
 >{\raggedright\arraybackslash}p{1.85cm}
 >{\raggedright\arraybackslash}p{1.45cm}@{}}
\toprule
Sampler / variant & Target / oracle & Submissions & Depth & Output\\
\midrule
Sequential chain rule & Arbitrary / exact & $N$ & $N$ & Exact\\
One product batch & Arbitrary / exact & $1$ & $1$ & \textcolor{red}{No error bound}$^a$\\
\midrule
\citet{anari2024parallel} & Arbitrary / exact
 & $O(N)$, exp. & $\widetilde O(N^{2/3})$ & Exact\\
\citet{anari2026autospeculation} & Arbitrary / noisy$^b$
 & $O(N\log N)$, exp. & $\widetilde O(N^{1/2})$ & TV\\
\citet{li2025convergence}
 & TC/DTC / averaged prediction error
 & $O(1+\mathrm{TC}+\mathrm{DTC})$ (path: $O(N)$) & same & Mean KL\\
\citet{chen2026optimal}
 & Supplied $\widehat T$ / exact
 & $\widetilde O(1+\widehat T)$ (path: $\widetilde O(N)$) & same & Mean KL\\
\midrule
This paper, \HellingerCase & Regular hidden forests / uniform Hellinger
 & $\widetilde O(N^{2/3+1/(9\RankTailExponent)})$ & same & Mean TV\\
This paper, \KLCase & Regular hidden forests / uniform forward KL
 & $\widetilde O(N^{2/3+1/(9\RankTailExponent)})$ & same & Mean KL\\
\bottomrule
\end{tabularx}

\end{table}

\subsection{A query--round lower bound}
\label{sec:main-lower}

The lower bound applies to every admissible sampler in the Hellinger/TV
class $\SharedClass$: each commit samples a product batch from the
current singleton predictions and permanently fixes its tokens.
Accuracy is measured by the seed-averaged TV risk in
Definition~\ref{main:def:algorithm-risk-resources}.

\begin{theorem}[Fixed-accuracy query--round lower bound]
\label{main:thm:lower-envelope}
For all sufficiently large even $N$, every admissible algorithm with
\textcolor{red}{deterministic bounds
$\CounterfactualQueries\le\QueryBudget$ and $\CommitRounds\le\RoundBudget$
on every instance, seed, transcript, and execution path} and
$\sup_{I\in\SharedClass}
 \SeedRisk(\mathcal A;\InstanceTargetLaw{I},\InstanceOracle{I})
 \le\TargetTolerance$ satisfies
\begin{equation}
 \QueryBudget
 =\Omega\!\left(
     (N/\TargetTolerance^2)^{1/(3\RankTailExponent)}
   \right)
 \quad\text{or}\quad
 \RoundBudget
 =\Omega\!\left(
     N^{1/3}\TargetTolerance^{1/3}
   \right).
 \label{main:eq:canonical-lower}
\end{equation}
The constants may depend on $\RankTailExponent$, but not on the hidden
instance or the algorithm. The sufficiently-large-$N$ threshold may
depend on the fixed public accuracy $\TargetTolerance$.
\end{theorem}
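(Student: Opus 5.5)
The proof is a Le Cam/Assouad-type construction inside $\SharedClass$ using the hard-family macros already set up in the paper: a vocabulary of size $\HardVocabSize$, a candidate bank $\HardCandidate{1},\ldots,\HardCandidate{\HardBankSize}$ of ``trigger'' tokens with mass of order $\TriggerMass$, and a hidden perfect matching $\HiddenMatching$ on $\PositionSet$ (this is why $N$ is even).

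\textbf{Construction.} Each matched pair $\{u,\MateInMatching{u}\}$ carries an edge law $\HardEdgeLaw{\cdot}{\cdot}$. The dependence is active only through a hidden trigger index $\TriggerIndex{u}\in[\HardBankSize]$: when $X_u=\HardCandidate{\TriggerIndex{u}}$, the mate's row shifts by $\EdgeCoupling$ in TV. Elsewhere the mate's row is the background marginal. The parameters are chosen so that the instance satisfies every part of \cref{main:ass:regularity-scaling}.
\begin{itemize}
\item UEN holds with $\EdgeSignal=(\TargetTolerance^2/(32N))^{1/3}$, because the witnessing pair may be the trigger token.
\item RT holds because the trigger mass $\TriggerMass$ is at least the response $\EdgeCoupling$, so tail tokens with smaller mass respond less.
\item RF holds with rank $\HardBankSize\approx\TriggerMass^{-1/\RankTailExponent}$. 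The whole bank then sits at ranks where frequency $\TriggerMass$ is allowed, which gives $\HardBankSize\asymp(N/\TargetTolerance^2)^{1/(3\RankTailExponent)}$.
\end{itemize}
The oracle is exact, so \HellingerCase\ holds trivially.

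\textbf{Step 1: commits without knowledge are costly.} Consider a product commit that batches both endpoints of an unresolved matched pair. It incurs TV loss of order at least $\TriggerMass\EdgeCoupling$ relative to the exact pair law, since the product of exact singleton rows misses the coupling. These losses are local and the pairs are independent. Using Hellinger affinities $\HardLocalAffinity{\cdot}{\cdot}$ and tensorization, total loss tensorizes across pairs. A seed-averaged TV of at most $\TargetTolerance$ therefore forces the number $\TotalCollisions$ of unresolved pairs committed jointly to satisfy $\TotalCollisions\TriggerMass\EdgeCoupling\lesssim\TargetTolerance$, up to the usual $\sqrt{\cdot}$ loss from Hellinger-to-TV conversion.

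A counting step then gives the round alternative. With $R$ rounds, the average batch has size $N/R$. Under a uniformly random matching prior $\HardInstancePrior$, the expected number of matched pairs landing inside one batch is at least of order $\sum_r n_r^2/N\ge N/R$ for pairs the algorithm has not resolved. Together with the bound on $\TotalCollisions$, this yields $R\gtrsim N\TriggerMass\EdgeCoupling/\TargetTolerance$. Calibrating $\TriggerMass\EdgeCoupling$ to the scaling gives $R=\Omega(N^{1/3}\TargetTolerance^{1/3})$.

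\textbf{Step 2: probes resolve few pairs.} Resolving a pair means learning that $u$ and $v$ are matched. A probe changes the oracle's reply at a readout $v$ only if some revealed source $u$ is $v$'s mate and is filled with exactly its trigger $\HardCandidate{\TriggerIndex{u}}$. Since $\TriggerIndex{u}$ is uniform over $\HardBankSize$ candidates and hidden, one submission can test only one candidate token per source. The expected number of pairs whose trigger is hit after $\QueryBudget$ probes is therefore at most about $N\QueryBudget/\HardBankSize$.

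To make this rigorous, track an analytic history $\AnalyticHistory{t}$ and survivor state $\SurvivorState{t}$, and argue that the posterior over unresolved triggers $\RemainingTriggers{\cdot}{\cdot}$ stays uniform on the remaining candidates. The tool is a genie that reveals pairs upon hits, with deletions $\GenieDeletions$. If $\QueryBudget\le c\HardBankSize$, at least half the pairs remain unresolved in expectation. Step 1 then applies to them, which yields the stated dichotomy.

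\textbf{Main obstacle.} The hardest part is Step 2's adaptivity argument, coupled with seed-averaging. Commits also reveal information, since sampled tokens may hit triggers, so the analysis must show that commit draws leak at most $O(\TriggerMass)$ per position. The survivor likelihood $\OpSurvivorLikelihood{\cdot}{\cdot}{\cdot}{\cdot}$ must remain exchangeable over the unresolved matching. I would first handle deterministic algorithms against the prior $\HardInstancePrior$, then pass to seeds by averaging, since the risk is seed-averaged and the minimax bound follows from a Bayes bound.
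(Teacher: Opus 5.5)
Your construction uses an exact oracle, and that is a genuine gap. With exact rows, Step~2 is false, and your witness family is in fact easy.

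Fix a matched pair $\{u,v\}$ with trigger $a^\star$ at $u$. The exact readout row with $u$ masked is $\mathcal L(X_v)=\sum_a \pi_u(a)\,\mathcal L(X_v\mid X_u=a)$. Suppose $\mathcal L(X_v\mid X_u=a)$ equaled $\mathcal L(X_v)$ for every nontrigger $a$. Then this identity would force equality at $a^\star$ too, and the pair would be independent. In your model, nontrigger sources give one background row $\mu$ and the trigger gives a shifted row $\nu$. So the masked row $(1-\pi_u(a^\star))\mu+\pi_u(a^\star)\nu$ and the revealed-at-background row $\mu$ differ by $\pi_u(a^\star)\,d_{\mathrm{TV}}(\mu,\nu)>0$. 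An exact oracle exposes this whenever a source is revealed at any background token.

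An algorithm can exploit this directly. Compare the all-mask readout with $2\lceil\log_2 N\rceil$ probes, each revealing the positions whose $k$th index bit is $1$ (resp.\ $0$) at background tokens and reading the other half. This recovers every mate. The algorithm then commits one endpoint of each pair, and afterwards the mates. Both product commits are exact joint conditionals because the pairs are independent. So on your subfamily, zero risk is achievable with $O(\log N)$ probes and two rounds, and neither alternative of the theorem can be derived from it.

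The missing idea is that the dependence must be hidden by oracle error, which the class allows. The paper's frozen oracle (\cref{app:eq:hard-oracle}) returns $\pi$ both when the mate is masked and when it is revealed at a nontrigger value. It returns the exact kernel only at the trigger. Its only error is $h^2(K_{i\to j}(\cdot\mid a),\pi)\le\eta^2\rho$, and fitting this into the case-(i) budget $\varepsilon^2/(8N)$ forces $\eta^2\rho\lesssim\varepsilon^2/N$. With $\eta=\rho=\omega$, this is exactly the cubic calibration $\omega=(\varepsilon^2/(32N))^{1/3}$ behind both exponents. In your write-up $\omega$ is imported from the class definition, but nothing in your construction explains why $\eta$ cannot be larger.

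Only for this oracle does your Step~2 claim hold. After that, the uniform survivor invariant and the $N\bar Q/m$ disclosure bound go through as you sketch. The paper also sidesteps your commit-leakage worry: it retires an edge, triggers included, at its first committed endpoint, so commits never touch surviving edges.

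Step~1 needs two further repairs. First, Hellinger losses of adaptively chosen batches do not simply tensorize; the paper uses the adaptive midpoint identity (\cref{app:lem:adaptive-midpoint-lower}). Second, the loss $1-(1-d)^Z$ is concave in the collision count $Z$, so a bound on $\mathbb E Z$ is not enough. You need a constant-probability lower tail $Z\gtrsim N/\bar R$, as in \cref{app:prop:collision-lower-tail}. The per-collision quantity to use is the squared Hellinger loss $\asymp\eta^2$, passed to TV via $d_{\mathrm{TV}}\ge h^2$ with no square-root loss.
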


\textbf{Lower-bound construction.}
The hard family is a hidden matching whose oracle reveals dependence
only when a source is set to its private trigger token.
One submitted state tests at most one candidate per source, even when
many readout rows are returned.
The proof converts collisions of unresolved matched endpoints within
product-commit batches into a TV lower bound, showing that accuracy
requires enough submissions or commit rounds.
\textcolor{red}{The hard family uses
$\EdgeSignal\asymp(\TargetTolerance^2/N)^{1/3}$: its nontrigger
squared-Hellinger error is of order $\EdgeSignal^3\asymp\TargetTolerance^2/N$.
Thus the signal floor vanishes as $N^{-1/3}$ at fixed accuracy;
changing $\TargetTolerance$ also changes the target class.}
\Cref{app:cor:signal-calibrated-finite-lower,app:cor:fixed-accuracy-main}
give the finite bound and its specialization to
\cref{main:thm:lower-envelope}.

\section{Algorithmic construction}
\label{sec:main-algorithm}

\newcommand{\MainAlgoBlockTarget}[1]{%
  \hypertarget{main-algo-block-#1}{\AlgoBlockHeading{#1}}}
\newcommand{\MainAlgoInlineTarget}[1]{%
  \hypertarget{main-algo-block-#1}{%
    \textbf{\AlgoStepPrefix~\AlgoBlockCode{#1}. \AlgoBlockName{#1}.}}}
\newcommand{\MainAlgoShortTarget}[1]{%
  \hypertarget{main-algo-block-#1}{\textbf{\AlgoStepPrefix~\AlgoBlockCode{#1}.}}}
\DeclareRobustCommand{\MainAlgoBlockRef}[1]{%
  \hyperlink{main-algo-block-#1}{\AlgoStepPrefix~\AlgoBlockCode{#1}}}

{\color{red}
We now construct the sampler in \cref{main:thm:upper-curve}.
The key is to learn the residual forest as sampling removes vertices.
\MainAlgoBlockRef{draft}--\MainAlgoBlockRef{screen} share counterfactual
probes to recover low-degree neighborhoods with high probability.
When all screens succeed, \MainAlgoBlockRef{peel} uses these reports to
shrink the high-degree core by singleton commits, and
\MainAlgoBlockRef{terminal}--\MainAlgoBlockRef{centroid} recover the
remaining forest for parallel centroid commits
(Algorithm~\ref{main:alg:upper-bound-sampler}).

\begin{algorithm}[H]
\caption{\textcolor{red}{Counterfactual-probing sampler: shared tests, claim-based peeling, and centroid commits. Full safeguards: \cref{app:alg:complete-sampler}.}}
\label{main:alg:upper-bound-sampler}
{\color{red}\small
\algrenewtext{Loop}{\textbf{repeat}}
\algtext*{EndLoop}
\algtext*{EndFor}
\begin{algorithmic}[1]
\Statex \textbf{Input:} $N$, ordered vocabulary $\Vocab$, frozen oracle $\FrozenOracle$, accuracy $\TargetTolerance$
\Statex \textbf{Parameters:} $\DegreeCutoff,\ChunkCount,\ThresholdFloor,\TailTolerance,p,\RandomColoringCount$
  (\cref{app:cor:fixed-accuracy-main})
\State \MainAlgoInlineTarget{preprocess}
\Loop
  \State \MainAlgoInlineTarget{draft} Fix a draft; draw fresh independent colorings
  \State \MainAlgoInlineTarget{screen}
  \For{each coloring $\tau=1,\ldots,\RandomColoringCount$}
    \State \MainAlgoShortTarget{chunks} Split each readout color class into chunks
    \For{each nonempty chunk $\mathcal C$ and other source color}
      \For{each bank-or-tail column $\kappa$}
        \State \MainAlgoShortTarget{probes} Submit the packed state; retain replies at $j\in\mathcal C$
      \EndFor
    \EndFor
  \EndFor
  \State \MainAlgoShortTarget{aggregate} Compute votes, strict majorities, and capped rows $\ScreenRow{j}$
  \State \MainAlgoInlineTarget{peel} Form $\PeelSet$; if empty or at the phase cap, \textbf{break}
  \State Freeze $\PeelSet$ and commit its vertices singly in public order
\EndLoop
\State \MainAlgoInlineTarget{terminal}\enspace \MainAlgoInlineTarget{centroid}
\end{algorithmic}
}
\end{algorithm}

\begin{figure}[H]
\centering
\includegraphics[width=\linewidth]{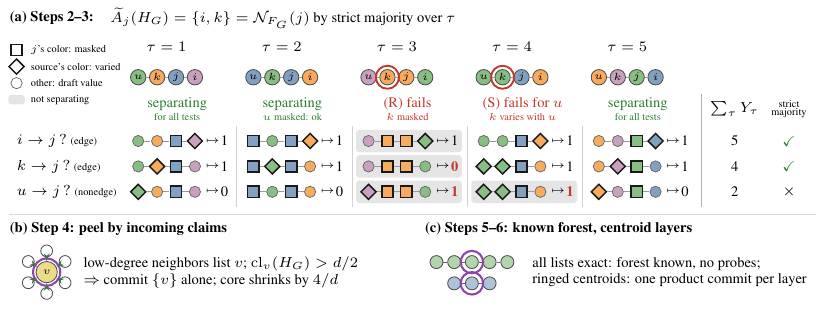}
\caption{\textcolor{red}{Shared tests and majority recovery.
(a) Each cell represents all bank/tail columns; each readout color has one
chunk. Equal-color pairs (no diamond) get vote $0$ without probing.
Grey: nonseparation; red: illustrative errors.
(b) Singleton peels contract the core per phase.
(c) One centroid per component forms a terminal batch.}}
\label{fig:algorithm-overview-20260918}
\end{figure}

\subsection{Building a single-source edge test}
\label{sec:main-algorithm-flow}

We prepare the edge test that \cref{sec:main-packed-screen} will share across pairs.
Fix a history $\History=(G,x_G)$, distinct residual positions $i,j$, and
a query-free draft $\DraftFiller$.
The single-source state $y^a$ reveals $x_G$, sets
$y_i^a=a$, masks $j$, and fixes all other residual coordinates to their draft
values (Figure~\ref{fig:intro-counterfactual-20260920}).
Its exact reply $\BoundaryRow{i}{j}{a}{z}$ has boundary
$z=(x_G,\DraftFiller_{\ResidualPositions\setminus\{i,j\}})$.
For a nonedge this row is constant in $a$ by forest separation
(\cref{app:lem:residual-markov});
for an edge its TV diameter is at least $\EdgeSignal$ by
\cref{main:ass:uen} (UEN).
Testing all ordered pairs and tokens separately costs
$|\ResidualPositions|(|\ResidualPositions|-1)\VocabSize$ submissions.

\textbf{Banks and tail representatives (\MainAlgoBlockRef{preprocess}).}
To reduce token choices, we test a local vocabulary bank and one
representative of the omitted tokens.
Write $\RowTVError:=\TargetTolerance/(2\sqrt N)$ for the row-TV error
bound in either case (\cref{app:eq:oracle-tv-radius,app:eq:kl-oracle-inclusion}).
On the threshold grid $\ThresholdGrid$ with floor $\ThresholdFloor$,
choose $\TailThreshold=\max\{u\in\ThresholdGrid:u\le\TailTolerance\}$,
the largest threshold allowed by the tail budget.
One all-mask submission gives the local vocabulary banks
$\TokenBank{i}:=\{a\in\Vocab:\OracleRowMass{\MASK^N}{i}{a}
\ge\TailThreshold-\RowTVError\}$ and tail representatives
$\TailRepresentative{i}=\min_{\TokenOrder}(\Vocab\setminus\TokenBank{i})$
(\cref{app:def:preprocessing}).
The standard draft reuses these rows without another submission
(\cref{app:eq:standard-draft}).

\cref{main:ass:rt} (RT) allows each omitted token to be replaced by $\TailRepresentative{i}$
within TV error $\TailTolerance=\EdgeSignal/2$.
\cref{main:ass:frequency-envelope} (RF) bounds
$\MaxBankSize=\max_i|\TokenBank{i}|=O(\EdgeSignal^{-1/\RankTailExponent})$.

\textbf{A resolvable edge signal.}
The test queries every bank token and the tail representative, then compares
$\max_{a,a'\in\TokenBank{i}\cup\{\TailRepresentative{i}\}}
\dTV(\OracleRow{y^a}{j},\OracleRow{y^{a'}}{j})$ with $2\RowTVError$.
This observed diameter is at most $2\RowTVError$ for a nonedge;
for an edge it is at least
$\EdgeSignal-\TailTolerance-2\RowTVError>2\RowTVError$
when $\ScreenResolution:=4\RowTVError+\TailTolerance<\EdgeSignal$.
Under the scaling in \cref{main:ass:main-scaling}, our parameter choices
make preprocessing feasible and ensure $\ScreenResolution<\EdgeSignal$
for all sufficiently large $N$ (\cref{app:cor:fixed-accuracy-main}).
\AppLowDegreeScreenRef\ proves the diameter bound.

\subsection{Packing tests with random colorings}
\label{sec:main-packed-screen}

We share the tests of \cref{sec:main-algorithm-flow} across source and readout groups.
For each coloring, \MainAlgoBlockRef{chunks} splits color classes into
chunks of at most $\lceil N/\ChunkCount\rceil$ positions.
\MainAlgoBlockRef{probes} masks one chunk and varies another color class
through bank-or-tail columns $\kappa$: each source takes its own token value,
while committed values and the remaining draft stay fixed.
There are $\ScreenColumnCount\le\MaxBankSize+1$ submitted columns;
\cref{app:eq:packed-column,app:eq:screen-column-count} specify padding
and the empty-bank case.
The same replies test every source--readout pair between the two groups.
\MainAlgoBlockRef{draft} draws $\RandomColoringCount$ independent colorings
with independent uniform colors $\ColorHash{\tau}{i}\in[p]$, $p=8(\DegreeCutoff+1)$.
All screen probes are fixed before replies: one oracle stage
(\cref{app:alg:packed-screen}).
Coloring also shares evaluations in sparse Jacobian
estimation~\citep{coleman1983estimation}.

For $i\to j$, the single-source test is preserved if every neighbor of $j$
is revealed and all except $i$ stay fixed. Call a coloring \emph{separating} when:
\begingroup
\setlength{\leftmargini}{2pc}
\begin{itemize}
\item \emph{Readout separation (R).} No neighbor of $j$ has $j$'s color,
so none is masked.
\item \emph{Source separation (S).} No neighbor of $j$ other than $i$ has
$i$'s color, so none varies with $i$.
\end{itemize}
\endgroup
For separating colorings with distinct source/readout colors, exact replies
equal their single-source counterparts (\cref{app:lem:fixed-draft-rows}).
Figure~\ref{fig:algorithm-overview-20260918}(a) shows possible collision errors
and majority voting.

\textbf{Voting and capped candidate lists (\MainAlgoBlockRef{aggregate}).}
Each coloring casts one vote on whether $i$ and $j$ are adjacent, using
the shared replies from \MainAlgoBlockRef{probes}.
For differently colored $i,j$, compare the conditional distributions
returned at $j$ across the columns probing $i$'s color class.
Let $\ScreenDiameter{\tau}{i}{j}$ be their maximum pairwise TV distance
(\cref{app:eq:observed-screen-diameter}).
Applying the threshold from \cref{sec:main-algorithm-flow} gives the vote
$\ScreenVote{\tau}{i}{j}
=\mathbf1\{\ScreenDiameter{\tau}{i}{j}>2\RowTVError\}$.
Equal-color pairs have no corresponding probe and receive vote zero.
We retain candidates supported by a strict majority of all
$\RandomColoringCount$ colorings:
\[
\CandidateRow{j}:=\left\{i\in\ResidualPositions\setminus\{j\}:
\sum_{\tau=1}^{\RandomColoringCount}\ScreenVote{\tau}{i}{j}>
\RandomColoringCount/2\right\}.
\]
The returned list $\ScreenRow{j}$ equals $\CandidateRow{j}$ when it contains
at most $\DegreeCutoff$ candidates; otherwise, it retains the $\DegreeCutoff$
candidates with the highest vote totals, breaking ties by public position order.
Random hashing followed by majority aggregation also appears in junta
learning~\citep{bshouty2018exact}.

\textbf{Low-degree recovery (\MainAlgoBlockRef{draft}--\MainAlgoBlockRef{screen}).}
For a readout $j$ with $d_{\ResidualForest}(j)\le\DegreeCutoff$,
conditions (R) and (S) forbid at most $2\DegreeCutoff$ color equalities.
Conditional on the past and the fixed draft, each equality has probability
$1/p$. Thus, each fresh coloring separates any fixed pair $(i,j)$ with
probability at least $1-2\DegreeCutoff/p>3/4$.

By the single-source margin $\ScreenResolution<\EdgeSignal$ from
\cref{sec:main-algorithm-flow}, every separating coloring gives the correct
edge/nonedge vote.
This includes same-color separating pairs: condition (R) excludes an edge,
so their zero vote is correct.
A strict separating majority therefore classifies the pair correctly,
regardless of the remaining votes.
Let $\ScreenSuccess{\History}$ denote the analysis-only event that such a
majority holds for every $i\ne j$ with
$d_{\ResidualForest}(j)\le\DegreeCutoff$.
On this event, each low-degree candidate list equals the true neighborhood.
Its size is already at most $\DegreeCutoff$, so truncation leaves it
unchanged (\cref{app:lem:low-degree-screen}):
\begin{equation}
 \ScreenRow{j}=\mathcal N_{\ResidualForest}(j)
 \quad\text{for every }j\in\ResidualPositions
 \text{ with }d_{\ResidualForest}(j)\le\DegreeCutoff.
 \label{main:eq:exact-low-degree-row}
\end{equation}

By independence across colorings, Hoeffding's inequality bounds the
probability that a fixed pair lacks a strict separating majority by
$e^{-\RandomColoringCount/8}$. A union bound over pairs gives
\begin{equation}
 \Pr\bigl(\ScreenSuccess{\History}^{\mathsf c}
       \mid\text{past including the current draft}\bigr)
 \le N^2e^{-\RandomColoringCount/8}.
 \label{main:eq:conditional-screen-success}
\end{equation}
Fresh colors make this bound valid at every adaptively reached screen.
A further union bound over the capped number of screen calls controls
failure along the full execution (\cref{app:lem:random-screen-success}).

\subsection{Peeling the high-degree core, then centroid layers}
\label{sec:main-peeling-terminal}

\MainAlgoBlockRef{peel} commits vertices with many incoming neighbor reports singly.
When all screens succeed, repeated peeling removes the high-degree core,
enabling forest recovery and parallel centroid commits
(\MainAlgoBlockRef{terminal}--\MainAlgoBlockRef{centroid}).

Define the incoming claim count and the peel set
(also \cref{app:eq:claim-and-peel-set}) by
\begin{equation}
 \ClaimCount{v}
 :=|\{u\in\ResidualPositions:v\in\ScreenRow{u}\}|,
 \qquad
 \PeelSet:=\{v\in\ResidualPositions:\ClaimCount{v}>\DegreeCutoff/2\}.
 \label{main:eq:claim-and-peel-set}
\end{equation}

\textbf{Why peel by incoming claims (\MainAlgoBlockRef{peel})?}
A high-degree vertex's own report may be unreliable, but each low-degree
neighbor reports it correctly.
Peeling may also select low-degree vertices; singleton
commits introduce no within-batch dependence error.
The size cap controls their cost: on successful screens,
$|\ScreenRow{u}|\le d_{\ResidualForest}(u)$, so
\[
 (\DegreeCutoff/2)|\PeelSet|
 \le\sum_v\ClaimCount{v}
 =\sum_u|\ScreenRow{u}|
 \le\sum_u d_{\ResidualForest}(u)<2|\ResidualPositions|.
\]
Thus a phase uses fewer than $4|\ResidualPositions|/\DegreeCutoff$
singleton rounds.

\textbf{The high-degree core shrinks.}
Let $\HighDegreeSet=\{v:d_{\ResidualForest}(v)>\DegreeCutoff\}$.
On a successful screen, an unpeeled high-degree vertex has at most $\DegreeCutoff/2$ low-degree
neighbors, since each contributes a claim; it therefore has more than
$\DegreeCutoff/2$ neighbors in $\HighDegreeSet$.
The forest induced on $\HighDegreeSet$ has degree sum below
$2|\HighDegreeSet|$, so fewer than
$(4/\DegreeCutoff)|\HighDegreeSet|$ high-degree vertices remain unpeeled
when $\HighDegreeSet$ is nonempty.
Deleting vertices cannot raise degrees; the next history $\History'$ satisfies
\begin{equation}
 |\HighDegreeSetAt{\History'}|
 \le(4/\DegreeCutoff)|\HighDegreeSet|.
 \label{main:eq:peeling-contraction}
\end{equation}

\textbf{Finish without further probes (\MainAlgoBlockRef{terminal}--\MainAlgoBlockRef{centroid}).}
On successful-screen paths, the $O(\log N)$ phase cap or an empty peel set
ensures residual maximum degree at most $\DegreeCutoff$
(\cref{app:lem:successful-path-structure}).
Since screening precedes either test,
\cref{main:eq:exact-low-degree-row} makes every terminal row exact.
Joining reported neighbors therefore recovers the true residual forest.
The exact joint conditional of one centroid per component factorizes.
Centroid deletion halves component sizes, giving logarithmically many layers
(\cref{app:lem:residual-markov,app:lem:forest-basics}).
These layers form a vertex ranking~\citep{iyer1988optimal}, with higher ranks
assigned to earlier commits.
Only commit submissions remain: later steps delete vertices of the known
forest rather than rediscovering it.

\subsection{Resources and accuracy}
\label{sec:main-algorithm-resources}

Each screen uses at most
$\ScreenColumnCount\RandomColoringCount p(p+\ChunkCount)$ submissions:
columns, colorings, source colors, and readout chunks.
With $\ChunkCount=\DegreeCutoff$, the bank bound, phase count, and peel-size
bound yield
\[
 \CounterfactualQueries
 =\widetilde O(\EdgeSignal^{-1/\RankTailExponent}\DegreeCutoff^2),
 \qquad
 \CommitRounds,\OracleDepth=\widetilde O(N/\DegreeCutoff).
\]
Larger $\DegreeCutoff$ increases screening cost but reduces commit rounds;
balancing gives
\cref{main:eq:balanced-upper}.
Public caps enforce the bounds even on failed-screen paths
(\cref{app:lem:upper-resources}); \cref{app:alg:complete-sampler} specifies
singleton cycle repair and the round-cap all-residual product-commit fallback.

On paths where all screens succeed, the target conditional factorizes over
every commit batch.
The appendix bounds oracle error and the contribution of failed screens
using separate TV and forward-KL arguments
(\cref{app:lem:exact-hybrid,app:lem:adaptive-hellinger-upper,app:lem:safe-controller-comparison,app:thm:finite-kl-upper}).
The calibration in \cref{app:cor:fixed-accuracy-main} gives the guarantees
of \cref{main:thm:upper-curve} for the sampler in
Algorithm~\ref{main:alg:upper-bound-sampler}.
}

\section{Related work}
\label{sec:related-work}

{\color{red}
\textbf{Masked generation and dependence-aware decoding.}
Under standard absorbing masks, the denoising target can be expressed using
time-independent clean-data conditionals
\citep{ou2024your,zheng2025timeagnostic}, motivating our frozen-oracle model.
Most closely related, PUNT~\mbox{\citep{azangulov2025parallel}} uses hypothetical
reveals and shares evaluations across contextual-independence tests, but does
not establish a sublinear total-evaluation guarantee or an end-to-end
sampling-error bound. Our random-color screen instead certifies local
neighborhoods and yields sublinear total submissions under the hidden-forest
target--oracle assumptions of Theorem~\ref{main:thm:upper-curve}.
\Citet{fu2025bits} lower-bound rounds under a confidence-threshold restriction;
our lower bound trades probes against commit rounds over the full admissible
interface.

\textbf{Parallel sampling guarantees.}
Distribution-general conditional-oracle samplers provide exact- and
noisy-oracle guarantees with smaller depth exponents
\citep{anari2024parallel,anari2026autospeculation};
Table~\ref{tab:resource-comparison} compares their row-query bounds in our
submission units. Dependence-based schedule analyses control error through
TC/DTC, effective TC, or adaptive estimates
\citep{li2025convergence,chen2026optimal,dmitriev2026efficient,zhao2026adaptation}.
Our class contains a path on which TC, DTC, and effective TC are all
$\Theta(N)$, while the construction exploits local conditional separation.
Appendices~\ref{app:comparison}--\ref{app:oracle-comparison} give the source
conditions and path calculations; these substitutions compare sufficient
upper bounds, not lower bounds on those samplers.

\textbf{Learning hidden structure.}
Structure learning uses subset queries~\citep{angluin2008hidden,abasi2019edge},
samples~\citep{chowliu1968dependence,dasarathy2016active}, or Gaussian
covariance entries~\citep{lugosi2021covariancequeries}.
Tree Ising laws can be learned in global TV without exact tree
recovery~\citep{daskalakis2021treeising}.
Our conditional-row probes yield local certificates and charge discovery to
sampling; Appendix~\ref{app:related-work-supplement} gives further
historical context.
}

\section{Finite-size experiments}
\label{sec:main-experiments}

We evaluate four hidden forest families with an exact frozen conditional
oracle at five sizes from $N=8192$ to $16384$.
\textcolor{red}{Figure~\ref{fig:ideal-fixed-accuracy} compares total oracle submissions}
under the fixed empirical batch-KL criterion
$K+2\operatorname{SE}_{\mathrm{MC}}\le10^{-10}L_0$,
\textcolor{red}{where $K$ sums product-to-joint batch KL along the sampled history,
not the target-to-output risk. The family-specific reference $L_0$ is the
one-batch KL cost at $N=2048$ (Appendix~\ref{app:experiments}).
We use empirical tuning; the targets and parameters need not satisfy
the theorem's sufficient conditions.}
\textcolor{red}{Our sampler's submission counts are consistent with sublinear
scaling over the tested finite-size range. These exact-oracle experiments isolate
submission cost; learned-model accuracy and runtime are outside their scope.}

\begin{figure}[H]
 \centering
 \ifdefined\ArxivVersion
  \includegraphics[width=0.88\linewidth]{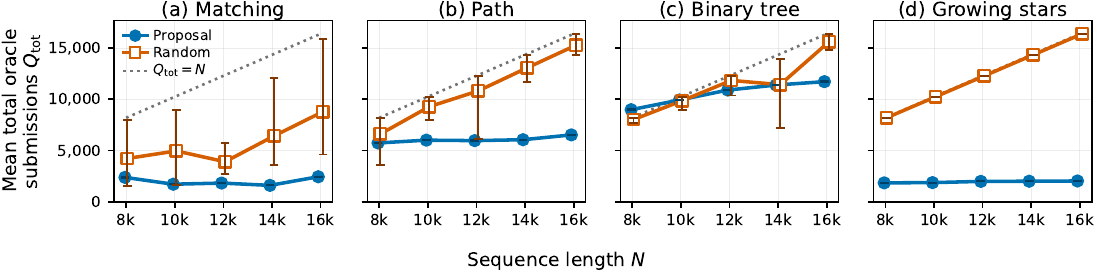}
 \else
  \includegraphics[width=0.88\linewidth]{../../../experiments/forest_toy/results/ideal_capless_fixed_accuracy_scaling/random_per_run_threshold_search_k1em10L0/ideal_strict_fixed_accuracy_linear_errorbars_1x4_large_text.pdf}
 \fi
 \caption{\textcolor{red}{Total oracle submissions
 $Q_{\mathrm{tot}}=Q_{\mathrm{pre}}+\CounterfactualQueries+\CommitRounds$,
 including preprocessing, probes, and nonempty commit rounds.
 Points: six-run means; whiskers: min--max ranges
 (not standard errors or confidence intervals).
 Proposal settings are fixed per forest and $N$;
 random budgets are selected per run on coupled curves.
 Proposal ranges ($\le78$ submissions) may lie inside markers.
 Dotted line: exact singleton $Q_{\mathrm{tot}}=N$, not a cap.
 Horizontal ticks: $1\mathrm{k}=1024$.}}
 \label{fig:ideal-fixed-accuracy}
\end{figure}

\section{Conclusion}
\label{sec:conclusion}

\textcolor{red}{We establish sublinear parallel unmasking under hidden-forest
assumptions, including dependence-discovery costs. The main practical
challenge is to turn shared probing into computational savings with learned
denoisers. Open directions include closing the probe--commit tradeoff gap
for irreversible sampling, understanding whether remasking improves this
tradeoff when resampling costs are counted, and extending the guarantees
beyond forests.}

\label{proposal:main-text-end}
\clearpage
\ifdefined\ArxivVersion
  \section*{Acknowledgments}

RK was partially supported by JSPS KAKENHI (24K02905) and JST BOOST
(JPMJBS2418).

SH was partially supported by JSPS KAKENHI (26K25548) and received access
to ChatGPT for Academic Researchers program.

TS was partially supported by JST CREST (PMJCR2015) and JSPS KAKENHI
(25H01107). This work was supported by JST ERATO Grant Number JPMJER2601.
This research is supported by the National Research Foundation,
Singapore and the Ministry of Digital Development and Information under the
AI Visiting Professorship Programme (award number AIVP-2024-004).
Any opinions, findings and conclusions or recommendations expressed in this
material are those of the author(s) and do not reflect the views of National
Research Foundation, Singapore and the Ministry of Digital Development and
Information.

\fi
\section*{AI use statement}
\label{sec:ai-use}

Generative AI tools were used extensively during this work, not only for
language editing.  GPT-5.6 Sol and GPT-6 Astra were used throughout the
research and writing process.  \textcolor{red}{Claude Fable 5.1 and
Claude Opus 5.5 were additionally used for writing assistance and the generation of
Figures~\ref{fig:intro-counterfactual-20260920}
and~\ref{fig:algorithm-overview-20260918}.}
Iterative brainstorming and technical discussions
with the GPT models contributed to the development of central proof ideas.
The GPT models also assisted in developing and refining mathematical arguments,
making proof steps explicit and rigorous, and drafting and revising
the proofs.  \textcolor{red}{AI tools also assisted with experimental design.}
The GPT models were also used to write code for the numerical
experiments.  Further assistance included \textcolor{red}{literature search,
translation, and the preparation of scientific figures, as well as}
manuscript organization,
expository revision, and \LaTeX{} preparation.
AI assistance was also used to formalize the manuscript's named mathematical
claims in Lean~4, whose kernel checked the resulting proofs.
The correspondence between the formal statements and the manuscript was
audited separately.
The authors verified the proofs and cross-checked the cited sources.
The authors take responsibility for the final content of this work,
including its mathematical claims, proofs, citations, and AI-assisted text.

\ifdefined\ArxivVersion
  \section*{Code availability}
\label{sec:code-availability}

The Lean~4 formalization, verification records, and experiment code and saved
results are available in the \texttt{arxiv-v1} snapshot at
\url{https://github.com/ryotaro-kawata-wa/counterfactual-parallel-unmasking}.
The repository includes pinned dependencies and reproduction instructions.

\fi

\bibliography{src/iclr2027_conference}
\ifdefined\ArxivVersion
  \bibliographystyle{plainnat}
\else
  \bibliographystyle{iclr2027_conference}
\fi

\appendix
\section{Full problem setup}
\label{app:setup}

We use the size convention of Section~\ref{sec:main-setup}: $N$ is fixed
within each finite statement, and size dependence is suppressed, including
in $\TargetLaw,\VocabSize,\EdgeSignal$ and the accuracy parameters.
In rate statements, parameters declared constant are independent of $N$;
every public sequence and its deterministic remainder is fixed uniformly
over the class before any hidden target or oracle is chosen.

\paragraph{Reading route to the main claims.}
The formal setup below fixes the objects used by all proofs.
The following routes give the dependencies of the main claims:
\begin{itemize}
 \item \emph{Lower bound in \HellingerCase, \cref{main:thm:lower-envelope}.}
 Appendix~\ref{app:results} defines the matching family and states its
 finite lower bound.  Appendix~\ref{app:lower-proof} proves the bound and
 verifies class inclusion in \cref{app:prop:hard-family-inclusion}.
 Then the adjacent
 \cref{app:cor:signal-calibrated-finite-lower,app:cor:fixed-accuracy-main}
 in Appendix~\ref{app:benchmark-specializations} give the finite
 calibration and the fixed-accuracy deduction of the main theorem.
 \item \emph{Upper bounds in both cases, \cref{main:thm:upper-curve}.}
 Appendix~\ref{app:algorithm} specifies the executable sampler.
 Appendix~\ref{app:upper-proof} proves the shared screen, peeling, and
 resource bounds, and the TV guarantee in \HellingerCase\
 (\cref{app:thm:finite-upper}).
 Appendix~\ref{app:kl-extension} proves the forward-KL guarantee in
 \KLCase\ (\cref{app:thm:finite-kl-upper}), using
 \cref{app:lem:adaptive-joint-kl}.
 \Cref{app:lem:benchmark-feasibility,app:cor:fixed-accuracy-main}
 verify the finite calibration and derive \cref{main:thm:upper-curve}
 with accuracy fixed before $N\to\infty$.
 \ifincludeexponents
 \Cref{app:cor:benchmark-hellinger,app:cor:benchmark-kl} give the further
 joint-limit consequences when accuracy also varies with $N$.
 \fi
 \item \emph{Comparison in Table~\ref{tab:resource-comparison}.}
 Appendix~\ref{app:comparison} proves the row-to-state accounting and
 the path's TC/DTC scales; Appendix~\ref{app:oracle-comparison} records
 the assumptions of the source results being substituted.
 \item \emph{Finite-size experiments, Section~\ref{sec:main-experiments}.}
 Appendix~\ref{app:experiments} gives the targets, accuracy test, tuning,
 resource accounting, and evaluation protocol.
\end{itemize}
The elementary auxiliary facts in Appendix~\ref{app:auxiliary} are
referenced at their point of use.  That appendix also collects the
supplementary selector-invariance result.
\ifincludeexponents
Appendix~\ref{app:independent-signal} gives the additional rates when
the signal floor is specified independently; these are not needed for
the two fixed-accuracy main theorems.
\fi

Short intuition paragraphs precede the main proof steps.
There, $\simeq$ compares scales up to fixed positive multiplicative
constants in the stated regime.  An upper bound alone is written with
$\lesssim$ or an explicit inequality, and subpolynomial factors are
displayed when relevant.  All formal statements retain their finite
constants and domains.

\begin{proposalcontext}{Supplementary context}
Unnumbered remarks give supplementary context beyond the proof route above.
Assumptions, resource definitions, conditioning arguments, and restrictions
needed to apply a result are stated in the main exposition.
\end{proposalcontext}

For each $N\ge 3$, let $\PositionSet:=\{1,\ldots,N\}$ be the set of positions and
let $\Vocab$ be a finite vocabulary of size $\VocabSize\ge 2$, equipped with
a public total order $\TokenOrder$.  For a finite set $\mathcal A$, write
$\Delta(\mathcal A)$ for its probability simplex and
$\Delta^\circ(\mathcal A)$ for the strictly positive part.  For
\(p,q\in\Delta(\mathcal A)\), we use the normalizations
\begin{equation}
 \dTV(p,q)
 :=\frac12\sum_{a\in\mathcal A}|p(a)-q(a)|,
 \qquad
 \sqHellinger(p,q)
 :=1-\sum_{a\in\mathcal A}\sqrt{p(a)q(a)}.
 \label{app:eq:distances}
\end{equation}
For \(q\in\Delta^\circ(\mathcal A)\), also write
\begin{equation}
 \ChiSquareDivergence{p}{q}
 :=\sum_{a\in\mathcal A}\frac{(p(a)-q(a))^2}{q(a)}.
 \label{app:eq:chi-square-divergence}
\end{equation}
Also define $\KLDivergence{p}{q}:=\sum_a p(a)\log(p(a)/q(a))$, with
natural logarithms and the convention $0\log(0/q(a))=0$.
For \(x\in\mathbb R\), let \(\PositivePart{x}:=\max\{x,0\}\).

All quantities explicitly declared public below, together with the algorithm
and the law of its internal random seed, are fixed before the target instance
is chosen.  The hidden target law and, subsequently, the frozen oracle may be
chosen adversarially.  Local vocabulary banks $\TokenBank{i}$, tail
representatives, drafts, and graph certificates are constructed from oracle
replies by the algorithm.

\subsection{Target law and hidden forest}
\label{app:setup-target}

We expand \cref{main:def:forest-target,main:ass:forest-structure}.

\begin{definition}[Forest-structured target law]
\label{app:def:target-law}
Let $X=(X_1,\ldots,X_N)\in\Vocab^N$ have a hidden strictly positive law
$\TargetLaw\in\Delta^\circ(\Vocab^N)$, and let
$\TargetForest=(\PositionSet,\TargetEdges)$ be a hidden undirected forest.  The exact position
marginal is
\begin{equation}
 \PosMarginal{i}:=\mathcal L_{\TargetLaw}(X_i)
 \in\Delta^\circ(\Vocab),
 \qquad i\in\PositionSet.
 \label{app:eq:position-marginal}
\end{equation}
For each orientation $i\to j$ of an edge $\{i,j\}\in\TargetEdges$, let
$\EndpointKernel{i}{j}(\cdot\mid a)\in\Delta^\circ(\Vocab)$ be an endpoint
kernel for every $a\in\Vocab$.
\end{definition}

\begin{assumption}[Structural forest law (S0--C0)]
\label{app:ass:forest-structure}
\textbf{Forest factorization (S0).}
Fix one hidden reference root in each connected component.  If
$\TargetRoots$ is the resulting root set and $\ParentOf{j}$ is the
induced parent of a nonroot vertex $j$, assume
\begin{equation}
 \TargetLaw(x)
 =\prod_{u\in\TargetRoots}\PosMarginal{u}(x_u)
  \prod_{j\notin\TargetRoots}
  \EndpointKernel{\ParentOf{j}}{j}
  (x_j\mid x_{\ParentOf{j}}).
 \label{app:eq:S0}
\end{equation}

\textbf{Endpoint compatibility (C0).}
For every $\{i,j\}\in\TargetEdges$ and $a,b\in\Vocab$, assume
\begin{equation}
 \PosMarginal{i}(a)\EndpointKernel{i}{j}(b\mid a)
 =\PosMarginal{j}(b)\EndpointKernel{j}{i}(a\mid b).
 \label{app:eq:C0}
\end{equation}
Moving a root across one edge replaces the left-hand side of
\ref{app:eq:C0} by its right-hand side and leaves every other factor
unchanged.  Iterating this replacement along the unique path between two roots
shows that (S0)--(C0) yield the same factorization for every componentwise root
choice.
\end{assumption}

To obtain these kernels from \cref{main:ass:forest-structure}, set
$\EndpointKernel{i}{j}(b\mid a):=\TargetLaw(X_j=b\mid X_i=a)$
for each oriented edge. Then (S0) is the main-text factorization, and
(C0) follows from
\[
 \PosMarginal{i}(a)\EndpointKernel{i}{j}(b\mid a)
 =\TargetLaw(X_i=a,X_j=b)
 =\PosMarginal{j}(b)\EndpointKernel{j}{i}(a\mid b).
\]

For a committed set $G\subseteq\PositionSet$ and realized values
$x_G\in\Vocab^G$, define the history, uncommitted set, and residual
forest by
\begin{equation}
 \History:=(G,x_G),
 \qquad
 \ResidualPositions:=\PositionSet\setminus G,
 \qquad
 \ResidualForest:=\TargetForest[\ResidualPositions].
 \label{app:eq:history}
\end{equation}
Strict positivity makes every such history admissible.
For any graph $H=(W,E_H)$, we write
$\mathcal N_H(v):=\{u\in W:\{u,v\}\in E_H\}$ for its neighborhood and
$d_H(v):=|\mathcal N_H(v)|$ for its degree.  The induced graph $H[S]$,
for $S\subseteq W$, has vertex set $S$ and edge set
$\{\{u,v\}\in E_H:u,v\in S\}$; $V(T)$ denotes the vertex set of a
connected component $T$.

\subsection{Response regularity}
\label{app:setup-response}

\textcolor{red}{The response quantities below correspond to
\cref{main:def:response-quantities}; the general regularity assumptions
correspond to the RT, UEN, and RF items of
\cref{main:ass:regularity-scaling}.}
For simplicity, the main text and \cref{app:cor:fixed-accuracy-main}
use $\ResponseExponent=\ResponseConstant=\FrequencyConstant=1$.

\begin{definition}[Marginal tails and intrinsic responses]
\label{app:def:response-quantities}
For a source position $i\in\PositionSet$ and threshold $t\in[0,1]$, define the exact
marginal tail
\begin{equation}
 \TailSet{i}{t}
 :=\{a\in\Vocab:\PosMarginal{i}(a)\le t\}.
 \label{app:eq:tail}
\end{equation}
For distinct positions $i,j$ and a complete boundary
$z\in\Vocab^{\PositionSet\setminus\{i,j\}}$, define the boundary-conditioned
row
\begin{equation}
 \BoundaryRow{i}{j}{a}{z}
 :=\mathcal L_{\TargetLaw}\!\left(
 X_j\mid X_i=a,
 X_{\PositionSet\setminus\{i,j\}}=z
 \right).
 \label{app:eq:boundary-row}
\end{equation}
For an edge $e=\{u,v\}\in\TargetEdges$ and such a complete boundary, define
the directed intrinsic response
\begin{equation}
 \DirectedResponse{v}{u}{z}
 :=\max_{a,b\in\Vocab}
 \dTV\bigl(
 \BoundaryRow{v}{u}{a}{z},
 \BoundaryRow{v}{u}{b}{z}
 \bigr).
 \label{app:eq:intrinsic-response}
\end{equation}
\end{definition}

\begin{assumption}[Response regularity (RT--UEN)]
\label{app:ass:response-regularity}
Let $\ResponseExponent>0$, $\ResponseConstant\in[0,\infty)$, and
$\EdgeSignal\in(0,1]$ be public.

\textbf{Power tail-response continuity (RT).}
For all distinct $i,u\in\PositionSet$, every $t\in[0,1]$, and every complete assignment
$z\in\Vocab^{\PositionSet\setminus\{i,u\}}$, assume
\begin{equation}
 \max_{a,b\in\TailSet{i}{t}}
 \dTV\bigl(
 \BoundaryRow{i}{u}{a}{z},
 \BoundaryRow{i}{u}{b}{z}
 \bigr)
 \le \ResponseConstant t^{\ResponseExponent}.
 \label{app:eq:RT}
\end{equation}
The maximum is defined to be zero when the tail is empty or a singleton.

\textbf{Two-sided edge nondegeneracy (UEN).}
For every edge
$e=\{u,v\}\in\TargetEdges$ and every complete boundary $z$, assume
\begin{equation}
 \min\bigl\{
  \DirectedResponse{v}{u}{z},
  \DirectedResponse{u}{v}{z}
 \bigr\}
 \ge \EdgeSignal.
 \label{app:eq:UEN}
\end{equation}
Only the scale $\EdgeSignal$ is public; the forest, edge direction,
boundary, and maximizing token pair remain hidden.
\end{assumption}

\begin{assumption}[Asymptotic vocabulary and frequency envelope (RF)]
\label{app:ass:rate-regime}
For asymptotic rate statements, let $\VocabGrowthExponent>0$,
$\RankTailExponent>1$, and $\FrequencyConstant\ge1$ be public constants, and
assume $\VocabSize=N^{\VocabGrowthExponent+o(1)}$.  Order the vocabulary at each position by exact
marginal mass,
\begin{equation}
 \PosMarginal{i}(\RankedToken{i}{1})\ge\cdots\ge
 \PosMarginal{i}(\RankedToken{i}{\VocabSize}),
 \label{app:eq:hidden-ranks}
\end{equation}
with ties broken by $\TokenOrder$, and assume uniformly in $i\in\PositionSet$ and
$k\in[\VocabSize]$ that
\begin{equation}
 \PosMarginal{i}(\RankedToken{i}{k})
 \le \FrequencyConstant
      (k^{-\RankTailExponent}+\VocabSize^{-1}).
 \label{app:eq:RF}
\end{equation}
No lower Zipf bound or adjacent-rank separation is assumed.  This assumption
is used only to turn finite bounds into uniform asymptotic rates; it is not
needed for finite correctness once threshold feasibility and the realized bank
sizes are fixed.
\end{assumption}

\begin{samepage}
\subsection{Frozen conditional oracle}
\label{app:setup-oracle}

The two cases of Assumption~\ref{main:ass:oracle-accuracy} share the frozen oracle
interface below, with different accuracy conditions and output objectives:
\begin{itemize}
 \item \emph{\HellingerCaseTitle\ (Hellinger/TV).}
 The uniform squared-Hellinger condition is (A2) below; the output objective
 is the seed-averaged TV risk in \textcolor{red}{\cref{app:def:output-risk}}.
 The normalized Hellinger radius $\OracleRadius$ remains independent
 of the output-TV tolerance $\TargetTolerance$.
 \item \emph{\KLCaseTitle\ (forward KL).}
 \Cref{app:def:kl-oracle-risk} defines the uniform forward row-KL
 condition and the seed-averaged forward-KL output objective.
\end{itemize}
\end{samepage}
The main text uses $\OracleRadius=\TargetTolerance/2$ in \HellingerCase\
and $\KLTargetTolerance=\TargetTolerance^2/2$ in \KLCase, with the latter
calibrated by \cref{app:cor:kl-epsilon}.
\Cref{app:cor:fixed-accuracy-main} gives both main-text specializations.

\begin{definition}[Frozen conditional oracle]
\label{app:def:frozen-oracle}
As in \cref{main:def:frozen-oracle}, let $\MASK\notin\Vocab$. For a masked state
$y\in(\Vocab\cup\{\MASK\})^N$, define
\begin{equation}
 \MaskSet{y}:=\{i:y_i=\MASK\},
 \qquad
 \ObservedSet{y}:=\PositionSet\setminus\MaskSet{y}.
 \label{app:eq:masked-revealed}
\end{equation}
For $j\in\MaskSet{y}$, the hidden exact singleton conditional is
\begin{equation}
 \ExactRowMass{y}{j}{a}
 :=\TargetLaw\bigl(X_j=a\mid
 X_{\ObservedSet{y}}=y_{\ObservedSet{y}}\bigr),
 \qquad a\in\Vocab.
 \label{app:eq:exact-row}
\end{equation}
The all-mask state is $\AllMaskState:=(\MASK,\ldots,\MASK)$, so
$\ExactRow{\AllMaskState}{i}=\PosMarginal{i}$.

The oracle is one deterministic frozen map
\begin{equation}
 \FrozenOracle:\{(y,j):y\in(\Vocab\cup\{\MASK\})^N,
                 \ j\in\MaskSet{y}\}
 \longrightarrow\Delta^\circ(\Vocab).
 \label{app:eq:oracle-type}
\end{equation}
Equivalently, write
$\FrozenOracle=\{\FrozenOracle_j\}_{j\in\PositionSet}$, with
$\FrozenOracle(y,j)=\OracleRow{y}{j}$ and
$\OracleRowMass{y}{j}{a}$ its probability mass at token $a$.

\emph{\textcolor{red}{One submission, a family of probability vectors.}}
One oracle evaluation submits a single masked state $y$ and returns full
vocabulary probability vectors at the requested masked positions:
\[
 (y,S)\ \xrightarrow{\text{one masked-state submission}}\quad
 \bigl\{\OracleRow{y}{j}\bigr\}_{j\in S},
 \qquad S\subseteq\MaskSet{y}.
\]
Here $S$ selects the readouts; each returned row is the whole vector
$\OracleRow{y}{j}=(\OracleRowMass{y}{j}{a})_{a\in\Vocab}$.
The charge is one submission, regardless of $|S|$ or how many tests use
the returned probabilities. Every occurrence of the same $(y,j)$ returns
the same row, across readout sets and operations; repeated submissions
still count as separate evaluations.
Probes and commits in \cref{app:def:admissible-algorithm} use this same
interface and the same frozen rows.
\end{definition}

\begin{assumption}[\HellingerCaseTitle: frozen-oracle accuracy (A2)]
\label{app:ass:oracle-accuracy}
For a public $\OracleRadius\in[0,1]$, assume uniformly over all valid $(y,j)$ that
\begin{equation}
 \sqHellinger\bigl(\ExactRow{y}{j},\OracleRow{y}{j}\bigr)
 \le \frac{(\OracleRadius)^2}{2N}.
 \label{app:eq:A2}
\end{equation}
Oracle errors may be adversarially correlated across states, positions, and
calls; no independence, unbiasedness, concentration, or fresh-noise condition
is imposed.  Let $\OracleClass(\TargetLaw;\OracleRadius)$ denote the class
of frozen maps satisfying (A2).
\end{assumption}

\begin{remark}[Oracle metric calibration]
\label{app:rem:oracle-metric-calibration}
The parameter $\OracleRadius$ is a normalized oracle budget: the per-row
Hellinger distance in (A2) is at most $\OracleRadius/\sqrt{2N}$, not
$\OracleRadius$.  Lemma~\ref{app:lem:finite-hellinger-calculus} therefore
gives the row-TV bound $\OracleRadius/\sqrt N$ used by the screen; this is
a consequence of (A2), not an additional assumption.  Conversely, that
row-TV bound alone does not imply (A2).  For example, for sufficiently small
$e>0$, the positive binary laws $(1-2e,2e)$ and $(1-e,e)$ have TV distance
$e$ but squared Hellinger distance at least
$e(\sqrt2-1)^2/2>e^2/2$.

By Lemma~\ref{app:lem:kl-hellinger}, either of the following uniform bounds
is sufficient for (A2):
\begin{equation}
 \sup_{(y,j)}\KLDivergence{\ExactRow{y}{j}}{\OracleRow{y}{j}}
 \le\frac{(\OracleRadius)^2}{N}
 \quad\text{or}\quad
 \sup_{(y,j)}\KLDivergence{\OracleRow{y}{j}}{\ExactRow{y}{j}}
 \le\frac{(\OracleRadius)^2}{N}.
 \label{app:eq:row-kl-sufficient}
\end{equation}
These are \emph{sufficient conditions}, not necessary bounds on the KL error of an
oracle satisfying (A2): Remark~\ref{app:rem:no-hellinger-kl-converse} gives
a strictly positive counterexample to the converse.  Imposing either KL
condition leaves the upper theorem valid with its stated output-TV bound;
it does not assert an output-KL bound for the randomized sampler.
\end{remark}

\subsection{Algorithms, risk, and resources}
\label{app:setup-algorithms}

Both operations below use the row family of
\cref{app:def:frozen-oracle}. A \emph{probe} inspects its probabilities
without changing the committed history; a \emph{commit} samples a chosen
batch from these rows and permanently fixes the sampled values.
The algorithm requirements and resource counts below apply to both cases.
They give the full interface of \cref{main:def:algorithm-risk-resources}.

\begin{definition}[\textcolor{red}{Admissible algorithms}]
\label{app:def:admissible-algorithm}
\textbf{Algorithm requirements.}
An admissible algorithm accesses
$\FrozenOracle=\{\FrozenOracle_j\}_{j\in\PositionSet}$ adaptively and
satisfies the following conditions.
\begin{itemize}
\item \emph{Observable decisions and independent randomness.}
Let $\ControllerSeed$ be decision-rule randomness independent of the hidden
instance and of the primitive random sources used for product-commit draws.
The \emph{decision rule} (also called the controller) uses public data,
previous oracle replies, committed values, and its internal state and seed
to choose the next operation (probe or commit), its submitted state, and
any commit batch. Once the seed is fixed,
these choices are deterministic functions of the observable history.
The sampled token values may depend on $\ControllerSeed$ through the
decision rule's choices, but the primitive commit-draw randomness is
separate from that seed. The forest, exact marginals, endpoint kernels,
hidden ranks, edge directions, and response witnesses remain hidden.

\item \emph{No remasking and history-compatible submissions.}
At a realized history $\History=(G,x_G)$, every submitted state must agree
with the committed values: $y_i=x_i$ for all $i\in G$.
Committed positions stay revealed in all subsequent submissions, with the
same values. Uncommitted positions in $\ResidualPositions$ may be masked
or temporarily assigned vocabulary tokens for a probe.

\item \emph{Probes (counterfactual submissions).}
\textcolor{red}{Choose a history-compatible state $y$ and a subset of masked
positions $S\subseteq\MaskSet{y}$. One submission $(y,S)$ returns
$\{\OracleRow{y}{j}\}_{j\in S}$ together, allowing the algorithm to inspect
all probabilities $\{\OracleRowMass{y}{j}{a}:j\in S,\ a\in\Vocab\}$.}
This operation updates the observable transcript but leaves
$\History=(G,x_G)$ unchanged; temporary assignments are not committed.
Probes may be adaptive and interleaved with commits.
They may also use $\ObservedSet{y}=G$, with every residual position masked:
additional hypothetical reveals are optional.

\item \emph{Irreversible product commits.}
Let $t$ index commits only, and let $G_{t-1}$ be the positions committed
before round $t$, with $G_0=\emptyset$.
Choose a nonempty batch $B_t\subseteq\ResidualPositionsAt{G_{t-1}}$
from the observable past and $\ControllerSeed$, \emph{before} receiving
this commit's oracle reply. Submit the state
\begin{equation}
 (y_t)_i=
 \begin{cases}
  x_i, & i\in G_{t-1},\\
  \MASK, & i\in\ResidualPositionsAt{G_{t-1}}.
 \end{cases}
 \label{app:eq:commit-state}
\end{equation}
\textcolor{red}{One submission $(y_t,B_t)$ returns the probability vectors
$\{\OracleRow{y_t}{j}\}_{j\in B_t}$ from the same interface.}
Draw $z_{B_t}\in\Vocab^{B_t}$ according to
\begin{equation}
 \bigotimes_{j\in B_t}\OracleRow{y_t}{j},
 \label{app:eq:product-commit}
\end{equation}
and permanently update $G_t:=G_{t-1}\cup B_t$ and $x_{B_t}:=z_{B_t}$,
retaining the earlier values $x_{G_{t-1}}$.
The product in \eqref{app:eq:product-commit} draws the batch coordinates
independently conditional on the current history and selected batch.
Temporary probe assignments cannot replace this draw, and the draw is
not filtered by an accept--reject or verify-and-accept step.

\item \emph{Completion.}
The batches are disjoint and satisfy
$\bigcup_{t=1}^{\CommitRounds}B_t=\PositionSet$.
\end{itemize}

\smallskip
\noindent\textbf{Operation-indexed actions and visible history.}
\textcolor{red}{Index the submissions actually performed, probes and commits
together, by consecutive positive integers $s=1,2,\ldots$; repeated or
parallel submissions have separate indices. This operation index is
distinct from the rank-frequency exponent in \eqref{app:eq:RF}.}
Let $s_t$ be the operation index of commit $t$, and write $t_s=t$ when
$s=s_t$. Fix the decision-rule seed $w$.
The complete visible history immediately before operation $s$ is
$\OpVisibleHistory{s}$, initialized by the public data and seed-fixed
initial state, $\OpVisibleHistory{1}:=h_{\mathrm{init}}(w)$.
The selected action has the form
\begin{equation}
 A_s=\mathsf{Next}_w(\OpVisibleHistory{s})=
 \begin{cases}
  (\mathsf{query},\OpSubmittedState{s},J_s),
    &\text{probe},\\
  (\mathsf{commit},\OpSubmittedState{s},B_{t_s}),
    &\text{commit}.
 \end{cases}
 \label{app:eq:operation-action}
\end{equation}
Here $\mathsf{query}$ denotes a probe, $J_s$ is its requested readout set
($S$ in the probe rule), with
$J_s\subseteq\MaskSet{\OpSubmittedState{s}}$.
At the current committed history $(G,x_G)$, a probe state satisfies
\[
 (\OpSubmittedState{s})_i=x_i\quad(i\in G),
 \qquad
 (\OpSubmittedState{s})_i\in\Vocab\cup\{\MASK\}\quad(i\notin G).
\]
The decision rule in \cref{app:eq:operation-action} selects these
uncommitted entries and $J_s$ from the visible history and fixed seed;
any token assignments there are temporary and may all be omitted.
At a commit, the state instead has all uncommitted positions masked:
\[
 \OpSubmittedState{s}=y_{t_s},
 \qquad
 (y_{t_s})_i=
 \begin{cases}
  x_i,&i\in G,\\
  \MASK,&i\notin G,
 \end{cases}
\]
as in \cref{app:eq:commit-state}, with $G=G_{t_s-1}$.

\begin{samepage}
At the current committed history $(G,x_G)$,
the vertex roles are:
\begin{center}
\begin{tabular}{@{}lll@{}}
\toprule
Role & Probe & Commit $t$\\
\midrule
Temporarily assigned positions
 & $\ObservedSet{\OpSubmittedState{s}}\setminus G$ & $\varnothing$\\
Requested readouts & $J_s$ & $B_t$\\
Newly committed positions & $\varnothing$ & $B_t$\\
\bottomrule
\end{tabular}
\end{center}
\end{samepage}
If $z_s$ is the vector drawn at a commit, the ordinary observation and
the visible-history update are
\begin{equation}
 \OpVisibleObservation{s}:=
 \begin{cases}
  (\OracleRow{\OpSubmittedState{s}}{j})_{j\in J_s},
    &\text{query},\\
  ((\OracleRow{\OpSubmittedState{s}}{j})_{j\in B_{t_s}},z_s),
    &\text{commit},
 \end{cases}
 \label{app:eq:operation-visible-observation}
\end{equation}
\begin{equation}
 \OpVisibleHistory{(s+1)}
 :=(\OpVisibleHistory{s},A_s,\OpVisibleObservation{s}).
 \label{app:eq:operation-visible-history}
\end{equation}
The decision-rule internal state is a deterministic function of this
history and $w$. This full-history representation does not require the
implementation to store the entire transcript.
\end{definition}

\begin{definition}[\textcolor{red}{Output law and \HellingerCase\ risk}]
\label{app:def:output-risk}
\textcolor{red}{For an admissible algorithm from
\cref{app:def:admissible-algorithm}, let $\SamplerOutput$ be its output.}
Conditional on $\ControllerSeed=w$, let
$\SamplerOutputLaw{w}{\FrozenOracle}$ be its law after integrating all
product-commit draws.
Define the seed-averaged total-variation risk by
\begin{equation}
 \SeedRisk(\mathcal A;\TargetLaw,\FrozenOracle)
 :=\mathbb E_{\ControllerSeed}
 \dTV\bigl(\TargetLaw,
            \SamplerOutputLaw{\ControllerSeed}{\FrozenOracle}\bigr),
 \label{app:eq:seed-risk}
\end{equation}
and the oracle-robust risk by
\begin{equation}
 \RobustRisk(\mathcal A;\TargetLaw)
 :=\sup_{\FrozenOracle\in\OracleClass(\TargetLaw;\OracleRadius)}
 \SeedRisk(\mathcal A;\TargetLaw,\FrozenOracle).
 \label{app:eq:robust-risk}
\end{equation}
The expectation in \eqref{app:eq:seed-risk} is the expectation of the
conditional-output TV, rather than the TV after first mixing over
$\ControllerSeed$.
In \KLCase, the same conditional output law is evaluated by the
seed-averaged forward-KL risk of \cref{app:def:kl-oracle-risk}.
\end{definition}

\begin{definition}[\textcolor{red}{Resources and all-path budgets}]
\label{app:def:resources}
\textcolor{red}{These definitions apply to both accuracy cases.
In the counts below, $s$ ranges over the positive-integer indices of
submissions actually performed, as in \cref{app:eq:operation-action};
$t$ indexes commits only.}
Let $s_{\mathrm{pre}}$ be the index of the one distinguished preprocessing
all-mask readout, with $s_{\mathrm{pre}}=0$ if it is not used. Define
\[
\begin{aligned}
 \CounterfactualQueries
 &:=\bigl|\{s:s\ne s_{\mathrm{pre}},\
                 \text{submission }s\text{ is a probe}\}\bigr|,\\
 \CommitRounds
 &:=\bigl|\{s:\text{submission }s\text{ performs a nonempty product commit}\}\bigr|.
\end{aligned}
\]
Each probe other than that preprocessing readout contributes one to
$\CounterfactualQueries$; each commit
contributes one to $\CommitRounds$ and zero to $\CounterfactualQueries$,
regardless of its batch size. A probe at the commit state
\eqref{app:eq:commit-state} still counts toward $\CounterfactualQueries$:
the operation performed, rather than the masked state alone, determines
its counter. Every other all-mask probe also counts toward
$\CounterfactualQueries$. Reusing returned rows in several tests adds
no submission, whereas submitting the same state again does.
The actual total number of submissions is therefore
$\CounterfactualQueries+\CommitRounds+\mathbf 1\{s_{\mathrm{pre}}>0\}$.

The one distinguished all-mask readout is kept separate through the
accounting flag $\MarginalQueryCharge\in\{0,1\}$: set it to one when
that readout is charged, and to zero when it is omitted from the reported
count (or is not used). The reported \emph{noncommit} count is
\begin{equation}
 \ReportedQueries
 :=\CounterfactualQueries+\MarginalQueryCharge.
 \label{app:eq:reported-query-count}
\end{equation}
Thus $\ReportedQueries$ excludes commits. When the preprocessing readout
is used and charged, the full count is
$\ReportedQueries+\CommitRounds=1+\CounterfactualQueries+\CommitRounds$,
as in the main construction. Setting $\MarginalQueryCharge=0$ omits only
that preprocessing charge; it does not make other probes free.

\emph{Sequential depth and bandwidth.}
$\OracleDepth$ counts sequential oracle stages. A stage may contain
multiple submissions whose inputs are fixed before receiving that
stage's replies; later inputs that depend on those replies require
another stage. Parallel submissions therefore share depth but each
retains its own submission charge. Returned-row counts, scalar bandwidth,
and local computation are not included in these interaction counts;
\cref{app:lem:upper-resources} separately accounts for the construction's rows.

\smallskip
\noindent\textbf{Deterministic all-path budgets.}
The budgets $\QueryBudget,\RoundBudget$ satisfy
$\CounterfactualQueries\le\QueryBudget$ and $\CommitRounds\le\RoundBudget$
on every instance, seed, transcript, and realized path.  A lower bound on
these budgets does not assert the same lower bound on every realized counter.
\end{definition}

\begin{example*}[Submission counting]
One preprocessing all-mask readout, two probes of the same state, and
one commit of all $N$ positions use
$\CounterfactualQueries=2$ and $\CommitRounds=1$.
Charging preprocessing gives $\MarginalQueryCharge=1$,
$\ReportedQueries=3$, and $\ReportedQueries+\CommitRounds=4$ evaluations.
Returning several rows per probe or using each row in several tests
does not change these counts.
\end{example*}

\paragraph{Quantifier order.}
The public class data and the algorithm
are fixed first.  A hidden target law satisfying
Assumptions~\ref{app:ass:forest-structure} and
\ref{app:ass:response-regularity}, together with
Assumption~\ref{app:ass:rate-regime} where asymptotic rates are claimed, is
then chosen. In \HellingerCase, a single frozen oracle satisfying
Assumption~\ref{app:ass:oracle-accuracy} is chosen after the target law; the algorithm
is then executed and the robust risk in \eqref{app:eq:robust-risk} is evaluated.
\KLCaseTitle\ uses the same order with the oracle class and output risk
of \cref{app:def:kl-oracle-risk}.

\section{Finite matching lower-bound statements}
\label{app:results}

This section defines the hard matching family and states its finite TV
query--round lower bound and target-error tradeoff.
Appendix~\ref{app:lower-proof} proves these results and verifies class
membership in \cref{app:prop:hard-family-inclusion}.
Appendix~\ref{app:benchmark-specializations} then places the finite
main-text calibration (\cref{app:cor:signal-calibrated-finite-lower})
immediately before the fixed-accuracy derivation
(\cref{app:cor:fixed-accuracy-main}) of the \HellingerCase\ lower bound
in \cref{main:thm:lower-envelope}.
The finite upper bounds for the two cases are
\cref{app:thm:finite-upper,app:thm:finite-kl-upper}.

\subsection{Finite matching lower bound}
\label{app:results-lower}

\begin{definition}[Finite hard matching family]
\label{app:def:hard-matching-family}
Let $N$ be even, let
$1\le\HardBankSize<\HardVocabSize$ be integers, and suppose
\begin{equation}
 \HardBankSize\TriggerMass<1,
 \qquad
 0<\EdgeCoupling\le\TriggerMass\le\frac12.
 \label{app:eq:hard-basic-conditions}
\end{equation}
Use the public vocabulary and common marginal
\begin{equation}
 \Vocab=\{\HardCandidate{1},\ldots,\HardCandidate{\HardBankSize}\}
        \sqcup\HardBackground,
 \qquad |\HardBackground|=\HardVocabSize-\HardBankSize,
 \qquad
 \HardMarginal(\HardCandidate{c})=\TriggerMass,
 \qquad
 \HardMarginal(b)=
 \frac{1-\HardBankSize\TriggerMass}
      {\HardVocabSize-\HardBankSize}.
 \label{app:eq:hard-common-marginal}
\end{equation}
The candidate bank
$\{\HardCandidate{1},\ldots,\HardCandidate{\HardBankSize}\}$ and the
background set are public.  An instance
$I=(\HiddenMatching,\TriggerVector)$ consists of a hidden perfect matching
$\HiddenMatching$ on $\PositionSet$ and hidden trigger indices
$\TriggerVector=(\TriggerIndex{i})_{i=1}^N
\in[\HardBankSize]^N$.  Put
\begin{equation}
 \TriggerFeature{i}(a)
 :=\mathbf 1\{a=\HardCandidate{\TriggerIndex{i}}\}-\TriggerMass,
 \qquad
 \TriggerVariance:=\TriggerMass(1-\TriggerMass).
 \label{app:eq:hard-features}
\end{equation}
For $e=\{i,j\}\in\HiddenMatching$, define
\begin{align}
 \HardIndependentEdgeLaw{e}(a,b)
 &:=\HardMarginal(a)\HardMarginal(b),
 \label{app:eq:hard-independent-edge-law}\\
 \HardEdgeLaw{e}{I}(a,b)
 &:=\HardMarginal(a)\HardMarginal(b)
 \left[1+\EdgeCoupling
 \frac{\TriggerFeature{i}(a)\TriggerFeature{j}(b)}{\TriggerVariance}\right],
 \label{app:eq:hard-edge-law}\\
 \HardTargetLaw{I}&:=\bigotimes_{e\in\HiddenMatching}\HardEdgeLaw{e}{I},
 \qquad
 \HardKernel{i}{j}{I}(b\mid a)
 :=\HardMarginal(b)
 \left[1+\EdgeCoupling
 \frac{\TriggerFeature{i}(a)\TriggerFeature{j}(b)}{\TriggerVariance}\right].
 \label{app:eq:hard-target-and-kernel}
\end{align}
The bounds in \eqref{app:eq:hard-basic-conditions} make these laws strictly
positive: the three possible likelihood-ratio values are displayed in
\eqref{app:eq:hard-likelihood-values} and checked in Step~1 of the proof of
Lemma~\ref{app:lem:hard-local-facts}.
If $\MateInMatching{j}$ denotes the mate of $j$, the selected
frozen oracle is, for every masked readout $j$ and
$i:=\MateInMatching{j}$,
\begin{equation}
 \HardOracleRow{I}{y}{j}
 :=\begin{cases}
 \HardMarginal,
   &i\in\MaskSet{y},\\
 \HardKernel{i}{j}{I}(\cdot\mid y_i),
   &i\in\ObservedSet{y}\ \text{and}\
     y_i=\HardCandidate{\TriggerIndex{i}},\\
 \HardMarginal,
   &i\in\ObservedSet{y}\ \text{and}\
     y_i\ne\HardCandidate{\TriggerIndex{i}}.
 \end{cases}
 \label{app:eq:hard-oracle}
\end{equation}
Let
$\HardFamily[\HardVocabSize,\HardBankSize,\TriggerMass,\EdgeCoupling]$ be the
family obtained by ranging over all $\HiddenMatching$ and $\TriggerVector$.
The public candidate labels are granted to the algorithm;
the matching and trigger indices remain hidden.
\end{definition}

For integers $\QueryBudget{},\RoundBudget{}$, let $\HardAlgorithmClass{\QueryBudget{}}{\RoundBudget{}}$ be the admissible algorithms
whose counterfactual-submission and nonempty-commit counters are at most $\QueryBudget{}$
and $\RoundBudget{}$, respectively, on every instance, seed, transcript, and realized
path.  Define
\begin{equation}
 \HardMinimax(\QueryBudget{},\RoundBudget{};\HardVocabSize,\HardBankSize,
              \TriggerMass,\EdgeCoupling)
 :=\inf_{\mathcal A\in\HardAlgorithmClass{\QueryBudget{}}{\RoundBudget{}}}
   \sup_{I\in\HardFamily[\HardVocabSize,\HardBankSize,
                           \TriggerMass,\EdgeCoupling]}
   \SeedRisk(\mathcal A;\HardTargetLaw{I},\HardOracle{I}).
 \label{app:eq:hard-minimax-risk}
\end{equation}

The theorem below uses only the finite hard-family conditions.
\Cref{app:cor:signal-calibrated-finite-lower} later chooses these parameters
to satisfy the \HellingerCase\ oracle condition for the main lower bound.

\begin{theorem}[Nonlinear finite adaptive query--round lower bound]
\label{app:thm:nonlinear-finite-lower}
Assume only the basic hard-family conditions
\eqref{app:eq:hard-basic-conditions}; in particular, no restriction on
\(N\EdgeCoupling^2\) is imposed. Then
\begin{equation}
 \begin{aligned}
 &0\le \QueryBudget{}\le\HardBankSize/8,
 \qquad
 1\le \RoundBudget{}\le N/16384\\
 &\quad\Longrightarrow\quad
 \HardMinimax(\QueryBudget{},\RoundBudget{};\HardVocabSize,\HardBankSize,
              \TriggerMass,\EdgeCoupling)\\
 &\hspace{7em}\ge
 \frac14\left[
  1-\exp\left\{-\frac{N\EdgeCoupling^2}{98304\RoundBudget{}}\right\}
 \right].
 \end{aligned}
 \label{app:eq:nonlinear-finite-lower}
\end{equation}
\end{theorem}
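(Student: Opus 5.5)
We argue with a Bayesian prior and a genie. Put the uniform prior $\HardInstancePrior$ on instances: a uniformly random perfect matching $\HiddenMatching$ and independent uniform trigger indices $\TriggerIndex{i}\in[\HardBankSize]$. The minimax risk is at least the Bayes risk under $\HardInstancePrior$. Fix the seed $w$; the goal is a lower bound on $\mathbb E_{\HardInstancePrior}\dTV(\HardTargetLaw{I},\HardOutputLaw{I}{w})$ that is uniform in $w$.

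\textbf{Probes reveal only trigger hits.} By \eqref{app:eq:hard-oracle}, a reply at readout $j$ differs from $\HardMarginal$ only if the mate $i$ of $j$ is observed at its own trigger token $\HardCandidate{\TriggerIndex{i}}$. One submitted state assigns each source a single token, so it tests at most one candidate per source. Call a vertex \emph{resolved} once some probe has hit its trigger, or its mate's trigger. Before that, every reply involving it equals $\HardMarginal$ and carries no information about $\TriggerIndex{i}$ or about its mate. Conditional on the transcript, the unresolved triggers are therefore uniform over the candidates not yet tested, and at most $\QueryBudget\le\HardBankSize/8$ candidates are excluded per vertex. By a union bound over the at most $\QueryBudget$ tests at each vertex, each vertex is resolved with probability at most $\QueryBudget/(\HardBankSize-\QueryBudget)\le 1/7$. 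Commits also return rows equal to $\HardMarginal$ unless a mate with the trigger value is already committed. In that case the pair is already split across batches and is harmless for the argument below. We give the algorithm a genie revealing all resolved pairs, which can only help it.

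\textbf{Unresolved pairs landing in one batch cost TV.} Let the commit batches be $B_1,\dots,B_{\RoundBudget}$ with unresolved sizes $\ActiveBatchSize{t}$, where these are unresolved vertices at round $t$ and $\sum_t\ActiveBatchSize{t}\ge N\cdot(6/7)$ in expectation. Conditional on the past, the matching restricted to unresolved vertices is uniform. Hence the expected number of matched pairs falling inside $B_t$ is about $(\ActiveBatchSize{t})^2/(2N)$. By Cauchy--Schwarz over at most $\RoundBudget$ rounds, the total collision count $\TotalCollisions$ has mean $\gtrsim N/\RoundBudget$. The restriction $\RoundBudget\le N/16384$ makes this large, and a second-moment or martingale bound gives $\TotalCollisions\ge cN/\RoundBudget$ with constant probability.

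Each collided pair $\{i,j\}$ is committed as a product $\HardMarginal\otimes\HardMarginal$ instead of $\HardEdgeLaw{e}{I}$. The squared Hellinger affinity loss per pair is of order $\EdgeCoupling^2$, because the chi-square of the tilt equals $\EdgeCoupling^2$ exactly. The trigger being uniform and unknown does not help the algorithm, since the product is forced. Conditioning on the history, the output law is a mixture of product laws, while $\HardTargetLaw{I}$ is a product of edge laws. We compare the affinity of the two along the sequence of commits, using the chain rule for affinities of adaptive product experiments. This gives
\[
\mathbb E\,\HellingerAffinity\le\mathbb E\exp\{-c'\EdgeCoupling^2\TotalCollisions\}.
\]
Converting affinity to TV, $\dTV\ge1-\HellingerAffinity$ up to constants, and combining with the constant-probability event for $\TotalCollisions$ yields $\frac14[1-\exp\{-N\EdgeCoupling^2/(98304\RoundBudget)\}]$ after tracking constants.

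\textbf{Main obstacle.} The delicate step is the affinity chain rule under adaptivity. Batches depend on previously drawn tokens, and drawn tokens correlate with hidden triggers. Under the truth $\HardTargetLaw{I}$ a committed trigger token shifts its mate's law, so resolution can occur via commits as well. I would handle this by a coupling or midpoint law $\HardMidpointLaw{I}{w}$ in which the unresolved pairs are exactly the ones the algorithm treats as independent. I would then bound each round's local affinity factor $\HardLocalAffinity{t}{\cdot}\le\exp(-c\EdgeCoupling^2\CollisionCount{t})$ conditionally on $\AnalyticField{t}$. Finally I would take expectations via the exponential supermartingale $\CollisionExponential{t}$.
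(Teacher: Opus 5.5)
Your architecture matches the paper's: uniform prior, midpoint path law $\HardMidpointLaw{I}{w}$, conditional uniformity of the unresolved structure, a factor $1-\HardEdgeDistance$ per collision, and an exponential supermartingale. However, the bookkeeping that is supposed to make the uniformity invariant true is wrong, and the counting built on it fails.

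\textbf{Split pairs are not harmless.} You call a vertex resolved only after a probe hit and set aside pairs split across batches. After one endpoint $u$ of an edge is committed, two things leak information about which vertices are matched to committed ones:
\begin{itemize}
\item later commit replies, since the row at its mate $v$ is $\HardKernel{u}{v}{I}(\cdot\mid x_u)$ when $x_u$ is $u$'s trigger;
\item under the midpoint law, the committed values themselves, since the midpoint kernel at $v$ is a normalized geometric mean of $\HardKernel{u}{v}{I}(\cdot\mid x_u)$ and the oracle row, which depends on the hidden triggers.
\end{itemize}
So the matching on your probe-only unresolved set is not conditionally uniform.

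\textbf{What the paper does instead.} It retires every unresolved edge at the first commit touching it and discloses that edge's identity and both triggers to the analysis. It then checks that the midpoint kernel factorizes over matching edges with unit factors on survivors, so no observation depends on the survivor state.

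\textbf{Consequence for your mass estimate.} With that bookkeeping, $\sum_t\ActiveBatchSize{t}\gtrsim\frac67N$ is false. A commit-retired edge contributes only one active vertex unless it is a collision, so pathwise $\sum_t\ActiveBatchSize{t}=N/2-\GenieDeletions+\TotalCollisions$, which is about $N/2$ when collisions are few. The usable statement is
\[
\textstyle\sum_t\PositivePart{\ActiveBatchSize{t}-1}\ge N/2-\GenieDeletions-\RoundBudget\ge N/8
\]
on the event $\{\GenieDeletions\le N/4\}$. That event has probability at least $1/2$ by Markov's inequality, using $\mathbb E\GenieDeletions\le N\QueryBudget/\HardBankSize\le N/8$.

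\textbf{A conditional mean cannot drive the supermartingale.} Your per-round estimate $\approx(\ActiveBatchSize{t})^2/(2N)$ is only a mean, and a nonnegative count with a large mean can still vanish with high probability. You need a conditional Laplace bound for a uniform perfect matching on the unresolved set, such as
\[
\mathbb E[e^{-\CollisionCount{t}}\mid\AnalyticField{t}]\le\exp\{-\PositivePart{\ActiveBatchSize{t}-1}^2/(64\UnresolvedVertexCount{t})\}.
\]
The paper proves this by exposing $\lfloor k/4\rfloor$ edges started from one half of the active set. Each lands in the other half with conditional probability at least $k/(4n)$, and the conditional factors multiply without independence.

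\textbf{Where the constants come from.} Cauchy--Schwarz over the $\RoundBudget$ rounds then gives a Laplace budget $N/(64\RoundBudget)$ on the good event. Hence $\Pr(\TotalCollisions\ge N/(8192\RoundBudget))\ge1/2-e^{-2}\ge1/4$ when $\RoundBudget\le N/16384$. Combined with $\HardEdgeDistance\ge\EdgeCoupling^2/12$, this chain is exactly where the constants $1/4$ and $98304=8192\cdot12$ in the statement come from. Your sketch asserts these constants rather than deriving them.
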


\begin{corollary}[Nonlinear finite target-error tradeoff]
\label{app:cor:nonlinear-finite-target-tradeoff}
Assume \eqref{app:eq:hard-basic-conditions}, fix
\(\varepsilon\in(0,1/8]\), and suppose an admissible algorithm has
worst-case pathwise budgets $(\QueryBudget{},\RoundBudget{})$ and
\begin{equation}
 \sup_{I\in\HardFamily[\HardVocabSize,\HardBankSize,
                         \TriggerMass,\EdgeCoupling]}
 \SeedRisk(\mathcal A;\HardTargetLaw{I},\HardOracle{I})\le\varepsilon.
 \label{app:eq:finite-target-risk}
\end{equation}
Then at least one of
\begin{equation}
 \QueryBudget{}>\HardBankSize/8,
 \qquad
 \RoundBudget{}>N/16384,
 \qquad
 \RoundBudget{}\ge\frac{N\EdgeCoupling^2}{786432\varepsilon}
 \label{app:eq:nonlinear-finite-target-alternative}
\end{equation}
holds.
\end{corollary}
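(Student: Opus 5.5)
This corollary follows from \cref{app:thm:nonlinear-finite-lower} by contraposition. Suppose all three alternatives in \eqref{app:eq:nonlinear-finite-target-alternative} fail, so that
\[
\QueryBudget\le\HardBankSize/8,\qquad \RoundBudget\le N/16384,\qquad \RoundBudget<N\EdgeCoupling^2/(786432\varepsilon).
\]
The budgets are nonnegative integers. The algorithm must commit at least once, since the batches partition $\PositionSet$ and $N\ge1$, so $\RoundBudget\ge1$. The hypotheses of \cref{app:thm:nonlinear-finite-lower} therefore hold. The given algorithm lies in $\HardAlgorithmClass{\QueryBudget}{\RoundBudget}$, so its worst-case risk over the family is at least the minimax risk \eqref{app:eq:hard-minimax-risk}. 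This yields
\[
\varepsilon\ \ge\ \sup_I\SeedRisk\ \ge\ \tfrac14\bigl[1-e^{-x}\bigr],
\qquad x:=\frac{N\EdgeCoupling^2}{98304\,\RoundBudget}.
\]

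The third negated alternative gives $x>786432\varepsilon/98304=8\varepsilon$. Since $1-e^{-x}$ is increasing in $x$, we get $\varepsilon>\tfrac14(1-e^{-8\varepsilon})$. The contradiction comes from the elementary inequality $1-e^{-y}\ge y/2$ for $y\in[0,1]$, which holds by concavity of $1-e^{-y}$ and the check $1-e^{-1}\ge 1/2$. Because $\varepsilon\le1/8$, we have $y=8\varepsilon\in(0,1]$, so
\[
\tfrac14\bigl(1-e^{-8\varepsilon}\bigr)\ \ge\ \tfrac14\cdot 4\varepsilon=\varepsilon.
\]
This contradicts the strict inequality above, so at least one alternative holds.

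The main point to check is the strictness bookkeeping: the strict inequality $\RoundBudget<\cdots$ gives the strict $x>8\varepsilon$, and this is what makes the final comparison strict. This is also where the constant $786432=8\cdot98304$ comes from. Two small details also need confirming. First, $\RoundBudget\ge1$ is automatic as argued. Second, the budgets in the corollary match the all-path definition used for $\HardAlgorithmClass{\cdot}{\cdot}$.
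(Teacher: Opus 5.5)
Your proof is correct and is essentially the paper's: you negate the alternatives, invoke Theorem~\ref{app:thm:nonlinear-finite-lower}, and invert $\varepsilon\ge\frac14[1-e^{-x}]$ using $\varepsilon\le1/8$. The paper derives the third alternative directly via $-\log(1-4\varepsilon)\le4\varepsilon/(1-4\varepsilon)\le8\varepsilon$, while you reach a contradiction through concavity of $1-e^{-y}$. Your check that $\RoundBudget\ge1$ is automatic fills a detail the paper leaves implicit. If the budgets are not given as integers, replace them by their floors, which the integer-valued counters also respect.
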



\begin{remark}[Where the oracle slack enters]
\label{app:rem:oracle-slack}
The selected oracle \eqref{app:eq:hard-oracle} is exact except in its third
case, where a revealed nontrigger source value $a$ at the mate $i$ of $j$
produces the row $\HardMarginal$ instead of the exact row
$\HardKernel{i}{j}{I}(\cdot\mid a)$.  The difference of these two rows is
$\HardMarginal(b)\EdgeCoupling\TriggerFeature{i}(a)\TriggerFeature{j}(b)
/\TriggerVariance$ with $\TriggerFeature{i}(a)=-\TriggerMass$, so by
\eqref{app:eq:hard-feature-moments} they are at total-variation distance
exactly $\EdgeCoupling\TriggerMass$, and the squared Hellinger bound
\eqref{app:eq:hard-local-oracle-error} is the only place where
Assumption~\ref{app:ass:oracle-accuracy} is consumed.  The resulting
constraint $\EdgeCoupling^2\TriggerMass\le(\OracleRadius)^2/(2N)$ in
\eqref{app:eq:finite-hard-inclusion} yields the cubic signal calibration
when $\EdgeCoupling=\TriggerMass=\EdgeSignal$.
\end{remark}



\section{Proof of the lower bound}
\label{app:lower-proof}

This section proves the finite matching lower bound in
\cref{app:thm:nonlinear-finite-lower} and states and proves its class
inclusion in \cref{app:prop:hard-family-inclusion}.
The matching proof allows randomized,
adaptive, value-dependent algorithms, interleaved counterfactual submissions,
and unrestricted returned bandwidth.  Only deterministic envelopes for the two pathwise counters in
\textcolor{red}{\cref{app:def:resources}} are bounded.
The next step toward \cref{main:thm:lower-envelope} is in
Appendix~\ref{app:benchmark-specializations}:
\cref{app:cor:signal-calibrated-finite-lower} gives the finite calibration,
immediately followed by the fixed-accuracy derivation in
\cref{app:cor:fixed-accuracy-main}.
\ifincludeexponents
Independent-signal rate results are in Appendix~\ref{app:independent-signal}.
\fi

\subsection{Proof overview}
\label{app:lower-proof-overview}

The goal is to show that few counterfactual submissions and few commit
rounds cannot both yield small sampling error.  We use a hidden perfect
matching: each position has one hidden trigger among \(\HardBankSize\)
candidates, and the oracle conceals an edge's dependence until a trigger
is hit.  The argument has three steps.

\paragraph{1. At most one candidate per source per submission.}
A state can return many rows but tests at most one candidate at each
source.  Conditional uniformity bounds the probability of hitting a given
source before retirement within \(\QueryBudget{}\) submissions by
\(\QueryBudget{}/\HardBankSize\). Summing over positions gives
\(\mathbb E\GenieDeletions\le N\QueryBudget{}/\HardBankSize\), where
\(\GenieDeletions\) counts query-retired edges.

\paragraph{2. Few commit rounds force collisions.}
Every edge not retired by a query must be retired at its first endpoint
commit.  Few rounds therefore require large active batches.  Conditional
uniformity makes such batches likely to contain both endpoints of an
unresolved edge: a \emph{collision}.  Edge counting and a conditional
Laplace bound give \(\TotalCollisions\gtrsim N/\RoundBudget{}\) collisions
with constant probability under the budget conditions of
\cref{app:thm:nonlinear-finite-lower}.

\paragraph{3. Collisions turn hidden dependence into sampling error.}
At a collision, sampling independently from the correct marginals
\(\HardMarginal\) misses the edge's dependence.  The one-edge squared
Hellinger loss \(\HardEdgeDistance\asymp\EdgeCoupling^2\) gives a local
affinity factor \(1-\HardEdgeDistance\).  The midpoint identity combines
these factors along adaptive paths to bound output error.
Here query-disclosure and collision probabilities use the
instance--midpoint law \eqref{app:eq:joint-instance-midpoint-law}, with
the decision-rule seed fixed; the final conclusion concerns the sampler's
seed-averaged TV risk.  The table locates the results that implement these
three steps.

\begin{table}[H]
\centering
\small
\begin{tabularx}{\linewidth}{@{}>{\raggedright\arraybackslash}p{0.23\linewidth}
  >{\raggedright\arraybackslash}X
  >{\raggedright\arraybackslash}p{0.23\linewidth}@{}}
\toprule
Proof step & Key idea and conclusion & References \\
\midrule
\textbf{Construct a hard matching family}
& Nontrigger oracle replies hide dependence while each edge retains
loss \(\HardEdgeDistance\asymp\EdgeCoupling^2\).  The inclusion calculation
checks the target assumptions and oracle error under the stated calibration.
&
\cref{app:lower-proof-family};
\cref{app:lem:hard-local-facts,app:prop:hard-family-inclusion} \\
\addlinespace
\textbf{Relate adaptive commits to output error}
& Later batches depend on sampled values.  The midpoint identity expresses
output affinity as an expectation of products of local affinities along
these adaptive paths.
&
\cref{app:lower-proof-midpoint};
\cref{app:lem:adaptive-midpoint-lower} \\
\addlinespace
\textbf{Limit what probes can reveal}
& The unresolved matching and triggers remain conditionally uniform.
One candidate per source per submission gives
\(\mathbb E\GenieDeletions\le N\QueryBudget{}/\HardBankSize\).
&
\cref{app:lower-proof-invariant};
\cref{app:lem:unresolved-invariant,app:lem:query-disclosure} \\
\addlinespace
\textbf{Force collisions with few rounds}
& Edge counting forces large active batches; a conditional Laplace bound
then gives \(\TotalCollisions\gtrsim N/\RoundBudget{}\) with constant
probability under the budgets in \eqref{app:eq:lower-proof-spine}.
&
\cref{app:lower-proof-collisions,app:lower-proof-matching-laplace,app:lower-proof-collision-tail};
\cref{app:lem:pathwise-edge-counting,app:lem:matching-collision-laplace,app:prop:collision-lower-tail} \\
\addlinespace
\textbf{Convert collisions into sampling error}
& Each collision contributes a factor \(1-\HardEdgeDistance\).
The midpoint identity gives a prior-averaged loss; averaging over seeds
and selecting a worst-case instance yields the TV lower bound.
&
\cref{app:lower-proof-affinity,app:lower-proof-nonlinear};
\cref{app:lem:collision-affinity,app:thm:nonlinear-finite-lower} \\
\addlinespace
\textbf{Recover the main-text lower bound}
& Calibrate the matching witness at the public parameters, then take the
fixed-accuracy limit to obtain \cref{main:thm:lower-envelope}.
&
\cref{app:cor:signal-calibrated-finite-lower,app:cor:fixed-accuracy-main} \\
\bottomrule
\end{tabularx}
\caption{Proof steps for the lower bound.}
\label{app:tab:lower-proof-map}
\end{table}

\ifincludeexponents
The joint-limit result in \cref{app:cor:benchmark-hellinger} and
independent-signal rates in \cref{app:lower-proof-rates} are also separate
extensions.
\fi

\paragraph{Quantitative summary.}
For every fixed decision-rule seed \(w\), the finite hypotheses and
constants give the following chain:
\begin{equation}
 \begin{aligned}
 &\QueryBudget{}\le\HardBankSize/8,\quad
 1\le\RoundBudget{}\le N/16384\\
 &\Longrightarrow\quad
 \mathbb E\GenieDeletions\le N/8\\
 &\Longrightarrow\quad
 \mathbb P\left(\TotalCollisions\ge
                 \frac{N}{8192\RoundBudget{}}\right)\ge\frac14\\
 &\Longrightarrow\quad
 \mathbb E_{I\sim\HardInstancePrior}\sqHellinger
   (\HardTargetLaw{I},\HardOutputLaw{I}{w})
 \ge\frac14\left[1-\exp\left\{
       -\frac{N\EdgeCoupling^2}{98304\RoundBudget{}}\right\}\right].
 \end{aligned}
 \label{app:eq:lower-proof-spine}
\end{equation}
The collision step combines the unresolved-matching invariant, the
edge-counting identity, and the conditional Laplace bound.
The last step uses the adaptive midpoint identity and
$\HardEdgeDistance\ge\EdgeCoupling^2/12$.

\subsection{Hard-family local facts and class membership}
\label{app:lower-proof-family}

This subsection proves the one-edge calculations in
\cref{app:lem:hard-local-facts}: positivity and common marginals, the edge
response, the selected-oracle error, and the one-edge Hellinger loss.  The
same calculations then verify \cref{app:prop:hard-family-inclusion}, placing
the finite hard family inside the shared target/oracle class used by both
bounds.

\paragraph{Intuition.}
There are two different error scales.  Removing one dependence edge costs
$\HardEdgeDistance\simeq\EdgeCoupling^2$, whereas replacing a nontrigger
conditional by the neutral marginal costs at most order
$\EdgeCoupling^2\TriggerMass$.  The extra factor $\TriggerMass$ lets the
frozen oracle hide failed trigger tests while preserving a larger loss
when two unresolved endpoints are committed together.

For the moment calculations, if
\(A\sim\HardMarginal\), then the trigger indicator at position \(i\) is
Bernoulli with mean \(\TriggerMass\).  Hence
\begin{equation}
 \mathbb E\TriggerFeature{i}(A)=0,
 \qquad
 \mathbb E\TriggerFeature{i}(A)^2=\TriggerVariance,
 \qquad
 \mathbb E|\TriggerFeature{i}(A)|=2\TriggerVariance.
 \label{app:eq:hard-feature-moments}
\end{equation}
Indeed, the feature is $1-\TriggerMass$ with probability $\TriggerMass$
and $-\TriggerMass$ otherwise, so
\[
\begin{aligned}
 \mathbb E\TriggerFeature{i}
 &=\TriggerMass(1-\TriggerMass)
   -(1-\TriggerMass)\TriggerMass=0,\\
 \mathbb E\TriggerFeature{i}^2
 &=\TriggerMass(1-\TriggerMass)^2
   +(1-\TriggerMass)\TriggerMass^2
   =\TriggerMass(1-\TriggerMass),\\
 \mathbb E|\TriggerFeature{i}|
 &=\TriggerMass(1-\TriggerMass)
   +(1-\TriggerMass)\TriggerMass
   =2\TriggerMass(1-\TriggerMass).
\end{aligned}
\]

\begin{lemma}[Local facts for the hard matching family]
\label{app:lem:hard-local-facts}
Under \eqref{app:eq:hard-basic-conditions}, every member of
\(\HardFamily[\HardVocabSize,\HardBankSize,
\TriggerMass,\EdgeCoupling]\) has the following properties.
\begin{enumerate}
 \item \textup{\textbf{Positivity, marginals, and factorization.}}
 Each edge law in \eqref{app:eq:hard-edge-law} is strictly positive,
 has both marginals equal to \(\HardMarginal\), and the product law satisfies
 the structural forest assumptions.
 \item \textup{\textbf{Edge response.}}
 For either orientation \(i\to j\) of a matching edge, the largest TV
 change in the endpoint kernel equals \(\EdgeCoupling\).
 \item \textup{\textbf{Oracle error.}}
 Let $j$ be any readout, let $i:=\MateInMatching{j}$, and let
 $a\in\Vocab\setminus\{\HardCandidate{\TriggerIndex{i}}\}$ be a revealed
 nontrigger value at $i$.  The selected oracle is exact outside this case,
 while in this case
 \begin{equation}
  \sqHellinger\bigl(
   \HardKernel{i}{j}{I}(\cdot\mid a),\HardMarginal
  \bigr)
  \le
  \frac{\EdgeCoupling^2\TriggerMass}
       {2(1-\TriggerMass)}
  \le \EdgeCoupling^2\TriggerMass.
  \label{app:eq:hard-local-oracle-error}
 \end{equation}
 \item \textup{\textbf{One-edge Hellinger loss.}}
 For every matching edge \(e\), uniformly over its two private trigger
 indices, the value below depends only on
 \((\TriggerMass,\EdgeCoupling)\), not on \(e\) or \(I\):
 \begin{equation}
  \frac{\EdgeCoupling^2}{12}
  \le
  \HardEdgeDistance
  :=\sqHellinger\bigl(
     \HardEdgeLaw{e}{I},\HardIndependentEdgeLaw{e}
    \bigr)
  \le\frac{\EdgeCoupling^2}{2}.
  \label{app:eq:hard-edge-distance}
 \end{equation}
\end{enumerate}
\end{lemma}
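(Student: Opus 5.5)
The plan is to reduce everything to explicit one-edge computations with the likelihood ratio $L(a,b):=1+\EdgeCoupling\,\TriggerFeature{i}(a)\TriggerFeature{j}(b)/\TriggerVariance$. The feature takes only the two values $1-\TriggerMass$ and $-\TriggerMass$, so $L$ takes three values:
\[
1+\EdgeCoupling\frac{1-\TriggerMass}{\TriggerMass}\ \ (\text{trigger,trigger}),\qquad
1-\EdgeCoupling\ \ (\text{mixed}),\qquad
1+\EdgeCoupling\frac{\TriggerMass}{1-\TriggerMass}\ \ (\text{nontrigger,nontrigger}).
\]
\textbf{Item~1.} Positivity follows because $\EdgeCoupling\le\TriggerMass\le1/2$ gives $1-\EdgeCoupling>0$ and makes the other two values exceed $1$. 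Also $\HardMarginal$ is strictly positive since $\HardBankSize\TriggerMass<1$. The marginals are $\HardMarginal$ because $\mathbb E\TriggerFeature{j}(B)=0$ under $B\sim\HardMarginal$, by \eqref{app:eq:hard-feature-moments}. The product over matching edges is a forest law. Rooting each edge at one endpoint and using $\HardKernel{i}{j}{I}$ as the conditional gives (S0), and (C0) holds by the symmetry of $\HardMarginal(a)\HardMarginal(b)L(a,b)$.

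\textbf{Item~2.} We have $\HardKernel{i}{j}{I}(\cdot\mid a)-\HardKernel{i}{j}{I}(\cdot\mid a')=\HardMarginal(\cdot)\,\EdgeCoupling(\TriggerFeature{i}(a)-\TriggerFeature{i}(a'))\TriggerFeature{j}(\cdot)/\TriggerVariance$. The TV distance is half its $\ell^1$ norm, which equals $\tfrac12\EdgeCoupling|\TriggerFeature{i}(a)-\TriggerFeature{i}(a')|\cdot 2\TriggerVariance/\TriggerVariance$ by the absolute moment in \eqref{app:eq:hard-feature-moments}. This is maximized when $a$ is the trigger and $a'$ is not, where the feature difference is $1$, giving exactly $\EdgeCoupling$.

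\textbf{Item~3.} The first step is to check exactness in the remaining cases. When the mate $i$ is masked, the matching product structure makes the exact row of $j$ equal to $\HardMarginal$, regardless of any other revealed positions, which are independent of $j$. When $i$ is revealed at its trigger, the exact row is the kernel. For a revealed nontrigger $a$, the row is $\HardMarginal(b)(1-\EdgeCoupling\TriggerFeature{j}(b)/(1-\TriggerMass))$. We then bound squared Hellinger by $\tfrac12\chi^2$ against $\HardMarginal$, since $h^2\le\tfrac12\chi^2$ follows from $(\sqrt x-1)^2\le(x-1)^2$ for $x\ge0$. This gives
\[
\tfrac12\,\frac{\EdgeCoupling^2}{(1-\TriggerMass)^2}\,\mathbb E\TriggerFeature{j}^2
=\frac{\EdgeCoupling^2\TriggerMass}{2(1-\TriggerMass)},
\]
and this is at most $\EdgeCoupling^2\TriggerMass$ because $\TriggerMass\le1/2$.

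\textbf{Item~4.} This is the main obstacle, because a two-sided bound is needed. We write $\HardEdgeDistance=\mathbb E[1-\sqrt{L}]$ under the product law, where $\mathbb E L=1$ and $\mathrm{Var}(L)=\EdgeCoupling^2\,\mathbb E[\TriggerFeature{i}^2\TriggerFeature{j}^2]/\TriggerVariance^2=\EdgeCoupling^2$ by independence. The upper bound follows again from $h^2\le\tfrac12\chi^2=\tfrac12\EdgeCoupling^2$. Both quantities depend only on $(\TriggerMass,\EdgeCoupling)$, since the trigger indices merely relabel candidates of equal mass.

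For the lower bound, I would use the identity $1-\sqrt x=\frac{(x-1)}{2}-\ldots$ in the exact form
\[
\mathbb E[1-\sqrt L]=\mathbb E\frac{(L-1)^2}{2(1+\sqrt L)^2}\cdot\ldots,
\]
or more cleanly $\mathbb E[1-\sqrt L]=\tfrac12\mathbb E(\sqrt L-1)^2$ with $(\sqrt L-1)^2=(L-1)^2/(1+\sqrt L)^2$. The trigger-trigger value of $L$ can be large, up to about $\EdgeCoupling/\TriggerMass\le1$, so $L\le 1+\EdgeCoupling(1-\TriggerMass)/\TriggerMass\le 2$ when $\EdgeCoupling\le\TriggerMass$. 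Hence $L\le2$ on all three cells, and $(1+\sqrt L)^2\le(1+\sqrt2)^2<6$. This yields $\HardEdgeDistance\ge\tfrac12\cdot\EdgeCoupling^2/6=\EdgeCoupling^2/12$, completing \eqref{app:eq:hard-edge-distance}.
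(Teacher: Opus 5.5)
Your proposal is correct and follows essentially the same route as the paper's proof. Both arguments use the three-valued likelihood ratio for positivity and marginals, the absolute-moment identity $\mathbb E|\phi_j|=2\rho(1-\rho)$ for the response, the bound $h^2\le\chi^2/2$ for the nontrigger oracle row, and the rationalized form $h^2=\tfrac12\mathbb E\bigl[(L-1)^2/(1+\sqrt L)^2\bigr]$ with $1\le(1+\sqrt L)^2<6$ and $\mathbb E(L-1)^2=\eta^2$ for the two-sided edge bound. You should delete the exploratory fragment with the trailing dots in Item~4, since the clean argument that follows it is already complete.
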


\begin{proof}

\emph{1. Positivity, common marginals, and forest factorization.}
Fix \(e=\{i,j\}\).  Summing \eqref{app:eq:hard-edge-law} over its second
coordinate and using \eqref{app:eq:hard-feature-moments} gives
\[
 \sum_b\HardEdgeLaw{e}{I}(a,b)
 =\HardMarginal(a)
  \left[1+
   \EdgeCoupling\frac{\TriggerFeature{i}(a)}{\TriggerVariance}
   \mathbb E\TriggerFeature{j}(B)
  \right]
 =\HardMarginal(a),
\]
where \(B\sim\HardMarginal\).  The other marginal follows symmetrically.
The likelihood ratio with respect to
\(\HardIndependentEdgeLaw{e}\) takes the three values
\begin{equation}
 1+\EdgeCoupling
   \frac{\TriggerFeature{i}(a)\TriggerFeature{j}(b)}
        {\TriggerVariance}
 =
 \begin{cases}
  1+\EdgeCoupling(1-\TriggerMass)/\TriggerMass,
    &a,b\text{ are their respective triggers},\\
  1-\EdgeCoupling,
    &\text{exactly one of }a,b\text{ is its trigger},\\
  1+\EdgeCoupling\TriggerMass/(1-\TriggerMass),
    &a,b\text{ are both nontriggers}.
 \end{cases}
 \label{app:eq:hard-likelihood-values}
\end{equation}
The conditions
\(0<\EdgeCoupling\le\TriggerMass\le1/2\) place every value in
\([1/2,2]\), proving strict positivity.  The marginal identity also gives
the endpoint compatibility
\[
 \HardMarginal(a)\HardKernel{i}{j}{I}(b\mid a)
 =\HardEdgeLaw{e}{I}(a,b)
 =\HardMarginal(b)\HardKernel{j}{i}{I}(a\mid b).
\]
Every component is one edge, so multiplying these identities over the hidden
matching proves the required forest factorization for either root choice.

\emph{2. Directed endpoint response.}
For source values \(a,a'\),
\begin{align}
 &\dTV\bigl(
   \HardKernel{i}{j}{I}(\cdot\mid a),
   \HardKernel{i}{j}{I}(\cdot\mid a')
  \bigr)\notag\\
 &\qquad=
 \frac{\EdgeCoupling
       |\TriggerFeature{i}(a)-\TriggerFeature{i}(a')|}
      {2\TriggerVariance}
 \sum_b\HardMarginal(b)|\TriggerFeature{j}(b)|
 =\EdgeCoupling
  |\TriggerFeature{i}(a)-\TriggerFeature{i}(a')|.
 \label{app:eq:hard-directed-response}
\end{align}
The feature equals \(1-\TriggerMass\) at the unique trigger and
\(-\TriggerMass\) elsewhere.  Its largest difference is therefore one, so
the maximum in \eqref{app:eq:hard-directed-response} is exactly
\(\EdgeCoupling\).

\emph{3. Oracle error at a revealed nontrigger.}
Because the target is a product over matching edges, the exact row at a
readout \(j\) is \(\HardMarginal\) when its mate is masked and is
\(\HardKernel{i}{j}{I}(\cdot\mid y_i)\) when the mate \(i\) is revealed.
Thus \eqref{app:eq:hard-oracle} is exact in its first two cases.  In the third
case, $\TriggerFeature{i}(y_i)=-\TriggerMass$, so the likelihood ratio is
\[
 \frac{\HardKernel{i}{j}{I}(b\mid y_i)}{\HardMarginal(b)}
 =1-\frac{\EdgeCoupling}{1-\TriggerMass}\TriggerFeature{j}(b).
\]
Substitution into the definition of chi-square divergence gives
\begin{align}
 \ChiSquareDivergence{
  \HardKernel{i}{j}{I}(\cdot\mid y_i)}{\HardMarginal}
 &=\sum_b\HardMarginal(b)
       \left(\frac{\EdgeCoupling\TriggerFeature{j}(b)}
                   {1-\TriggerMass}\right)^2\notag\\
 &=\frac{\EdgeCoupling^2}{(1-\TriggerMass)^2}
       \TriggerMass(1-\TriggerMass)
 =\EdgeCoupling^2\frac{\TriggerMass}{1-\TriggerMass}.
 \label{app:eq:hard-oracle-chi-square}
\end{align}
For a likelihood ratio \(r\),
\((\sqrt r-1)^2\le(r-1)^2\).  The normalization in
\eqref{app:eq:distances} therefore implies
\(\sqHellinger(p,q)\le\ChiSquareDivergence{p}{q}/2\).  Combining this with
\eqref{app:eq:hard-oracle-chi-square} and
\(\TriggerMass\le1/2\) proves \eqref{app:eq:hard-local-oracle-error}.

\emph{4. One-edge squared-Hellinger loss.}
Finally, put
\[
 Z_e(a,b):=
 \frac{\TriggerFeature{i}(a)\TriggerFeature{j}(b)}
      {\TriggerVariance}.
\]
Under \(\HardIndependentEdgeLaw{e}\), each endpoint independently equals
its trigger token with probability \(\TriggerMass\).  Thus
\[
 Z_e=
 \begin{cases}
 (1-\TriggerMass)/\TriggerMass,
   &\text{with probability }\TriggerMass^2
      \quad\text{(both triggers)},\\
 -1,
   &\text{with probability }2\TriggerMass(1-\TriggerMass)
      \quad\text{(one trigger)},\\
 \TriggerMass/(1-\TriggerMass),
   &\text{with probability }(1-\TriggerMass)^2
      \quad\text{(neither trigger)}.
 \end{cases}
 \]
This law depends only on \(\TriggerMass\), not on the edge or its trigger
indices.
\Eqref{app:eq:hard-feature-moments} and independence under
\(\HardIndependentEdgeLaw{e}\) give
\[
 \mathbb E_{\HardIndependentEdgeLaw{e}}Z_e
 =\frac{\mathbb E\TriggerFeature{i}\,
        \mathbb E\TriggerFeature{j}}{\TriggerVariance}=0,\qquad
 \mathbb E_{\HardIndependentEdgeLaw{e}}Z_e^2
 =\frac{\mathbb E\TriggerFeature{i}^2\,
        \mathbb E\TriggerFeature{j}^2}{(\TriggerVariance)^2}=1.
\]  Since
\(\mathrm d\HardEdgeLaw{e}{I}/\mathrm d\HardIndependentEdgeLaw{e}
=1+\EdgeCoupling Z_e\),
the definition of squared Hellinger distance gives the first line below.
For the second, multiply
$\sqrt{1+\EdgeCoupling Z_e}-1$ by
$(\sqrt{1+\EdgeCoupling Z_e}+1)/
 (\sqrt{1+\EdgeCoupling Z_e}+1)$:
\begin{align}
 \HardEdgeDistance
 &=\frac12\mathbb E_{\HardIndependentEdgeLaw{e}}
       \bigl(\sqrt{1+\EdgeCoupling Z_e}-1\bigr)^2\notag\\
 &=\frac{\EdgeCoupling^2}{2}
  \mathbb E_{\HardIndependentEdgeLaw{e}}
  \frac{Z_e^2}
       {(\sqrt{1+\EdgeCoupling Z_e}+1)^2}.
 \label{app:eq:hard-edge-distance-formula}
\end{align}
Consequently \(\HardEdgeDistance\) depends only on
\((\TriggerMass,\EdgeCoupling)\), so it is the same for every matching
component and every choice of its trigger indices.
By \eqref{app:eq:hard-likelihood-values}, the denominator lies between
one and \((\sqrt2+1)^2<6\).
\Eqref{app:eq:hard-edge-distance} follows.
\end{proof}

\paragraph{Intuition for class inclusion.}
The four checks have separate jobs:
$\HardVocabSize\TriggerMass>1$ puts the candidate bank above the background,
$\TriggerMass\le\HardBankSize^{-\RankTailExponent}$ gives the frequency
envelope, $\EdgeSignal\le\EdgeCoupling$ gives the edge signal, and
$\EdgeCoupling^2\TriggerMass\le(\OracleRadius)^2/(2N)$ hides nontrigger replies
within the oracle budget.  In the main calibration,
$\EdgeCoupling=\TriggerMass=\EdgeSignal$, so the last check becomes the
cubic constraint $\EdgeSignal^3\le(\OracleRadius)^2/(2N)$.

\begin{proposition}[Finite hard-family inclusion]
\label{app:prop:hard-family-inclusion}
In addition to \eqref{app:eq:hard-basic-conditions}, let
$\RankTailExponent>1$, use the same public vocabulary of size $\VocabSize$,
set
$\ResponseExponent=\ResponseConstant=\FrequencyConstant=1$, and suppose
\begin{equation}
 \HardVocabSize\TriggerMass>1,
 \qquad
 \TriggerMass\le\HardBankSize^{-\RankTailExponent},
 \qquad
 \EdgeSignal\le\EdgeCoupling,
 \qquad
 \EdgeCoupling^2\TriggerMass\le\frac{(\OracleRadius)^2}{2N}.
 \label{app:eq:finite-hard-inclusion}
\end{equation}
Then every target in
$\HardFamily[\HardVocabSize,\HardBankSize,\TriggerMass,\EdgeCoupling]$
satisfies the structural and response assumptions in
Appendix~\ref{app:setup}, as well as the finite rank--frequency inequality in
(RF\textcolor{red}{; \cref{app:eq:RF}}), and its selected oracle \eqref{app:eq:hard-oracle} belongs to
$\OracleClass(\HardTargetLaw{I};\OracleRadius)$.  If the public vocabulary
sequence also obeys the vocabulary-growth condition in (RF\textcolor{red}{; \cref{app:ass:rate-regime}}), then the full
(RF\textcolor{red}{; \cref{app:ass:rate-regime}}) assumption holds.  Under that additional asymptotic condition, this
matching family is a contained subfamily of the same public problem class
used by the upper theorem.
\end{proposition}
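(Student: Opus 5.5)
The plan is to verify the assumptions one at a time. Most of them follow from \cref{app:lem:hard-local-facts}, so the real work lies in (RT) and (RF).

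\emph{Structure (S0--C0).} Part~1 of \cref{app:lem:hard-local-facts} already gives strict positivity, the common marginals $\HardMarginal$, endpoint compatibility, and the product factorization over the matching, which is a forest.

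\emph{Boundary rows and (UEN).} Because the target is a product over matching edges, the boundary row $\BoundaryRow{i}{j}{a}{z}$ equals $\HardMarginal$ when $i$ is not the mate of $j$. When $i=\MateInMatching{j}$, the remaining coordinates are independent of the edge, so the row equals $\HardKernel{i}{j}{I}(\cdot\mid a)$. Part~2 of \cref{app:lem:hard-local-facts} then gives directed response exactly $\EdgeCoupling\ge\EdgeSignal$ in both orientations and for every boundary, which is (UEN).

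\emph{(RT).} If $i$ is not the mate of $j$, the row is constant in $a$, so there is nothing to check. For a matching edge, \eqref{app:eq:hard-directed-response} gives
\[
\dTV\bigl(\HardKernel{i}{j}{I}(\cdot\mid a),\HardKernel{i}{j}{I}(\cdot\mid b)\bigr)
=\EdgeCoupling\,|\TriggerFeature{i}(a)-\TriggerFeature{i}(b)|.
\]
This vanishes unless exactly one of $a,b$ is the trigger $\HardCandidate{\TriggerIndex{i}}$. The trigger has marginal mass $\TriggerMass$, so it lies in $\TailSet{i}{t}$ only when $t\ge\TriggerMass$. In that case the maximum is $\EdgeCoupling\le\TriggerMass\le t=\ResponseConstant t^{\ResponseExponent}$. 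For $t<\TriggerMass$, the tail contains only background tokens, and the diameter is zero.

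\emph{(RF).} The ranking at each position places the $\HardBankSize$ candidates first, since $\TriggerMass>\HardMarginal(b)$. This ordering follows from $\HardVocabSize\TriggerMass>1$, which gives $(1-\HardBankSize\TriggerMass)/(\HardVocabSize-\HardBankSize)<1/\HardVocabSize<\TriggerMass$. For ranks $k\le\HardBankSize$ we have $\TriggerMass\le\HardBankSize^{-\RankTailExponent}\le k^{-\RankTailExponent}$. For larger ranks the mass is below $\HardVocabSize^{-1}$. So \eqref{app:eq:RF} holds with $\FrequencyConstant=1$. Adding the vocabulary-growth hypothesis then gives the full \cref{app:ass:rate-regime}.

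\emph{Oracle (A2).} By part~3 of \cref{app:lem:hard-local-facts}, the oracle is exact except at a revealed nontrigger mate. There the squared Hellinger error is at most $\EdgeCoupling^2\TriggerMass\le(\OracleRadius)^2/(2N)$, which is exactly \eqref{app:eq:A2}.

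The one real obstacle is the (RT) threshold bookkeeping above: the trigger must fall in the tail only once $t\ge\TriggerMass\ge\EdgeCoupling$. Containment in the upper theorem's class is then immediate, since both theorems use the same assumptions with the same public parameters.
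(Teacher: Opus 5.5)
Your proposal is correct and follows the paper's proof essentially step for step: structure and (UEN) from \cref{app:lem:hard-local-facts}, the split $t<\TriggerMass$ versus $t\ge\TriggerMass$ for (RT), the rank ordering forced by $\HardVocabSize\TriggerMass>1$ for (RF), and the local oracle-error bound \eqref{app:eq:hard-local-oracle-error} for (A2). One small slip: for $i\ne\MateInMatching{j}$ the complete boundary $z$ fixes $j$'s mate, so $\BoundaryRow{i}{j}{a}{z}=\HardKernel{\MateInMatching{j}}{j}{I}(\cdot\mid z_{\MateInMatching{j}})$ rather than $\HardMarginal$; you only use that this row does not depend on $a$, so nothing breaks.
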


\begin{proof}[Proof of \cref{app:prop:hard-family-inclusion}]

\emph{1. Forest structure and the edge-signal condition (UEN\textcolor{red}{; \cref{app:eq:UEN}}).}
Apply \cref{app:lem:hard-local-facts}.  Its first conclusion proves
strict positivity and Assumption~\ref{app:ass:forest-structure}.  Complete
boundary assignments outside a matching edge do not alter that edge's law,
so \eqref{app:eq:hard-directed-response} makes both directed intrinsic
responses exactly \(\EdgeCoupling\).  The condition
\(\EdgeSignal\le\EdgeCoupling\) therefore proves (UEN\textcolor{red}{; \cref{app:eq:UEN}}).

\emph{2. Response-tail regularity (RT\textcolor{red}{; \cref{app:eq:RT}}).}
For (RT\textcolor{red}{; \cref{app:eq:RT}}), first consider distinct positions \(i,j\) that are not matched.
Conditioning on a complete boundary fixes the mates of \(i\) and \(j\), and
the product over matching components makes
\(\BoundaryRow{i}{j}{a}{z}\) independent of \(a\).  The response diameter is
therefore zero.

It remains to consider a matching edge.  Fix a threshold
\(t\).
\begin{itemize}
\item If \(t<\TriggerMass\), the marginal tail
contains no candidate token and hence no trigger.  Every source value in the
tail has the same centered feature \(-\TriggerMass\), so its response
diameter is zero.
\item If \(t\ge\TriggerMass\), the response diameter is at most
\(\EdgeCoupling\le\TriggerMass\le t\).
\end{itemize}
Thus (RT\textcolor{red}{; \cref{app:eq:RT}}) holds with
\(\ResponseExponent=\ResponseConstant=1\).

\emph{3. Rank--frequency regularity (RF\textcolor{red}{; \cref{app:eq:RF}}).}
The inequality \(\HardVocabSize\TriggerMass>1\) is equivalent to the
background mass in \eqref{app:eq:hard-common-marginal} being strictly less
than $\HardVocabSize^{-1}$, since
\[
 \frac{1-\HardBankSize\TriggerMass}{\HardVocabSize-\HardBankSize}
 <\frac1{\HardVocabSize}
 \quad\Longleftrightarrow\quad
 \HardVocabSize-\HardVocabSize\HardBankSize\TriggerMass
 <\HardVocabSize-\HardBankSize
 \quad\Longleftrightarrow\quad
 \HardVocabSize\TriggerMass>1.
\]  Hence the \(\HardBankSize\) candidates occupy
the first \(\HardBankSize\) marginal ranks.  For
\(k\le\HardBankSize\),
\[
 \HardMarginal(\HardCandidate{k})=\TriggerMass
 \le\HardBankSize^{-\RankTailExponent}
 \le k^{-\RankTailExponent};
\]
for \(k>\HardBankSize\), the vocabulary-floor term
\(\HardVocabSize^{-1}\) suffices.  This proves the rank--frequency inequality
in (RF\textcolor{red}{; \cref{app:eq:RF}}) with \(\FrequencyConstant=1\).  The separate vocabulary-growth clause
in (RF\textcolor{red}{; \cref{app:ass:rate-regime}}) holds whenever the public sequence \(\HardVocabSize\) has the
declared asymptotic scaling.

\emph{4. Uniform oracle accuracy (A2).}
The only nonzero oracle error is bounded by
\eqref{app:eq:hard-local-oracle-error}.  The final condition in
\eqref{app:eq:finite-hard-inclusion} therefore places it below
\((\OracleRadius)^2/(2N)\), proving (A2).  All required finite conditions hold
for every hidden matching and trigger vector.  Together with the stated
vocabulary-growth condition, this proves the asserted inclusion in the
shared asymptotic class.
\end{proof}

\begin{samepage}
\subsection{Adaptive midpoint comparison and instance prior}
\label{app:lower-proof-midpoint}

We relate the target law to the sampler's output law through the discrepancies
at individual commits. Three points guide the comparison:
\begin{itemize}
\item \emph{Goal.} Express the output affinity, or equivalently one minus the
squared-Hellinger error, in terms of local commit affinities.
\item \emph{Obstacle.} A different committed value can change subsequent
probes, batches, and the stopping round. Thus local affinities depend on the
realized history; the fixed-product affinity formula does not directly
combine them.
\item \emph{Bridge.} Normalize the geometric mean of the two commit kernels
at each history. These normalized kernels define a common path law under
which the product of local affinities equals the output affinity in
expectation (\cref{app:lem:adaptive-midpoint-lower}).
\end{itemize}
\end{samepage}

\emph{Fixed objects.}
Fix an admissible algorithm with finite pathwise budgets
\((\QueryBudget{},\RoundBudget{})\), an instance
\(I=(\HiddenMatching,\TriggerVector)\) of
\cref{app:def:hard-matching-family}, and one value \(w\) of the
decision-rule seed. Put
\begin{equation}
 \HardOutputLaw{I}{w}
 :=\SamplerOutputLaw{w}{\HardOracle{I}}.
 \label{app:eq:hard-seeded-output-law}
\end{equation}
The following three executions differ in their commit kernels, defined
below. The exact comparison process has terminal law \(\HardTargetLaw{I}\);
Step~2 of the proof verifies this even for adaptive batches.

\begin{center}
\small
\begin{tabularx}{\linewidth}{@{}>{\raggedright\arraybackslash}p{0.23\linewidth}
  >{\raggedright\arraybackslash}p{0.32\linewidth}
  >{\raggedright\arraybackslash}X@{}}
\toprule
Execution & Commit kernel at history \(h\) & Law and role \\
\midrule
Exact comparison
& \(\HardExactBatchLaw{t}{I}(\cdot\mid h)\): true joint conditional
& \(\HardTargetLaw{I}\): target output law. \\
\addlinespace
Product-commit sampler
& \(\HardProductBatchLaw{t}{I}(\cdot\mid h)\): product of oracle rows
& \(\HardOutputLaw{I}{w}\): output law being evaluated. \\
\addlinespace
Adaptive midpoint
& \(\HardMidpointKernel{t}{I}(\cdot\mid h)\): normalized geometric mean
& \(\HardMidpointLaw{I}{w}\): analytic path law for averaging local affinities. \\
\bottomrule
\end{tabularx}
\end{center}

\begin{itemize}
\item \emph{Shared operations.} All three use the same seed-fixed decision
rule and the same frozen oracle \(\HardOracle{I}\) for probes. At a common
history they select the same batch \(B_t(h)\).
\item \emph{Changed operation.} Only the law of the committed values changes.
The resulting histories, and hence later batches, can differ between executions.
\end{itemize}

\begin{definition*}[Commit kernels and adaptive midpoint law]
\emph{History and batch.}
Immediately before commit \(t\), after the intervening probes, let \(h\)
be the complete observable history: committed positions and values,
the earlier oracle transcript, and the decision-rule state.
Thus $h=\OpVisibleHistory{s_t}$ in the operation notation of
\cref{app:eq:operation-action,app:eq:operation-visible-history}.
Write \(G(h)\) for the committed set, \(x_{G(h)}\) for its values, and
\(y_t(h)\) for the commit state in \eqref{app:eq:commit-state}.
For fixed \((I,w)\), the probes and batch selection are deterministic
functions of the observed past; write \(B_t(h)\) for the selected batch.
The seed \(w\) is suppressed from local kernels because \(h\) includes
the seed-fixed decision-rule state, but retained in full-law and
terminal-round notation.

\emph{Commit kernels.}
For \(z\in\Vocab^{B_t(h)}\), define
\begin{align}
 \HardExactBatchLaw{t}{I}(z\mid h)
 &:=\HardTargetLaw{I}\bigl(
   X_{B_t(h)}=z\mid X_{G(h)}=x_{G(h)}
  \bigr),
 \label{app:eq:hard-exact-batch-law}\\
 \HardProductBatchLaw{t}{I}(z\mid h)
 &:=\prod_{j\in B_t(h)}\HardOracleRowMass{I}{y_t(h)}{j}{z_j}.
 \label{app:eq:hard-product-batch-law}
\end{align}

\emph{Local affinity and midpoint law.}
Define
\begin{align}
 \HardLocalAffinity{t}{I,h}
 &:=\sum_{z\in\Vocab^{B_t(h)}}
    \sqrt{
     \HardExactBatchLaw{t}{I}(z\mid h)
     \HardProductBatchLaw{t}{I}(z\mid h)},
 \label{app:eq:hard-local-affinity}\\
 \HardMidpointKernel{t}{I}(z\mid h)
 &:=\frac{
    \sqrt{
     \HardExactBatchLaw{t}{I}(z\mid h)
     \HardProductBatchLaw{t}{I}(z\mid h)}}
   {\HardLocalAffinity{t}{I,h}}.
 \label{app:eq:hard-midpoint-kernel}
\end{align}
Let \(\HardMidpointLaw{I}{w}\) be the analytic adaptive path law that runs
the same decision rule and frozen-oracle probes, drawing every commit
batch from \(\HardMidpointKernel{t}{I}(\cdot\mid h)\).
\end{definition*}

\paragraph{Why the construction is well defined.}
\begin{itemize}
\item \emph{Conditioning on the history.}
For fixed \((I,w)\), every reachable \(h\) can be reconstructed from
\(x_{G(h)}\): replay the seed-fixed decision rule from the initial state and,
at each earlier commit, supply the coordinates of \(x_{G(h)}\) selected
by that commit. The frozen oracle reproduces the intervening replies,
batches, and decision-rule states. Conversely, \(h\) records \(x_{G(h)}\).
Thus conditioning on this \(h\) adds no random event beyond
\(X_{G(h)}=x_{G(h)}\), justifying
\eqref{app:eq:hard-exact-batch-law}.
\item \emph{Normalization.}
Strict positivity makes the denominator nonzero. Cauchy--Schwarz gives
\(0<\HardLocalAffinity{t}{I,h}\le1\). In particular,
\[
 \sum_{z\in\Vocab^{B_t(h)}}\HardMidpointKernel{t}{I}(z\mid h)
 =\frac{\displaystyle
   \sum_{z\in\Vocab^{B_t(h)}}
    \sqrt{\HardExactBatchLaw{t}{I}(z\mid h)
           \HardProductBatchLaw{t}{I}(z\mid h)}}
  {\HardLocalAffinity{t}{I,h}}
 =\frac{\HardLocalAffinity{t}{I,h}}{\HardLocalAffinity{t}{I,h}}
 =1.
\]
\item \emph{Pathwise budgets.}
Both commit kernels are strictly positive, so the midpoint kernel has
the same support at every reachable history. It therefore traverses the
same decision tree as the implemented sampler. Every midpoint path has
at most \(\QueryBudget{}\) counterfactual submissions and at most
\(\RoundBudget{}\) nonempty commits.
\end{itemize}

\paragraph{From local affinity to output error.}
The defining factorization is
\[
 \sqrt{\HardExactBatchLaw{t}{I}(z\mid h)
        \HardProductBatchLaw{t}{I}(z\mid h)}
 =\HardLocalAffinity{t}{I,h}\HardMidpointKernel{t}{I}(z\mid h).
\]
Along a complete path, the normalized \(\HardMidpointKernel{t}{I}\) factors
supply its midpoint probability, while the \(\HardLocalAffinity{t}{I,h}\)
factors retain the local affinity losses.
The next identity averages their product over the adaptive paths.

\begin{lemma}[Adaptive midpoint identity]
\label{app:lem:adaptive-midpoint-lower}
For every fixed instance \(I\) and decision-rule seed \(w\),
\begin{equation}
 1-\sqHellinger(
      \HardTargetLaw{I},\HardOutputLaw{I}{w})
 =\mathbb E_{H\sim\HardMidpointLaw{I}{w}}
   \prod_{t=1}^{\HardTerminalRoundCount{I}{w}{H}}
    \HardLocalAffinity{t}{I,H_{t-1}},
 \label{app:eq:adaptive-midpoint-identity}
\end{equation}
where \(\HardTerminalRoundCount{I}{w}{H}\le \RoundBudget{}\) is the terminal number of
nonempty commit batches on the midpoint path.
Here \(H_{t-1}\) denotes the complete observable history immediately
before commit \(t\), after all intervening counterfactual operations.
The subscript counts earlier commits, not individual query operations.
\end{lemma}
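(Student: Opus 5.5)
We prove \eqref{app:eq:adaptive-midpoint-identity} by writing both output laws as sums over complete adaptive paths and factoring the affinity path by path. Fix \((I,w)\). As noted before the lemma, every reachable history is a deterministic function of the committed values: the seed-fixed decision rule, frozen probes and batches are replayed from \(x_{G(h)}\). A complete execution is therefore indexed by a leaf of a finite decision tree. At an internal node \(h=H_{t-1}\) the tree branches over \(z\in\Vocab^{B_t(h)}\). A leaf \(H\) with terminal round count \(T=\HardTerminalRoundCount{I}{w}{H}\) determines a full assignment \(x(H)\in\Vocab^N\), since the batches partition \(\PositionSet\). Distinct leaves give distinct assignments, and every \(x\in\Vocab^N\) is reached by exactly one leaf: follow the branch selected by the coordinates of \(x\) at each node.

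\emph{Step 1: path probabilities.} For the product sampler, the probability of leaf \(H\) is \(\prod_{t\le T}\HardProductBatchLaw{t}{I}(z_t\mid H_{t-1})\). By the bijection, this equals \(\HardOutputLaw{I}{x(H)}{w}\)-mass, i.e.\ \(\HardOutputLaw{I}{w}(x(H))\). For the exact comparison process, the leaf probability is \(\prod_t \HardExactBatchLaw{t}{I}(z_t\mid H_{t-1})\). We show this equals \(\HardTargetLaw{I}(x(H))\) by the chain rule. Conditioning on \(H_{t-1}\) is the same as conditioning on \(X_{G(H_{t-1})}=x_{G(H_{t-1})}\), and \(B_t\) is a function of those values. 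The product then telescopes to \(\HardTargetLaw{I}(X=x(H))\). This is the point where adaptivity must be handled carefully. I would argue it by induction on \(t\): the exact process's law of \(X_{G_t}\) on the event that a given node is reached equals the \(\HardTargetLaw{I}\)-marginal. This holds because the node event is measurable with respect to \(X_{G_{t}}\).

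\emph{Step 2: factorization.} By definition of the midpoint kernel,
\[
\sqrt{\HardTargetLaw{I}(x(H))\,\HardOutputLaw{I}{w}(x(H))}
=\prod_{t=1}^{T}\HardLocalAffinity{t}{I,H_{t-1}}\;\prod_{t=1}^{T}\HardMidpointKernel{t}{I}(z_t\mid H_{t-1}).
\]
The second product is exactly the \(\HardMidpointLaw{I}{w}\)-probability of leaf \(H\). The midpoint kernels are normalized and have full support, so \(\HardMidpointLaw{I}{w}\) is a probability law on the same leaves.

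\emph{Step 3: summation.} Summing over leaves, and hence over all \(x\) by the bijection, the left side becomes \(\sum_x\sqrt{\HardTargetLaw{I}(x)\HardOutputLaw{I}{w}(x)}=1-\sqHellinger(\HardTargetLaw{I},\HardOutputLaw{I}{w})\). The right side becomes the \(\HardMidpointLaw{I}{w}\)-expectation of \(\prod_t \HardLocalAffinity{t}{I,H_{t-1}}\). Finally, \(T\le\RoundBudget{}\) holds because midpoint paths traverse the same tree as the sampler, which obeys the pathwise budget. The main obstacle is Step 1's exact-process identity under adaptive, value-dependent batches and stopping. Its resolution rests on the replay argument: histories carry no information beyond the committed values.
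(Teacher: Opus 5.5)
Your proposal is correct and follows the paper's proof essentially step for step. Both arguments replay the seed-fixed decision rule to get a bijection between outputs $x$ and complete paths, telescope the exact batch conditionals to $\HardTargetLaw{I}(x)$ and multiply the product-commit kernels to get $\HardOutputLaw{I}{w}(x)$, then factor $\sqrt{\HardTargetLaw{I}(x)\HardOutputLaw{I}{w}(x)}$ into the local affinities times the midpoint path mass and sum over $x$. The differences are minor: the paper verifies explicitly that the midpoint path masses sum to one (padding paths to $N$ rounds and summing out the innermost batch first), whereas you cite this as the standard fact for normalized kernels on a finite tree; and your extra induction over reached nodes in Step~1 is unnecessary, since the telescoping is a pointwise algebraic identity for each fixed $x$.
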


The remaining lower-bound argument uses this identity at two levels:
\begin{itemize}
\item \emph{Local penalty.} \Cref{app:lem:collision-affinity} bounds the
factors \(\HardLocalAffinity{t}{I,h}\) using unresolved edges whose
endpoints are committed together.
\item \emph{Global error.} The survivor and collision arguments count
these events under the instance--midpoint law
\eqref{app:eq:joint-instance-midpoint-law}. The identity converts that
count into an output-error bound.
\end{itemize}

\begin{proof}

\emph{1. Reconstruct the unique adaptive path.}
Fix a complete output \(x\in\Vocab^N\).  Starting with the empty history,
replay the deterministic seed-fixed decision rule.  Whenever it chooses
\(B_t(h_{t-1}(x))\), feed it
\(z_t(x):=x_{B_t(h_{t-1}(x))}\).  This recursively determines the histories,
batches, and terminal round \(\HardTerminalRoundCount{I}{w}{x}\).  No
remasking makes the realized batches disjoint and exhaustive.  Put
\(G_t(x):=G_{t-1}(x)\cup B_t(h_{t-1}(x))\), with \(G_0(x)=\varnothing\).

\emph{2. Factor the exact and product output masses.}
The exact conditional factors telescope:
\begin{align*}
 &\prod_{t=1}^{\HardTerminalRoundCount{I}{w}{x}}
   \HardExactBatchLaw{t}{I}(z_t(x)\mid h_{t-1}(x))\\
 &\quad=\prod_{t=1}^{\HardTerminalRoundCount{I}{w}{x}}
   \frac{\HardTargetLaw{I}(X_{G_t(x)}=x_{G_t(x)})}
        {\HardTargetLaw{I}(X_{G_{t-1}(x)}=x_{G_{t-1}(x)})}
 =\frac{\HardTargetLaw{I}(X_{\PositionSet}=x)}{1}
 =\HardTargetLaw{I}(x).
\end{align*}
Here the numerator at round \(t\) is the denominator at round \(t+1\);
the initial event has probability one and the terminal set is \(\PositionSet\).
This identity and the
product-commit rule therefore give
\begin{align}
 \HardTargetLaw{I}(x)
 &=\prod_{t=1}^{\HardTerminalRoundCount{I}{w}{x}}
   \HardExactBatchLaw{t}{I}
    (z_t(x)\mid h_{t-1}(x)),
 \label{app:eq:hard-exact-path-factorization}\\
 \HardOutputLaw{I}{w}(x)
 &=\prod_{t=1}^{\HardTerminalRoundCount{I}{w}{x}}
   \HardProductBatchLaw{t}{I}
    (z_t(x)\mid h_{t-1}(x)).
 \label{app:eq:hard-product-path-factorization}
\end{align}
The batch-selection rule has no likelihood factor because it is deterministic
given the displayed history.

\emph{3. Normalize the midpoint path mass.}
For the same terminal output, the midpoint path mass is
\begin{equation}
 \HardMidpointLaw{I}{w}(x)
 :=\prod_{t=1}^{\HardTerminalRoundCount{I}{w}{x}}
   \HardMidpointKernel{t}{I}
    (z_t(x)\mid h_{t-1}(x)).
 \label{app:eq:hard-midpoint-path-mass}
\end{equation}
To verify that this path mass sums to one, first account for paths that
finish at different rounds.  Nonempty batches are disjoint, so every path
has at most \(N\) rounds.  For this calculation only, pad each finished path
to \(N\) rounds by setting \(B_t(h)=\varnothing\),
\(\Vocab^\varnothing=\{\varnothing\}\), and
\(\HardMidpointKernel{t}{I}(\varnothing\mid h)=1\) after termination.
These extra factors leave \eqref{app:eq:hard-midpoint-path-mass} unchanged.
Each complete output corresponds to exactly one padded path, and conversely,
because the seed-fixed decision rule is deterministic and the genuine batches
are disjoint and exhaustive.

Write \(h_{t-1}\) for the history determined by the preceding assignments
\(z_1,\ldots,z_{t-1}\).  For any \(1\le r\le N\), hold
\(z_1,\ldots,z_{r-1}\) fixed.  The earlier factors do not depend on \(z_r\),
so the local normalization above gives
\begin{align*}
 &\sum_{z_r\in\Vocab^{B_r(h_{r-1})}}
   \prod_{t=1}^{r}\HardMidpointKernel{t}{I}(z_t\mid h_{t-1})\\
 &\quad=
   \left[\prod_{t=1}^{r-1}
     \HardMidpointKernel{t}{I}(z_t\mid h_{t-1})\right]
   \underbrace{\sum_{z_r\in\Vocab^{B_r(h_{r-1})}}
     \HardMidpointKernel{r}{I}(z_r\mid h_{r-1})}_{=1}\\
 &\quad=\prod_{t=1}^{r-1}
     \HardMidpointKernel{t}{I}(z_t\mid h_{t-1}).
\end{align*}
Applying this equality for \(r=N,N-1,\ldots,1\) removes the nested sums
from the innermost one outwards and ends with the empty product \(1\):
\[
 \sum_{x\in\Vocab^N}\HardMidpointLaw{I}{w}(x)
 =\sum_{z_1\in\Vocab^{B_1(h_0)}}\cdots
  \sum_{z_N\in\Vocab^{B_N(h_{N-1})}}
   \prod_{t=1}^{N}\HardMidpointKernel{t}{I}(z_t\mid h_{t-1})
 =1.
\]

\emph{4. Convert the affinity sum into a midpoint expectation.}
Substituting
\(\sqrt{\HardExactBatchLaw{t}{I}\HardProductBatchLaw{t}{I}}
 =\HardLocalAffinity{t}{I,h}\HardMidpointKernel{t}{I}\) into the terminal
affinity sum yields
\begin{align*}
 &\HellingerAffinity(
    \HardTargetLaw{I},\HardOutputLaw{I}{w})\\
 &\quad=\sum_{x\in\Vocab^N}
  \sqrt{\HardTargetLaw{I}(x)\HardOutputLaw{I}{w}(x)}\\
 &\quad=\sum_{x\in\Vocab^N}
  \HardMidpointLaw{I}{w}(x)
  \prod_{t=1}^{\HardTerminalRoundCount{I}{w}{x}}
   \HardLocalAffinity{t}{I,h_{t-1}(x)}\\
 &\quad=\mathbb E_{H\sim\HardMidpointLaw{I}{w}}
  \prod_{t=1}^{\HardTerminalRoundCount{I}{w}{H}}
   \HardLocalAffinity{t}{I,H_{t-1}}.
\end{align*}
Since \(\HellingerAffinity(p,q)=1-\sqHellinger(p,q)\), this is
\eqref{app:eq:adaptive-midpoint-identity}.
\end{proof}

\paragraph{Instance prior for the remaining proof.}
Keep the seed \(w\) fixed. Draw an instance
\(I=(\HiddenMatching,\TriggerVector)\) from the prior
\(\HardInstancePrior\) as follows:
\begin{itemize}
\item \emph{Matching.} \(\HiddenMatching\) is uniform over all perfect
matchings of \(\PositionSet\).
\item \emph{Triggers.} \(\TriggerIndex{1},\ldots,\TriggerIndex{N}\) are
independent uniform elements of \([\HardBankSize]\), independent of
\(\HiddenMatching\).
\end{itemize}
All random variables used in the remaining proof are evaluated under the
joint analytic law
\begin{equation}
 I\sim\HardInstancePrior,
 \qquad
 H\mid(I,w)\sim\HardMidpointLaw{I}{w}.
 \label{app:eq:joint-instance-midpoint-law}
\end{equation}

\subsection{Unresolved edges, survivor invariant, and query retirement}
\label{app:lower-proof-invariant}

We need two facts under the instance--midpoint law
\eqref{app:eq:joint-instance-midpoint-law}:
\begin{itemize}
\item \emph{The distribution left unresolved.}
\Cref{app:lem:unresolved-invariant} shows that the remaining matching is
uniform and its triggers are independent uniform remaining candidates.
This permits the conditional collision calculation below.
\item \emph{The number of query retirements.}
\Cref{app:lem:query-disclosure} bounds
\(\mathbb E\GenieDeletions\le N\QueryBudget{}/\HardBankSize\).
A submission tests at most one candidate at each source, so probes can
retire only a limited number of edges before commits touch them.
\end{itemize}
We first define retirement and the information used to condition these
two statements. The decision-rule seed \(w\) remains fixed.
Use the operation clock, actions \(A_s\), and visible histories
\(\OpVisibleHistory{s}\) from
\cref{app:eq:operation-action,app:eq:operation-visible-history}.
For a query, write \(y=\OpSubmittedState{s}\) and \(J=J_s\).
At commit \(t\), \(s=s_t\) and the batch is \(B_t=B_{t_s}\).

\begin{definition*}[Unresolved edges and the analytic record]
\emph{The test at one source.}
For any submitted state \(y\) and position \(i\in\PositionSet\), put
\begin{equation}
 \HardTest{y}{i}:=
 \begin{cases}
  c,&y_i=\HardCandidate{c}\text{ for some }c\in[\HardBankSize],\\
  \bot,&y_i=\MASK\text{ or }y_i\in\HardBackground,
 \end{cases}
 \qquad
 \HardHit{y}{i}:=
 \mathbf 1\{\HardTest{y}{i}=\TriggerIndex{i}\}.
 \label{app:eq:hard-test-and-hit}
\end{equation}
Thus \(y_i=\HardCandidate{c}\) tests the equality
\(\TriggerIndex{i}=c\). A mask or background value tests no candidate.

\emph{Unresolved edges and vertices.}
Let \(\OpUnresolvedMatching{s}\) be the edges unresolved immediately
before operation \(s\). Initialize \(\OpUnresolvedMatching{1}:=\HiddenMatching\)
and update by
\begin{equation}
 \OpUnresolvedMatching{s+1}:=
 \begin{cases}
  \{e\in\OpUnresolvedMatching{s}:
       \sum_{i\in e}\HardHit{y}{i}=0\},
       &\text{query of }y,\\
  \{e\in\OpUnresolvedMatching{s}:e\cap B_t=\varnothing\},
       &\text{commit }t.
 \end{cases}
 \label{app:eq:unresolved-matching-update}
\end{equation}
Define the unresolved vertex set by
\begin{equation}
 \OpUnresolvedVertices{s}:=\bigcup_{e\in\OpUnresolvedMatching{s}}e.
 \label{app:eq:unresolved-vertices}
\end{equation}
Thus a query retires an edge when at least one endpoint hits; a commit
retires it when \(e\cap B_t\ne\varnothing\). In either case both endpoints
leave \(\OpUnresolvedVertices{s}\). A query commits neither endpoint,
and a commit fixes only those in \(e\cap B_t\).
Because each update removes whole edges, \(\OpUnresolvedMatching{s}\)
is a perfect matching on \(\OpUnresolvedVertices{s}\).
The target law remains
\(\HardTargetLaw{I}(x)=\prod_{e\in\HiddenMatching}\HardEdgeLaw{e}{I}(x_e)\),
and the frozen oracle \(\HardOracle{I}\) remains unchanged.
In particular, an uncommitted vertex may lie on a retired edge.

\emph{Retirement record.}
For each newly retired edge, record its identity and both trigger indices:
\begin{equation}
 \OpRetiredRecord{s}:=
 \left\{\bigl(e,(\TriggerIndex{i})_{i\in e}\bigr):
 e\in\OpUnresolvedMatching{s}\setminus\OpUnresolvedMatching{s+1}\right\}.
 \label{app:eq:retirement-record}
\end{equation}
Endpoint tuples use the public position order. Let
\(\GenieDeletions:=\sum_{s:\,A_s\text{ is a query}}
 |\OpUnresolvedMatching{s}\setminus\OpUnresolvedMatching{s+1}|\).
Simultaneous hits on both endpoints count as one retired edge.

\emph{Remaining candidates.}
Initially \(\OpRemainingTriggers{i}{1}:=[\HardBankSize]\).
For each surviving \(i\in\OpUnresolvedVertices{s+1}\), put
\begin{equation}
 \OpRemainingTriggers{i}{s+1}:=
 \begin{cases}
  \OpRemainingTriggers{i}{s}\setminus\{\HardTest{y}{i}\},
       &\text{query of }y,\\
  \OpRemainingTriggers{i}{s},&\text{commit}.
 \end{cases}
 \label{app:eq:operation-candidates}
\end{equation}
A surviving endpoint has hit bit zero, so any candidate tested there
has failed. Subtracting \(\bot\) or a previously excluded candidate
changes nothing. Candidate sets are needed only while the vertex survives.

\emph{Analytic observations and history.}
Use the ordinary observation \(\OpVisibleObservation{s}\) from
\cref{app:eq:operation-visible-observation}, with
\(\FrozenOracle=\HardOracle{I}\). At a commit,
\(z_s=X_{B_t}\) is drawn from the midpoint kernel
\(\HardMidpointKernel{t}{I}(\cdot\mid\OpVisibleHistory{s})\)
of \cref{app:eq:hard-midpoint-kernel}.
The additional information is:
\begin{itemize}
\item At a query, the entire hit-bit vector
\(\bigl(\HardHit{y}{i}\bigr)_{i\in\OpUnresolvedVertices{s}}\)
from \cref{app:eq:hard-test-and-hit}, including bits whose mate rows
were not requested. Evaluate these bits on the same pre-operation
set \(\OpUnresolvedVertices{s}\), then apply
\cref{app:eq:unresolved-matching-update}.
\item At each retirement, the tuples in \(\OpRetiredRecord{s}\) from
\cref{app:eq:retirement-record}. Commit values \(X_{B_t}\) are already
part of the ordinary observation, not additional hidden information.
\end{itemize}
Formally, initialize \(\OpAnalyticHistory{1}:=\OpVisibleHistory{1}\)
and define
\begin{equation}
 \OpObservation{s}:=
 \begin{cases}
  \bigl(\OpVisibleObservation{s},
       (\HardHit{y}{i})_{i\in\OpUnresolvedVertices{s}}\bigr),
       &\text{query of }y,\\
  \OpVisibleObservation{s},&\text{commit},
 \end{cases}
 \label{app:eq:analytic-observation}
\end{equation}
\begin{equation}
 \OpAnalyticHistory{(s+1)}
 :=(\OpAnalyticHistory{s},A_s,\OpObservation{s},\OpRetiredRecord{s}).
 \label{app:eq:analytic-record-update}
\end{equation}
The decision rule uses only the ordinary observations in
\cref{app:eq:operation-visible-history}; it does not receive hit bits
or retirement disclosures. Its internal state is recoverable from that
visible history. Define the analytic sigma-field and hidden survivor state by
\begin{equation}
 \begin{aligned}
 \OpAnalyticField{s}
   &:=\sigma(\OpAnalyticHistory{s},\OpUnresolvedVertices{s}),\\
 \OpSurvivorState{s}
   &:=\bigl(\OpUnresolvedMatching{s},
          (\TriggerIndex{i})_{i\in\OpUnresolvedVertices{s}}\bigr).
 \end{aligned}
 \label{app:eq:analytic-field-and-survivor}
\end{equation}
The visible history is a projection of the analytic record, so
\(\sigma(\OpVisibleHistory{s})\subseteq\OpAnalyticField{s}\).
The unresolved vertices are recoverable by removing all recorded retired
endpoints from \(\PositionSet\); their remaining matching and triggers
are not explicitly recorded. The survivor state \(\OpSurvivorState{s}\)
is not itself an entry of the record.
Here \emph{unresolved} means survival under
\cref{app:eq:unresolved-matching-update}, not a claim about the
algorithm's knowledge.
\end{definition*}

One state supplies at most one test per source, even if many returned rows
are inspected. Its submission contributes one to
\(\CounterfactualQueries\le\QueryBudget{}\), as in
\textcolor{red}{\cref{app:def:resources}}. A state inducing no tests still counts,
except for the separately accounted distinguished preprocessing readout.
Retired edges can still affect ordinary oracle replies; they are excluded
only from subsequent unresolved-edge collision counts.

\paragraph{State immediately before a commit.}
At operation \(s_t\), use the shorter pre-commit notation for the
objects in \cref{app:eq:unresolved-vertices,app:eq:operation-candidates,app:eq:analytic-record-update,app:eq:analytic-field-and-survivor}:
\begin{equation}
 \begin{aligned}
 \UnresolvedVertices{t}&:=\OpUnresolvedVertices{s_t},&
 \RemainingTriggers{i}{t}&:=\OpRemainingTriggers{i}{s_t},\\
 \AnalyticHistory{t}&:=\OpAnalyticHistory{s_t},&
 \AnalyticField{t}&:=\OpAnalyticField{s_t},\\
 \SurvivorState{t}&:=\OpSurvivorState{s_t},&
 H_{t-1}&=\OpVisibleHistory{s_t}.
 \end{aligned}
 \label{app:eq:remaining-trigger-set}
\end{equation}
Here \(H_{t-1}\) is the complete visible history used in
\cref{app:lem:adaptive-midpoint-lower}, after any queries preceding commit
\(t\). There may be queries before commit \(1\), so
\(\UnresolvedVertices{1}\) need not equal \(\PositionSet\).
The next operation after commit \(t\) is \(s_t+1\); intervening queries
are processed before the next commit boundary \(s_{t+1}\).

\paragraph{Why uniformity can survive an observation.}
A failed test removes one candidate. A hit or a commit instead retires
the affected edge and records both its triggers.
We show that, once this retired record is fixed, the probability of the
complete new observation is the same for every compatible remaining
matching and trigger assignment. Equal prior weights therefore give equal
posterior weights.

\Needspace{10\baselineskip}
\begin{lemma}[Unresolved-matching invariant]
\label{app:lem:unresolved-invariant}
Under the joint law \eqref{app:eq:joint-instance-midpoint-law},
\begin{equation}
 \SurvivorState{t}\mid\AnalyticField{t}
 \sim
 \operatorname{Unif}\bigl(\operatorname{PM}
   (\UnresolvedVertices{t})\bigr)
 \otimes
 \bigotimes_{i\in\UnresolvedVertices{t}}
 \operatorname{Unif}(\RemainingTriggers{i}{t}),
 \label{app:eq:strong-survivor-invariant}
\end{equation}
where \(\operatorname{PM}(U)\) denotes the perfect matchings of \(U\).
In particular, the unresolved matching is conditionally uniform, and every
surviving trigger is conditionally uniform on its untested candidates.
The same product law holds at every analytic operation boundary, with the
current unresolved set, remaining-candidate sets, and analytic sigma-field in
place of their pre-commit notation above.
\end{lemma}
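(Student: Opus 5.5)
I will prove the stronger operation-boundary statement by induction on the operation index \(s\); the pre-commit version then follows by specializing to \(s=s_t\). At \(s=1\), \(\OpAnalyticField{1}\) is generated by public data and the seed-fixed initial state, \(\OpUnresolvedVertices{1}=\PositionSet\), and \(\OpRemainingTriggers{i}{1}=[\HardBankSize]\). The prior \(\HardInstancePrior\) then gives exactly the claimed product law. For the inductive step, fix an atom of \(\OpAnalyticField{s}\) and assume \(\OpSurvivorState{s}\) is uniform on the compatible set of pairs \((\mathcal M',\theta')\): perfect matchings of \(\OpUnresolvedVertices{s}\) together with triggers \(\theta'_i\in\OpRemainingTriggers{i}{s}\). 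The action \(A_s\) is a deterministic function of the visible history, and that history is a projection of the analytic record, so \(A_s\) is fixed on the atom. I then apply Bayes: it suffices to show that, once the new retirement record \(\OpRetiredRecord{s}\) is fixed, the conditional probability of \((\OpObservation{s},\OpRetiredRecord{s})\) given \(\OpSurvivorState{s}=(\mathcal M',\theta')\) and the past atom is the same for every compatible state whose restriction to the new survivors is arbitrary.

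\emph{Query step.} Let \(y=\OpSubmittedState{s}\). The hit bits are deterministic functions of \(\theta'\) and \(y\). The retired edges are those containing a hit, and the record lists them together with their triggers. Given the hit vector and the record, consider any surviving vertex \(i\). It has hit bit zero, so \(\theta'_i\ne\HardTest{y}{i}\); this is exactly the restriction \(\theta'_i\in\OpRemainingTriggers{i}{s+1}\). The key point is that the oracle rows \(\HardOracleRow{I}{y}{j}\) for requested readouts are also determined by the record.
\begin{itemize}
\item If the mate of \(j\) is retired, the record gives both the mate and its trigger.
\item If \(j\) is an unresolved survivor, the oracle returns \(\HardMarginal\) in both cases: when its mate is masked, and when its mate is revealed at a nontrigger value. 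This reply does not depend on who the mate is or what its trigger is.
\end{itemize}
Edges retired before \(s\) are likewise known from earlier records. Hence the likelihood is the indicator of compatibility, and the posterior is uniform on the product of perfect matchings of \(\OpUnresolvedVertices{s+1}\) and the sets \(\OpRemainingTriggers{i}{s+1}\). This uses that the surviving edges are exactly the unhit edges, which form a perfect matching of the survivor set.

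\emph{Commit step.} This is the main obstacle, since the midpoint kernel \(\HardMidpointKernel{t}{I}\) depends on the hidden instance through both the exact batch law and the oracle rows. My plan is to show that the kernel factorizes over the matching components meeting \(B_t\). The exact conditional is a product over edges: components inside \(G\) or outside \(B_t\) marginalize out, and a batch vertex whose mate is committed gets the kernel row. The oracle product is a product of rows. So \(m\) is a product over the edges meeting \(B_t\) of normalized geometric means.

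For an unresolved edge \(e\) meeting \(B_t\), both endpoints are uncommitted, since unresolved vertices are uncommitted. On the oracle side, the mate of a batch vertex is masked at the commit state, so the row is \(\HardMarginal\). On the exact side, the factor is either \(\HardEdgeLaw{e}{I}\) if both endpoints are in the batch, or \(\HardMarginal\) if only one is. The likelihood of the drawn values \(z_e\) therefore depends on the pair's triggers. This is precisely why the record discloses both triggers and the edge identity for every commit-retired edge: I will compute the posterior jointly over (record, observation). The factor for the retired edges is a function of the record alone. All unresolved edges disjoint from \(B_t\) contribute nothing to the likelihood. Retired-earlier edges meeting \(B_t\) contribute factors determined by earlier records.

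It follows that, conditional on the record, the likelihood is constant over compatible survivor states. The remaining matching is a perfect matching of \(\OpUnresolvedVertices{s+1}=\OpUnresolvedVertices{s}\setminus\bigcup_{e\cap B_t\ne\varnothing}e\), with candidate sets unchanged. Finally, I will note that the prior product weights restrict correctly. Conditioning a uniform perfect matching on containing a specified set of edges leaves the rest uniform on perfect matchings of the complement. Conditioning independent triggers on specified retired values leaves the others independent uniform. This closes the induction.
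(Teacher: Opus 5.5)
Your proposal is correct and follows essentially the same route as the paper: induction over operation boundaries, with the action fixed by the analytic field, the retirement record disclosing the identities and triggers of newly retired edges, query replies on surviving edges equal to $\HardMarginal$, and the midpoint kernel factorizing edgewise with unit factors on surviving edges. The only difference is packaging: the paper first conditions on the record alone, using an explicit bijection and a constant prior weight, and then shows that the observation is conditionally independent of the new survivor state, whereas you combine both steps into a single Bayes computation on the pair (record, observation).
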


{\color{blue}
\begin{proof}[Simplified proof via conditional independence]
We prove the operation-boundary statement and then specialize it to the
pre-commit boundaries.  We use the operation-time notation already defined
in \cref{app:eq:unresolved-vertices,app:eq:operation-candidates,app:eq:analytic-record-update,app:eq:analytic-field-and-survivor},
without introducing shorter names for these objects.  The desired induction
step has two parts:
\begin{enumerate}
\item after conditioning on the newly retired edges and their triggers,
the new survivor state still has the required product-uniform law; and
\item the new observation is conditionally independent of that survivor
state once the retirement record is fixed.
\end{enumerate}
Indeed, these two statements give
\begin{align*}
 &\mathcal L\!\left(
   \OpSurvivorState{s+1}\,\middle|\,
   \OpAnalyticField{s},\OpRetiredRecord{s},\OpObservation{s}\right)\\
 &\qquad=
 \mathcal L\!\left(
   \OpSurvivorState{s+1}\,\middle|\,
   \OpAnalyticField{s},\OpRetiredRecord{s}\right),
\end{align*}
and the conditioning field on the left is precisely
\(\OpAnalyticField{(s+1)}\), by
\cref{app:eq:analytic-record-update,app:eq:analytic-field-and-survivor}.

\emph{1. Initialize the induction.}
Before operation \(1\), the unresolved matching is uniform on
\(\operatorname{PM}(\PositionSet)\), and the trigger indices are mutually
independent and uniform on \([\HardBankSize]\), independently of the
matching.  Thus the required product law holds at the initial boundary.

Now assume that, conditionally on \(\OpAnalyticField{s}\),
\[
 \OpSurvivorState{s}
 \sim
 \operatorname{Unif}\bigl(
   \operatorname{PM}(\OpUnresolvedVertices{s})\bigr)
 \otimes
 \bigotimes_{i\in\OpUnresolvedVertices{s}}
 \operatorname{Unif}(\OpRemainingTriggers{i}{s}).
\]
The action \(A_s\) is already determined by \(\OpAnalyticField{s}\), because
the seed-fixed decision rule uses the visible history contained in this
field.

\emph{2. Condition first on the retirement record.}
Fix a retirement record of positive conditional probability,
\[
 \RetiredRecordValue
 =\{(e,(\theta_i^{\RetiredRecordValue})_{i\in e}):e\in E_0\},
 \qquad
 D_0:=\bigcup_{e\in E_0}e,
\]
where \(E_0\) is the set of edges retired in operation \(s\) and \(D_0\)
is its endpoint set.  On
\(\{\OpRetiredRecord{s}=\RetiredRecordValue\}\),
\[
 \OpUnresolvedVertices{s+1}
 =\OpUnresolvedVertices{s}\setminus D_0.
\]
For every \(i\in\OpUnresolvedVertices{s+1}\), the update already defined in
\eqref{app:eq:operation-candidates} reads
\[
 \OpRemainingTriggers{i}{s+1}=
 \begin{cases}
  \OpRemainingTriggers{i}{s}\setminus
       \{\HardTest{y}{i}\},&A_s\text{ is a query of }y,\\
  \OpRemainingTriggers{i}{s},&A_s\text{ is a commit}.
 \end{cases}
\]
For an edge \(e=\{i,j\}\in\OpUnresolvedMatching{s}\), the update
\eqref{app:eq:unresolved-matching-update} is equivalently
\[
 e\in\OpUnresolvedMatching{s+1}
 \quad\Longleftrightarrow\quad
 \begin{cases}
  \HardHit{y}{i}=\HardHit{y}{j}=0,
       &A_s\text{ is a query of }y,\\
  e\cap B_t=\varnothing,
       &A_s\text{ is commit }t.
 \end{cases}
\]
In the query case, a candidate \(\theta_i\) at a surviving endpoint obeys
\[
 \theta_i\in\OpRemainingTriggers{i}{s+1}
 \quad\Longleftrightarrow\quad
 \theta_i\in\OpRemainingTriggers{i}{s}
 \ \text{and}\
 \theta_i\ne\HardTest{y}{i}.
\]
This also covers \(\HardTest{y}{i}=\bot\), because trigger indices lie in
\([\HardBankSize]\).  Since the fixed record has positive conditional
probability, each \(e\in E_0\) satisfies the complementary retirement
condition: at least one endpoint hits in the query case, and
\(e\cap B_t\ne\varnothing\) in the commit case.
Hence the complete support of the new survivor state, after fixing
\(\RetiredRecordValue\), is
\[
 \operatorname{PM}(\OpUnresolvedVertices{s+1})
 \times
 \prod_{i\in\OpUnresolvedVertices{s+1}}
       \OpRemainingTriggers{i}{s+1}.
\]

We now verify explicitly that every element of this Cartesian support has
the same conditional weight.  Fix
\(
 \SurvivorConfigA
 =\left(M',(\theta_i)_{i\in\OpUnresolvedVertices{s+1}}\right)
 \in
 \operatorname{PM}(\OpUnresolvedVertices{s+1})
 \times
 \prod_{i\in\OpUnresolvedVertices{s+1}}
       \OpRemainingTriggers{i}{s+1}
\).
Together, \(\SurvivorConfigA\) and \(\RetiredRecordValue\) reconstruct exactly one
pre-operation state: its unresolved matching is
\(M'\mathbin{\dot\cup}E_0\), its trigger is \(\theta_i\) on
\(\OpUnresolvedVertices{s+1}\), and its trigger is
\(\theta_i^{\RetiredRecordValue}\) on \(D_0\).  Conversely, membership in the
displayed support ensures that this reconstructed state produces the fixed
retirement record.  Since
\[
 \OpUnresolvedVertices{s}
 =\OpUnresolvedVertices{s+1}\mathbin{\dot\cup}D_0,
\]
these two constructions are inverse to each other.  The induction hypothesis
therefore gives the same joint mass to every compatible
\(\SurvivorConfigA\):
\[
\begin{aligned}
 &\mathbb P\!\left(
   \OpSurvivorState{s+1}=\SurvivorConfigA,\,
   \OpRetiredRecord{s}=\RetiredRecordValue
   \,\middle|\,\OpAnalyticField{s}\right)\\
 &\quad=
 \mathbb P\!\left(
 \begin{gathered}
  \OpUnresolvedMatching{s}=M'\mathbin{\dot\cup}E_0,\\
  \TriggerIndex{i}=\theta_i
       \quad(i\in\OpUnresolvedVertices{s+1}),\\
  \TriggerIndex{i}=\theta_i^{\RetiredRecordValue}\quad(i\in D_0)
 \end{gathered}
 \middle|\OpAnalyticField{s}\right)\\
 &\quad=
 \underbrace{
 \frac{1}{|\operatorname{PM}(\OpUnresolvedVertices{s})|}
 \prod_{i\in\OpUnresolvedVertices{s}}
       \frac{1}{|\OpRemainingTriggers{i}{s}|}
 }_{=:c_s}.
\end{aligned}
\]
The common weight \(c_s\) is independent of \(\SurvivorConfigA\).

\emph{Normalize over the full support.}
The number of compatible survivor configurations is
\[
 K_{\RetiredRecordValue}
 :=|\operatorname{PM}(\OpUnresolvedVertices{s+1})|
 \prod_{i\in\OpUnresolvedVertices{s+1}}
   |\OpRemainingTriggers{i}{s+1}|.
\]
These configurations partition
\(\{\OpRetiredRecord{s}=\RetiredRecordValue\}\), and each has joint
weight \(c_s\).  Therefore
\[
 \mathbb P\!\left(
   \OpRetiredRecord{s}=\RetiredRecordValue
   \,\middle|\,\OpAnalyticField{s}\right)
 =K_{\RetiredRecordValue}c_s.
\]
For every compatible \(\SurvivorConfigA\), it follows that
\begin{align*}
 &\mathbb P\!\left(
   \OpSurvivorState{s+1}=\SurvivorConfigA
   \,\middle|\,
   \OpAnalyticField{s},\OpRetiredRecord{s}=\RetiredRecordValue\right)\\
 &\quad=
 \frac{
  \mathbb P(
   \OpSurvivorState{s+1}=\SurvivorConfigA,\,
   \OpRetiredRecord{s}=\RetiredRecordValue
   \mid\OpAnalyticField{s})}
 {\mathbb P(
   \OpRetiredRecord{s}=\RetiredRecordValue
   \mid\OpAnalyticField{s})}\\
 &\quad=
 \frac{c_s}{K_{\RetiredRecordValue}c_s}
 =\frac{1}{K_{\RetiredRecordValue}}\\
 &\quad=
 \frac{1}{|\operatorname{PM}(\OpUnresolvedVertices{s+1})|}
 \prod_{i\in\OpUnresolvedVertices{s+1}}
       \frac{1}{|\OpRemainingTriggers{i}{s+1}|}.
\end{align*}
The last line is exactly the mass function of
\[
 \operatorname{Unif}\bigl(
   \operatorname{PM}(\OpUnresolvedVertices{s+1})\bigr)
 \otimes
 \bigotimes_{i\in\OpUnresolvedVertices{s+1}}
   \operatorname{Unif}(\OpRemainingTriggers{i}{s+1}).
\]
Thus conditioning on retirement has not coupled the surviving matching and
triggers.

\emph{3. A query observation contains no further survivor information.}
Suppose \(A_s=(\mathsf{query},y,J)\).  We check the complete observation
edge by edge.
\begin{itemize}
\item On an edge retired before operation \(s\), its identity and triggers
are already in \(\OpAnalyticField{s}\), so every requested frozen-oracle row
on that edge is fixed.
\item On an edge retired by this query, its identity and both triggers are
in \(\RetiredRecordValue\).  Together with the submitted state \(y\), these values
fix its hit bits and every requested row.
\item Let \(e=\{i,j\}\) survive, and suppose the row at \(j\in J\) is
requested.  Then neither endpoint hits:
\(
 \HardHit{y}{i}=\HardHit{y}{j}=0
\).
Moreover, the edge has not met an earlier commit, so both endpoints are
uncommitted.  If \(i\) is masked, the selected frozen-oracle rule returns
\(\HardMarginal\).  If \(i\) is temporarily revealed, survival implies
\(y_i\ne\HardCandidate{\TriggerIndex{i}}\), and the same rule again returns
\(\HardMarginal\).  In formulas,
\[
 \HardOracleRow{I}{y}{j}
 =\begin{cases}
   \HardMarginal,&i\in\MaskSet{y},\\
   \HardMarginal,&
       i\in\ObservedSet{y}\setminus G(\OpVisibleHistory{s}),
       \ y_i\ne\HardCandidate{\TriggerIndex{i}}.
  \end{cases}
\]
Thus the requested row is independent of the identity of \(i\) and of the
surviving trigger values.
\end{itemize}
The hit bits on surviving endpoints are all zero, while every other hit bit
and row is fixed by
\((\OpAnalyticField{s},\OpRetiredRecord{s}=\RetiredRecordValue)\).
Hence the whole query observation \(\OpObservation{s}\) is determined by
\((\OpAnalyticField{s},\OpRetiredRecord{s})\) on the compatible support.
Equivalently, for every compatible \(\SurvivorConfigA\) and observation
value \(o\),
\begin{align*}
 &\mathbb P\!\left(
   \OpObservation{s}=o
   \,\middle|\,
   \OpAnalyticField{s},
   \OpRetiredRecord{s}=\RetiredRecordValue,
   \OpSurvivorState{s+1}=\SurvivorConfigA\right)\\
 &\qquad=
 \mathbb P\!\left(
   \OpObservation{s}=o
   \,\middle|\,
   \OpAnalyticField{s},
   \OpRetiredRecord{s}=\RetiredRecordValue\right)
 \in\{0,1\}.
\end{align*}
In particular,
\[
 \OpObservation{s}\ \perp\!\!\!\perp\ \OpSurvivorState{s+1}
 \ \mid\
 (\OpAnalyticField{s},\OpRetiredRecord{s})
 \qquad\text{for a query operation.}
\]

\emph{4. A commit observation depends only on retired edge components.}
Suppose \(A_s=(\mathsf{commit},y_t,B_t)\), so \(s=s_t\).  By
\cref{app:eq:unresolved-matching-update}, every unresolved edge that meets
\(B_t\) is included in \(\OpRetiredRecord{s}\).  Consequently every
surviving edge satisfies
\[
 e\in\OpUnresolvedMatching{s+1}
 \quad\Longrightarrow\quad e\cap B_t=\varnothing.
\]
Equivalently,
\[
 \OpUnresolvedVertices{s+1}\cap B_t=\varnothing.
\]
We now verify that the normalized midpoint law preserves this separation.

Fix a compatible instance \(I\).  For every edge
\(e\in\HiddenMatching\), define probability masses on
\(\Vocab^{e\cap B_t}\) by
\[
\begin{aligned}
 p_e(z_{e\cap B_t})
 &:=\HardEdgeLaw{e}{I}\!\left(
   X_{e\cap B_t}=z_{e\cap B_t}
   \middle|
   X_{e\cap G(\OpVisibleHistory{s})}
     =x_{e\cap G(\OpVisibleHistory{s})}\right),\\
 \widehat p_e(z_{e\cap B_t})
 &:=\prod_{j\in e\cap B_t}
       \HardOracleRowMass{I}
        {y_t(\OpVisibleHistory{s})}{j}{z_j}.
\end{aligned}
\]
Both are normalized masses.  For \(p_e\), this is the defining
normalization of a conditional law; for \(\widehat p_e\), it follows
coordinate by coordinate:
\[
 \sum_{z_{e\cap B_t}}p_e(z_{e\cap B_t})=1,
 \qquad
 \sum_{z_{e\cap B_t}}\widehat p_e(z_{e\cap B_t})
 =\prod_{j\in e\cap B_t}
   \left[\sum_{z_j\in\Vocab}
        \HardOracleRowMass{I}
         {y_t(\OpVisibleHistory{s})}{j}{z_j}\right]=1.
\]
When \(e\cap B_t=\varnothing\), there is one empty assignment and both
masses equal \(1\) on it.

Because the target factorizes over matching edges and the commit rule uses
the product of its requested oracle rows,
\begin{align*}
 \HardExactBatchLaw{t}{I}(z\mid\OpVisibleHistory{s})
 &=
 \frac{\displaystyle
  \prod_{e\in\HiddenMatching}
   \HardEdgeLaw{e}{I}\!\left(
    X_{e\cap B_t}=z_{e\cap B_t},\,
    X_{e\cap G(\OpVisibleHistory{s})}
      =x_{e\cap G(\OpVisibleHistory{s})}\right)}
 {\displaystyle
  \prod_{e\in\HiddenMatching}
   \HardEdgeLaw{e}{I}\!\left(
    X_{e\cap G(\OpVisibleHistory{s})}
      =x_{e\cap G(\OpVisibleHistory{s})}\right)}\\
 &=\prod_{e\in\HiddenMatching}p_e(z_{e\cap B_t}),\\
 \HardProductBatchLaw{t}{I}(z\mid\OpVisibleHistory{s})
 &=\prod_{e\in\HiddenMatching}\widehat p_e(z_{e\cap B_t}).
\end{align*}
The first equality conditions the edge-product target on the already
committed values; the denominators are positive by strict positivity of the
hard family.
The blocks \(e\cap B_t\), including empty ones, are disjoint and have union
\(B_t\).  Thus restriction to these blocks and the distributive law give,
for arbitrary functions \(f_e\), the finite-product identity
\[
 \sum_{z\in\Vocab^{B_t}}
   \prod_{e\in\HiddenMatching}f_e(z_{e\cap B_t})
 =
 \prod_{e\in\HiddenMatching}
   \left[
    \sum_{u\in\Vocab^{e\cap B_t}}f_e(u)
   \right].
\]
Applying it with \(f_e(u)=\sqrt{p_e(u)\widehat p_e(u)}\) gives the
midpoint normalizer.  Strict positivity of the hard family and frozen oracle
gives
\(\sum_u\sqrt{p_e(u)\widehat p_e(u)}>0\) for every edge, so all following
componentwise divisions are valid.

Substituting this normalizer into
\cref{app:eq:hard-midpoint-kernel} now gives
\[
\begin{aligned}
 \HardMidpointKernel{t}{I}
   (z\mid\OpVisibleHistory{s})
 &=\frac{\displaystyle
    \prod_{e\in\HiddenMatching}
       \sqrt{p_e(z_{e\cap B_t})\widehat p_e(z_{e\cap B_t})}}
   {\displaystyle
    \prod_{e\in\HiddenMatching}
     \left[
      \sum_{u\in\Vocab^{e\cap B_t}}
       \sqrt{p_e(u)\widehat p_e(u)}
     \right]}\\
 &=\prod_{e\in\HiddenMatching}
   \frac{\sqrt{p_e(z_{e\cap B_t})\widehat p_e(z_{e\cap B_t})}}
        {\displaystyle
         \sum_{u\in\Vocab^{e\cap B_t}}
           \sqrt{p_e(u)\widehat p_e(u)}}.
\end{aligned}
\]
For a surviving edge, \(e\cap B_t=\varnothing\), so its normalized factor
is \(1\).  Every nontrivial factor belongs either to an edge retired earlier,
whose identity and triggers are in \(\OpAnalyticField{s}\), or to an edge retired
now, whose identity and triggers are in \(\RetiredRecordValue\).  Thus the midpoint
probability of the observed commit value does not depend on
\(\OpSurvivorState{s+1}\).  The accompanying oracle rows have the same
property: rows on previously retired edges are fixed by
\(\OpAnalyticField{s}\), and rows on newly retired edges are fixed after
conditioning on \(\RetiredRecordValue\).

More explicitly, fix a compatible
\(\SurvivorConfigA\).  Together with
\((\OpAnalyticField{s},\OpRetiredRecord{s}=\RetiredRecordValue)\), it
specifies the compatible instance \(I\).  For a commit observation
\((\mathbf q,z)\),
\begin{align*}
 &\mathbb P\!\left(
   \OpObservation{s}=(\mathbf q,z)
   \,\middle|\,
   \OpAnalyticField{s},
   \OpRetiredRecord{s}=\RetiredRecordValue,
   \OpSurvivorState{s+1}=\SurvivorConfigA\right)\\
 &\quad=
 \mathbf 1\!\left\{
  \mathbf q=
  \bigl(
   \HardOracleRow{I}{y_t(\OpVisibleHistory{s})}{j}
  \bigr)_{j\in B_t}
 \right\}
 \HardMidpointKernel{t}{I}
   (z\mid\OpVisibleHistory{s}).
\end{align*}
Every row and every nonunit edge factor on the right is determined by
\((\OpAnalyticField{s},\OpRetiredRecord{s})\); changing
\(\SurvivorConfigA\) changes only survivor-edge factors, which equal \(1\).
Thus the displayed likelihood is the same for all compatible survivor
configurations.  Hence
\[
 \OpObservation{s}\ \perp\!\!\!\perp\ \OpSurvivorState{s+1}
 \ \mid\
 (\OpAnalyticField{s},\OpRetiredRecord{s})
 \qquad\text{for a commit operation.}
\]

\emph{5. Close the induction.}
Step~3 proves for a query, and Step~4 proves for a commit, that the
conditional law of the new observation does not change when the new
survivor state is added to the conditioning information:
\begin{align*}
 &\mathcal L\!\left(
   \OpObservation{s}\,\middle|\,
   \OpAnalyticField{s},\OpRetiredRecord{s},
   \OpSurvivorState{s+1}\right)\\
 &\qquad=
 \mathcal L\!\left(
   \OpObservation{s}\,\middle|\,
   \OpAnalyticField{s},\OpRetiredRecord{s}\right).
\end{align*}
Thus, for every pair
\((\RetiredRecordValue,o)\) of positive conditional probability and every
compatible survivor configuration \(\SurvivorConfigA\),
\begin{align*}
 &\mathbb P\!\left(
   \OpSurvivorState{s+1}=\SurvivorConfigA
   \,\middle|\,
   \OpAnalyticField{s},
   \OpRetiredRecord{s}=\RetiredRecordValue,
   \OpObservation{s}=o\right)\\
 &\qquad=
 \mathbb P\!\left(
   \OpSurvivorState{s+1}=\SurvivorConfigA
   \,\middle|\,
   \OpAnalyticField{s},
   \OpRetiredRecord{s}=\RetiredRecordValue\right)\\
 &\qquad=
 \frac{1}{|\operatorname{PM}(\OpUnresolvedVertices{s+1})|}
 \prod_{i\in\OpUnresolvedVertices{s+1}}
   \frac{1}{|\OpRemainingTriggers{i}{s+1}|}.
\end{align*}
The last equality is the normalization computed in Step~2.

It remains only to identify the conditioning field.  The selected action
\(A_s\) is \(\OpAnalyticField{s}\)-measurable, and the next unresolved set is
recovered from the current set and retirement record by
\[
 \OpUnresolvedVertices{s+1}
 =
 \OpUnresolvedVertices{s}
 \setminus
 \bigcup_{(e,\cdot)\in\OpRetiredRecord{s}}e.
\]
The current unresolved set is itself recoverable from
\(\OpAnalyticHistory{s}\) by the previously recorded retirements.  Expanding
\cref{app:eq:analytic-record-update,app:eq:analytic-field-and-survivor} and
using these two measurability facts gives
\begin{align*}
 \OpAnalyticField{(s+1)}
 &=\sigma\!\left(
   \OpAnalyticHistory{(s+1)},\OpUnresolvedVertices{s+1}\right)\\
 &=\sigma\!\left(
   \OpAnalyticHistory{s},A_s,\OpObservation{s},
   \OpRetiredRecord{s},\OpUnresolvedVertices{s+1}\right)\\
 &=\sigma\!\left(
   \OpAnalyticField{s},\OpRetiredRecord{s},\OpObservation{s}\right).
\end{align*}
Consequently, the preceding conditional mass is also the conditional mass
given \(\OpAnalyticField{(s+1)}\).  Therefore
\[
 \OpSurvivorState{s+1}\mid\OpAnalyticField{(s+1)}
 \sim
 \operatorname{Unif}\bigl(
   \operatorname{PM}(\OpUnresolvedVertices{s+1})\bigr)
 \otimes
 \bigotimes_{i\in\OpUnresolvedVertices{s+1}}
   \operatorname{Unif}(\OpRemainingTriggers{i}{s+1}).
\]
This proves the product law at every operation boundary.

Finally, the event that operation \(s\) is commit \(t\) is known at the
starting boundary:
\[
 \{s_t=s\}
 =\left\{\sum_{r<s}\mathbf 1\{A_r\text{ is a commit}\}=t-1,
             \ A_s\text{ is a commit}\right\}
 \in\OpAnalyticField{s}.
\]
Applying the operation-boundary law on each event \(\{s_t=s\}\), and using
\(\OpSurvivorState{s_t}=\SurvivorState{t}\) and
\(\OpAnalyticField{s_t}=\AnalyticField{t}\), proves
\eqref{app:eq:strong-survivor-invariant}.
\end{proof}
}

\begin{figure}[!htbp]
 \centering
 \includegraphics[width=\linewidth]{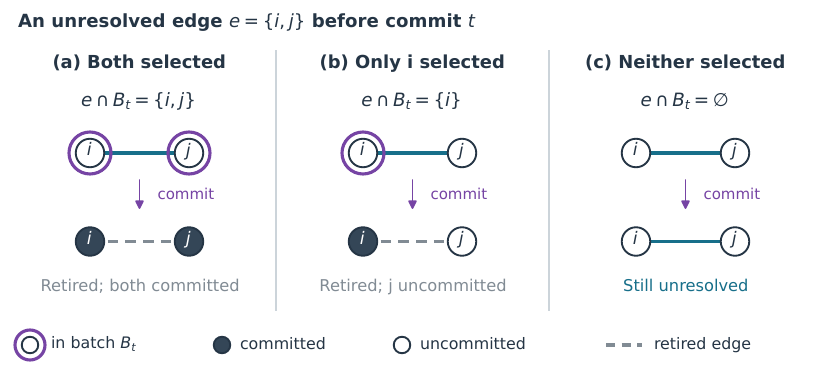}
 \caption{Three possible batch selections on an initially unresolved edge.
 Each panel shows the edge before (top) and after (bottom) one commit.
 Purple rings mark membership in $B_t$; filled nodes are committed.
 Selecting both endpoints gives a collision; selecting one gives a crossing
 edge. Both events retire the whole edge, but in (b) $j$ remains uncommitted.
 Selecting neither endpoint leaves the edge unresolved. The proof records
 the edge identity and both triggers in (a)--(b); the target matching itself
 is unchanged.}
 \label{fig:c4-matching-states}
\end{figure}

\paragraph{One source, at most one candidate per submission.}
Recall from \cref{app:eq:unresolved-matching-update,app:eq:retirement-record}
that the number of query-retired edges is
\[
 \GenieDeletions
 =\sum_{s:\,A_s\text{ is a query}}
   |\OpUnresolvedMatching{s}\setminus\OpUnresolvedMatching{s+1}|.
\]
Commit retirements do not contribute, and simultaneous hits at both endpoints
count as one edge.
A state \(y\) tests at most one candidate \(\HardTest{y}{i}\) at each source
\(i\), regardless of how many returned rows are inspected.
Together with the conditional uniformity in
\cref{app:lem:unresolved-invariant}, this gives the following bound directly.

\begin{lemma}[Equality-test disclosure]
\label{app:lem:query-disclosure}
For any algorithm using at most \(\QueryBudget{}\) counterfactual submissions,
\begin{equation}
 \mathbb E\GenieDeletions\le\frac{N\QueryBudget{}}{\HardBankSize}.
 \label{app:eq:expected-query-deletions}
\end{equation}
\end{lemma}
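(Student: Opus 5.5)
We bound $\GenieDeletions$ by a sum of per-source hit indicators and use the conditional uniformity of \cref{app:lem:unresolved-invariant} at each query boundary. An edge retired by a query has at least one endpoint that hits, so
\[
 \GenieDeletions\le\sum_{s:\,A_s\text{ is a query}}\ \sum_{i\in\OpUnresolvedVertices{s}}\HardHit{\OpSubmittedState{s}}{i}.
\]
Reorganize this by source: for each fixed $i\in\PositionSet$, let $K_i$ be the number of query operations $s$ with $i\in\OpUnresolvedVertices{s}$ and $\HardTest{\OpSubmittedState{s}}{i}\ne\bot$. It then suffices to show that, for every $i$, the expected number of hits at $i$ while $i$ survives is at most $\QueryBudget{}/\HardBankSize$. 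Summing over the $N$ sources gives \eqref{app:eq:expected-query-deletions}.

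\emph{Per-source bound.} Fix $i$ and a query operation $s$. The action $A_s$, and hence $\HardTest{y}{i}$ with $y=\OpSubmittedState{s}$, is $\OpAnalyticField{s}$-measurable, and so is the event $\{i\in\OpUnresolvedVertices{s}\}$. On that event, the operation-boundary form of \cref{app:lem:unresolved-invariant} says that $\TriggerIndex{i}$ is conditionally uniform on $\OpRemainingTriggers{i}{s}$. Hence
\[
\mathbb P(\HardHit{y}{i}=1\mid\OpAnalyticField{s})=\mathbf 1\{i\in\OpUnresolvedVertices{s},\ \HardTest{y}{i}\in\OpRemainingTriggers{i}{s}\}/|\OpRemainingTriggers{i}{s}|.
\]
While $i$ survives, every non-$\bot$ test of a still-remaining candidate removes that candidate. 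Therefore, if $i$ is tested $k$ times with fresh candidates while surviving, the remaining set sizes are at least $\HardBankSize,\HardBankSize-1,\ldots,\HardBankSize-k+1$. Repeated or $\bot$ tests contribute zero.

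The cleanest way to finish is a sequential-guessing argument. Conditionally, the process at $i$ is an adaptive guesser facing a uniform secret in $[\HardBankSize]$ that gets at most $\QueryBudget{}$ guesses, since each query submission supplies at most one test at $i$. Retirement of $i$ through its mate's hit or through a commit only stops the guessing early. The probability of a hit within $\QueryBudget{}$ guesses is then at most $\QueryBudget{}/\HardBankSize$. To make this rigorous, telescope as follows. After $k$ fresh failed guesses the hit probability at the next guess is $1/(\HardBankSize-k)$, so the probability that the first hit occurs exactly at fresh guess $k+1$ is $\prod_{r<k}(1-1/(\HardBankSize-r))\cdot 1/(\HardBankSize-k)=1/\HardBankSize$. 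Summing over $k<\QueryBudget{}$ gives $\QueryBudget{}/\HardBankSize$. Stopping early and the interleaving of commits and other sources' events can only reduce this. Formally, $M_s:=\mathbf 1\{\text{no hit at }i\text{ yet}\}\cdot(|\OpRemainingTriggers{i}{s}|-\HardBankSize)$ plus the count of hits at $i$ forms a supermartingale; equivalently, $\mathbb E[\text{hits at }i]\le\mathbb E[\#\text{fresh tests at }i]/\HardBankSize$ holds along any adaptive stopping.

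\emph{Main obstacle.} The delicate step is justifying that the conditional probability is exactly $1/|\OpRemainingTriggers{i}{s}|$ under the analytic filtration, even though the decision rule adapts to commit values and other hit bits. This is precisely what \cref{app:lem:unresolved-invariant} provides, so the remaining work is the optional-stopping bookkeeping. One point there needs care: a single submission can hit both endpoints of an edge, yet counts as one retired edge. That case only helps the inequality, since the per-endpoint sum over-counts $\GenieDeletions$.
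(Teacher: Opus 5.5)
Your route is the paper's. You dominate $\GenieDeletions$ by per-vertex hit indicators, use the operation-boundary form of \cref{app:lem:unresolved-invariant} to get conditional hit probability $1/|\OpRemainingTriggers{i}{s}|$, and telescope to a per-vertex hit probability of at most $\QueryBudget{}/\HardBankSize$. The only difference is indexing: you count fresh tests at $i$, whereas the paper counts submissions and uses $|\OpRemainingTriggers{i}{s}|\ge\HardBankSize-k+1$ at the $k$th one. Your product computation, though, is an equality that assumes every guess is actually made. The adaptive-stopping step is therefore where rigor is needed, and the sentence you offer for it is false.

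Let $F_i$ be the number of fresh tests at $i$ made while $i$ survives. Your $M_s$ is not a supermartingale: at a fresh test with $c$ candidates left, its expected increment is $(\HardBankSize+2-2c)/c$, which is positive once $c<(\HardBankSize+2)/2$. The claimed equivalent bound $\mathbb E[\text{hits at }i]\le\mathbb E F_i/\HardBankSize$ also points the wrong way. Each surviving fresh test hits with conditional probability $1/|\OpRemainingTriggers{i}{s}|\ge1/\HardBankSize$, so hits minus $F_i/\HardBankSize$ is a submartingale. For example, testing all $\HardBankSize$ candidates in turn hits with probability one, while $\mathbb E F_i=(\HardBankSize+1)/2$.

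The bound $\QueryBudget{}/\HardBankSize$ holds for a different reason: the survival discount offsets the rising hit rate. Capturing that requires an inequality recursion, as in the paper's $p_{i,k}-p_{i,k-1}\le(1-p_{i,k-1})/(\HardBankSize-k+1)$. In your indexing, let $a_k$ be the probability that $k$ fresh tests occur while $i$ survives and all fail. At the $k$th such test exactly $\HardBankSize-k+1$ candidates remain, so conditioning there gives $a_k\le a_{k-1}\bigl(1-1/(\HardBankSize-k+1)\bigr)$, hence $a_k\le(\HardBankSize-k)/\HardBankSize$. The probability of a first hit at fresh test $k+1$ is then at most $a_k/(\HardBankSize-k)\le1/\HardBankSize$. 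Any hit occurs among the first $\min\{\QueryBudget{},\HardBankSize\}$ fresh tests, so summing gives $\QueryBudget{}/\HardBankSize$. Early stopping, mate hits, and interleaved commits enter only through these inequalities, which is exactly what your ``can only reduce this'' needed.
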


\begin{proof}
All probabilities and expectations use the joint instance--midpoint law
\eqref{app:eq:joint-instance-midpoint-law}, with the decision-rule seed
\(w\) fixed. The execution below is the midpoint execution.
It suffices to consider integer \(\QueryBudget{}\): for a noninteger
budget, apply the integer result to \(\lfloor\QueryBudget{}\rfloor\),
whose bound is no larger.
If \(\QueryBudget{}=0\), there are no query hits and
\(\GenieDeletions=0\). If \(\QueryBudget{}\ge\HardBankSize\), the bound
follows from \(\GenieDeletions\le N/2\le N\QueryBudget{}/\HardBankSize\).
Hence assume \(1\le\QueryBudget{}<\HardBankSize\).

\emph{1. Bound the hit probability at one query.}
Fix a vertex \(i\). Number counted counterfactual submissions by
\(k=1,2,\ldots\); commits may occur between them.
The distinguished all-mask preprocessing readout tests no candidate
and is not included in this count.
At the \(k\)-th such query, let \(s\) be its operation index and write
\(y=\OpSubmittedState{s}\), as in \cref{app:eq:operation-action}.
Recall that
\(\OpUnresolvedVertices{s}=\bigcup_{e\in\OpUnresolvedMatching{s}}e\)
is the unresolved vertex set just before this operation
(\cref{app:eq:unresolved-vertices}).
The past and selected action determine whether operation \(s\) is
the \(k\)-th counted query. Thus the operation-boundary conclusion of
\cref{app:lem:unresolved-invariant} applies here even when the query
time and state are chosen adaptively.

If \(i\in\OpUnresolvedVertices{s}\), each of the preceding \(k-1\)
counted submissions has removed at most one candidate at \(i\).
Commits leave the candidate set of a surviving vertex unchanged, by
\cref{app:eq:operation-candidates}. Consequently,
\[
 |\OpRemainingTriggers{i}{s}|
 \ge \HardBankSize-(k-1)=\HardBankSize-k+1.
\]
Conditional on \(\OpAnalyticField{s}\), the submitted state is fixed
and \(\TriggerIndex{i}\) is uniform on
\(\OpRemainingTriggers{i}{s}\). Therefore
\begin{align}
 \mathbb P(\HardHit{y}{i}=1\mid\OpAnalyticField{s})
 &=\mathbb P(\TriggerIndex{i}=\HardTest{y}{i}
                  \mid\OpAnalyticField{s})\notag\\
 &=\frac{\mathbf 1\{\HardTest{y}{i}\in\OpRemainingTriggers{i}{s}\}}
         {|\OpRemainingTriggers{i}{s}|}
 \le\frac1{\HardBankSize-k+1}.
 \label{app:eq:query-conditional-hit-bound}
\end{align}
A mask, background token, or previously excluded candidate gives numerator
zero. If the edge has already retired, there can be no further hit
\emph{while unresolved}, regardless of later oracle replies.

\emph{2. Accumulate the probabilities without assuming independence.}
For \(0\le k\le\QueryBudget{}\), define
\[
 p_{i,k}:=
 \mathbb P\!\left(
 \begin{gathered}
 i\text{ has a query hit while its edge is unresolved}\\
 \text{among the first }k\text{ counted submissions}
 \end{gathered}
 \right),
 \qquad p_{i,0}=0.
\]
If the execution contains fewer than \(k\) counted submissions, this event
uses all those that occurred; no extra queries are executed.
A hit retires the edge, so \(i\) can have at most one hit while unresolved.
The increment \(p_{i,k}-p_{i,k-1}\) is therefore the probability of such
a hit at query \(k\).

To bound this increment, condition on each possible analytic history
just before query \(k\), apply
\eqref{app:eq:query-conditional-hit-bound} when \(i\) is unresolved,
and average over these histories. This gives
\begin{align}
 p_{i,k}-p_{i,k-1}
 &\le
 \frac{\mathbb P(\text{query }k\text{ occurs and }i
                  \text{ is unresolved just before it})}
      {\HardBankSize-k+1}\notag\\
 &\le\frac{1-p_{i,k-1}}{\HardBankSize-k+1}.
 \label{app:eq:query-hit-recurrence}
\end{align}
For the last inequality, an unresolved vertex has not hit earlier.
Earlier retirement by a mate hit or a commit, as well as early termination,
can only remove paths from the event in the numerator.
Simultaneous hits at \(i\) and its mate still count as a hit at \(i\):
the edge was unresolved immediately before that query.

Rearranging \eqref{app:eq:query-hit-recurrence} step by step yields
\begin{align*}
 1-p_{i,k}
 &=1-p_{i,k-1}-(p_{i,k}-p_{i,k-1})\\
 &\ge(1-p_{i,k-1})
       \left(1-\frac1{\HardBankSize-k+1}\right)\\
 &=(1-p_{i,k-1})
       \frac{\HardBankSize-k}{\HardBankSize-k+1}.
\end{align*}
Starting from \(1-p_{i,0}=1\) and applying this inequality successively,
\begin{align*}
 1-p_{i,\QueryBudget{}}
 &\ge
 \prod_{k=1}^{\QueryBudget{}}
       \frac{\HardBankSize-k}{\HardBankSize-k+1}\\
 &=\frac{\HardBankSize-1}{\HardBankSize}
   \frac{\HardBankSize-2}{\HardBankSize-1}\cdots
   \frac{\HardBankSize-\QueryBudget{}}
        {\HardBankSize-\QueryBudget{}+1}\\
 &=\frac{\HardBankSize-\QueryBudget{}}{\HardBankSize}.
\end{align*}
For \(\QueryBudget{}=1\) the product has just its first factor.
The cancellation therefore gives
\begin{equation}
 p_{i,\QueryBudget{}}
 \le 1-\frac{\HardBankSize-\QueryBudget{}}{\HardBankSize}
 =\frac{\QueryBudget{}}{\HardBankSize}.
 \label{app:eq:query-source-hit-bound}
\end{equation}
This multiplication iterates conditional probability bounds; it does not
assume independent queries.

\emph{3. Count retired edges by their successful endpoints.}
At a query, an unresolved edge retires exactly when at least one of its
endpoints hits. For binary bits \(b_u,b_v\),
\(\mathbf 1\{b_u+b_v\ge1\}\le b_u+b_v\); two simultaneous hits still retire
only one edge. Hence, pathwise,
\[
 \begin{aligned}
 \GenieDeletions
 &=\sum_{s:\,A_s\text{ is a query}}
   \sum_{\{u,v\}\in\OpUnresolvedMatching{s}}
   \mathbf 1\{\HardHit{\OpSubmittedState{s}}{u}
              +\HardHit{\OpSubmittedState{s}}{v}\ge1\}\\
 &\le
   \sum_{i=1}^N
   \sum_{\substack{s:\,A_s\text{ is a query}\\
                    i\in\OpUnresolvedVertices{s}}}
             \HardHit{\OpSubmittedState{s}}{i}.
 \end{aligned}
\]
The change from edges to vertices uses that every unresolved vertex lies
on exactly one matching edge.
For each \(i\), the inner sum is either zero or one, because its first
hit retires its edge. It is thus the indicator of the event defining
\(p_{i,\QueryBudget{}}\). Taking expectations and using
\(\mathbb E\mathbf 1_E=\mathbb P(E)\) gives
\[
 \begin{aligned}
 \mathbb E\GenieDeletions
 &\le
 \sum_{i=1}^N\mathbb E\!\left[
   \sum_{\substack{s:\,A_s\text{ is a query}\\
                    i\in\OpUnresolvedVertices{s}}}
             \HardHit{\OpSubmittedState{s}}{i}\right]\\
 &=\sum_{i=1}^N p_{i,\QueryBudget{}}
 \le\sum_{i=1}^N\frac{\QueryBudget{}}{\HardBankSize}
 =\frac{N\QueryBudget{}}{\HardBankSize}.
 \end{aligned}
\]
No independence between vertices is used.
\end{proof}
\FloatBarrier

\subsection{Common collision bookkeeping}
\label{app:lower-proof-collisions}

We count how matching edges leave the unresolved-edge set; this pathwise
identity supplies the batch-size constraint in the collision lower-tail
proof. Recall that \(\UnresolvedVertices{t}\) contains the vertices on
edges still unresolved immediately before commit \(t\), after any preceding
queries. At this boundary, operation \(s_t\), the unresolved edge set is
\(\OpUnresolvedMatching{s_t}\), as in
\cref{app:eq:unresolved-matching-update,app:eq:remaining-trigger-set}.
Put \(\UnresolvedVertexCount{t}:=|\UnresolvedVertices{t}|\).
Removing whole edges makes this number even.

For the selected batch \(B_t\), define its active part and collision count:
\begin{equation}
 \ActiveBatch{t}:=B_t\cap\UnresolvedVertices{t},
 \qquad
 \ActiveBatchSize{t}:=|\ActiveBatch{t}|,
 \qquad
 \CollisionCount{t}:=
 \bigl|\{e\in\OpUnresolvedMatching{s_t}:
               e\subseteq\ActiveBatch{t}\}\bigr|,
 \qquad
 \TotalCollisions:=\sum_t\CollisionCount{t}.
 \label{app:eq:unresolved-batch-and-collisions}
\end{equation}
An unresolved edge has no committed endpoint, so
\(\UnresolvedVertices{t}\subseteq\ResidualPositionsAt{G_{t-1}}\).
The inclusion can be strict: query-retired endpoints and the uncommitted
mate of a commit-retired endpoint remain uncommitted but are no longer
active. Figure~\ref{fig:c4-matching-states}(b) shows the latter case.
A later commit of that mate creates no new unresolved-edge collision.

Let \(\ActiveRoundCount:=|\{t:\ActiveBatchSize{t}\ge1\}|\le\RoundBudget{}\)
be the number of rounds with a nonempty active batch.

\begin{lemma}[Pathwise edge-counting identity]
\label{app:lem:pathwise-edge-counting}
Under the retirement rules in \cref{app:eq:unresolved-matching-update},
every complete execution satisfies the pathwise identity
\begin{equation}
 \frac N2
 =\GenieDeletions+
  \sum_t(\ActiveBatchSize{t}-\CollisionCount{t}).
 \label{app:eq:lower-edge-ledger}
\end{equation}

Consequently,
\begin{equation}
 \sum_{t:\ActiveBatchSize{t}\ge1}
   (\ActiveBatchSize{t}-1)
 =\frac N2-\GenieDeletions+\TotalCollisions-\ActiveRoundCount
 \ge\frac N2-\GenieDeletions-\RoundBudget{}.
 \label{app:eq:active-excess-ledger}
\end{equation}
\end{lemma}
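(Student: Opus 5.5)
The identity is a pathwise count of how the $N/2$ matching edges leave the unresolved set. I will go through the retirement rules in \cref{app:eq:unresolved-matching-update} operation by operation. Initially $\OpUnresolvedMatching{1}=\HiddenMatching$ has $N/2$ edges. Completion forces every edge to retire eventually: every vertex lies in some batch $B_t$, and an edge meeting a batch retires at that commit if it has not retired earlier. Since the unresolved edge sets are nested and each retirement happens at exactly one operation, $N/2$ equals the sum over operations of $|\OpUnresolvedMatching{s}\setminus\OpUnresolvedMatching{s+1}|$. The query operations contribute exactly $\GenieDeletions$, by its definition.

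It therefore remains to show that commit $t$ retires exactly $\ActiveBatchSize{t}-\CollisionCount{t}$ edges. An edge retires at commit $t$ iff it lies in $\OpUnresolvedMatching{s_t}$ and meets $B_t$. Because unresolved edges have both endpoints in $\UnresolvedVertices{t}$, meeting $B_t$ is the same as meeting $\ActiveBatch{t}=B_t\cap\UnresolvedVertices{t}$. Classify each retired edge by $|e\cap\ActiveBatch{t}|\in\{1,2\}$. Since $\OpUnresolvedMatching{s_t}$ is a perfect matching on $\UnresolvedVertices{t}$, each active vertex lies on exactly one unresolved edge. Double counting then gives
\[
\ActiveBatchSize{t}=\#\{\text{edges with one active endpoint}\}+2\,\CollisionCount{t}.
\]
Hence the number of retired edges is
\[
\ActiveBatchSize{t}-2\CollisionCount{t}+\CollisionCount{t}=\ActiveBatchSize{t}-\CollisionCount{t}.
\]
Summing over commits gives \eqref{app:eq:lower-edge-ledger}.

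For \eqref{app:eq:active-excess-ledger}, drop the rounds with $\ActiveBatchSize{t}=0$; for them $\CollisionCount{t}=0$ as well, so they contribute nothing. Rearranging, $\sum_{t:\ActiveBatchSize{t}\ge1}\ActiveBatchSize{t}=N/2-\GenieDeletions+\TotalCollisions$. Subtracting one per active round gives the equality. The inequality follows from $\TotalCollisions\ge0$ and $\ActiveRoundCount\le\CommitRounds\le\RoundBudget{}$.

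The main point needing care, rather than any calculation, is that every edge retires exactly once and during the execution. The rule that a query retires an edge when at least one endpoint hits counts a double hit once, consistent with the definition of $\GenieDeletions$. Query-retired or half-committed vertices never re-enter $\UnresolvedVertices{t}$, so later commits of such vertices are correctly excluded from $\ActiveBatch{t}$.
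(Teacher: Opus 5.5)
Your proposal is correct and follows essentially the same route as the paper. In both, each matching edge retires exactly once, either at a query (counted in $\GenieDeletions$) or at its first active commit; commit $t$ retires $\CollisionCount{t}$ internal edges plus $\ActiveBatchSize{t}-2\CollisionCount{t}$ crossing edges; and the consequence follows by subtracting one per active round and using $\TotalCollisions\ge0$ and $\ActiveRoundCount\le\RoundBudget{}$.
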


\begin{proof}
An active batch touches exactly
\(\ActiveBatchSize{t}-\CollisionCount{t}\) previously unresolved edges.
Indeed, its \(\CollisionCount{t}\) internal edges use
\(2\CollisionCount{t}\) vertices, while every remaining active vertex lies
on a distinct crossing edge. Thus the number touched is
\[
 \underbrace{\CollisionCount{t}}_{\text{internal edges}}
 +\underbrace{(\ActiveBatchSize{t}-2\CollisionCount{t})}_{\text{crossing edges}}
 =\ActiveBatchSize{t}-\CollisionCount{t}.
\]
Since every position is eventually committed, every original matching
edge is retired exactly once: either at a successful counterfactual
trigger test (query retirement), or at its first active commit batch
(commit retirement). Summing over the \(N/2\) original edges proves
\eqref{app:eq:lower-edge-ledger}.

Finally,
\(\sum_{t:\ActiveBatchSize{t}\ge1}(\ActiveBatchSize{t}-1)
=\sum_t\ActiveBatchSize{t}-\ActiveRoundCount\).
Substituting
\(\sum_t\ActiveBatchSize{t}=N/2-\GenieDeletions+\TotalCollisions\)
from \eqref{app:eq:lower-edge-ledger}, then using
\(\TotalCollisions\ge0\) and \(\ActiveRoundCount\le\RoundBudget{}\),
gives \eqref{app:eq:active-excess-ledger}.
\end{proof}

\subsection{A Laplace bound for uniform-matching collisions}
\label{app:lower-proof-matching-laplace}

The next lemma is a finite combinatorial statement independent of the hard
oracle and the adaptive decision rule. It gives a conditional
Laplace bound used below to derive an adaptive lower-tail estimate.

\paragraph{Intuition.}
For a fixed set of $k$ vertices in a uniform matching on $n$ vertices,
each unordered pair is an edge with probability $1/(n-1)$.  Hence the
expected number of internal edges is
\[
 \binom{k}{2}\frac1{n-1}
 =\frac{k(k-1)}{2(n-1)}
 \simeq\frac{k^2}{n}\qquad(k\ge2).
\]
To obtain a lower-tail estimate rather than only a mean, the proof exposes
order $k$ edges, each with conditional success probability at least order
$k/n$.  Their Laplace factors multiply by successive conditioning; the
exposures need not be independent.

\begin{lemma}[Internal-edge Laplace transform]
\label{app:lem:matching-collision-laplace}
Let \(M\) be a uniformly random perfect matching on a labeled set \(U\) of
even cardinality \(n\ge 2\). Fix \(S\subseteq U\), put \(k:=|S|\), and let
\[
 \InternalEdgeCount{S}:=|\{e\in M:e\subseteq S\}|.
\]
Then, for every \(t>0\),
\begin{equation}
 \mathbb E e^{-t\InternalEdgeCount{S}}
 \le
 \exp\left\{
  -\frac{1-e^{-t}}{32}\frac{\PositivePart{k-1}^2}{n}
 \right\}.
 \label{app:eq:matching-collision-laplace}
\end{equation}
\end{lemma}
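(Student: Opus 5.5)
We plan to follow the intuition given before the statement: expose the matching partners of vertices of \(S\) one at a time, in a way that yields about \(k/4\) exposures, each landing inside \(S\) with conditional probability of order \(k/n\). The Laplace factors will then be multiplied through successive conditioning, with no independence assumed.

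\emph{Trivial cases.} If \(k\le1\), the bound is \(\exp(0)=1\) and holds trivially. If \(k\) is bounded by a small constant, say \(k\le 4\), we can also argue directly: \(\PositivePart{k-1}^2/(32n)\le 9/(32n)\). The left side is at most \(1-(1-e^{-t})\mathbb P(\InternalEdgeCount{S}\ge1)\). Moreover \(\mathbb P(\InternalEdgeCount{S}\ge1)\ge 1/(n-1)\) for \(k\ge2\). Using \(1-x\le e^{-x}\), this case is closed. In general we reduce to \(k\ge2\).

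\emph{Sequential exposure.} Process vertices of \(S\) as follows. At step \(r\), pick the smallest (public order) vertex \(v_r\in S\) that has not yet been matched by earlier exposures, and reveal its partner \(M(v_r)\). Declare success if \(M(v_r)\in S\). Run for \(m:=\lfloor (k-1)/4\rfloor\) steps, or \(\lceil (k-1)/4\rceil\) adjusted as needed. Each step removes at most two vertices of \(S\), so before step \(r\le m\) at least \(k-2(r-1)\ge k/2+1\) vertices of \(S\) remain unexposed, including \(v_r\). Conditional on the revealed pairs, the rest of \(M\) is a uniform perfect matching on the unexposed set, of size at most \(n\). Hence the partner of \(v_r\) is uniform on the other unexposed vertices. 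It therefore lies in \(S\) with conditional probability at least \((k/2)/(n-1)\ge k/(2n)\).

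Each success is a distinct internal edge, so \(\InternalEdgeCount{S}\ge\sum_r\xi_r\) for the success indicators \(\xi_r\). Write \(\mathcal G_{r-1}\) for the information revealed by the first \(r-1\) exposures. Then
\[
\mathbb E\bigl[e^{-t\xi_r}\mid\mathcal G_{r-1}\bigr]\le 1-(1-e^{-t})\frac{k}{2n}\le\exp\Bigl\{-(1-e^{-t})\frac{k}{2n}\Bigr\}.
\]
Iterating by the tower property gives \(\mathbb E e^{-t\InternalEdgeCount{S}}\le\exp\{-(1-e^{-t})mk/(2n)\}\).

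\emph{Constant bookkeeping.} The step I expect to need the most care is checking that \(mk/2\ge\PositivePart{k-1}^2/32\) for every \(k\ge2\). For \(k\ge5\), \(m\ge(k-1)/4-3/4\ge (k-1)/16\), so \(mk/2\ge (k-1)^2/32\) with room to spare. For \(2\le k\le4\), where \(m=0\), we instead use a single exposure. That exposure succeeds with probability at least \((k-1)/(n-1)\), and \((k-1)\ge (k-1)^2/32\) there. This fallback is the direct small-\(k\) case above. Combining the two ranges yields the stated bound.
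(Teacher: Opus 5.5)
Your proposal is correct and follows essentially the paper's route: you expose matching edges one at a time from vertices of $S$, bound each conditional success probability below by order $k/n$, multiply the conditional Laplace factors via the tower property without assuming independence, and treat small $k$ directly. The only difference is cosmetic. The paper splits $S=A\sqcup B$, counts only $A$-to-$B$ successes, and uses $m=\lfloor k/4\rfloor$ with probability at least $k/(4n)$. You instead count any mate in $S$ and take $m=\lfloor (k-1)/4\rfloor$, which works equally well because each exposure removes at most two vertices of $S$.
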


\begin{proof}

\emph{1. Handle sets of size at most three.}
For \(k\in\{0,1\}\), both sides of
\eqref{app:eq:matching-collision-laplace} equal one. For
\(k\in\{2,3\}\), two internal matching edges would require four distinct
vertices of \(S\). Thus \(\InternalEdgeCount{S}\in\{0,1\}\), and
\[
 \mathbb E\InternalEdgeCount{S}
 =0\cdot\mathbb P(\InternalEdgeCount{S}=0)
  +1\cdot\mathbb P(\InternalEdgeCount{S}=1)
 =\mathbb P(\InternalEdgeCount{S}=1).
\]
Also, writing the edge count as a sum of indicators and taking expectations,
\begin{equation}
 \mathbb E \InternalEdgeCount{S}
 =\sum_{\{u,v\}\subseteq S}\mathbb P(\{u,v\}\in M)
 =\frac{\binom{k}{2}}{n-1}
 =\frac{k(k-1)}{2(n-1)}.
 \label{app:eq:small-set-internal-edge-mean}
\end{equation}
Writing \(a_t:=1-e^{-t}\in(0,1)\), the same two possible values give
\begin{align*}
 \mathbb E e^{-t\InternalEdgeCount{S}}
 &=\mathbb P(\InternalEdgeCount{S}=0)
   +e^{-t}\mathbb P(\InternalEdgeCount{S}=1)\\
 &=1-(1-e^{-t})\mathbb P(\InternalEdgeCount{S}=1)
 =1-a_t\mathbb E\InternalEdgeCount{S}.
\end{align*}
Using \(1-v\le e^{-v}\) for \(v\ge0\), we obtain
\[
 \mathbb E e^{-t\InternalEdgeCount{S}}
 =1-a_t\mathbb E \InternalEdgeCount{S}
 \le e^{-a_t\mathbb E \InternalEdgeCount{S}}
 \le
 \exp\left\{-\frac{a_t}{32}
                   \frac{(k-1)^2}{n}\right\},
\]
where the last inequality follows directly from
\eqref{app:eq:small-set-internal-edge-mean} for \(k=2,3\).

\emph{2. Expose edges and bound each conditional success probability.}
It remains to consider \(k\ge 4\). Partition \(S=A\sqcup B\) with
\(|A|=\lfloor k/2\rfloor\) and \(|B|=\lceil k/2\rceil\), and put
\(m:=\lfloor k/4\rfloor\). We reveal edges of the original matching
\(M\) on the full set \(U\), not of a matching restricted to \(S\).
Only the starting endpoint is required to lie in \(A\); its mate can lie
in \(A\), \(B\), or \(U\setminus S\).

Fix an ordering of \(U\). Let \(U_0:=U\), and for \(j=1,\ldots,m\) define
\begin{align*}
 a_j&:=\min(A\cap U_{j-1}),
 &b_j&:=\operatorname{mate}_{M}(a_j),\\
 e_j&:=\{a_j,b_j\}\in M,
 &U_j&:=U_{j-1}\setminus\{a_j,b_j\}.
\end{align*}
Thus \(U_{j-1}\) consists of the vertices whose matching edges have not yet
been revealed; these vertices are already matched in \(M\).
The minimum uses the fixed ordering, so the choice of \(a_j\) depends only
on the earlier revealed edges. It is well-defined because each previous
edge removes at most two vertices from \(A\), and
\[
 |A\cap U_{j-1}|
 \ge\lfloor k/2\rfloor-2(j-1)
 \ge2m-2(j-1)
 \ge2>0
 \qquad(1\le j\le m).
\]

Define the success indicator by
\[
 X_j:=\mathbf1_{\{b_j\in B\cap U_{j-1}\}},
 \qquad
 \{X_j=1\}=\{\operatorname{mate}_{M}(a_j)\in B\cap U_{j-1}\}.
\]
Since every starting endpoint \(a_\ell\) lies in \(A\), each previous
edge \(e_\ell\) removes at most one vertex from \(B\). Consequently,
\[
 |B\cap U_{j-1}|
 \ge\lceil k/2\rceil-(j-1)
 \ge k/2-(m-1)
 \ge k/4,
 \qquad
 |U_{j-1}|=n-2(j-1).
\]
Conditional on \(e_1,\ldots,e_{j-1}\), the remaining matching is uniform
on \(U_{j-1}\): every perfect matching there has exactly one extension
by the already revealed edges. Each possible mate of \(a_j\) has the same
number of matching completions, so
\[
 \mathbb P(b_j=v\mid e_1,\ldots,e_{j-1})
 =\frac1{|U_{j-1}|-1}
 \quad\text{for }v\in U_{j-1}\setminus\{a_j\}.
\]
Summing over the allowable mates in \(B\), which never include
\(a_j\in A\), gives
\begin{align*}
 \mathbb P(X_j=1\mid e_1,\ldots,e_{j-1})
 &=\sum_{v\in B\cap U_{j-1}}
   \mathbb P(b_j=v\mid e_1,\ldots,e_{j-1})\\
 &=\frac{|B\cap U_{j-1}|}{|U_{j-1}|-1}
 \ge\frac{k/4}{n}=\frac{k}{4n}.
\end{align*}
The earlier indicators \(X_1,\ldots,X_{j-1}\) are determined by these
revealed edges. Adding them to the conditioning therefore gives the same
probability:
\begin{equation}
 \mathbb P(X_j=1\mid X_1,\ldots,X_{j-1},
               \text{all preceding exposed edges})
 \ge u:=\frac{k}{4n}.
 \label{app:eq:matching-cross-success}
\end{equation}
Removing both endpoints after each exposure makes \(e_1,\ldots,e_m\)
distinct. If \(X_j=1\), then \(a_j\in A\) and \(b_j\in B\), so
\(e_j\subseteq A\cup B=S\). Therefore
\[
 \sum_{j=1}^mX_j
 =|\{e_j:1\le j\le m,\ X_j=1\}|
 \le|\{e\in M:e\subseteq S\}|=\InternalEdgeCount{S}.
\]

\emph{3. Iterate conditional Laplace factors without independence.}
For one exposure, the Bernoulli identity and the preceding probability
bound give
\begin{align*}
 &\mathbb E[e^{-tX_j}\mid\text{preceding exposed edges}]\\
 &\qquad=1-(1-e^{-t})\Pr(X_j=1\mid\text{preceding exposed edges})\\
 &\qquad\le1-u(1-e^{-t}).
\end{align*}
For \(1\le r\le m\), condition on the first \(r-1\) exposed edges.
The earlier factors are measurable with respect to this record, so
\begin{align*}
 \mathbb E e^{-t\sum_{j=1}^{r}X_j}
 &=\mathbb E\!\left[
   e^{-t\sum_{j=1}^{r-1}X_j}
   \mathbb E(e^{-tX_r}\mid\text{first \(r-1\) exposed edges})\right]\\
 &\le[1-u(1-e^{-t})]\,
      \mathbb E e^{-t\sum_{j=1}^{r-1}X_j}.
\end{align*}
Starting with the empty sum at \(r=0\), whose exponential is one, and
iterating to \(r=m\) gives the power below without an independence
assumption:
\begin{align}
 \mathbb E e^{-t\InternalEdgeCount{S}}
 &\le \mathbb E\exp\left\{-t\sum_{j=1}^mX_j\right\}\notag\\
 &\le\{1-u(1-e^{-t})\}^{m}
 \le\exp\{-mu(1-e^{-t})\}.
 \label{app:eq:matching-exposure-laplace}
\end{align}
Since \(m\ge k/8\) for \(k\ge4\),
\(mu\ge k^2/(32n)\ge(k-1)^2/(32n)\). Substitution in
\eqref{app:eq:matching-exposure-laplace} proves
\eqref{app:eq:matching-collision-laplace}.
\end{proof}

\subsection{An adaptive lower tail for batch collisions}
\label{app:lower-proof-collision-tail}

The conditional Laplace bound and the edge-counting identity now force many collisions
with constant probability under the joint instance--midpoint law. This is
the input needed to accumulate edge-level discrepancy nonlinearly; no
independence between commit rounds is assumed. Recall that
\(\TotalCollisions=\sum_t\CollisionCount{t}\) counts the unresolved edges
whose two endpoints are committed in the same active batch, as defined in
\eqref{app:eq:unresolved-batch-and-collisions}.

\paragraph{Intuition.}
If order $N$ unresolved vertices were spread evenly across $\RoundBudget$
batches while the unresolved pool had size order $N$, the collision scale
would be
\[
 \RoundBudget\frac{(N/\RoundBudget)^2}{N}
 =\frac{N}{\RoundBudget}.
\]
The proof does not assume even batches or a fixed pool: the edge-counting identity
forces enough total active batch mass, Cauchy--Schwarz gives the same
$N/\RoundBudget$ scale, and the conditional Laplace bound turns that
pathwise mass into a constant-probability collision event.

\begin{proposition}[Adaptive collision lower tail]
\label{app:prop:collision-lower-tail}
Fix a decision-rule seed under the joint law
\eqref{app:eq:joint-instance-midpoint-law}. If
\(\QueryBudget{}\le \HardBankSize/8\) and \(1\le \RoundBudget{}\le N/8\), then
\begin{equation}
 \mathbb P\left(
   \TotalCollisions\ge\frac{N}{8192\RoundBudget{}}
 \right)
 \ge
 \frac12-\exp\left\{-\frac{N}{8192\RoundBudget{}}\right\}.
 \label{app:eq:collision-lower-tail}
\end{equation}
In particular, if \(\RoundBudget{}\le N/16384\), then
\begin{equation}
 \mathbb P\left(
   \TotalCollisions\ge\frac{N}{8192\RoundBudget{}}
 \right)\ge\frac14.
 \label{app:eq:collision-quarter-probability}
\end{equation}
\end{proposition}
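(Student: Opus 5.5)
The plan is to combine three earlier results under the joint law \eqref{app:eq:joint-instance-midpoint-law}: the edge-counting identity of \cref{app:lem:pathwise-edge-counting}, the disclosure bound of \cref{app:lem:query-disclosure}, and the conditional Laplace bound of \cref{app:lem:matching-collision-laplace}. These are glued together by a supermartingale argument across commit rounds.

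\emph{Step 1: few query retirements.} Since \(\QueryBudget{}\le\HardBankSize/8\), \cref{app:lem:query-disclosure} gives \(\mathbb E\GenieDeletions\le N/8\). Markov's inequality then yields \(\mathbb P(\GenieDeletions\ge N/4)\le 1/2\).

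\emph{Step 2: on the complement, active batches carry a lot of squared mass.} On \(\{\GenieDeletions<N/4\}\), the bound \eqref{app:eq:active-excess-ledger} together with \(\RoundBudget{}\le N/8\) gives
\[
\sum_t\PositivePart{\ActiveBatchSize{t}-1}\ge N/2-N/4-N/8=N/8 .
\]
There are at most \(\RoundBudget{}\) commit rounds, so Cauchy--Schwarz gives
\[
\Sigma:=\sum_t\PositivePart{\ActiveBatchSize{t}-1}^2\ge \frac{N^2}{64\RoundBudget{}} .
\]

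\emph{Step 3: exponential supermartingale.} This is the step I expect to be the main obstacle: the batches are adaptive, so no independence across rounds is available. Fix commit \(t\). The batch \(B_t\) is chosen from the visible history, which is \(\AnalyticField{t}\)-measurable, so \(\ActiveBatch{t}\subseteq\UnresolvedVertices{t}\) is \(\AnalyticField{t}\)-measurable. By \cref{app:lem:unresolved-invariant}, the unresolved matching is conditionally uniform on \(\UnresolvedVertices{t}\), and \(\UnresolvedVertexCount{t}\le N\). Applying \cref{app:lem:matching-collision-laplace} conditionally with parameter \(1\) and \(a:=1-e^{-1}\) gives
\[
\mathbb E\bigl[e^{-\CollisionCount{t}}\mid\AnalyticField{t}\bigr]\le\exp\Bigl\{-\tfrac{a}{32N}\PositivePart{\ActiveBatchSize{t}-1}^2\Bigr\}.
\]
The collision count \(\CollisionCount{t}\) is recorded in the retirement record at commit \(t\), so it is measurable at the next pre-commit boundary \(\AnalyticField{t+1}\). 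Hence
\[
\exp\Bigl\{-\sum_{r\le t}\CollisionCount{r}+\tfrac{a}{32N}\sum_{r\le t}\PositivePart{\ActiveBatchSize{r}-1}^2\Bigr\}
\]
is a supermartingale along commit boundaries. I would pad with trivial factors after termination, which takes at most \(\RoundBudget{}\) rounds. Iterating the tower property, exactly as in Step 3 of \cref{app:lem:matching-collision-laplace}, gives
\[
\mathbb E\exp\{-\TotalCollisions+a\Sigma/(32N)\}\le1 .
\]

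\emph{Step 4: conclude.} Put \(x:=N/(8192\RoundBudget{})\). Markov's inequality and Step 2 give
\[
\mathbb P\bigl(\TotalCollisions<x,\ \Sigma\ge N^2/(64\RoundBudget{})\bigr)\le \exp\Bigl\{x-\frac{aN}{2048\RoundBudget{}}\Bigr\}.
\]
Since \(a/2048-1/8192\ge 1/8192\) (because \(a>1/2\)), this is at most \(e^{-x}\). Therefore
\[
\mathbb P(\TotalCollisions\ge x)\ge 1-\mathbb P(\GenieDeletions\ge N/4)-e^{-x}\ge \tfrac12-e^{-x},
\]
which is \eqref{app:eq:collision-lower-tail}. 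If \(\RoundBudget{}\le N/16384\), then \(x\ge2\) and \(e^{-2}<1/4\), which gives \eqref{app:eq:collision-quarter-probability}.
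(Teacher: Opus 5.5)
Your proof is correct and follows the paper's argument essentially step for step. The ingredients are the same: Markov on \(\GenieDeletions\), the edge-counting ledger plus Cauchy--Schwarz, the conditional Laplace bound (via the unresolved-matching invariant) iterated as an exponential supermartingale over commit boundaries, and a final Markov bound at \(x=N/(8192\RoundBudget{})\). The only cosmetic difference is in the Laplace exponent. You replace \(\UnresolvedVertexCount{t}\) by \(N\) there and carry \(a=1-e^{-1}\) to the end, needing only \(a\ge1/2\). The paper instead defines \(A_t=\PositivePart{\ActiveBatchSize{t}-1}^2/\UnresolvedVertexCount{t}\), uses \(1-e^{-1}\ge1/2\) immediately, and applies \(\UnresolvedVertexCount{t}\le N\) afterwards.
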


\begin{proof}

\emph{1. Fix the pre-exposure information and conditional Laplace bound.}
For an actual commit round \(t\), let \(B_t\) be the batch selected by the
seed-fixed decision rule. Recall the existing notation:
\[
 \ActiveBatch{t}=B_t\cap\UnresolvedVertices{t},
 \qquad
 \UnresolvedVertexCount{t}=|\UnresolvedVertices{t}|,
 \qquad
 \ActiveBatchSize{t}=|\ActiveBatch{t}|,
\]
\[
 \CollisionCount{t}
 =|\{e\in\OpUnresolvedMatching{s_t}:
                    e\subseteq\ActiveBatch{t}\}|,
 \qquad
 \TotalCollisions=\sum_t\CollisionCount{t}.
\]
Thus \(B_t\) is the full commit batch, whereas \(\ActiveBatch{t}\)
contains only its currently unresolved vertices.

After termination, append empty rounds until the total number of indexed
rounds is \(\RoundBudget{}\). In every appended round set
\(B_t=\ActiveBatch{t}=\UnresolvedVertices{t}=\varnothing\), so
\(\UnresolvedVertexCount{t}=\ActiveBatchSize{t}=\CollisionCount{t}=0\).
These rounds add no collision, and hence
\(\TotalCollisions=\sum_{t=1}^{\RoundBudget{}}\CollisionCount{t}\).
Introduce only the following additional abbreviation:
\[
 A_t:=
 \begin{cases}
  \PositivePart{\ActiveBatchSize{t}-1}^2/\UnresolvedVertexCount{t},
      &\UnresolvedVertexCount{t}>0,\\
  0,&\UnresolvedVertexCount{t}=0.
 \end{cases}
\]
The pre-commit analytic field \(\AnalyticField{t}\) determines both the
selected batch \(B_t\) and the unresolved set \(\UnresolvedVertices{t}\),
hence also \(\ActiveBatch{t}=B_t\cap\UnresolvedVertices{t}\).
Selecting \(B_t\) adds no observation: the matching edges incident to
\(\ActiveBatch{t}\) and the midpoint values \(X_{B_t}\) have not yet been
exposed. Thus \(\UnresolvedVertexCount{t}\), \(\ActiveBatchSize{t}\),
\(A_t\), and all quantities from earlier rounds are
\(\AnalyticField{t}\)-measurable. For appended empty rounds, use the terminal
field.

By \cref{app:lem:unresolved-invariant}, conditional on \(\AnalyticField{t}\),
the matching on \(\UnresolvedVertices{t}\) is uniform. For
\(\UnresolvedVertexCount{t}\ge2\), apply
\cref{app:lem:matching-collision-laplace} with
\(U=\UnresolvedVertices{t}\), \(S=\ActiveBatch{t}\),
\(n=\UnresolvedVertexCount{t}\), \(k=\ActiveBatchSize{t}\), and Laplace
parameter one. Since \(1-e^{-1}\ge1/2\), this gives
\begin{equation}
 \mathbb E\!\left[e^{-\CollisionCount{t}}\mid\AnalyticField{t}\right]
 \le
 \exp\left\{-\frac{1-e^{-1}}{32}A_t\right\}
 \le e^{-A_t/64}.
 \label{app:eq:conditional-collision-laplace}
\end{equation}
The unresolved set has even cardinality, so
\(\UnresolvedVertexCount{t}=1\) cannot occur. If
\(\UnresolvedVertexCount{t}=0\), then
\(\ActiveBatchSize{t}=\CollisionCount{t}=A_t=0\), and the same display holds
with equality.

\emph{2. Iterate the exponential expectation bound.}
Let \(r=0,\ldots,\RoundBudget{}\) count completed commit rounds, not
individual operations.  Define
\begin{equation}
 \CollisionExponential{r}:=
 \exp\left\{
   -\sum_{t=1}^{r}\CollisionCount{t}
   +\frac1{64}\sum_{t=1}^{r}A_t
 \right\},
 \qquad \CollisionExponential{0}:=1.
 \label{app:eq:collision-supermartingale}
\end{equation}
Both \(\CollisionExponential{r-1}\) and \(A_r\) are measurable with respect
to \(\AnalyticField{r}\), the field immediately before commit \(r\)
after any intervening counterfactual operations. Hence
\eqref{app:eq:conditional-collision-laplace} and the tower property give
\[
 \mathbb E \CollisionExponential{r}
 =
 \mathbb E\!\left[
   \CollisionExponential{r-1}e^{A_r/64}
   \mathbb E(e^{-\CollisionCount{r}}\mid\AnalyticField{r})
 \right]
 \le \mathbb E \CollisionExponential{r-1}.
\]
Iteration yields
\begin{equation}
 \mathbb E \CollisionExponential{\RoundBudget}\le1.
 \label{app:eq:terminal-collision-supermartingale}
\end{equation}
This is the only concentration step; adaptivity is absorbed into the
successive conditional expectations.

\emph{3. Turn edge counting into a pathwise Laplace budget.}
By \cref{app:lem:query-disclosure} and
\(\QueryBudget{}\le\HardBankSize/8\),
\[
 \mathbb E\GenieDeletions
 \le\frac{N\QueryBudget{}}{\HardBankSize}
 \le\frac{N(\HardBankSize/8)}{\HardBankSize}
 =\frac N8.
\]
Because \(\GenieDeletions\ge0\), Markov's inequality gives
\[
 \mathbb P(\GenieDeletions>N/4)
 \le\frac{\mathbb E\GenieDeletions}{N/4}
 \le\frac{N/8}{N/4}=\frac12.
\]
Taking the complement,
\(\mathbb P(\GenieDeletions\le N/4)=1-\mathbb P(\GenieDeletions>N/4)\),
therefore yields
\begin{equation}
 \mathbb P(\GenieDeletions\le N/4)\ge\frac12.
 \label{app:eq:few-genie-deletions-event}
\end{equation}
On this event, the pathwise counting identity
\eqref{app:eq:active-excess-ledger} in \cref{app:lem:pathwise-edge-counting}, the bounds
\(\ActiveRoundCount\le \RoundBudget{}\le N/8\), and
\(\TotalCollisions\ge0\) imply
\begin{equation}
 \sum_{t=1}^{\RoundBudget{}}\PositivePart{\ActiveBatchSize{t}-1}
 \ge \frac N2-\GenieDeletions-\RoundBudget{}
 \ge\frac N8.
 \label{app:eq:pathwise-active-excess}
\end{equation}
Since \(\UnresolvedVertexCount{t}\le N\), the definition of \(A_t\)
implies \(A_t\ge\PositivePart{\ActiveBatchSize{t}-1}^2/N\).
This also holds when \(\UnresolvedVertexCount{t}=0\), because then both
sides are zero. Cauchy--Schwarz over the \(\RoundBudget{}\) indexed rounds
(including appended empty rounds) now gives
\begin{align}
 \sum_{t=1}^{\RoundBudget{}}A_t
 &\ge\frac1N\sum_{t=1}^{\RoundBudget{}}\PositivePart{\ActiveBatchSize{t}-1}^2\notag\\
 &\ge\frac1{N\RoundBudget{}}
       \left(\sum_{t=1}^{\RoundBudget{}}\PositivePart{\ActiveBatchSize{t}-1}\right)^2
 \ge\frac{N}{64\RoundBudget{}}.
 \label{app:eq:pathwise-laplace-budget}
\end{align}

\emph{4. Subtract the small-collision event.}
Set \(x:=N/(8192\RoundBudget{})\). On the intersection
\(\{\GenieDeletions\le N/4\}\cap\{\TotalCollisions<x\}\),
\eqref{app:eq:pathwise-laplace-budget} yields
\[
 \log \CollisionExponential{\RoundBudget}
 =-\TotalCollisions+\frac1{64}\sum_{t=1}^{\RoundBudget{}}A_t
 >-x+\frac{N}{4096\RoundBudget{}}
 =x.
\]
Using \eqref{app:eq:terminal-collision-supermartingale} and Markov's
inequality,
\begin{align*}
 \mathbb P(\GenieDeletions\le N/4,\ \TotalCollisions<x)
 &\le \mathbb P(\CollisionExponential{\RoundBudget}>e^x)\\
 &\le\frac{\mathbb E\CollisionExponential{\RoundBudget}}{e^x}
 \le\frac1{e^x}=e^{-x}.
\end{align*}
Combining with \eqref{app:eq:few-genie-deletions-event},
\begin{align*}
 \mathbb P(\TotalCollisions\ge x)
 &\ge \mathbb P(\GenieDeletions\le N/4,\ \TotalCollisions\ge x)\\
 &=\mathbb P(\GenieDeletions\le N/4)
   -\mathbb P(\GenieDeletions\le N/4,\ \TotalCollisions<x)\\
 &\ge \frac12-e^{-x},
\end{align*}
which proves \eqref{app:eq:collision-lower-tail}.
If \(\RoundBudget{}\le N/16384\), then \(x\ge2\)
and \(1/2-e^{-2}>1/4\), proving
\eqref{app:eq:collision-quarter-probability}.
\end{proof}

\subsection{Collision-affinity bound}
\label{app:lower-proof-affinity}

We combine the midpoint identity with the one-edge Hellinger loss to
convert unresolved collisions into a multiplicative affinity penalty.
Recall our normalization: for probability masses \(p,q\) on the same
finite set \(\mathcal X\),
\(\HellingerAffinity(p,q):=\sum_{x\in\mathcal X}\sqrt{p(x)q(x)}\).
The squared Hellinger distance is
\(\sqHellinger(p,q):=\tfrac12\sum_{x\in\mathcal X}
(\sqrt{p(x)}-\sqrt{q(x)})^2=1-\HellingerAffinity(p,q)\).

\paragraph{Intuition.}
Each unresolved edge whose endpoints are committed together contributes
the factor $1-\HardEdgeDistance$.  Since
$\HardEdgeDistance\simeq\EdgeCoupling^2$,
\[
 1-(1-\HardEdgeDistance)^{\TotalCollisions}
 \simeq
 \min\{1,\EdgeCoupling^2\TotalCollisions\}.
\]
For justification, $1-(1-d)^z\le\min\{1,zd\}$ for integer $z\ge0$, while
$1-(1-d)^z\ge1-e^{-zd}\ge(1-e^{-1})\min\{1,zd\}$.
This is an edge-penalty scale; the next lemma places its expectation below
the actual squared-Hellinger risk.

\begin{lemma}[Affinity loss from unresolved collisions]
\label{app:lem:collision-affinity}
For every fixed decision-rule seed \(w\),
\begin{equation}
 \mathbb E_{I\sim\HardInstancePrior}
 \sqHellinger(\HardTargetLaw{I},\HardOutputLaw{I}{w})
 \ge
 \mathbb E_{I,H}
 \left[1-(1-\HardEdgeDistance)^{\TotalCollisions}\right],
 \label{app:eq:global-collision-affinity}
\end{equation}
where the expectation on the right is under
\eqref{app:eq:joint-instance-midpoint-law}.
\end{lemma}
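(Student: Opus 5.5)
The plan is to start from the adaptive midpoint identity (\cref{app:lem:adaptive-midpoint-lower}), which for each fixed $(I,w)$ gives $1-\sqHellinger(\HardTargetLaw{I},\HardOutputLaw{I}{w})=\mathbb E_{H\sim\HardMidpointLaw{I}{w}}\prod_t\HardLocalAffinity{t}{I,H_{t-1}}$. It then suffices to prove the pathwise local bound
\[
 \HardLocalAffinity{t}{I,h}\le(1-\HardEdgeDistance)^{\CollisionCount{t}}
\]
at every reachable pre-commit history $h$. Taking the product over $t$ gives $\prod_t\HardLocalAffinity{t}{I,H_{t-1}}\le(1-\HardEdgeDistance)^{\TotalCollisions}$. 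Averaging over $H$ and then over $I\sim\HardInstancePrior$ under \eqref{app:eq:joint-instance-midpoint-law}, and subtracting from one, yields \eqref{app:eq:global-collision-affinity}.

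For the local bound, I would reuse the edge factorization from Step~4 of the proof of \cref{app:lem:unresolved-invariant}. Both the exact batch law and the product oracle law split over matching edges as $\prod_e p_e(z_{e\cap B_t})$ and $\prod_e\widehat p_e(z_{e\cap B_t})$, with normalized factors. The distributive identity then gives $\HardLocalAffinity{t}{I,h}=\prod_e\HellingerAffinity(p_e,\widehat p_e)$. Every factor is at most $1$ by Cauchy--Schwarz, so it suffices to show that each collision edge contributes exactly $1-\HardEdgeDistance$.

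Let $e=\{i,j\}\in\OpUnresolvedMatching{s_t}$ with $e\subseteq\ActiveBatch{t}$. Since $e$ is unresolved, neither endpoint is committed, so conditioning on $x_{G(h)}$ leaves $p_e=\HardEdgeLaw{e}{I}$ unchanged. At the commit state $y_t(h)$, both $i$ and $j$ are masked. The first case of \eqref{app:eq:hard-oracle} therefore returns $\HardMarginal$ at both readouts, giving $\widehat p_e=\HardMarginal\otimes\HardMarginal=\HardIndependentEdgeLaw{e}$. Hence $\HellingerAffinity(p_e,\widehat p_e)=1-\sqHellinger(\HardEdgeLaw{e}{I},\HardIndependentEdgeLaw{e})=1-\HardEdgeDistance$. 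By \cref{app:lem:hard-local-facts}, item~4, this value is the same for every edge and every trigger choice.

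The step needing care is the bookkeeping rather than any estimate. First, $\CollisionCount{t}$ and $\TotalCollisions$ are defined through the analytic unresolved set, which depends on $I$ and on the query hits, not just on the visible $h$. The pathwise bound must therefore be stated for the fixed instance $I$ and applied along midpoint paths before taking the prior average. Second, all non-collision factors, including crossing edges and edges already retired by queries, must be bounded by $1$ rather than computed. Once the edge factorization of the local affinity is in place, the rest is immediate.
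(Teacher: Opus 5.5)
Your proposal is correct and follows the paper's proof essentially step for step. Like the paper, you use the edgewise factorization from Step~4 of \cref{app:lem:unresolved-invariant}, identify $p_e=\HardEdgeLaw{e}{I}$ and $\widehat p_e=\HardIndependentEdgeLaw{e}$ on each collision edge to get the factor $1-\HardEdgeDistance$, bound every other factor by one via Cauchy--Schwarz, and then apply the adaptive midpoint identity at fixed $(I,w)$ before averaging over the prior; your remark that $\CollisionCount{t}$ is evaluated for the fixed instance along each midpoint path matches how the paper applies the pathwise bound.
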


\begin{proof}
\emph{1. Specify the edge components at a fixed history.}
Fix \(I,w\) and a reachable pre-commit history \(h\), and write
\(B_t=B_t(h)\). As in Step~4 of \cref{app:lem:unresolved-invariant}, the
two component laws on \(\Vocab^{e\cap B_t}\), for
\(e\in\HiddenMatching\), are
\begin{align*}
 p_e(z_{e\cap B_t})
 &:=\HardEdgeLaw{e}{I}
        (z_{e\cap B_t}\mid x_{e\cap G(h)}),\\
 \widehat p_e(z_{e\cap B_t})
 &:=\prod_{j\in e\cap B_t}
       \HardOracleRowMass{I}{y_t(h)}{j}{z_j}.
\end{align*}
The first line is the conditional probability of the current edge
coordinates given the already committed coordinates of that edge.
Both masses sum to one; on an empty coordinate set both equal one on the
empty assignment. The conditional product calculation in that step gives
\[
 \HardExactBatchLaw{t}{I}(z\mid h)
   =\prod_{e\in\HiddenMatching}p_e(z_{e\cap B_t}),
 \qquad
 \HardProductBatchLaw{t}{I}(z\mid h)
   =\prod_{e\in\HiddenMatching}\widehat p_e(z_{e\cap B_t}).
\]
These products are at fixed \(I,h\); each component is a block of batch
coordinates from one edge, not necessarily a single vertex.

\emph{2. Compute the factor of an unresolved collision.}
Partition the unresolved edges according to the number of selected endpoints:
\begin{equation}
 \mathcal E_t^{(r)}:=
 \{e\in\OpUnresolvedMatching{s_t}:
       |e\cap\ActiveBatch{t}|=r\},
 \qquad r\in\{0,1,2\}.
 \label{app:eq:commit-edge-partition}
\end{equation}
Thus the collision-edge set and its size are
\[
 \mathcal E_t^{(2)}
 =\{e\in\OpUnresolvedMatching{s_t}:
                       e\subseteq\ActiveBatch{t}\},
 \qquad
 |\mathcal E_t^{(2)}|=\CollisionCount{t},
 \qquad
 \ActiveBatch{t}=B_t\cap\UnresolvedVertices{t}.
\]
For \(e=\{i,j\}\in\mathcal E_t^{(2)}\), both endpoints belong to \(B_t\)
and neither belongs to \(G(h)\). Thus the exact component has no
conditioning within \(e\), while both oracle rows see their mate masked:
\[
 p_e(z_i,z_j)=\HardEdgeLaw{e}{I}(z_i,z_j),
 \qquad
 \widehat p_e(z_i,z_j)
 =\HardMarginal(z_i)\HardMarginal(z_j)
 =\HardIndependentEdgeLaw{e}(z_i,z_j).
\]
Here the oracle equality uses \eqref{app:eq:hard-oracle}, and the absence
of conditioning from other edges uses the target's edgewise product law.
Consequently,
\(\HellingerAffinity(p_e,\widehat p_e)
 =1-\sqHellinger(\HardEdgeLaw{e}{I},\HardIndependentEdgeLaw{e})
 =1-\HardEdgeDistance\), where \(\HardEdgeDistance\) is defined in
\eqref{app:eq:hard-edge-distance}. In terms of the actual edge masses,
this is
\begin{equation}
 \sum_{z_i,z_j}
 \sqrt{\HardEdgeLaw{e}{I}(z_i,z_j)
       \HardIndependentEdgeLaw{e}(z_i,z_j)}
 =1-\HardEdgeDistance.
 \label{app:eq:one-collision-affinity}
\end{equation}
\emph{3. Bound every remaining component and apply product affinity.}
Since \(p_e\) and \(\widehat p_e\) are probability masses,
\(0\le\HellingerAffinity(p_e,\widehat p_e)\le1\) by Cauchy--Schwarz.
This covers all noncollision components, including previously retired
edges where the oracle need not be exact. More specifically, an unresolved
crossing edge has \(p_e=\widehat p_e=\HardMarginal\), and an edge with
\(e\cap B_t=\varnothing\) has both masses equal to one on the empty
assignment; these components have affinity exactly one.

Apply \cref{app:lem:finite-hellinger-calculus} with index set
\(J=\HiddenMatching\), spaces \(\mathcal X_e=\Vocab^{e\cap B_t}\),
and component laws \(p_e,\widehat p_e\).
The two product laws to which it applies are exactly
\(\HardExactBatchLaw{t}{I}(\cdot\mid h)\) and
\(\HardProductBatchLaw{t}{I}(\cdot\mid h)\) from Step~1.
By the definition \eqref{app:eq:hard-local-affinity} and that lemma,
\begin{align*}
 \HardLocalAffinity{t}{I,h}
 &=\HellingerAffinity\!\left(
      \HardExactBatchLaw{t}{I}(\cdot\mid h),
      \HardProductBatchLaw{t}{I}(\cdot\mid h)\right)\\
 &=\prod_{e\in\HiddenMatching}\HellingerAffinity(p_e,\widehat p_e)\\
 &=\left(\prod_{e\in\mathcal E_t^{(2)}}(1-\HardEdgeDistance)\right)
   \left(\prod_{e\in\HiddenMatching\setminus\mathcal E_t^{(2)}}
                \HellingerAffinity(p_e,\widehat p_e)\right)\\
 &=(1-\HardEdgeDistance)^{\CollisionCount{t}}
   \prod_{e\in\HiddenMatching\setminus\mathcal E_t^{(2)}}
                \HellingerAffinity(p_e,\widehat p_e).
\end{align*}
Every factor in the remaining product lies in \([0,1]\), so
\begin{equation}
 \HardLocalAffinity{t}{I,h}
 \le(1-\HardEdgeDistance)^{\CollisionCount{t}}.
 \label{app:eq:local-collision-affinity}
\end{equation}
\emph{4. Multiply along the adaptive path and average.}
The preceding bound holds at each reached history, including histories
under the midpoint law. Multiplying over rounds and using
\(\TotalCollisions=\sum_t\CollisionCount{t}\) yields
\[
 \prod_t\HardLocalAffinity{t}{I,H_{t-1}}
 \le(1-\HardEdgeDistance)^{\TotalCollisions}.
\]
Using the adaptive identity \eqref{app:eq:adaptive-midpoint-identity}
first at fixed \(I,w\) and then averaging over \(I\) gives
\begin{align*}
 &\mathbb E_I\sqHellinger(\HardTargetLaw{I},\HardOutputLaw{I}{w})\\
 &\quad=1-\mathbb E_I
    \mathbb E_{H\sim\HardMidpointLaw{I}{w}}
       \prod_t\HardLocalAffinity{t}{I,H_{t-1}}\\
 &\quad\ge1-\mathbb E_{I,H}(1-\HardEdgeDistance)^{\TotalCollisions}\\
 &\quad=\mathbb E_{I,H}
       [1-(1-\HardEdgeDistance)^{\TotalCollisions}],
\end{align*}
where the joint expectation uses the conditional midpoint law given \(I,w\).
This proves \eqref{app:eq:global-collision-affinity}.
\end{proof}

\subsection{Nonlinear finite minimax bound}
\label{app:lower-proof-nonlinear}

\paragraph{Intuition.}
When the query budget is a small enough fraction of the bank size,
the preceding two results give
\[
 \text{risk}\ \gtrsim\
 1-\exp\{-cN\EdgeCoupling^2/\RoundBudget\}
 \simeq\min\{1,N\EdgeCoupling^2/\RoundBudget\},
\]
where $c>0$ is a fixed numerical constant.
The constant-probability collision event supplies order
$N/\RoundBudget$ missed edges; each contributes order $\EdgeCoupling^2$.
For a small target tolerance $\varepsilon$, this forces
$\RoundBudget\gtrsim N\EdgeCoupling^2/\varepsilon$.
The proof retains the finite constants and checks the prior-to-minimax
step explicitly.

\begin{proof}[Proof of \cref{app:thm:nonlinear-finite-lower}]

\emph{1. Obtain the nonlinear bound at a fixed seed.}
Fix an arbitrary algorithm in \(\HardAlgorithmClass{\QueryBudget{}}{\RoundBudget{}}\) and a decision-rule
seed \(w\). Suppose \(\QueryBudget{}\le\HardBankSize/8\) and
\(1\le \RoundBudget{}\le N/16384\). By
\cref{app:lem:collision-affinity,app:prop:collision-lower-tail},
\begin{align}
 &\mathbb E_{I\sim\HardInstancePrior}
 \sqHellinger(\HardTargetLaw{I},\HardOutputLaw{I}{w})\notag\\
 &\quad\ge
 \mathbb E_{I,H}
 \left[1-(1-\HardEdgeDistance)^{\TotalCollisions}\right]\notag\\
 &\quad\ge
 \frac14\left[
  1-(1-\HardEdgeDistance)^{N/(8192\RoundBudget{})}
 \right].
 \label{app:eq:nonlinear-fixed-seed-step}
\end{align}
The exponent in the last display need not be an integer: on the event
\(\TotalCollisions\ge N/(8192\RoundBudget{})\), monotonicity of
\(c\mapsto1-(1-\HardEdgeDistance)^c\) gives the displayed lower bound.
Using \(1-d\le e^{-d}\) for \(d\in[0,1]\) and the one-edge estimate
\(\HardEdgeDistance\ge\EdgeCoupling^2/12\) from
\eqref{app:eq:hard-edge-distance}, we obtain
\begin{equation}
 \mathbb E_{I\sim\HardInstancePrior}
 \sqHellinger(\HardTargetLaw{I},\HardOutputLaw{I}{w})
 \ge
 \frac14\left[
  1-\exp\left\{-\frac{N\EdgeCoupling^2}{98304\RoundBudget{}}\right\}
 \right].
 \label{app:eq:nonlinear-fixed-seed-lower}
\end{equation}

\emph{2. Average over seeds and pass from the prior to minimax risk.}
Finally, \(\dTV\ge\sqHellinger\) by
\cref{app:lem:finite-hellinger-calculus}. The decision-rule seed \(W\) is
independent of \(I\sim\HardInstancePrior\), so Tonelli's theorem gives
\begin{align}
 \mathbb E_I\SeedRisk(\mathcal A;\HardTargetLaw{I},\HardOracle{I})
 &=\mathbb E_W\mathbb E_I
    \dTV(\HardTargetLaw{I},\HardOutputLaw{I}{W})\notag\\
 &\ge\mathbb E_W\mathbb E_I
    \sqHellinger(\HardTargetLaw{I},\HardOutputLaw{I}{W})\notag\\
 &\ge\frac14\left[
  1-\exp\left\{-\frac{N\EdgeCoupling^2}{98304\RoundBudget{}}\right\}
 \right].
 \label{app:eq:prior-to-seed-risk}
\end{align}
At least one fixed instance has seed-averaged TV risk at least this
prior average. Taking the supremum over the hard family and then the
infimum over \(\HardAlgorithmClass{\QueryBudget{}}{\RoundBudget{}}\)
proves \eqref{app:eq:nonlinear-finite-lower}.
\end{proof}

\begin{proof}[Proof of
\cref{app:cor:nonlinear-finite-target-tradeoff}]
Suppose the first two alternatives in
\eqref{app:eq:nonlinear-finite-target-alternative} fail. Then
\cref{app:thm:nonlinear-finite-lower} and
\eqref{app:eq:finite-target-risk} imply
\[
 \varepsilon
 \ge
 \frac14\left[
  1-\exp\left\{-\frac{N\EdgeCoupling^2}{98304\RoundBudget{}}\right\}
 \right].
\]
Since \(\varepsilon\le1/8\), rearrangement gives
\begin{align*}
 \frac{N\EdgeCoupling^2}{98304\RoundBudget{}}
 &\le-\log(1-4\varepsilon)\\
 &\le\frac{4\varepsilon}{1-4\varepsilon}
 \le8\varepsilon.
\end{align*}
Here \(-\log(1-u)\le u/(1-u)\) for \(u\in[0,1)\), obtained by integrating
\((1-x)^{-1}\le(1-u)^{-1}\) over \(x\in[0,u]\). Thus
\[
 \RoundBudget{}\ge\frac{N\EdgeCoupling^2}{786432\varepsilon},
\]
which is the third alternative in
\eqref{app:eq:nonlinear-finite-target-alternative}.
\end{proof}

\section{Algorithmic construction}
\label{app:algorithm}

We specify the randomized sampler $\RandomizedPackedSampler$ of
Theorem~\ref{app:thm:finite-upper}, also denoted
$\PackedSampler:=\RandomizedPackedSampler$.
All operations use only public parameters, the realized commit history, and
replies of the single frozen oracle.  The hidden forest, exact marginals,
endpoint kernels, ranks, and response witnesses are never read.

The sampler flow in \cref{fig:algorithm-overview-20260918} is illustrated
concretely by a $15$-vertex construction in
\cref{ex:algorithm-banks-20260918,ex:algorithm-submissions-20260918,ex:algorithm-peel-centroids-20260918}.

\subsection{Complete sampler and public parameters}
\label{app:algorithm-overview}
\label{app:algorithm-complete-sampler}

We assemble the sampler from read-only screens and commits, with public
caps that bound its execution on every path.

Fix $N\ge10$.  The two resource controls are the degree cutoff
$\DegreeCutoff\in\{9,\ldots,N-1\}$ and readout-chunk parameter
$\ChunkCount\in\PositionSet$: the former sets the peeling threshold, while
the latter sets the maximum readout size
$\ReadoutChunkSize:=\lceil N/\ChunkCount\rceil$.
The accuracy inputs are a tail tolerance $\TailTolerance\in(0,1]$ and a
screen-failure budget $\ScreenFailureBudget\in(0,1)$.
The threshold floor $\ThresholdFloor$ uses the standard choice
\eqref{app:eq:rate-threshold-floor}.
The draft generator is the zero-query choice
\eqref{app:eq:standard-draft} unless otherwise stated.
The standard row selector keeps the highest-vote candidates on overflow.
More generally, fix any admissible public selector $\RowSelector$ from
Definition~\ref{app:def:row-selector}; denote the resulting decision rule by
$\SelectedPackedSampler$.  The unadorned sampler uses the standard selector.
The row-error radius and resulting screen resolution are derived quantities:
\begin{align}
 \RowTVError&:=\frac{\OracleRadius}{\sqrt N},
 \label{app:eq:oracle-tv-radii}\\
 \ScreenResolution&:=4\RowTVError+\TailTolerance.
 \label{app:eq:diameter-screen-resolution}
\end{align}
The oracle condition \eqref{app:eq:A2} and
\cref{app:lem:finite-hellinger-calculus} imply, for every valid $(y,j)$,
\begin{equation}
 \dTV\bigl(\ExactRow{y}{j},\OracleRow{y}{j}\bigr)
 \le\RowTVError.
 \label{app:eq:oracle-tv-radius}
\end{equation}
\textcolor{red}{The preprocessing and screen certificates use the row-TV bound
\eqref{app:eq:oracle-tv-radius} in both cases.  In \KLCase,
\eqref{app:eq:kl-oracle-inclusion} supplies this bound from the uniform
forward row-KL condition.  The forward-KL output guarantee is proved in
\cref{app:thm:finite-kl-upper} using the joint-reference identity of
\cref{app:kl-joint-reference}.}
The theorem requires feasible preprocessing and
$\ScreenResolution<\EdgeSignal$; neither condition asks the sampler to inspect
the hidden forest.

Define the public peel-phase cap
\begin{equation}
 \PeelPhaseCap
 :=\left\lceil
 \frac{\PositivePart{\log(2N/(\DegreeCutoff+1))}}
      {\log(\DegreeCutoff/8)}
 \right\rceil.
 \label{app:eq:phase-cap}
\end{equation}
Set
\begin{equation}
 \ScreenCallCap:=\PeelPhaseCap+1,
 \qquad
 \HardRoundCap
 :=\left\lceil\frac{4N\PeelPhaseCap}{\DegreeCutoff}\right\rceil
   +\left\lceil\log_2(N+1)\right\rceil+2.
 \label{app:eq:screen-call-and-round-caps}
\end{equation}
The caps are fixed before the first commit; they are not additional tuning
parameters.  Algorithm~\ref{app:alg:complete-sampler} is the full decision rule.
Its one-time preprocessing, per-screen draft, and read-only screen are
specified in the next three subsections.
The named steps agree with Section~\ref{sec:main-algorithm} and
Figure~\ref{fig:algorithm-overview-20260918}.
Their identifiers name recurring operations, not commit rounds or oracle stages.
\AlgoBlockRef{screen} is expanded into
\AlgoBlockRef{chunks}, \AlgoBlockRef{probes}, and \AlgoBlockRef{aggregate}
in Algorithm~\ref{app:alg:packed-screen}.
Proofs refer to these named steps as well as the algorithm's line numbers.
In Algorithm~\ref{app:alg:complete-sampler}, $r$ counts completed nonempty
commit rounds and $k$ counts completed peel phases.

\begin{algorithm}[p]
\caption{Randomized complete packed-screen sampler}
\label{app:alg:complete-sampler}
\begin{algorithmic}[1]
\Require the public setup and parameters
  $\TailTolerance,\ThresholdFloor,\DegreeCutoff,\ChunkCount$,
  $\ScreenFailureBudget$, an admissible $\DraftGenerator$ (standard by default),
  an admissible public row selector $\RowSelector$ (top votes by default),
  and access to $\FrozenOracle$
\Ensure $\SamplerOutput\in\Vocab^N$ when preprocessing is feasible; otherwise
  \InfeasibleOutcome
\Statex \AlgoBlockTarget{preprocess}
\State Construct $\PreprocessData$ by Definition~\ref{app:def:preprocessing}
\If{preprocessing reports infeasibility}
  \State \Return \InfeasibleOutcome
\EndIf
\State Set $\ScreenCallCap,\HardRoundCap$ by
  \ref{app:eq:screen-call-and-round-caps} and set
  $G\gets\emptyset$, $x_G\gets\emptyset$, $k\gets0$, and $r\gets0$
\Loop
  \If{$\ResidualPositions=\emptyset$}
    \State \Return the resulting assignment $\SamplerOutput$
  \EndIf
  \Statex \AlgoBlockTarget{draft}
  \State Fix $\DraftFiller\gets\DraftGenerator$ using the current history and past replies
  \State Draw the coloring family by \eqref{app:eq:random-color-draw},
    with parameters from \eqref{app:eq:random-color-parameters}
  \Statex \AlgoBlockTarget{screen}
  \State $\ScreenRows\gets
    \Call{\PackedScreenName}{\History;\PreprocessData,\DraftFiller,
      \ColorSet,(\ColorHash{\tau}{\cdot})_{\tau\in\ColoringIndexSet},
      \DegreeCutoff,\ChunkCount,\FrozenOracle,\RowSelector,\ScreenTranscript}$
    \Comment{\cref{app:alg:packed-screen}}
  \Statex \AlgoBlockTarget{peel}
  \State Form $\ClaimCount{\cdot}$ and $\PeelSet$ by
    \ref{app:eq:claim-and-peel-set}
  \If{$\PeelSet=\emptyset$ or $k=\PeelPhaseCap$}
    \State \textbf{break}
  \EndIf
  \State Freeze $\FrozenPeelSet\gets\PeelSet$
  \For{$v\in\FrozenPeelSet$ in increasing position order}
    \If{$r=\HardRoundCap-1$} \Comment{\AlgoGuardRef}
      \State $\Call{\CommitName}{\ResidualPositions}$ and
        $r\gets r+1$
      \Return the resulting assignment $\SamplerOutput$
    \EndIf
    \State $\Call{\CommitName}{\{v\}}$
    \State $r\gets r+1$
  \EndFor
  \State $k\gets k+1$
\EndLoop
\Statex \AlgoBlockTarget{terminal}
\State $\TerminalHistory\gets\History$,
  $\TerminalPositions\gets\ResidualPositions$, and
\Statex \hspace{\algorithmicindent}$\displaystyle
 \CertifiedEdges\gets
 \{\{u,v\}\subseteq\TerminalPositions:
 v\in\ScreenRowAt{\TerminalHistory}{u}
 \text{ or }u\in\ScreenRowAt{\TerminalHistory}{v}\}$
\State $\CertifiedForest\gets(\TerminalPositions,\CertifiedEdges)$
\While{$\CertifiedForest$ contains a cycle}
  \If{$r=\HardRoundCap-1$} \Comment{\AlgoGuardRef}
    \State $\Call{\CommitName}{\ResidualPositions}$ and $r\gets r+1$; \Return $\SamplerOutput$
  \EndIf
  \State Choose a maximum-degree cycle vertex $v$; break ties by smallest index
  \State $\Call{\CommitName}{\{v\}}$; $r\gets r+1$; delete $v$ and its incident edges from $\CertifiedForest$
\EndWhile
\Statex \AlgoBlockTarget{centroid}
\While{$\CertifiedForest$ has a vertex}
  \State In each component choose its smallest-label centroid; call their set $B$
  \If{$r=\HardRoundCap-1$} \Comment{\AlgoGuardRef}
    \State $\Call{\CommitName}{\ResidualPositions}$ and
      $r\gets r+1$
    \Return the resulting assignment $\SamplerOutput$
  \EndIf
  \State $\Call{\CommitName}{B}$
  \State $r\gets r+1$
  \State Delete $B$ and its incident edges from $\CertifiedForest$
\EndWhile
\State \Return the resulting assignment $\SamplerOutput$
\end{algorithmic}
\end{algorithm}

Given the screen output at $\History$, recall the incoming claim count and
peel set of \textcolor{red}{\cref{main:eq:claim-and-peel-set}}:
\begin{equation}
 \ClaimCount{v}
 :=\bigl|\{u\in\ResidualPositions:v\in\ScreenRow{u}\}\bigr|,
 \qquad
 \PeelSet:=\{v\in\ResidualPositions:
             \ClaimCount{v}>\DegreeCutoff/2\}.
 \label{app:eq:claim-and-peel-set}
\end{equation}
\textcolor{red}{The terminal row graph on $\TerminalPositions$ is constructed in
\AlgoBlockRef{terminal} of \cref{app:alg:complete-sampler}.}

At any history, $\CommitOperation(B)$ means the product-commit operation
of Definition~\ref{app:def:admissible-algorithm}: submit the current
history-compatible commit state \ref{app:eq:commit-state}, read
$\OracleRow{y}{j}$ for $j\in B$, draw independently from the corresponding
product law, and append the realized values to the history.  This is the
sampler's only history-changing operation. The batch $B$ is fixed before
the reply; this submission adds one to $\CommitRounds$ and nothing to
$\CounterfactualQueries$.

\begin{enumerate}
 \item \emph{Screen, then test whether to stop.}
 Each pass first computes rows at the current history
 (\AlgoBlockRef{screen}) and only then tests whether the peel set is empty
 or the phase cap has been reached (\AlgoBlockRef{peel}).
 In particular, the last permitted peel phase is followed by another
 screen: the terminal graph never uses rows from before those commits.
 A nonempty peel set is frozen before its singleton commits; its members
 are committed in position order, without an intervening screen or
 recomputation of that set.

 \item \emph{Repair cycles in the terminal estimate.}
 In \AlgoBlockRef{terminal}, the working graph $\CertifiedForest$ initially
 equals $(\TerminalPositions,\CertifiedEdges)$. A cycle vertex belongs to
 at least one simple cycle of this graph. The selection rule is
 \[
 v=\min\operatorname*{arg\,max}_{
       \substack{u\in V(\CertifiedForest)\\
                 u\text{ lies on a cycle of }\CertifiedForest}}
       d_{\CertifiedForest}(u).
 \]
 The degree is measured in the whole current working graph.
 Commit $v$ using the current history, delete $v$ and its incident edges,
 and recompute cycle membership and degrees in the remaining graph.
 No new screen is run: the graph changes only by vertex deletion, whereas
 each commit queries fresh rows at the updated history.
 Once it is a forest, proceed to \AlgoBlockRef{centroid}.

 \item \AlgoGuardTarget.
 Before each peel, repair, or centroid commit, the check
 $r=\HardRoundCap-1$ reserves the last permitted round for one product
 commit of all remaining positions. This ends the execution even if
 repair is incomplete. The cap bounds the round count on every path;
 this final product law need not approximate the true joint conditional.

 \item \emph{Separate the observable past from the random sources.}
 All ties use the public position or token order.
 The public transcript $\ScreenTranscript$ starts with preprocessing
 and is updated after each query, coloring draw, selector output, and
 commit. It records only information already available to the decision
 rule, not unrevealed future randomness.
 The decision-rule seed $\ControllerSeed$ contains the coloring blocks
 and auxiliary draft randomness. Product-commit draws use a separate
 random source: even after conditioning on $\ControllerSeed=w$, those
 draws remain random and are integrated into
 $\SamplerOutputLaw{w}{\FrozenOracle}$.
\end{enumerate}

\subsection{One-time all-mask preprocessing}
\label{app:algorithm-preprocessing}

In \AlgoBlockNamedRef{preprocess}, one all-mask submission supplies the
local vocabulary banks $\TokenBank{i}$ and tail representatives
$\TailRepresentative{i}$ reused by every later screen.

\begin{definition}[All-mask preprocessing]
\label{app:def:preprocessing}
Use the standard floor
\begin{equation}
 \ThresholdFloor:=\StandardThresholdFloor{\FrequencyConstant}.
 \label{app:eq:rate-threshold-floor}
\end{equation}
Here $\FrequencyConstant\ge1$ is the public RF\textcolor{red}{~(\cref{app:eq:RF})} constant when an envelope is
specified; otherwise use $\FrequencyConstant=1$.
This keeps the threshold above both the row-noise and
frequency-envelope scales.
If $\ThresholdFloor>1$, return $\InfeasibleOutcome$.
Otherwise put
$\ThresholdGridDepth:=\lceil\log_2(1/\ThresholdFloor)\rceil$ and set
\begin{equation}
 \ThresholdGrid
 :=\{\max\{\ThresholdFloor,2^{-m}\}:0\le m\le\ThresholdGridDepth\}.
 \label{app:eq:threshold-grid}
\end{equation}
Form the feasible set
\begin{equation}
 \FeasibleThresholdGrid
 :=\{t\in\ThresholdGrid:
       \min\{1,\ResponseConstant t^{\ResponseExponent}\}\le\TailTolerance\}.
 \label{app:eq:feasible-threshold-grid}
\end{equation}
If it is empty, return $\InfeasibleOutcome$ before any submission or commit.
Otherwise set $\TailThreshold:=\max\FeasibleThresholdGrid$.

Submit $\AllMaskState=(\MASK,\ldots,\MASK)$ once and, for each position $i$,
form its noisy marginal and \emph{local vocabulary bank}:
\begin{align}
 \NoisyMarginal{i}&:=\OracleRow{\AllMaskState}{i},
 \label{app:eq:noisy-marginal}\\
 \TokenBank{i}
 &:=\{a\in\Vocab:\NoisyMarginal{i}(a)\ge\TailThreshold-\RowTVError\}.
 \label{app:eq:local-bank}
\end{align}
Here $\TokenBank{i}\subseteq\Vocab$ is the subset of vocabulary tokens
selected by the buffered all-mask marginal threshold
$\TailThreshold-\RowTVError$ for explicit source-value probing.
Tokens outside this set are represented in the response test by
$\TailRepresentative{i}$ below; the set does not restrict commit outputs.
Enumerate the vocabulary bank $\TokenBank{i}$ as
$\TokenBank{i}=\{\BankToken{i}{1},\ldots,\BankToken{i}{\BankSize{i}}\}$
in the public token order, with $\BankSize{i}:=|\TokenBank{i}|$ and
$\MaxBankSize:=\max_i\BankSize{i}$, and choose
\begin{equation}
 \TailRepresentative{i}:=
 \min_{\TokenOrder}\bigl(\Vocab\setminus\TokenBank{i}\bigr).
 \label{app:eq:positionwise-baseline}
\end{equation}
The floor ensures that this complement is nonempty;
\cref{app:lem:preprocessing-certificates} proves both existence and its
exact-tail certificate.  Return the reusable tuple
\begin{equation}
 \PreprocessData:=
 \left(\TailThreshold,
 (\NoisyMarginal{i},\TokenBank{i},\TailRepresentative{i},\BankSize{i})_{i\in\PositionSet},
 \MaxBankSize\right).
 \label{app:eq:preprocessing-output}
\end{equation}
Apart from the all-mask submission, preprocessing is deterministic and uses
no commit draw.
\end{definition}

\subsection{Per-screen draft and source columns}
\label{app:algorithm-draft}

We first fix the common background for one screen, then define the source
assignments whose oracle replies will be compared.

\begin{example*}[Standard zero-query draft]
Unless stated otherwise, \AlgoBlockRef{draft} uses the standard draft
\begin{equation}
 \DraftToken{i}
 :=\mathop{\arg\max}_{a\in\Vocab}\NoisyMarginal{i}(a),
 \qquad i\in\ResidualPositions,
 \label{app:eq:standard-draft}
\end{equation}
with ties resolved by $\TokenOrder$.  It reuses the initial all-mask rows,
works even for empty vocabulary banks, and requires no additional submission or adaptive
stage.
\end{example*}

\paragraph{Source columns.}
For the source assignments in \AlgoBlockRef{probes}, recall that
$\TokenBank{i}=\{\BankToken{i}{1},\ldots,\BankToken{i}{\BankSize{i}}\}$
lists the local vocabulary bank in the public token order, not in
marginal-probability order.
Here $\BankSize{i}=|\TokenBank{i}|$ and
$\MaxBankSize=\max_{i\in\PositionSet}\BankSize{i}$ were fixed by preprocessing.
For a fixed screen draft, set
\begin{equation}
 \ScreenColumnSet:=[\MaxBankSize]\cup\{\TailColumn\},
 \qquad
 \PackedColumn{\kappa}{i}
 :=\begin{cases}
 \BankToken{i}{\kappa},&\kappa\in[\BankSize{i}],\\
 \DraftToken{i},&\kappa\in[\MaxBankSize]\setminus[\BankSize{i}],\\
 \TailRepresentative{i},&\kappa=\TailColumn.
 \end{cases}
 \label{app:eq:packed-column}
\end{equation}
The column order is $1,\ldots,\MaxBankSize,\TailColumn$.
Thus a bank-column index denotes a position-dependent assignment, not one
common vocabulary token.  All screen quantities below depend on the fixed
$\DraftFiller$; this dependence is suppressed in row-family, vote, and graph
notation.  When $\MaxBankSize=0$, the screen returns empty rows without
submitting even the tail column.

For resource accounting, the number of submitted columns is
\begin{equation}
 \ScreenColumnCount:=
 \begin{cases}
 \MaxBankSize+1,&\MaxBankSize\ge1,\\
 0,&\MaxBankSize=0.
 \end{cases}
 \label{app:eq:screen-column-count}
\end{equation}

\begin{example}[A column indexes local vocabulary banks, not a common token]
\label{ex:algorithm-banks-20260918}
Take $N=15$, $G=\emptyset$, and a vocabulary whose first three tokens in
the public order are $a\prec b\prec c$ (further tokens may follow).
Suppose the local vocabulary banks $\TokenBank{i}$ from
\AlgoBlockRef{preprocess} (\cref{app:def:preprocessing}) and the fixed draft are
\[
 \TokenBank{1}=\{a,c\},\quad \DraftToken{1}=a,
 \qquad
 \TokenBank{i}=\{b\},\quad \DraftToken{i}=b
 \quad (i\in[15]\setminus\{1\}).
\]
This specifies the screen inputs, not a target distribution or an oracle.
In particular, $\BankSize{1}=2$, $\BankSize{i}=1$ for $i\ne1$, and
$\MaxBankSize=2$.  The first token outside each bank is
$\TailRepresentative{1}=b$ and $\TailRepresentative{i}=a$ for $i\ne1$.
Substituting into \eqref{app:eq:packed-column} gives
\[
\begin{array}{c|ccc}
 \text{position}&\kappa=1&\kappa=2&\kappa=\TailColumn\\ \hline
 1& a&c&b\\
 i\ne1&b&\DraftToken{i}=b&a
\end{array}
 \qquad
 \ScreenColumnSet=\{1,2,\TailColumn\},\quad \ScreenColumnCount=3.
\]
Thus column $1$ assigns $a$ at position $1$ but $b$ at position $4$.
Column $2$ uses the second bank token at position $1$ and draft padding
at position $4$, whose bank has no second token.  The tail column uses
the representative outside each bank.  The same three columns are
reused for every source color and readout chunk.
\end{example}

\paragraph{Three different indices.}
The screen uses the following roles; their domains and sampling law are
specified in \cref{app:algorithm-screen}.
\par\smallskip
\noindent
\begin{tabularx}{\linewidth}{@{}lXX@{}}
\toprule
Symbol & What it selects & What it does not select\\
\midrule
$\tau$ & One entire coloring $h_\tau:\PositionSet\to\ColorSet$
       & A color within that coloring\\
$c^{\mathrm{src}},c^{\mathrm{rd}}$
       & Source and readout colors under the chosen $h_\tau$
       & Vocabulary tokens\\
$\kappa\in\ScreenColumnSet$
       & One slot in the vocabulary bank $\TokenBank{i}$ or its tail representative,
         separately at each source $i$
       & A color or one common token\\
\bottomrule
\end{tabularx}
\par\smallskip
Thus fixing $\tau$ fixes all position colors; fixing $c^{\mathrm{src}}$
then fixes which residual positions are sources. Varying $\kappa$
changes the tokens assigned to those sources, without recoloring them.
The concrete substitution into the submitted-state definition is in
\cref{ex:algorithm-submissions-20260918}.

\begin{definition}[Zero-query screen draft]
\label{app:def:draft-generator}
Before each screen, $\DraftGenerator$ fixes a token vector
$\DraftFiller\in\Vocab^{\ResidualPositions}$ using only public data,
the realized history, cached oracle replies, and auxiliary seed randomness
independent of the coloring and product-commit random sources. It terminates
without a submission or commit and without observing current or future
colors. The draft is fixed across all colorings, chunks, source colors,
and columns of this screen; it may be reused or changed at the next screen.
A draft token $\DraftToken{i}$ need not belong to the vocabulary bank
$\TokenBank{i}$ and is never committed directly.
\end{definition}

\subsection{Packed randomized-color screen}
\label{app:algorithm-screen}

We generate one color family, pack its readouts into chunks, and compare
source-column replies to form the screen's neighbor claims.

\paragraph{Random color generation.}
For \AlgoBlockRef{draft}, set
\begin{equation}
 \RandomColorCount:=8(\DegreeCutoff+1),
 \qquad
 \RandomColoringCount
 :=\left\lceil
 8\log\frac{\ScreenCallCap N^2}{\ScreenFailureBudget}
 \right\rceil.
 \label{app:eq:random-color-parameters}
\end{equation}
Write $\ColorCount=\RandomColorCount$,
$\ColoringCount=\RandomColoringCount$,
$\ColorSet=[\ColorCount]$, and $\ColoringIndexSet=[\ColoringCount]$.
These are deterministic public parameters.
At each reached screen $s$, conditional on the realized past including its
completed draft, the decision rule reveals a fresh block $W_s$ with law
\begin{equation}
 \ColorHash{\tau}{i}
 \overset{\mathrm{i.i.d.}}{\sim}\operatorname{Unif}([\ColorCount]),
 \qquad
 (\tau,i)\in[\ColoringCount]\times\PositionSet.
 \label{app:eq:random-color-draw}
\end{equation}
The blocks $(W_s)_{s\le\ScreenCallCap}$ are mutually independent and independent
of the auxiliary draft random source and primitive commit randomness;
no current or future block is revealed during draft generation.
The same family is used for all chunks, source colors, and columns of this
screen: \cref{app:alg:packed-screen} does not draw or refresh colors.

\paragraph{Accuracy calibration for the main benchmarks.}
The choices $\ScreenFailureBudget=\OracleRadius/2$ in \HellingerCase\
and $\ScreenFailureBudget=\KLTargetTolerance/(2N\log\VocabSize)$ in
\KLCase\ specialize \eqref{app:eq:random-color-parameters} to
\begin{equation}
 \RandomColorCount=8(\DegreeCutoff+1),
 \qquad
 \RandomColoringCount
 :=\begin{cases}
 \displaystyle\left\lceil
 8\log\frac{2\ScreenCallCap N^2}{\OracleRadius}
 \right\rceil,&\text{\HellingerCase},\\[6pt]
 \displaystyle\left\lceil
 8\log\frac{2\ScreenCallCap N^3\log\VocabSize}{\KLTargetTolerance}
 \right\rceil,&\text{\KLCase}
 \end{cases}
 \label{app:eq:accuracy-color-parameters}
\end{equation}
In \KLCase, all radius-dependent preprocessing and screen quantities use
$\KLOracleRadius$ in place of $\OracleRadius$,
as in \cref{app:cor:kl-epsilon}. The decision rule and commit rule
are unchanged.

\paragraph{Readout chunks.}
Given the colors, \AlgoBlockRef{chunks} partitions each readout class
deterministically so that
each probe masks at most $\ReadoutChunkSize$ positions.
For every $\tau\in\ColoringIndexSet$ and
$\ReadoutColorValue\in\ColorSet$, let
\begin{equation}
 \ColorClass{\tau}{\ReadoutColorValue}
 :=\{j\in\ResidualPositions:
      \ColorHash{\tau}{j}=\ReadoutColorValue\},
 \qquad
 \ColorClassSize{\tau}{\ReadoutColorValue}
 :=|\ColorClass{\tau}{\ReadoutColorValue}|.
 \label{app:eq:color-class}
\end{equation}

For a nonempty class, put
$n:=\ColorClassSize{\tau}{\ReadoutColorValue}$, list its elements in increasing
position order as
$j_{\tau,\ReadoutColorValue,1}<\cdots<j_{\tau,\ReadoutColorValue,n}$, and
partition it into
\begin{equation}
 \ReadoutChunk{\tau}{\ReadoutColorValue}{m}
 :=\left\{
 j_{\tau,\ReadoutColorValue,\ell}:
 (m-1)\ReadoutChunkSize<\ell\le
 \min\{m\ReadoutChunkSize,n\}
 \right\},
 \quad
 1\le m\le\left\lceil
 \frac{n}{\ReadoutChunkSize}
 \right\rceil.
 \label{app:eq:readout-chunks}
\end{equation}
Empty color classes create no chunk.  The chunks depend on the current
residual set $\ResidualPositions$, form a disjoint cover of each color class,
and have size at most $\ReadoutChunkSize$.  Write $\ReadoutChunkOf{\tau}{j}$ for the unique chunk containing $j$.

\paragraph{Submitted states.}
In \AlgoBlockRef{probes}, each probe masks one readout chunk and changes
one source color's
assignment, while keeping the remaining background fixed.
For a nonempty readout chunk
$\mathcal C=\ReadoutChunk{\tau}{\ReadoutColorValue}{m}$,
$\kappa\in\ScreenColumnSet$, and a source color
$\SourceColorValue\in
\ColorSet\setminus\{\ReadoutColorValue\}$, define the probe state
\begin{equation}
 (\ProbeState{\kappa}{\tau}{\SourceColorValue}{\mathcal C})_k
 :=\begin{cases}
 x_k,&k\in G,\\
 \MASK,&k\in \mathcal C,\\
 \PackedColumn{\kappa}{k},
   &k\in\ResidualPositions\setminus \mathcal C\text{ and }
    \ColorHash{\tau}{k}=\SourceColorValue,\\
 \DraftToken{k},&\text{otherwise}.
 \end{cases}
 \label{app:eq:screen-probe-state}
\end{equation}
Here $\ReadoutColorValue$ is the readout color,
$\SourceColorValue$ is the source color, and $\tau$ selects the entire
coloring.  Since $\SourceColorValue\ne\ReadoutColorValue$, the source class
and readout chunk are
disjoint.  A missing bank entry leaves its source coordinate at the draft
value.  Column $\TailColumn$ instead assigns $\TailRepresentative{k}$ to the
source class only; all other unmasked residual coordinates stay at
$\DraftToken{k}$.  There is no separate all-draft baseline submission, and
the tail-column state depends on the source color.
\begin{example}[From one coloring to three submitted states]
\label{ex:algorithm-submissions-20260918}
Continue \cref{ex:algorithm-banks-20260918}.  Set
$\DegreeCutoff=\ChunkCount=9$, so
$\ColorCount=8(\DegreeCutoff+1)=80$ and
$\ReadoutChunkSize=\lceil15/9\rceil=2$.
Fix one index $\tau\in[\ColoringCount]$ and suppose \AlgoBlockRef{draft}
draws the coloring
\[
 \ColorHash{\tau}{i}=
 \begin{cases}
 1,&i\in\{1,4\},\\
 2,&i\in\{10,13\},\\
 3,&i\in[15]\setminus\{1,4,10,13\}.
 \end{cases}
\]
These are three of the $80$ available colors, not a change to the
color-count parameter.  With
$\SourceColorValue=1$ and $\ReadoutColorValue=2$, \AlgoBlockRef{chunks}
applies \eqref{app:eq:color-class}--\eqref{app:eq:readout-chunks} to obtain
\[
 \ColorClass{\tau}{1}=\{1,4\},\qquad
 \ColorClass{\tau}{2}=\{10,13\},\qquad
 \mathcal C=\ReadoutChunk{\tau}{2}{1}=\{10,13\}.
\]
For this example only, write
$y^\kappa:=\ProbeState{\kappa}{\tau}{1}{\{10,13\}}$.
Because $G=\emptyset$, substituting the bank columns into
\eqref{app:eq:screen-probe-state} yields
\[
\begin{array}{c|cc|cc}
 &\multicolumn{2}{c|}{\text{src (source)}}
 &\multicolumn{2}{c}{\text{target (readout)}}\\
 \kappa&y^\kappa_1&y^\kappa_4&y^\kappa_{10}&y^\kappa_{13}\\ \hline
 1&a&b&\MASK&\MASK\\
 2&c&b&\MASK&\MASK\\
 \TailColumn&b&a&\MASK&\MASK
\end{array}
 \qquad
 y^\kappa_k=\DraftToken{k}=b
 \quad(k\in[15]\setminus\{1,4,10,13\}).
\]
These remaining positions are neither sources nor masked readouts.
Since $G=\emptyset$, they take the \emph{otherwise} branch of
\eqref{app:eq:screen-probe-state} and retain their draft values
$\DraftToken{k}$, which equal $b$ by \cref{ex:algorithm-banks-20260918}.
In \AlgoBlockRef{probes}, each of the three submissions returns both
$\OracleRow{y^\kappa}{10}$ and $\OracleRow{y^\kappa}{13}$.
Their role is to test dependence, not to commit the source tokens:
$G$ remains empty throughout.  In this particular example, column $1$
happens to agree with the draft at all unmasked positions; it is still
one of the three columns, not an extra baseline submission.
\Cref{fig:algorithm-submissions-20260918} shows these exact assignments.
\end{example}

\begin{figure}[tbp]
 \centering
 \includegraphics[width=\linewidth]{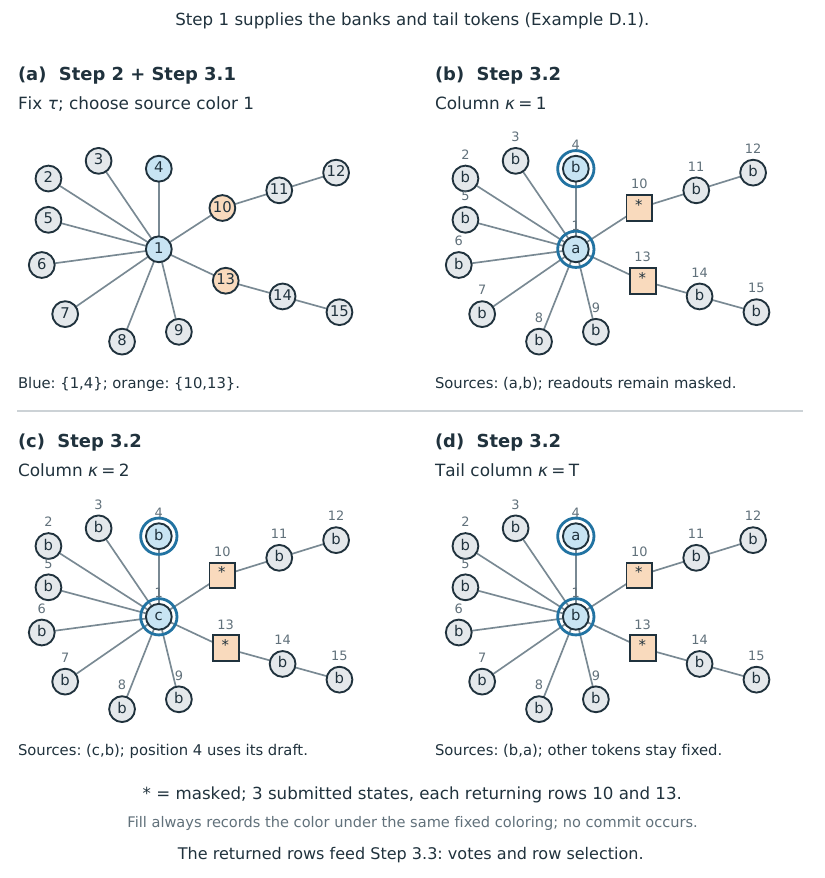}
 \caption{The coloring and three submitted states of
 \cref{ex:algorithm-submissions-20260918}, labeled by the steps of
 Algorithms~\ref{app:alg:complete-sampler}--\ref{app:alg:packed-screen}.
 All $15$ positions are drawn
 separately at fixed locations. Fill denotes the color under $\tau$;
 square outlines mark masked readouts and double outlines mark sources.
 Small outer numbers are position indices, and inner symbols in
 (b)--(d) are submitted tokens. No vertex is committed by these probes.
 The hidden edges are shown only to explain the construction.}
 \label{fig:algorithm-submissions-20260918}
\end{figure}

\paragraph{TV tests and aggregation.}
Within each coloring, \AlgoBlockRef{aggregate} compares all pairs of
source-column replies at the
same readout; a strict majority across colorings determines each candidate row.
For distinct $i,j\in\ResidualPositions$, when
$\ColorHash{\tau}{i}\ne\ColorHash{\tau}{j}$, define the observable rows
\begin{equation}
 \ProbeObservedRow{\tau}{\kappa}{i}{j}
 :=\OracleRow{\ProbeState{\kappa}{\tau}{\ColorHash{\tau}{i}}
            {\ReadoutChunkOf{\tau}{j}}}{j},
 \quad \kappa\in\ScreenColumnSet,
 \label{app:eq:observed-screen-rows}
\end{equation}
and the observable row family
\begin{equation}
 \ObservedRowFamily{\tau}{i}{j}
 :=\{\ProbeObservedRow{\tau}{\kappa}{i}{j}:
          \kappa\in\ScreenColumnSet\}.
 \label{app:eq:observed-row-family}
\end{equation}
Its TV diameter and the corresponding vote are
\begin{align}
 \ScreenDiameter{\tau}{i}{j}
 &:=\max_{q,q'\in\ObservedRowFamily{\tau}{i}{j}}
      \dTV(q,q'),
 \label{app:eq:observed-screen-diameter}\\
 \ScreenVote{\tau}{i}{j}
 &:=\begin{cases}
 \mathbf 1\{\ScreenDiameter{\tau}{i}{j}>2\RowTVError\},
   &\ColorHash{\tau}{i}\ne\ColorHash{\tau}{j},\\
 0,&\ColorHash{\tau}{i}=\ColorHash{\tau}{j}.
 \end{cases}
 \label{app:eq:screen-vote}
\end{align}
When the colors coincide, the row family and diameter need not be formed.
The vote is set to zero and never refers to an unsubmitted probe.  If
$\MaxBankSize=0$, the screen returns empty rows before forming these objects.

Computing the diameter requires no additional submissions.  It compares
every pair of bank-plus-tail rows, not just each bank row against the tail
column.

For each readout $j$, define the candidate row by
\begin{align}
 \CandidateRow{j}
 &:=\left\{
 i\in\ResidualPositions\setminus\{j\}:
 \sum_{\tau\in\ColoringIndexSet}
 \ScreenVote{\tau}{i}{j}>\frac{\ColoringCount}{2}
 \right\},
 \label{app:eq:provisional-row}
\end{align}

\begin{definition}[Admissible public row selector]
\label{app:def:row-selector}
Let $\ScreenTranscript$ contain the public transcript available when the
screen has finished, including its colors and votes and any retained public
decision-rule state.  A selector $\RowSelector$ is a fixed, terminating,
deterministic measurable rule.  Given $j$, a candidate set
$C\subseteq\ResidualPositions\setminus\{j\}$, the cutoff $\DegreeCutoff$,
and $\ScreenTranscript$, its output has the form
\[
 \RowSelector(j,C,\DegreeCutoff;\ScreenTranscript)
 =
 \begin{cases}
 C,&|C|\le\DegreeCutoff,\\
 C\setminus C_{\mathrm{drop}},&|C|>\DegreeCutoff.
 \end{cases}
\]
On overflow, $C_{\mathrm{drop}}\subseteq C$ is the set of positions that
the rule chooses to remove from these inputs, with
$|C_{\mathrm{drop}}|\ge |C|-\DegreeCutoff$. Thus
$|C\setminus C_{\mathrm{drop}}|=|C|-|C_{\mathrm{drop}}|\le\DegreeCutoff$;
the rule may return fewer than $\DegreeCutoff$ positions, including none.
It makes no oracle submission or commit and reads neither hidden target
information nor unobserved future randomness or oracle replies.
\end{definition}

The returned row $\ScreenRow{j}$ is the output of this selector applied
to $C=\CandidateRow{j}$:
\begin{equation}
 \ScreenRow{j}
 :=\RowSelector\bigl(j,\CandidateRow{j},\DegreeCutoff;
                     \ScreenTranscript\bigr).
 \label{app:eq:returned-row}
\end{equation}
\begin{example*}[Standard top-vote row selector]
The standard choice of $\RowSelector$ for returning $\ScreenRow{j}$ in
\eqref{app:eq:returned-row} orders the positions
$i_1,\ldots,i_{|C|}$ of $C$ by decreasing total vote
$\sum_{\tau\in\ColoringIndexSet}\ScreenVote{\tau}{i}{j}$, breaking ties
by increasing position index. It returns
\[
 \RowSelector(j,C,\DegreeCutoff;\ScreenTranscript)
 =\{i_\ell:1\le\ell\le\min\{\DegreeCutoff,|C|\}\}.
\]
Thus it keeps all of $C$ when $|C|\le\DegreeCutoff$ and otherwise removes
the positions ranked after $\DegreeCutoff$. The empty input gives the
empty set. Taking $C=\CandidateRow{j}$ gives the standard returned row
$\ScreenRow{j}$.
\end{example*}
The size cap alone would not suffice: deleting a candidate from a
nonoverflowing row can destroy low-degree exactness.
When $\MaxBankSize=0$, every returned row is defined to be empty.

\begin{proposalcontext}{Alternative overflow rule}
Returning $\emptyset$ on overflow is another admissible selector.
The standard sampler uses top-vote selection; the invariance proposition
compares the guarantees of these rules, not their realized performance.
\end{proposalcontext}

Given its inputs, public transcript, and the frozen oracle,
\cref{app:alg:packed-screen} is deterministic; the full decision rule in \cref{app:alg:complete-sampler}
supplies its color family in \AlgoBlockRef{draft}.
It expands \MainAlgoBlockRef{screen} of
Algorithm~\ref{main:alg:upper-bound-sampler}.
Within \AlgoBlockRef{screen}, \AlgoBlockRef{aggregate} computes votes
after each chunk's probes, then forms and selects rows after all colorings.

\begin{algorithm}[H]
\caption{Packed color screen with a fixed draft}
\label{app:alg:packed-screen}
\label{main:alg:packed-screen}
\begin{algorithmic}[1]
\Require $\History$, $\PreprocessData$, $\DraftFiller$, a color set $\ColorSet$, and a family
  $(\ColorHash{\tau}{\cdot})_{\tau\in\ColoringIndexSet}$,
  $\DegreeCutoff,\ChunkCount$, read access to $\FrozenOracle$,
  $\RowSelector$, and the current public transcript $\ScreenTranscript$
\Ensure $(\ScreenRow{j})_{j\in\ResidualPositions}$; the history is unchanged
\If{$\MaxBankSize=0$ or $\ResidualPositions=\emptyset$}
  \State \Return $(\emptyset)_{j\in\ResidualPositions}$ without a counterfactual submission
\EndIf
\State Initialize all votes in \ref{app:eq:screen-vote} to zero
\For{$\tau\in\ColoringIndexSet$}
  \Statex \AlgoBlockTarget{chunks}
  \State Form the color classes and current readout chunks by
    \ref{app:eq:color-class}--\ref{app:eq:readout-chunks}
  \For{each nonempty readout chunk
      $\mathcal C=\ReadoutChunk{\tau}{\ReadoutColorValue}{m}$}
    \Statex \AlgoBlockTarget{probes}
    \For{$\kappa\in\ScreenColumnSet$}
      \For{$\SourceColorValue\in
             \ColorSet\setminus\{\ReadoutColorValue\}$}
        \State Submit $\ProbeState{\kappa}{\tau}{\SourceColorValue}{\mathcal C}$
          and read all rows $j\in \mathcal C$
      \EndFor
    \EndFor
    \Statex \AlgoBlockTarget{aggregate}
    \State Form the row-family diameters and votes for all
      $i\in\ResidualPositions\setminus\{j\}$ and $j\in \mathcal C$ using
      \ref{app:eq:observed-row-family}--\ref{app:eq:screen-vote}
  \EndFor
\EndFor
\Statex \AlgoBlockHeading{aggregate} (continued)
\State Form the candidate rows by \ref{app:eq:provisional-row}
\State Append the current screen's replies, colors, and votes to $\ScreenTranscript$
\State Apply the public selector by \ref{app:eq:returned-row}
\State \Return $(\ScreenRow{j})_{j\in\ResidualPositions}$
\end{algorithmic}
\end{algorithm}

Every bank- or tail-column state is a noncommit counterfactual submission, and the
submitted states agree with $\History$ on $G$.  One submitted state returns all
rows indexed by its readout chunk and contributes one to
$\CounterfactualQueries$, regardless of $|\mathcal C|$
(\textcolor{red}{\cref{app:def:resources}}). The
tail-column reply for a fixed $(\tau,\mathcal C,\SourceColorValue)$ is reused in the
diameter computation for that source color, not across different source colors.

\subsection{Roles of the construction and proof route}
\label{app:algorithm-proof-route}

\textcolor{red}{The overview in \cref{app:upper-proof-overview} maps the steps above
to their structural, resource, and error guarantees. It also gives the proof
routes for both accuracy cases and their specialization to
\cref{main:thm:upper-curve}.}

\begin{proposalcontext}{Scope of cycle repair}
Cycle repair avoids an immediate all-residual product commit merely because
the terminal estimate has a cycle. A repaired forest need not contain every
true residual edge. The guarantees still charge failed-screen paths to the
screen-failure budget; they do not assert an accuracy improvement over the
one-batch alternative.
\end{proposalcontext}

\clearpage
\section{\HellingerCaseTitle: Hellinger-to-TV upper bound}
\label{app:upper-proof}

\textcolor{red}{We prove the finite TV guarantee for the sampler in
Appendix~\ref{app:algorithm}; \cref{app:upper-proof-overview} gives the proof map.
Screen certification, peeling, and resource counts support both accuracy
cases via \eqref{app:eq:kl-oracle-inclusion}. The error analysis here proves
\HellingerCase; Appendix~\ref{app:kl-extension} proves \KLCase.
Appendix~\ref{app:benchmark-specializations} specializes both to
\cref{main:thm:upper-curve}.}
\ifincludeexponents
Its subsequent corollaries give joint-limit rates.
\fi
\ifincludeexponents
Independent-signal rates are collected in
Appendix~\ref{app:independent-signal}.
\fi

\subsection{Finite upper bound}
\label{app:results-upper}

The finite theorem uses the standard preprocessing floor of
Appendix~\ref{app:algorithm-preprocessing}; its query bound uses the realized
vocabulary-bank sizes $\BankSize{i}=|\TokenBank{i}|$.
Together with \cref{app:lem:benchmark-feasibility}, it yields
\HellingerCase\ of \cref{main:thm:upper-curve} through
\cref{app:cor:fixed-accuracy-main}.

\begin{theorem}[\HellingerCaseTitle: finite randomized Hellinger-to-TV upper bound]
\label{app:thm:finite-upper}
We work in \HellingerCase, with the oracle condition of
Assumption~\ref{app:ass:oracle-accuracy}.
Fix $N\ge10$ and the objects of Appendix~\ref{app:setup}.  Let
$\RandomizedPackedSampler$ be Algorithm~\ref{app:alg:complete-sampler} with
the standard zero-query draft of \eqref{app:eq:standard-draft} and
$\ScreenFailureBudget\in(0,1)$.  Suppose preprocessing is feasible and
\begin{equation}
 \ScreenResolution<\EdgeSignal.
 \label{app:eq:finite-screen-separation}
\end{equation}
Here $\RandomColorCount,\RandomColoringCount$ are defined in
\eqref{app:eq:random-color-parameters}, and
$\ScreenCallCap,\HardRoundCap$ in
\eqref{app:eq:screen-call-and-round-caps}; $\ScreenColumnCount$ is
$\MaxBankSize+1$ when $\MaxBankSize\ge1$ and zero when $\MaxBankSize=0$, as in
\eqref{app:eq:screen-column-count}.
Then, for every admissible frozen oracle, pathwise over the random colors
and product-commit draws,
\begin{align}
 \CounterfactualQueries
 &\le \ScreenColumnCount\ScreenCallCap\RandomColoringCount
       \RandomColorCount(\RandomColorCount+\ChunkCount),
 \label{app:eq:finite-upper-Q}\\
 \CommitRounds
 &\le\HardRoundCap.
 \label{app:eq:finite-upper-R}
\end{align}
Moreover,
\begin{equation}
 \SeedRisk(\RandomizedPackedSampler;\TargetLaw,\FrozenOracle)
 \le
 \OracleRadius+\ScreenFailureBudget
 \quad\text{for every }
 \FrozenOracle\in\OracleClass(\TargetLaw;\OracleRadius).
 \label{app:eq:finite-upper-seed-risk}
\end{equation}
Consequently,
\begin{equation}
 \RobustRisk(\RandomizedPackedSampler;\TargetLaw)
 \le
 \OracleRadius+\ScreenFailureBudget.
 \label{app:eq:finite-upper-risk}
\end{equation}
In particular, if $\OracleRadius>0$ and
\begin{equation}
 \ScreenFailureBudget:=\frac{\OracleRadius}{2},
 \label{app:eq:finite-failure-budget-choice}
\end{equation}
then
$\RobustRisk(\RandomizedPackedSampler;\TargetLaw)\le3\OracleRadius/2$.
If the distinguished all-mask readout is charged, then
$\ReportedQueries=\CounterfactualQueries+1$ by
\eqref{app:eq:reported-query-count}.  Thus the bound for
$\ReportedQueries$ is the right-hand side of
\eqref{app:eq:finite-upper-Q} plus one; $\CounterfactualQueries$ still
excludes that readout.
Including that initial stage, the adaptive oracle depth is at most
$\OracleDepth\le1+\CommitRounds+\ScreenCallCap$.
\end{theorem}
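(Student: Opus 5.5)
The proof splits into two independent parts: a deterministic resource count and an error bound driven by a coupling with an idealized exact-commit sampler.

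\textbf{Resources.} These are pathwise and come straight from the public caps of Algorithm~\ref{app:alg:complete-sampler}.

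\emph{Screens.} Each screen that submits anything costs at most $\ScreenColumnCount\RandomColoringCount\RandomColorCount(\RandomColorCount+\ChunkCount)$ probes. For a fixed coloring, each readout color class of size $n$ is cut into $\lceil n/\ReadoutChunkSize\rceil\le 1+n/\ReadoutChunkSize$ chunks. Summing over the $\RandomColorCount$ colors gives at most $\RandomColorCount+N/\ReadoutChunkSize\le\RandomColorCount+\ChunkCount$ chunks. Each chunk is paired with fewer than $\RandomColorCount$ source colors and $\ScreenColumnCount$ columns.

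\emph{Number of screens.} A screen runs before each peel test. Peel phases are capped at $\PeelPhaseCap$, and one final screen follows the last phase, so there are at most $\ScreenCallCap$ screens. This gives \eqref{app:eq:finite-upper-Q}.

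\emph{Rounds.} The \AlgoGuardRef\ check reserves round $\HardRoundCap$ for a final all-residual commit, so $\CommitRounds\le\HardRoundCap$ holds on every path.

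\emph{Depth.} All probes of one screen are fixed before their replies, so each screen is one stage. Each commit is one stage, and the all-mask readout adds one more. Hence $\OracleDepth\le1+\CommitRounds+\ScreenCallCap$.

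\textbf{Error bound.} I fix the seed $w$ and compare three samplers:
\begin{itemize}
\item the implemented sampler;
\item a ``safe'' variant that runs the same decision rule but commits every batch from the exact joint conditional $\ExactBatchLaw{\cdot}$;
\item the target $\TargetLaw$ itself.
\end{itemize}

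\emph{Safe variant versus $\TargetLaw$.} By the telescoping argument in the proof of \cref{app:lem:adaptive-midpoint-lower}, exact batch commits under any adaptive rule output exactly $\TargetLaw$. This is \cref{app:lem:exact-hybrid}.

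\emph{Exact product commits on good paths.} On paths where every screen succeeds, I claim each batch is conditionally independent given the history, so the exact batch joint equals the product of exact singleton rows.
\begin{itemize}
\item By \eqref{main:eq:exact-low-degree-row} (\cref{app:lem:low-degree-screen}), low-degree rows are exact.
\item The peeling contraction \eqref{main:eq:peeling-contraction} and the phase cap force residual max degree at most $\DegreeCutoff$ at the terminal screen (\cref{app:lem:successful-path-structure}). So the certified graph equals the true residual forest, cycle repair is vacuous, and the guard is never triggered. The guard claim still needs checking via the peel-size bound $4|\ResidualPositions|/\DegreeCutoff$ per phase and the $\lceil\log_2(N+1)\rceil$ centroid layers.
\item Singleton batches are trivially exact products. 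Centroid batches, one per component of the true residual forest, factorize by forest separation (\cref{app:lem:residual-markov}).
\end{itemize}

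\emph{Charging the oracle error.} I compare product commits from oracle rows with product commits from exact rows along the adaptive path, via a hybrid/Hellinger chain. Each committed position contributes squared Hellinger at most $(\OracleRadius)^2/(2N)$. Using the product and adaptive-chain Hellinger subadditivity of \cref{app:lem:finite-hellinger-calculus,app:lem:adaptive-hellinger-upper}, the total over $N$ positions is at most $(\OracleRadius)^2/2$. Converting with $\dTV\le\sqrt{2}\,h$ gives at most $\OracleRadius$.

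\emph{Charging failed screens.} A failed-screen path may use a nonfactorizing batch. I would handle this with \cref{app:lem:safe-controller-comparison}: modify the controller so that after a screen failure it reverts to a safe rule. The two controllers agree except on the failure event. By \eqref{main:eq:conditional-screen-success} with $\RandomColoringCount\ge8\log(\ScreenCallCap N^2/\ScreenFailureBudget)$, each reached screen fails with conditional probability at most $\ScreenFailureBudget/\ScreenCallCap$. A union bound over at most $\ScreenCallCap$ screens (\cref{app:lem:random-screen-success}) gives failure probability at most $\ScreenFailureBudget$ on average over seeds.

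\emph{Combining.} The triangle inequality and averaging over $W$ give \eqref{app:eq:finite-upper-seed-risk}. Taking the supremum over $\FrozenOracle$ gives \eqref{app:eq:finite-upper-risk}, and substituting $\ScreenFailureBudget=\OracleRadius/2$ gives $3\OracleRadius/2$.

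\textbf{Main obstacle.} The delicate step is the failure charge. Screen success is an event about colorings, but the output law conditional on $W=w$ integrates commit draws that themselves influence later screens. The seed-averaged TV must therefore be bounded by $\mathbb E_W$ of the conditional failure probability under the \emph{safe} path law, not the implemented one.

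My first attempt is to define the failure event under the exact-commit process. Adaptivity can then be absorbed by the conditional bound \eqref{main:eq:conditional-screen-success}, which holds at every adaptively reached screen because colors are fresh. I would then transfer from the safe process to the implemented one with a coupling in which the two agree up to the first failure.
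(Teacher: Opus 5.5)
Your resource accounting is the paper's (\cref{app:lem:upper-resources}), and your error ingredients are the right ones. The failure-charge step, which you single out as the crux, is assembled incorrectly, because you use ``safe'' for two different processes.
(a) The implemented decision rule with every batch drawn from the exact \emph{joint} conditional. Its output is exactly $\TargetLaw$ for any rule (Step~1 of \cref{app:lem:adaptive-joint-kl}). By contrast, \cref{app:lem:exact-hybrid} concerns products of exact rows (the paper's $\ExactBatchLaw{\mathfrak h}$) and needs structural safety.
(b) The analysis-only controller of \cref{app:lem:safe-controller-comparison}, which switches to singleton completion at the first failed screen but still commits from oracle rows.

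A coupling that agrees up to the first failure exists between the implemented sampler and (b), or between (a) and the exact-product hybrid $\ExactCommitOutputLaw{w}$ of the implemented rule. It cannot exist between an exact-commit process and the implemented oracle-commit process: their commit kernels generally differ already at the first commit, even when every screen succeeds. So your plan to bound failure under the exact-commit process and then ``transfer'' it to the implemented run by such a coupling fails as stated. Repairing it by paying the TV distance between the two runs (the failure indicator is a function of the output once $w$ is fixed) costs a second $\OracleRadius$ and yields only $2\OracleRadius+\ScreenFailureBudget$.

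The obstacle that prompted this detour is not real. \cref{app:lem:random-screen-success} is proved for the implemented execution, since conditioning on the realized past, whatever kernel produced its commit values, leaves the fresh color block uniform. Either of two clean completions works.

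\emph{The paper's route.} Apply \cref{app:lem:exact-hybrid,app:lem:adaptive-hellinger-upper} to (b) to get $\dTV(\TargetLaw,\SafeSamplerOutputLaw{w}{\FrozenOracle})\le\OracleRadius$ for every $w$. Couple the implemented run with (b) so they diverge only on the implemented run's $\ScreenFailureEvent$, and average over $\ControllerSeed$.

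\emph{Your three-process framing}, which mirrors the KL proof of \cref{app:thm:finite-kl-upper}. Write $\dTV(\TargetLaw,\SamplerOutputLaw{w}{\FrozenOracle})\le\dTV(\ExactCommitOutputLaw{w},\SamplerOutputLaw{w}{\FrozenOracle})+\dTV(\TargetLaw,\ExactCommitOutputLaw{w})$. The first term is at most $\OracleRadius$ by \cref{app:lem:adaptive-hellinger-upper}, with no safety needed. For the second, couple the exact-product run with (a); they can diverge only at an unsafe batch, and unsafe batches arise only after a reached screen fails (\cref{app:lem:successful-path-structure,app:lem:upper-resources}). So the second term is at most the failure probability of the exact-product run, and the fresh-color bound applies to that run verbatim.

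Neither route transfers failure probabilities between runs with different commit kernels.
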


\subsection{\texorpdfstring{\textcolor{red}{Proof overview}}{Proof overview}}
\label{app:upper-proof-overview}

{\color{red}
The proof turns local screen certificates into safe parallel commits:
accurate low-degree rows make peeling contract the high-degree core, after
which the remaining forest is known and its components can be sampled in
parallel.  We then bound resources and output error separately.
\Cref{app:tab:upper-proof-map} follows this chain under the finite hypotheses
above.  The structural certificates hold when every reached screen succeeds;
the resource caps hold on every path.  The same construction supports both
accuracy cases, with different output-error arguments.

\begin{table}[H]
\centering
\small
\color{red}
\setlength{\tabcolsep}{4pt}
\begin{tabularx}{\linewidth}{@{}>{\raggedright\arraybackslash}p{0.17\linewidth}
  >{\raggedright\arraybackslash}X
  >{\raggedright\arraybackslash}p{0.14\linewidth}
  >{\raggedright\arraybackslash}p{0.19\linewidth}@{}}
\toprule
Proof task & Key argument and conclusion & Algorithm steps & References \\
\midrule
\textbf{Certify the vocabulary banks}
& Preprocessing gives a nonempty complement of $\TokenBank{i}$, whose
tokens have true marginal mass below $\TailThreshold$, certifying the
tail representative.  Under the standard floor, (RF) bounds
$\BankSize{i}\le(4\FrequencyConstant/\TailThreshold)^{1/\RankTailExponent}$.
& \AlgoBlockRef{preprocess}
& \cref{app:lem:preprocessing-certificates} \\
\addlinespace
\textbf{Recover low-degree neighborhoods}
& Separating colors reduce packed probes to one-source comparisons at a
fixed boundary.  RT--UEN and the screening margin make their votes correct;
fresh-color majorities give
$\ScreenRow{j}=\mathcal N_{\ResidualForest}(j)$ for degree at most
$\DegreeCutoff$, except on reached-screen failures of total probability
at most $\ScreenFailureBudget$.
& \AlgoBlockRef{draft}--\AlgoBlockRef{screen};
\AlgoBlockRef{chunks}--\AlgoBlockRef{aggregate}
& \cref{app:lem:residual-markov,app:lem:random-screen-success,app:lem:fixed-draft-rows};
\AppLowDegreeScreenRef \\
\addlinespace
\textbf{Peel the high-degree core}
& Let $\HighDegreeCount{k}$ count vertices of degree above $\DegreeCutoff$
before phase $k$.  On successful paths, the forest degree sum gives
$\HighDegreeCount{k+1}\le(4/\DegreeCutoff)\HighDegreeCount{k}$.
Either stopping test leaves maximum degree at most $\DegreeCutoff$,
certifying the true residual forest.
& \AlgoBlockRef{peel}--\AlgoBlockRef{terminal}
& \cref{app:lem:successful-path-structure} \\
\addlinespace
\textbf{Complete safe parallel sampling}
& Singleton peels and one centroid per true residual component are safe.
Exact-row products then equal joint batch conditionals.  Centroid deletion
halves component sizes, giving logarithmically many terminal rounds on
successful paths.
& \AlgoBlockRef{peel}, \AlgoBlockRef{centroid}
& \cref{app:lem:residual-markov,app:lem:upper-resources,app:lem:exact-hybrid} \\
\addlinespace
\textbf{Bound all-path resources}
& Count screen loops for $\CounterfactualQueries$, and peel and centroid
commits for rounds.  \AlgoGuardRef\ caps rounds even on failed-screen paths.
Each screen is one parallel stage:
$\OracleDepth\le1+\CommitRounds+\ScreenCallCap$.
$\CounterfactualQueries$ excludes preprocessing and commits.
& All: \AlgoBlockRef{preprocess}--\AlgoBlockRef{centroid}, plus \AlgoGuardRef
& \cref{app:lem:upper-resources} \\
\addlinespace
\textbf{\HellingerCaseTitle: bound output TV}
& Each coordinate is committed once, so adaptive Hellinger composition
bounds an analysis-only safe rule's oracle error.  Coupling to that rule adds at most
$\ScreenFailureBudget$, giving seed-averaged TV at most
$\OracleRadius+\ScreenFailureBudget$.
& All: \AlgoBlockRef{preprocess}--\AlgoBlockRef{centroid}; output-law analysis
& \cref{app:lem:exact-hybrid,app:lem:adaptive-hellinger-upper,app:lem:safe-controller-comparison};
\cref{app:thm:finite-upper} \\
\addlinespace
\textbf{\KLCaseTitle: bound output KL}
& With forward row KL at most $\KLRowBudget$, the joint-reference identity
separates row error from batch total correlation.  The latter vanishes
on successful reference paths, giving seed-averaged KL at most
$N\KLRowBudget+N\log\VocabSize\,\ScreenFailureBudget$.
& All: \AlgoBlockRef{preprocess}--\AlgoBlockRef{centroid}; output-law analysis
& \cref{app:lem:adaptive-joint-kl,app:thm:finite-kl-upper} \\
\addlinespace
\textbf{Recover the main theorem}
& Verify preprocessing feasibility and screen separation for the main-text
scaling, then specialize the finite bounds and choose the degree cutoff.
Both cases yield \cref{main:thm:upper-curve}.
& All: \AlgoBlockRef{preprocess}--\AlgoBlockRef{centroid}; parameter choices
& \cref{app:lem:benchmark-feasibility,app:cor:fixed-accuracy-main} \\
\bottomrule
\end{tabularx}
\caption{Proof steps for the upper bound.  Step labels link to
Algorithms~\ref{app:alg:complete-sampler}--\ref{app:alg:packed-screen}.}
\label{app:tab:upper-proof-map}
\end{table}

Two distinctions guide the proof:
\begin{itemize}
\item High-degree rows need only satisfy the size cap
$|\ScreenRow{j}|\le\DegreeCutoff$.  Accurate low-degree rows suffice to
force contraction, even when high-degree rows are incorrect.
\item For \HellingerCase, an analysis-only safe rule switches to singleton
completion at the first failed screen (and before any cap fallback).
We compare its full output law with the implemented sampler's law;
no conditioning of the output on screen success is needed.
\end{itemize}
Combining these bounds gives \cref{app:eq:randomized-upper-TV-proof}:
\[
 \mathbb E_{\ControllerSeed}
 \dTV\bigl(\TargetLaw,
            \SamplerOutputLaw{\ControllerSeed}{\FrozenOracle}\bigr)
 \le
 \underbrace{\OracleRadius}_{\text{oracle error at commits}}
 +\underbrace{\ScreenFailureBudget}_{\text{failed-screen coupling}}.
\]
\par}

\subsection{\texorpdfstring{\textcolor{red}{Steps 1--3: }}{Steps 1--3: }Preprocessing and exact low-degree certification}
\label{app:upper-proof-screen}

We first establish the residual Markov and tail certificates, then prove the
fresh-color success bound and the low-degree screen contract.
\textcolor{red}{These certify the vocabulary banks from
\AlgoBlockNamedRef{preprocess}, the fresh colors from
\AlgoBlockNamedRef{draft}, and the rows returned by
\AlgoBlockNamedRef{screen}.}

The next lemma derives the residual Markov property from
\cref{main:eq:forest-factorization}.
Recall from \eqref{app:eq:history} that
$\ResidualPositions=\PositionSet\setminus G$ and
$\ResidualForest=\TargetForest[\ResidualPositions]$.
\textcolor{red}{Its row identity is used in \AlgoBlockNamedRef{probes},
and its component independence justifies \AlgoBlockNamedRef{centroid}.}

\begin{lemma}[Residual Markov property]
\label{app:lem:residual-markov}
Fix a history $\History=(G,x_G)$.  Conditional on $X_G=x_G$, the variables in
distinct connected components of $\ResidualForest$ are independent.  Moreover,
fix $j\in\ResidualPositions$ and the values of all residual neighbors of $j$.
The exact conditional row of $X_j$ is unchanged if any residual nonneighbor is
revealed, masked, changed, or marginalized.
\end{lemma}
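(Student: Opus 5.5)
The plan is to work directly from the factorization \eqref{app:eq:S0}. The product of root marginals and endpoint kernels over the edges of $\TargetForest$ writes $\TargetLaw(x)$ as a product of factors, each depending on at most two coordinates joined by a forest edge. After fixing $X_G=x_G$, every factor $\EndpointKernel{u}{v}$ with $u\in G$ or $v\in G$ becomes a function of at most one residual coordinate. Factors with both endpoints in $G$ become constants. Only edges of $\ResidualForest=\TargetForest[\ResidualPositions]$ still couple two residual coordinates. So the conditional law $\TargetLaw(x_{\ResidualPositions}\mid x_G)$ is proportional to a product over components $T$ of $\ResidualForest$ of nonnegative functions $\psi_T(x_{V(T)})$, where single-vertex potentials are absorbed into the component containing the vertex. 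Strict positivity guarantees that the normalizer is positive. Summing over the coordinates of the other components then shows that the joint conditional factorizes as the product of the normalized $\psi_T$. This proves the first claim.

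For the row claim, fix $j$, let $T$ be its component, and let $\mathcal N:=\mathcal N_{\ResidualForest}(j)$.
\begin{itemize}
\item \emph{Other components.} By the independence just shown, revealing, masking, or changing coordinates outside $V(T)$ does not affect the conditional law of $X_{V(T)}$, and hence not of $X_j$.
\item \emph{Inside $T$.} Removing $j$ splits $T$ into subtrees, each attached to $j$ through exactly one neighbor in $\mathcal N$. Inside $\psi_T$, the only factors involving $x_j$ are the single-vertex potential at $j$ and the edge factors on edges $\{j,u\}$ with $u\in\mathcal N$.
\item \emph{Effect of fixing the neighbors.} Once $x_{\mathcal N}$ is fixed, the conditional density of $X_j$ given $x_{\mathcal N}$ and any assignment $w$ of an arbitrary subset $S\subseteq V(T)\setminus(\mathcal N\cup\{j\})$ is proportional to $\phi_j(x_j)\prod_{u\in\mathcal N}\psi_{ju}(x_j,x_u)$. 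Every factor involving $S$, or the marginalized remainder of $V(T)$, is a multiplicative term not involving $x_j$, so it cancels in normalization.
\end{itemize}

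To make the cancellation rigorous when some nonneighbors are marginalized, sum the joint mass over the unrevealed nonneighbor coordinates. Each such sum factors into a function of $x_{\mathcal N}$, $w$, and the revealed values, and involves no $x_j$. This holds because every path from a nonneighbor to $j$ passes through $\mathcal N$, so no remaining factor couples $x_j$ to a nonneighbor once $x_{\mathcal N}$ is fixed.

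The statement ``unchanged if revealed, masked, changed, or marginalized'' then follows from a single observation: the row equals the normalized $\phi_j\prod_{u\in\mathcal N}\psi_{ju}(\cdot,x_u)$ for every such configuration. Masking is the same as marginalizing.

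The main obstacle is bookkeeping rather than depth. The potentials must be defined so that the root-dependent factorization does not matter, which is handled by the root-invariance remark after (C0). One must also check that committed vertices adjacent to $j$ in the full forest $\TargetForest$ enter only as fixed single-vertex potentials, which is why only residual neighbors need to be fixed.
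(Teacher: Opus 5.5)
Your proposal is correct and follows essentially the same route as the paper. Both substitute $x_G$ into the (S0) product, group the surviving factors by residual component to get conditional independence, and then cancel every factor not involving $x_j$ in the normalized row once all neighbors of $j$ are observed. The differences are cosmetic. The paper re-roots at $j$ via (C0), so the $x_j$-dependent factors read $\pi_j(a)\prod_{v}K_{j\to v}(x_v\mid a)$, and it handles all nonneighbors in one cancellation. Your potential-based bookkeeping works for any fixed root, so your appeal to root invariance is harmless but unnecessary, and you route the other components through the first claim instead.
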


\begin{proof}

\emph{1. Factor the conditional law over residual components.}
Recall the forest factorization \eqref{app:eq:S0}:
$\TargetLaw(x)=\prod_{u\in\TargetRoots}\PosMarginal{u}(x_u)
 \prod_{j\notin\TargetRoots}\EndpointKernel{\ParentOf{j}}{j}
 (x_j\mid x_{\ParentOf{j}})$.
Here $\TargetRoots$ and $\ParentOf{\cdot}$ are the original reference roots
and parent map.  Substitute the fixed committed values $x_G$.
For each connected component $T$ of $\ResidualForest$, identified with
its vertex set, define
\[
\begin{aligned}
 g_T(x_T):={}&
 \prod_{u\in\TargetRoots\cap T}\PosMarginal{u}(x_u)
 &&\text{roots in }T\\
 &{}\times
 \prod_{\substack{j\in T\setminus\TargetRoots\\\ParentOf{j}\in T}}
 \EndpointKernel{\ParentOf{j}}{j}(x_j\mid x_{\ParentOf{j}})
 &&\text{edges within }T\\
 &{}\times
 \prod_{\substack{j\in T\setminus\TargetRoots\\\ParentOf{j}\in G}}
 \EndpointKernel{\ParentOf{j}}{j}(x_j\mid x_{\ParentOf{j}})
 &&\text{edges }G\to T\\
 &{}\times
 \prod_{\substack{j\in G\setminus\TargetRoots\\\ParentOf{j}\in T}}
 \EndpointKernel{\ParentOf{j}}{j}(x_j\mid x_{\ParentOf{j}})
 &&\text{edges }T\to G.
\end{aligned}
\]
The fixed $x_G$ dependence is suppressed in $g_T$.
In particular, the last product retains factors for committed children of
vertices in $T$: their parent values still vary with $x_T$.
Every edge with a residual endpoint is either internal to one such $T$ or
crosses between $T$ and $G$; no edge joins distinct residual components.
Thus, with products over $T$ ranging over these connected components,
\[
 \TargetLaw(x_G,x_{\ResidualPositions})
 =
 \left[
 \prod_{u\in\TargetRoots\cap G}\PosMarginal{u}(x_u)
 \prod_{\substack{j\in G\setminus\TargetRoots\\\ParentOf{j}\in G}}
 \EndpointKernel{\ParentOf{j}}{j}(x_j\mid x_{\ParentOf{j}})
 \right]\prod_T g_T(x_T).
\]
The bracket is positive and depends only on $x_G$.
It therefore cancels when dividing this joint probability by
$\TargetLaw(X_G=x_G)
 =\sum_{z_{\ResidualPositions}}\TargetLaw(x_G,z_{\ResidualPositions})$.
Since the connected components partition $\ResidualPositions$,
finite distributivity gives
\[
 \sum_{x_{\ResidualPositions}}\prod_T g_T(x_T)
 =\prod_T\left(\sum_{x_T}g_T(x_T)\right).
\]
Consequently
\[
\begin{aligned}
 \TargetLaw(X_{\ResidualPositions}=x_{\ResidualPositions}\mid X_G=x_G)
 &=\frac{\prod_T g_T(x_T)}
         {\sum_{z_{\ResidualPositions}}\prod_T g_T(z_T)}\\
 &=\prod_T\frac{g_T(x_T)}{\sum_{z_T}g_T(z_T)}.
\end{aligned}
\]
All sums range over the vocabulary assignments on their indicated vertex
sets; their denominators are positive by strict positivity of the factors.
This proves conditional independence of the residual components.

\emph{2. Cancel all factors not incident to the readout.}
For the second claim, root the original connected component at $j$,
using (C0).
Every original neighbor $v$ of $j$ is observed: either $v\in G$, or its
value is fixed in the lemma.  Call this value $x_v$.  The factors involving
$X_j=a$ are exactly
\[
 \PosMarginal{j}(a)
 \prod_{v\in\mathcal N_{\TargetForest}(j)}
       \EndpointKernel{j}{v}(x_v\mid a).
\]
All other factors are independent of $a$ once these neighbor values are
fixed.  Revealing, changing, or summing out nonneighbors only changes a
multiplicative factor independent of $a$.  It cancels in the normalized row
\[
 \frac{\PosMarginal{j}(a)
       \prod_{v\in\mathcal N_{\TargetForest}(j)}
          \EndpointKernel{j}{v}(x_v\mid a)}
      {\sum_{b\in\Vocab}\PosMarginal{j}(b)
       \prod_{v\in\mathcal N_{\TargetForest}(j)}
          \EndpointKernel{j}{v}(x_v\mid b)}.
\]
Thus the history and fixed neighbor values determine the row.
\end{proof}

The next lemma certifies the omitted tokens for screening and, under (RF\textcolor{red}{; \cref{app:eq:RF}})
with the standard floor, bounds the number of source columns.

\paragraph{Intuition.}
The local vocabulary bank $\TokenBank{i}$ from \eqref{app:eq:local-bank}
includes every token whose true marginal mass exceeds $\TailThreshold$.  Conversely, under the standard floor, every included
token has true mass at least $\TailThreshold/2$.
The rank envelope then bounds how many such tokens exist:
\[
 \frac{\TailThreshold}{2}
 \le\FrequencyConstant(\BankSize{i}^{-\RankTailExponent}+\VocabSize^{-1}),
 \quad
 \FrequencyConstant\VocabSize^{-1}\le\frac{\TailThreshold}{4}
 \quad\Longrightarrow\quad
 \BankSize{i}\le(4\FrequencyConstant/\TailThreshold)^{1/\RankTailExponent}.
\]
This is an upper bound on bank size; (RF\textcolor{red}{; \cref{app:eq:RF}}) does not require the envelope to
be attained.  The complementary tokens can all be represented by one
certified tail token in the response test.

\begin{lemma}[Preprocessing certificates\textcolor{red}{; \AlgoBlockRef{preprocess}}]
\label{app:lem:preprocessing-certificates}
Suppose preprocessing in \cref{app:def:preprocessing} is feasible, and let
$\TailThreshold$ be the threshold it selects.  Every local vocabulary bank
$\TokenBank{i}$ has a nonempty complement in $\Vocab$, so each
$\TailRepresentative{i}$ is well-defined.  For every
$i\in\PositionSet$ and $a\in\Vocab$,
\begin{equation}
 \bigl|\PosMarginal{i}(a)-\NoisyMarginal{i}(a)\bigr|
 \le\RowTVError,
 \qquad
 a\notin\TokenBank{i}
 \Longrightarrow a\in\TailSet{i}{\TailThreshold}.
 \label{app:eq:proof-preprocessing-certificates}
\end{equation}
Under (RF\textcolor{red}{; \cref{app:eq:RF}}), the standard floor \eqref{app:eq:rate-threshold-floor}
additionally gives
\begin{equation}
 \BankSize{i}
 \le\left(\frac{4\FrequencyConstant}{\TailThreshold}\right)^{1/\RankTailExponent}
 \quad\text{for every }i.
 \label{app:eq:finite-bank-size-proof}
\end{equation}
\end{lemma}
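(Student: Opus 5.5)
The three claims are handled separately, each directly from the definitions in \cref{app:def:preprocessing} together with the row-TV bound \eqref{app:eq:oracle-tv-radius}.

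\textbf{Marginal accuracy.} Since $\ExactRow{\AllMaskState}{i}=\PosMarginal{i}$ and $\NoisyMarginal{i}=\OracleRow{\AllMaskState}{i}$, we have for every $a$
\[
 |\PosMarginal{i}(a)-\NoisyMarginal{i}(a)|
 \le2\dTV(\PosMarginal{i},\NoisyMarginal{i})
 \le2\RowTVError .
\]
This is weaker than the claimed $\RowTVError$. I would instead use the sharper pointwise estimate: a single coordinate difference is at most the total positive part, which equals the TV distance. Indeed $\sum_a(p(a)-q(a))=0$, so the positive and negative parts each sum to $\dTV(p,q)$, and hence $|p(a)-q(a)|\le\dTV(p,q)\le\RowTVError$.

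\textbf{Tail certificate.} Take $a\notin\TokenBank{i}$. Then $\NoisyMarginal{i}(a)<\TailThreshold-\RowTVError$. By the bound above,
\[
 \PosMarginal{i}(a)<\NoisyMarginal{i}(a)+\RowTVError<\TailThreshold,
\]
so $a\in\TailSet{i}{\TailThreshold}$.

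\textbf{Nonempty complement.} If $\TokenBank{i}=\Vocab$, every token would satisfy $\NoisyMarginal{i}(a)\ge\TailThreshold-\RowTVError$. Summing over the vocabulary gives $1\ge\VocabSize(\TailThreshold-\RowTVError)$. The floor gives $\TailThreshold\ge\ThresholdFloor\ge4\max\{\RowTVError,1/\VocabSize\}$. Hence $\TailThreshold-\RowTVError\ge\tfrac34\TailThreshold\ge3/\VocabSize$, and the sum is at least $3>1$, a contradiction. So the complement is nonempty and $\TailRepresentative{i}$ is well defined. The $\FrequencyConstant\ge1$ factor in the floor only strengthens this.

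\textbf{Bank size.} For $a\in\TokenBank{i}$,
\[
 \PosMarginal{i}(a)\ge\NoisyMarginal{i}(a)-\RowTVError
 \ge\TailThreshold-2\RowTVError\ge\TailThreshold/2,
\]
using $\RowTVError\le\TailThreshold/4$. So all $\BankSize{i}$ bank tokens have true mass at least $\TailThreshold/2$. The token of hidden rank $\BankSize{i}$ therefore also has mass at least $\TailThreshold/2$, since ranks are by decreasing mass and at least $\BankSize{i}$ tokens lie above that level. Applying (RF),
\[
 \TailThreshold/2\le\FrequencyConstant\bigl(\BankSize{i}^{-\RankTailExponent}+\VocabSize^{-1}\bigr).
\]
The floor gives $\FrequencyConstant\VocabSize^{-1}\le\TailThreshold/4$, so $\FrequencyConstant\BankSize{i}^{-\RankTailExponent}\ge\TailThreshold/4$. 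This rearranges to \eqref{app:eq:finite-bank-size-proof}. The empty-bank case is trivial.

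The main obstacle I expect is the constant bookkeeping. The pointwise bound must be $\dTV$, not $2\dTV$, and the floor must absorb both the row-noise shift and the $\VocabSize^{-1}$ envelope term. I would check exactly how the definition $\ThresholdFloor=4\max\{\RowTVError,\FrequencyConstant/\VocabSize\}$ enters each step.
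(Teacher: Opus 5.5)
Your proposal is correct and follows the paper's proof essentially step for step. It uses the singleton-event bound $|p(a)-q(a)|\le\dTV(p,q)$ for marginal accuracy and the tail certificate, lets the floor $\ThresholdFloor=4\max\{\RowTVError,\FrequencyConstant/\VocabSize\}$ absorb both the $2\RowTVError$ shift and the $\FrequencyConstant\VocabSize^{-1}$ envelope term in the bank-size bound, and proves the nonempty complement by the same averaging idea, which the paper phrases directly as $\TailThreshold-\RowTVError>1/\VocabSize\ge\min_a\NoisyMarginal{i}(a)$ rather than by contradiction. The only slip is cosmetic: in the tail certificate the first inequality $\PosMarginal{i}(a)\le\NoisyMarginal{i}(a)+\RowTVError$ should be non-strict, but the strict second inequality still gives $\PosMarginal{i}(a)<\TailThreshold$.
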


\begin{proof}

\emph{1. Establish a nonempty vocabulary-bank complement.}
In \AlgoBlockNamedRef{preprocess}, preprocessing fixes
$\TailThreshold=\max\FeasibleThresholdGrid$ once from the
public parameters, before the all-mask submission; this threshold is reused
by every screen.
Since $\FrequencyConstant\ge1$, the standard floor satisfies
$\ThresholdFloor=4\max\{\RowTVError,\FrequencyConstant/\VocabSize\}
\ge2(\RowTVError+1/\VocabSize)$.
Together with $\TailThreshold\ge\ThresholdFloor$, this gives
\begin{equation}
 \TailThreshold-\RowTVError
 \ge\RowTVError+\frac2{\VocabSize}
 >\frac1{\VocabSize}
 \ge\min_{a\in\Vocab}\NoisyMarginal{i}(a).
 \label{app:eq:baseline-existence}
\end{equation}
Thus $\TokenBank{i}\ne\Vocab$, proving tail-representative existence.

\emph{2. Certify every omitted token using the row-TV bound.}
At the all-mask state, $\ExactRow{\AllMaskState}{i}=\PosMarginal{i}$ and
$\OracleRow{\AllMaskState}{i}=\NoisyMarginal{i}$.
Thus \eqref{app:eq:oracle-tv-radius} and the singleton-event bound for TV give,
for every $a\in\Vocab$,
\[
 \bigl|\PosMarginal{i}(a)-\NoisyMarginal{i}(a)\bigr|
 \le \sup_{A\subseteq\Vocab}
       \bigl|\PosMarginal{i}(A)-\NoisyMarginal{i}(A)\bigr|
 =\dTV(\PosMarginal{i},\NoisyMarginal{i})
 \le\RowTVError.
\]
This bounds both signs of the difference, whether or not $a$ belongs to
$\TokenBank{i}$.  For a token outside the vocabulary bank $\TokenBank{i}$,
definition \eqref{app:eq:local-bank} then gives
\begin{equation}
 a\notin\TokenBank{i}
 \quad\Longrightarrow\quad
 \PosMarginal{i}(a)
 \le\NoisyMarginal{i}(a)+\RowTVError<\TailThreshold.
 \label{app:eq:bank-complement-tail}
\end{equation}
This proves \eqref{app:eq:proof-preprocessing-certificates}, including the
tail certificate for $\TailRepresentative{i}$.

\emph{3. Apply (RF\textcolor{red}{; \cref{app:eq:RF}}) only for the bank-size estimate.}
For the vocabulary-bank size $\BankSize{i}=|\TokenBank{i}|$,
the standard floor gives
$\TailThreshold\ge\ThresholdFloor=\StandardThresholdFloor{\FrequencyConstant}$,
hence both $\TailThreshold\ge4\RowTVError$ and
$\TailThreshold\ge4\FrequencyConstant/\VocabSize$.
Every $a\in\TokenBank{i}$ therefore satisfies
\begin{equation}
 \PosMarginal{i}(a)
 \ge\NoisyMarginal{i}(a)-\RowTVError
 \ge\TailThreshold-2\RowTVError
 \ge\frac{\TailThreshold}{2}.
 \label{app:eq:bank-member-mass}
\end{equation}
If
$\BankSize{i}\ge1$, at least $\BankSize{i}$ tokens have exact mass at least
$\TailThreshold/2$; hence the exact rank-$\BankSize{i}$ mass has the same
lower bound.  Assumption (RF\textcolor{red}{; \cref{app:eq:RF}}) and
$\TailThreshold\ge4\FrequencyConstant/\VocabSize$ yield
\[
 \frac{\TailThreshold}{2}
 \le\FrequencyConstant
 \left(\BankSize{i}^{-\RankTailExponent}+\VocabSize^{-1}\right)
 \le\FrequencyConstant\BankSize{i}^{-\RankTailExponent}
    +\frac{\TailThreshold}{4}.
\]
Rearranging proves \eqref{app:eq:finite-bank-size-proof}.  The inequality is
trivial when $\BankSize{i}=0$.
\end{proof}

\paragraph{Screen-success event (analysis only).}
\textcolor{red}{The colors drawn in \AlgoBlockNamedRef{draft} determine
the following event for \AlgoBlockNamedRef{screen}.}
Fix a realized history $\History=(G,x_G)$.  For distinct
$i,j\in\ResidualPositions$ with
$d_{\ResidualForest}(j)\le\DegreeCutoff$, call $\tau$
\emph{$(i,j)$-separating} when
\begin{equation}
\begin{aligned}
 \ColorHash{\tau}{v}&\ne\ColorHash{\tau}{j}
   &&\text{for every }v\in\mathcal N_{\ResidualForest}(j),\\
 \ColorHash{\tau}{v}&\ne\ColorHash{\tau}{i}
   &&\text{for every }
   v\in\mathcal N_{\ResidualForest}(j)\setminus\{i\}.
\end{aligned}
\label{app:eq:pair-separating-coloring}
\end{equation}
The screen family is \emph{successful at $\History$}, denoted
$\ScreenSuccess{\History}$, if
\begin{equation}
 \left|
 \left\{\tau\in\ColoringIndexSet:
       \tau\text{ is $(i,j)$-separating}\right\}
 \right|>\frac{\ColoringCount}{2}
 \label{app:eq:screen-success-event}
\end{equation}
for every such ordered pair $(i,j)$.  This event is defined using the hidden
residual forest only for analysis; the implemented screen neither observes
nor tests it.  For every separating $\tau$, the two properties used by the
screen proof are
\begin{equation}
 \mathcal N_{\ResidualForest}(j)
 \cap\ColorClass{\tau}{\ColorHash{\tau}{j}}=\emptyset,
 \qquad
 \mathcal N_{\ResidualForest}(j)
 \cap\ColorClass{\tau}{\ColorHash{\tau}{i}}\subseteq\{i\}.
 \label{app:eq:pairwise-local-isolation}
\end{equation}
\begin{example}[Checking which colors separate a pair]
\label{ex:algorithm-separation-20260918}
Use the coloring drawn in \AlgoBlockRef{draft} from
\cref{ex:algorithm-submissions-20260918} and the forest in
\cref{ex:algorithm-peel-centroids-20260918}.  For the tested pair
$(i,j)=(1,10)$, the readout has neighbors
$\mathcal N_{\ResidualForest}(10)=\{1,11\}$.  The two tests in
\eqref{app:eq:pairwise-local-isolation} are therefore
\[
 \{1,11\}\cap\ColorClass{\tau}{2}=\emptyset,\qquad
 \{1,11\}\cap\ColorClass{\tau}{1}=\{1\}\subseteq\{i\}.
\]
Both hold: $h_\tau(1)=1$, $h_\tau(10)=2$, and $h_\tau(11)=3$.
If instead $h_\tau(11)=2$, then neighbor $11$ shares the readout color
and the first intersection contains $11$.  If instead $h_\tau(11)=1$,
then an additional neighbor shares the source color and the second
intersection is $\{1,11\}\not\subseteq\{1\}$.
This checks one pair and one coloring.  It does not assert the
simultaneous strict-majority event \eqref{app:eq:screen-success-event},
which requires the full family of $\ColoringCount$ colorings.
\end{example}

\paragraph{Call index, draft, and information before fresh colors.}
Fix the target, its forest, and the frozen oracle throughout.
The index $s\in[\ScreenCallCap]$ counts screen calls
(\AlgoBlockRef{screen}), not individual colorings or commit rounds.
Define the \emph{reach event}
\[
 R_s:=\{\text{Algorithm~\ref{app:alg:complete-sampler} reaches its $s$th screen call}\}.
\]
Thus $R_s$ is an event, not the round count $\CommitRounds$.
Recall that $\DraftFiller=(\DraftToken{i})_{i\in\ResidualPositions}
\in\Vocab^{\ResidualPositions}$ is the background token vector fixed by
$\DraftGenerator$ before fresh colors are drawn
(Definition~\ref{app:def:draft-generator}).  Its call-indexed version is
\[
 f^{(s)}:=
 \begin{cases}
 \DraftFiller\text{ fixed for call }s\text{ in \AlgoBlockRef{draft}},
     &\text{on }R_s,\\
 \dagger,&\text{on }R_s^{\mathsf c}.
 \end{cases}
\]
The symbol $\dagger$ means \emph{no $s$th call}: it is a formal marker,
not a vocabulary token, $\MASK$, or a draft vector.
Different executions can stop after different numbers of screens.
This marker defines $f^{(s)}$ on every execution, so the proof can sum
over the fixed range $s=1,\ldots,\ScreenCallCap$ without adding any call.

On $R_s$, let $\mathcal T_{s-}$ record the past colors, replies, commits,
and decision-rule state just after $f^{(s)}$ is fixed and before the
current color block is drawn.  On $R_s^{\mathsf c}$, use the terminal
transcript instead.  Define
\[
 \mathcal F_{s-1}:=\sigma(\mathcal T_{s-},f^{(s)}),
 \qquad
 \mathcal F:=\mathcal F_{s-1}\quad\text{at the call under consideration}.
\]
On a reached call, this is precisely the information available between
draft generation and fresh color generation: the history and completed
draft are fixed under this conditioning, but the current colors are not.
It excludes current or future colors and the full seed $W$; fixing $W$
would fix those colors.  No union over possible histories or drafts is needed.

\paragraph{Intuition.}
A low-degree readout creates at most $2\DegreeCutoff$ forbidden color
equalities.  Thus
$\RandomColorCount=8(\DegreeCutoff+1)\simeq\DegreeCutoff$
makes one coloring good with probability greater than $3/4$.
Repeating independently $\RandomColoringCount$ times makes a wrong
majority exponentially unlikely:
\[
 \Pr(\hbox{wrong majority for one pair}\mid\mathcal F)
 \le e^{-\RandomColoringCount/8},\qquad
 \ScreenCallCap N^2 e^{-\RandomColoringCount/8}\le\ScreenFailureBudget.
\]
A union bound over pairs yields \cref{main:eq:conditional-screen-success}.
The chosen $\RandomColoringCount$ is the ceiling of the logarithm
obtained by solving the second inequality.  Conditioning is on the past before the fresh colors.

\begin{lemma}[Fresh randomized screens succeed adaptively\textcolor{red}{; \AlgoBlockRef{draft}}]
\label{app:lem:random-screen-success}
Suppose Algorithm~\ref{app:alg:complete-sampler} reaches a screen call with
realized past $\mathcal F$, including the completed generation of its fixed
draft but preceding the current coloring block.  Conditional on
$\mathcal F$, the fresh family in
\eqref{app:eq:random-color-parameters} satisfies
\begin{equation}
 \Pr\bigl(\ScreenSuccess{\History}^{\mathsf c}\mid\mathcal F\bigr)
 \le \frac{\ScreenFailureBudget}{\ScreenCallCap}.
 \label{app:eq:conditional-screen-failure}
\end{equation}
Consequently, over all decision-rule randomness and commit draws,
\begin{equation}
 \Pr\{\text{some reached screen is unsuccessful}\}
 \le\ScreenFailureBudget.
 \label{app:eq:adaptive-screen-failure}
\end{equation}
\end{lemma}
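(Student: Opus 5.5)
The proof conditions on $\mathcal F$ and works one ordered pair at a time. Under $\mathcal F$ the history $\History$, the residual forest $\ResidualForest$, and the draft are fixed. The fresh block consists of i.i.d.\ uniform colors $\ColorHash{\tau}{v}\in[\ColorCount]$, independent across $\tau$ and $v$ and independent of $\mathcal F$, because it is drawn after $\mathcal F$ is fixed (see \eqref{app:eq:random-color-draw}).

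\emph{Step 1: a single coloring separates a fixed pair with probability above $3/4$.} Fix distinct $i,j$ with $d_{\ResidualForest}(j)\le\DegreeCutoff$ and one coloring $\tau$. Condition \eqref{app:eq:pair-separating-coloring} fails only on a union of equality events:
\begin{itemize}
\item $\ColorHash{\tau}{v}=\ColorHash{\tau}{j}$ for some $v\in\mathcal N_{\ResidualForest}(j)$, which gives at most $\DegreeCutoff$ events;
\item $\ColorHash{\tau}{v}=\ColorHash{\tau}{i}$ for some $v\in\mathcal N_{\ResidualForest}(j)\setminus\{i\}$, which gives at most $\DegreeCutoff$ events.
\end{itemize}
Each event compares two distinct positions. (For the second family, $v\ne i$; $v=j$ is impossible since $j\notin\mathcal N(j)$.) Two distinct positions have independent uniform colors, so each event has probability $1/\ColorCount$. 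A union bound gives failure probability at most $2\DegreeCutoff/\ColorCount=2\DegreeCutoff/(8(\DegreeCutoff+1))<1/4$.

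\emph{Step 2: Hoeffding, then a union bound over pairs.} Let $S_{ij}$ be the number of $(i,j)$-separating colorings among the $\ColoringCount$ colorings. Conditional on $\mathcal F$, the indicators are independent with common mean $p_{ij}>3/4$. The majority condition fails only if $S_{ij}\le\ColoringCount/2$. That is a deviation of more than $\ColoringCount/4$ below the mean, so Hoeffding gives probability at most $\exp(-2\ColoringCount(1/4)^2)=e^{-\ColoringCount/8}$. Summing over fewer than $N^2$ ordered pairs gives
\[
\Pr(\ScreenSuccess{\History}^{\mathsf c}\mid\mathcal F)\le N^2e^{-\ColoringCount/8}.
\]
With $\ColoringCount=\lceil 8\log(\ScreenCallCap N^2/\ScreenFailureBudget)\rceil$ this is at most $\ScreenFailureBudget/\ScreenCallCap$, which is \eqref{app:eq:conditional-screen-failure}.

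\emph{Step 3: from the conditional bound to the adaptive bound.} The delicate point is that the number of screens reached, and the histories at which they occur, are random and depend on earlier colors and commit draws. Use the call index $s\in[\ScreenCallCap]$. The execution performs at most $\ScreenCallCap$ screen calls, since the phase cap forces a break after $\PeelPhaseCap$ completed peel phases and $\ScreenCallCap=\PeelPhaseCap+1$. Write $E_s$ for the event that $R_s$ holds and the $s$th screen is unsuccessful. The reach event $R_s$ and the history at call $s$ are $\mathcal F_{s-1}$-measurable, so
\[
\Pr(E_s)=\mathbb E\bigl[\mathbf 1_{R_s}\Pr(\ScreenSuccess{\History}^{\mathsf c}\mid\mathcal F_{s-1})\bigr]\le\ScreenFailureBudget/\ScreenCallCap.
\]
A final union bound over $s=1,\ldots,\ScreenCallCap$ gives \eqref{app:eq:adaptive-screen-failure}.

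The main care point is Step 3, the measurability claim. One must check that the fresh block $W_s$ is independent of $\mathcal F_{s-1}$, which includes the draft and earlier commit draws. This holds by the construction in \eqref{app:eq:random-color-draw} and Definition~\ref{app:def:draft-generator}, and it is what makes the per-call bound hold uniformly over adaptively reached histories.
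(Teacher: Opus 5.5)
Your proposal is correct and follows the paper's proof essentially step for step. The paper likewise takes a union bound over at most $2\DegreeCutoff$ color-equality events to get nonseparation probability below $1/4$, applies Hoeffding to get $e^{-\ColoringCount/8}$ for the strict separating majority, and uses a union bound over fewer than $N^2$ ordered pairs. It then concludes with $\Pr(E_s)\le\mathbb E[\mathbf 1_{R_s}]\,\ScreenFailureBudget/\ScreenCallCap$, using $R_s\in\mathcal F_{s-1}$ and independence of the fresh block from $\mathcal F_{s-1}$, summed over the at most $\ScreenCallCap$ reached calls.
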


\begin{proof}

\emph{1. Separate one ordered pair with one fresh coloring.}
Conditional on $\mathcal F$, the history, residual forest, and draft are fixed, while
the current colors drawn in \AlgoBlockRef{draft} are mutually independent
and uniform.  Fix an ordered pair
$(i,j)$ with $d_{\ResidualForest}(j)\le\DegreeCutoff$.  The two lines of
\eqref{app:eq:pair-separating-coloring} contain at most
$2d_{\ResidualForest}(j)\le2\DegreeCutoff$ forbidden color equalities.  Each
has probability $1/\RandomColorCount$, so one coloring is nonseparating with
conditional probability at most
\begin{equation}
 \frac{2\DegreeCutoff}{\RandomColorCount}
 =\frac{\DegreeCutoff}{4(\DegreeCutoff+1)}<\frac14.
 \label{app:eq:one-random-coloring-bad}
\end{equation}

\emph{2. Amplify to a strict majority for that pair.}
Let $X_\tau$ indicate that coloring $\tau$ separates this pair.  Conditional
on $\mathcal F$, the $X_\tau$ are independent and have mean at least $3/4$.
Hoeffding's inequality therefore gives
\begin{equation}
 \Pr\left\{\sum_{\tau=1}^{\RandomColoringCount}X_\tau
       \le\frac{\RandomColoringCount}{2}\,\middle|\,\mathcal F\right\}
 \le \exp\{-2\RandomColoringCount(3/4-1/2)^2\}
 =\exp\{-\RandomColoringCount/8\}.
 \label{app:eq:random-majority-hoeffding}
\end{equation}

\emph{3. Take a union bound over ordered pairs at the current history.}
There are fewer than $N^2$ relevant ordered pairs.  A union bound and
\eqref{app:eq:random-color-parameters} give
\[
 N^2e^{-\RandomColoringCount/8}
 \le\ScreenFailureBudget/\ScreenCallCap,
\]
which proves \eqref{app:eq:conditional-screen-failure}.

\emph{4. Sum over adaptively reached calls by conditional expectation.}
Recall that $R_s$ means the algorithm reaches its $s$th screen call,
and $f^{(s)}$ is that call's draft; $\dagger$ marks an unreached call.
On $R_s$, let $H^{(s)}$ be the committed history at that call, and define
\hypertarget{app-reached-screen-failure}{}
\[
 E_s:=\{R_s\text{ occurs and }\ScreenSuccess{H^{(s)}}\text{ fails}\}.
\]
Thus $E_s$ means \emph{call $s$ is reached and fails}; it is false on
$R_s^{\mathsf c}$, with no history or screen evaluated there.
The draft rule terminates before the fresh colors
(Definition~\ref{app:def:draft-generator}), so reaching the call is
decided before those colors.  In the notation above,
$R_s=\{f^{(s)}\ne\dagger\}\in
\mathcal F_{s-1}=\sigma(\mathcal T_{s-},f^{(s)})$.
On $R_s$ apply \eqref{app:eq:conditional-screen-failure};
on $R_s^{\mathsf c}$ the conditional failure probability is zero.  Hence
\[
 \Pr(E_s\mid\mathcal F_{s-1})
 \le
 \begin{cases}
 \ScreenFailureBudget/\ScreenCallCap,&\text{on }R_s,\\
 0,&\text{on }R_s^{\mathsf c}
 \end{cases}
 =\mathbf 1_{R_s}\frac{\ScreenFailureBudget}{\ScreenCallCap}.
\]
Taking expectations gives
\[
 \Pr(E_s)
 =\mathbb E\!\left[
   \Pr(E_s\mid\mathcal F_{s-1})
  \right]
 \le\mathbb E\!\left[
   \mathbf 1_{R_s}\frac{\ScreenFailureBudget}{\ScreenCallCap}
  \right]
 =\Pr(R_s)\frac{\ScreenFailureBudget}{\ScreenCallCap}
 \le\frac{\ScreenFailureBudget}{\ScreenCallCap}.
\]
The failed-screen event is the union of these reached-call failures, so
\[
 \Pr\!\left(\bigcup_{s=1}^{\ScreenCallCap}E_s\right)
 \le\sum_{s=1}^{\ScreenCallCap}\Pr(E_s)
 \le\ScreenCallCap\,
      \frac{\ScreenFailureBudget}{\ScreenCallCap}
 =\ScreenFailureBudget.
\]
This proves \eqref{app:eq:adaptive-screen-failure}.  No independence between
different screen-success events is asserted or needed.  The same separating coloring
isolates every packed column for a fixed ordered pair, so neither the event
nor this union bound requires an additional union over bank values, columns,
or possible drafts.  Draft generation is allowed to depend on the past
oracle transcript; independence of that draft from oracle error is not used.
\end{proof}

\paragraph{Intuition\textcolor{red}{~(\AlgoBlockRef{chunks}, \AlgoBlockRef{probes})}.}
\textcolor{red}{By \cref{app:lem:random-screen-success}, with high probability
every reached screen has a strict separating majority for each pair with
readout degree at most $\DegreeCutoff$.
Fix one such coloring with different source and readout colors.
The readout chunk contains no neighbor of $j$, and the probed source class
contains no neighbor of $j$ other than possibly $i$
(\cref{app:eq:pairwise-local-isolation}).
Thus every neighbor is observed, and only $i$ can change among them.
By \cref{app:lem:residual-markov}, the exact row equals the row obtained by
varying only $i$ at the common draft boundary in
\cref{app:eq:fixed-draft-boundary-row}; this is the boundary at which
(RT; \cref{app:eq:RT}) and (UEN; \cref{app:eq:UEN}) apply.}

\begin{lemma}[Exact rows at a fixed draft boundary\textcolor{red}{; \AlgoBlockRef{chunks}, \AlgoBlockRef{probes}}]
\label{app:lem:fixed-draft-rows}
Suppose preprocessing is feasible and $\MaxBankSize\ge1$.
Fix $\History=(G,x_G)$, $\DraftFiller\in\Vocab^{\ResidualPositions}$, and
distinct residual positions $i,j$.  Define the complete boundary $z$ on
$\PositionSet\setminus\{i,j\}$ by $z_k=x_k$ for $k\in G$ and
$z_k=\DraftToken{k}$ otherwise.  For an $(i,j)$-separating coloring with
different source and readout colors, every $\kappa\in\ScreenColumnSet$
satisfies
\begin{equation}
 \ExactRow{
   \ProbeState{\kappa}{\tau}{\ColorHash{\tau}{i}}
              {\ReadoutChunkOf{\tau}{j}}}{j}
 =\BoundaryRow{i}{j}{\PackedColumn{\kappa}{i}}{z}.
 \label{app:eq:fixed-draft-boundary-row}
\end{equation}
In particular, the exact row family contains all source values in the
local vocabulary bank $\TokenBank{i}$ and its tail representative
$\TailRepresentative{i}$, at this same boundary.
Padding may additionally contribute the row for $\DraftToken{i}$.
\end{lemma}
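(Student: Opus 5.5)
The plan is to reduce \eqref{app:eq:fixed-draft-boundary-row} to the second claim of \cref{app:lem:residual-markov}. That lemma says the exact row at $j$ depends only on $x_G$ and on the values of the residual neighbors of $j$. Fix the probe state $y:=\ProbeState{\kappa}{\tau}{\ColorHash{\tau}{i}}{\ReadoutChunkOf{\tau}{j}}$. Write $c^{\mathrm{rd}}=\ColorHash{\tau}{j}$ and $c^{\mathrm{src}}=\ColorHash{\tau}{i}$; by hypothesis these differ. Then $j\in\ReadoutChunkOf{\tau}{j}$ is masked in $y$. Since $i$ lies in the source class and not in the readout chunk, $y_i=\PackedColumn{\kappa}{i}$. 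Compare $y$ with the single-source reference state: it reveals $x_G$, sets position $i$ to $a:=\PackedColumn{\kappa}{i}$, masks only $j$, and reveals $z_k=\DraftToken{k}$ at all other residual $k$. The exact row of this reference state is $\BoundaryRow{i}{j}{a}{z}$ by definition \eqref{app:eq:boundary-row}.

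The core step is to check that $y$ and the reference state agree on every residual neighbor of $j$, with the same values. Take $v\in\mathcal N_{\ResidualForest}(j)$. By the first relation in \eqref{app:eq:pairwise-local-isolation}, $v$ does not have color $c^{\mathrm{rd}}$. Hence $v$ is not in the readout chunk, which is a subset of the color class, and so $v$ is revealed in $y$. If $v=i$, both states assign $a$. If $v\ne i$, the second relation in \eqref{app:eq:pairwise-local-isolation} shows that $v$ lacks color $c^{\mathrm{src}}$. It therefore falls into the ``otherwise'' branch of \eqref{app:eq:screen-probe-state} and takes the value $\DraftToken{v}=z_v$. Committed coordinates agree by construction. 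The two states can differ only at residual nonneighbors of $j$: in $y$ these may be masked (other chunk members) or set to column tokens (other source-class members), while in the reference state they sit at draft values.

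\Cref{app:lem:residual-markov} allows residual nonneighbors to be masked, changed, or revealed without changing the exact row once all neighbors are fixed. So the two exact rows coincide, which gives \eqref{app:eq:fixed-draft-boundary-row}. The step I expect to need the most care is this bookkeeping: applying that lemma requires that all neighbors of $j$ be revealed with matching values, and this depends on both separation conditions together with $c^{\mathrm{src}}\ne c^{\mathrm{rd}}$. No further obstacle arises.

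For the ``in particular'' claims, range over $\kappa\in\ScreenColumnSet$. Because $\MaxBankSize\ge1$, the columns $\kappa\in[\BankSize{i}]$ give every token of $\TokenBank{i}$, and the column $\kappa=\TailColumn$ gives $\TailRepresentative{i}$. Padding columns $\kappa\in[\MaxBankSize]\setminus[\BankSize{i}]$ give $\DraftToken{i}$, all at the same boundary $z$. None of these cases depends on $\kappa$ beyond the value $a$.
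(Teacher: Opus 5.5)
Your proposal is correct and follows essentially the same route as the paper. Both arguments use the two relations in \eqref{app:eq:pairwise-local-isolation}, together with the chunk lying inside the readout color class and $c^{\mathrm{src}}\ne c^{\mathrm{rd}}$, to show that every residual neighbor of $j$ is revealed with the same value as in the single-source boundary state, then invoke \cref{app:lem:residual-markov} to discard all differences at nonneighbors, and read the bank, tail, and padding rows off \eqref{app:eq:packed-column}.
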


\begin{proof}
By \AlgoBlockNamedRef{chunks}, the readout chunk is contained in the
readout color class.
Thus the two isolation clauses in \eqref{app:eq:pairwise-local-isolation}
give, respectively,
\[
 \mathcal N_{\ResidualForest}(j)\cap\ReadoutChunkOf{\tau}{j}
 =\varnothing,
 \qquad
 \mathcal N_{\ResidualForest}(j)\cap
       \ColorClass{\tau}{\ColorHash{\tau}{i}}
 \subseteq\{i\}.
\]
All residual neighbors of $j$ are therefore observed.  Since the source
and readout colors differ, $i$ is also outside the masked readout chunk.
In the state submitted by \AlgoBlockNamedRef{probes}, the relevant
observed coordinates are exactly
\[
 \left(\ProbeState{\kappa}{\tau}{\ColorHash{\tau}{i}}
                    {\ReadoutChunkOf{\tau}{j}}\right)_v
 =
 \begin{cases}
 x_v,&v\in G,\\
 \PackedColumn{\kappa}{i},&v=i,\\
 \DraftToken{v},&v\in\mathcal N_{\ResidualForest}(j)\setminus\{i\}.
 \end{cases}
\]
These are the same values as in the complete boundary with source value
$\PackedColumn{\kappa}{i}$ and remaining values $z$.  Conditional on
$X_G=x_G$, Lemma~\ref{app:lem:residual-markov} says that the row for $j$
depends only on its residual neighbors.  Completing or changing all other
coordinates to $z$ therefore leaves it unchanged, proving the identity.
The bank columns and column $\TailColumn$ give the asserted subfamily by
\eqref{app:eq:packed-column}.  The boundary is independent of the coloring
because one draft is fixed across the entire screen.
\end{proof}

\textcolor{red}{\Cref{app:lem:random-screen-success} bounds the probability
that any reached screen fails.  Under the screen-success event,
\cref{app:lem:low-degree-screen} formalizes \eqref{main:eq:exact-low-degree-row}
and identifies the output of
\AlgoBlockNamedRef{aggregate} with the true neighbor set for every readout
of residual degree at most $\DegreeCutoff$.}
The target obeys the forest and RT--UEN\textcolor{red}{~%
(\cref{app:eq:RT,app:eq:UEN})} assumptions
(Assumptions~\ref{app:ass:forest-structure}
and~\ref{app:ass:response-regularity}), and oracle rows satisfy the TV bound
\eqref{app:eq:oracle-tv-radius}.
Recall the general resolution
$\ScreenResolution=4\RowTVError+\TailTolerance$ from
\eqref{app:eq:diameter-screen-resolution}; the main text uses
$\RowTVError=\TargetTolerance/(2\sqrt N)$.
The lemma's screening-margin hypothesis $\ScreenResolution<\EdgeSignal$
means that the target signal exceeds the row-noise and omitted-response allowances.
This is a sufficient screening condition, not a consequence of RT--UEN\textcolor{red}{~%
(\cref{app:eq:RT,app:eq:UEN})} alone.
Preprocessing feasibility is specified in
Definition~\ref{app:def:preprocessing}, and
$\ScreenSuccess{\History}$ is defined in
\eqref{app:eq:screen-success-event}.

\begin{lemma}[Low-degree screen contract\textcolor{red}{; \AlgoBlockRef{aggregate}}]
\label{app:lem:low-degree-screen}
\label{main:lem:low-degree-screen}
Suppose preprocessing is feasible and
$\ScreenResolution<\EdgeSignal$ holds. At any history $\History$,
with any draft fixed throughout the screen, $\ScreenSuccess{\History}$
implies
\begin{equation*}
 \ScreenRow{j}=\mathcal N_{\ResidualForest}(j)
 \quad\text{for every }j\in\ResidualPositions
 \text{ with }d_{\ResidualForest}(j)\le\DegreeCutoff.
\end{equation*}
\end{lemma}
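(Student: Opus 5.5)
Fix a low-degree readout $j$ with $d_{\ResidualForest}(j)\le\DegreeCutoff$ and an arbitrary $i\in\ResidualPositions\setminus\{j\}$. The plan is to show that every $(i,j)$-separating coloring $\tau$ casts the correct vote, namely $\ScreenVote{\tau}{i}{j}=\mathbf 1\{\{i,j\}\in\TargetEdges\}$. Granting this, the rest is a counting step. Under $\ScreenSuccess{\History}$, more than $\ColoringCount/2$ colorings separate $(i,j)$. If $i$ is a neighbor, these colorings alone give more than $\ColoringCount/2$ votes, so $i\in\CandidateRow{j}$. If $i$ is not a neighbor, those colorings vote zero, so at most fewer than $\ColoringCount/2$ votes remain, and $i\notin\CandidateRow{j}$. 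Hence $\CandidateRow{j}=\mathcal N_{\ResidualForest}(j)$. This set has size at most $\DegreeCutoff$, so \cref{app:def:row-selector} returns it unchanged, and $\ScreenRow{j}=\mathcal N_{\ResidualForest}(j)$. The degenerate case $\MaxBankSize=0$ needs a separate remark: it cannot occur when a neighbor exists, because feasibility and RT force the bank to contain a token, or else we argue as in the main text. I would check this quickly and otherwise assume $\MaxBankSize\ge1$.

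For the per-coloring claim, first handle equal colors. If $\ColorHash{\tau}{i}=\ColorHash{\tau}{j}$, the vote is $0$. By readout separation, no neighbor of $j$ shares $j$'s color, so $i$ is not a neighbor and the vote is correct. If the colors differ, \cref{app:lem:fixed-draft-rows} identifies each exact row in the family with $\BoundaryRow{i}{j}{\PackedColumn{\kappa}{i}}{z}$ at the common draft boundary $z$. Each observed row lies within $\RowTVError$ of its exact counterpart by \eqref{app:eq:oracle-tv-radius}.

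\emph{Nonedge.} By \cref{app:lem:residual-markov}, the exact rows do not depend on the value at $i$. The observed diameter is therefore at most $2\RowTVError$, and the vote is $0$.

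\emph{Edge.} By (UEN) there exist $a,b$ with $\dTV(\BoundaryRow{i}{j}{a}{z},\BoundaryRow{i}{j}{b}{z})\ge\EdgeSignal$. Map each of $a,b$ to a probed value $\phi(a)$: the token itself if it lies in $\TokenBank{i}$, and $\TailRepresentative{i}$ otherwise. By \cref{app:lem:preprocessing-certificates}, every token outside the bank, and $\TailRepresentative{i}$ itself, lies in $\TailSet{i}{\TailThreshold}$. (RT) then gives $\dTV(\BoundaryRow{i}{j}{a}{z},\BoundaryRow{i}{j}{\phi(a)}{z})\le\min\{1,\TailThreshold\}\le\TailTolerance$ by feasibility. (The general version uses $\ResponseConstant\TailThreshold^{\ResponseExponent}$, and the same bound holds since $\TailThreshold\in\FeasibleThresholdGrid$.) If both $a$ and $b$ are tail tokens, their responses differ by at most $\TailTolerance<\EdgeSignal$. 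That contradicts the choice of $a,b$, so at most one of them is replaced.

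The triangle inequality then gives exact-row diameter at least $\EdgeSignal-\TailTolerance$ among the probed columns, and observed diameter at least $\EdgeSignal-\TailTolerance-2\RowTVError$. This exceeds $2\RowTVError$ exactly when $\ScreenResolution<\EdgeSignal$, so the vote is $1$.

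The main obstacle is the bookkeeping in the edge case. The witnessing tokens may both be tail tokens, or they may be replaced by a representative that is not itself among the witnesses, and only one tail slack $\TailTolerance$ should be charged. The case analysis above handles both concerns: either one witness is replaced, or the assumption contradicts $\TailTolerance<\EdgeSignal$.
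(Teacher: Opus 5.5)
Your proposal is correct and follows essentially the paper's proof. Every separating coloring votes correctly: equal colors are excluded by (R), nonedges are handled via \cref{app:lem:residual-markov}, and edges via \cref{app:lem:fixed-draft-rows} with a single $\TailTolerance$ loss. The strict majority and the selector's nonoverflow identity then finish. Your direct projection of the (UEN) witnesses onto bank/tail columns is the paper's bank--bank/mixed/tail--tail case split; the paper presents it as a contrapositive ($i\notin\CandidateRow{j}\Rightarrow$ queried exact diameter $\le4\RowTVError\Rightarrow$ full response $\le\ScreenResolution$) and also records the same quantitative bound you derive.

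Your tentative remark on $\MaxBankSize=0$ is settled by your own tail--tail step, exactly as in the paper. An empty bank at $i$ bounds the full response from $i$ by $\TailTolerance<\EdgeSignal$, so no edge meets $i$. If all banks are empty, every true neighborhood is empty, and the returned rows are empty by definition.
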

High-degree rows have only the construction's size cap; no subset or
correctness guarantee is asserted for them.

\paragraph{Intuition for the screen contract.}
At a separating coloring, all exact rows for a nonedge coincide, so its
observed diameter is at most $2\RowTVError$.  For an edge, restricting source
values to the vocabulary bank plus tail representative,
$\TokenBank{i}\cup\{\TailRepresentative{i}\}$, loses at most
$\TailTolerance$ of the full response
diameter, and row noise loses at most another $2\RowTVError$.  Thus the
edge's observed diameter is at least
\[
 \EdgeSignal-\TailTolerance-2\RowTVError>2\RowTVError.
\]
The strict inequality is exactly
$4\RowTVError+\TailTolerance<\EdgeSignal$.  The proof below verifies the
bank/tail comparison, all empty-bank cases, and the strict-majority step.

\begin{proof}[Proof of \AppLowDegreeScreenRef]

\emph{1. Fix a successful screen and the hidden exact rows.}
We analyze the votes and selected rows of \AlgoBlockNamedRef{aggregate}.
If $\MaxBankSize=0$, use the empty-bank argument in the final paragraph
below.  Otherwise assume $\MaxBankSize\ge1$.
Fix a history $\History$ and a screen draft $\DraftFiller$ for which
$\ScreenSuccess{\History}$ holds, a readout
$j\in\ResidualPositions$ with
$d_{\ResidualForest}(j)\le\DegreeCutoff$, and a candidate source
$i\in\ResidualPositions\setminus\{j\}$.  By
\eqref{app:eq:screen-success-event}, more than $\ColoringCount/2$
evaluations are $(i,j)$-separating.

Recall the binary dependence-test vote $\ScreenVote{\tau}{i}{j}$ in
\eqref{app:eq:screen-vote}: it is zero if
$\ColorHash{\tau}{i}=\ColorHash{\tau}{j}$, and otherwise
$\ScreenVote{\tau}{i}{j}
 =\mathbf 1\{\ScreenDiameter{\tau}{i}{j}>2\RowTVError\}$.
Here $\ScreenDiameter{\tau}{i}{j}$ is the observed TV diameter in
\eqref{app:eq:observed-screen-diameter}.
By \eqref{app:eq:provisional-row}, $i\in\CandidateRow{j}$ exactly when
$\sum_{\tau\in\ColoringIndexSet}\ScreenVote{\tau}{i}{j}
 >\ColoringCount/2$.

For a separating evaluation whose source and readout colors differ, define
the hidden exact rows
\begin{align}
 \ProbeExactRow{\tau}{\kappa}{i}{j}
 &:=\ExactRow{
   \ProbeState{\kappa}{\tau}{\ColorHash{\tau}{i}}
              {\ReadoutChunkOf{\tau}{j}}}{j},
 \qquad \kappa\in\ScreenColumnSet,
 \label{app:eq:hidden-screen-rows}\\
 \ExactRowFamily{\tau}{i}{j}
 &:=\{\ProbeExactRow{\tau}{\kappa}{i}{j}:
             \kappa\in\ScreenColumnSet\}.
 \label{app:eq:hidden-screen-row-family}
\end{align}

\emph{2. No false positives.}
Assume $\{i,j\}\notin E(\ResidualForest)$ and fix a separating $\tau$.
If $\ColorHash{\tau}{i}=\ColorHash{\tau}{j}$, then
$\ScreenVote{\tau}{i}{j}=0$ by \eqref{app:eq:screen-vote}.
Otherwise, the probe state in \eqref{app:eq:screen-probe-state} masks
$\ReadoutChunkOf{\tau}{j}$, so the other masked positions are exactly
$\ReadoutChunkOf{\tau}{j}\setminus\{j\}$.
Since $\ReadoutChunkOf{\tau}{j}
\subseteq\ColorClass{\tau}{\ColorHash{\tau}{j}}$, the first relation in
\eqref{app:eq:pairwise-local-isolation} gives
$\mathcal N_{\ResidualForest}(j)\cap\ReadoutChunkOf{\tau}{j}=\varnothing$.
The positions that can change across columns lie in the source class
$\ColorClass{\tau}{\ColorHash{\tau}{i}}$.
The second relation in \eqref{app:eq:pairwise-local-isolation} bounds its
intersection with $\mathcal N_{\ResidualForest}(j)$ by $\{i\}$;
this intersection is empty because $i\notin\mathcal N_{\ResidualForest}(j)$.
Thus all bank- and tail-column states agree on the committed history and
reveal every residual neighbor of $j$ at its fixed draft value.
Lemma~\ref{app:lem:residual-markov} gives
\begin{equation}
 \ProbeExactRow{\tau}{\TailColumn}{i}{j}
 =\ProbeExactRow{\tau}{\kappa}{i}{j}
 \quad\text{for every }\kappa\in\ScreenColumnSet.
 \label{app:eq:nonedge-exact-row-equality}
\end{equation}
Every observed row is within $\RowTVError$ of this common exact row by
\eqref{app:eq:oracle-tv-radius}; hence every pair of observed rows is within
$2\RowTVError$, and $\ScreenVote{\tau}{i}{j}=0$.
A strict majority of all evaluations are separating, so
$\sum_{\tau\in\ColoringIndexSet}\ScreenVote{\tau}{i}{j}<\ColoringCount/2$.
Thus $i\notin\CandidateRow{j}$ by \eqref{app:eq:provisional-row}, proving
\begin{equation}
 \CandidateRow{j}\subseteq\mathcal N_{\ResidualForest}(j).
 \label{app:eq:no-false-positive-row}
\end{equation}

\emph{3. No false negatives when the local vocabulary bank is nonempty.}
Now assume $\{i,j\}\in E(\ResidualForest)$ and $\BankSize{i}\ge1$.
For every separating $\tau$, the first relation in
\eqref{app:eq:pairwise-local-isolation} implies
$\ColorHash{\tau}{i}\ne\ColorHash{\tau}{j}$.
The second relation in \eqref{app:eq:pairwise-local-isolation} makes $i$
the only residual neighbor of $j$ in the probed source class.  Fix a complete
boundary $z\in\Vocab^{\PositionSet\setminus\{i,j\}}$ that agrees with the
history and assigns $\DraftToken{v}$ to every uncommitted coordinate
$v\ne i,j$.  Lemma~\ref{app:lem:fixed-draft-rows} gives
\begin{align}
 \ProbeExactRow{\tau}{\TailColumn}{i}{j}
 &=\BoundaryRow{i}{j}{\TailRepresentative{i}}{z},\notag\\
 \ProbeExactRow{\tau}{\kappa}{i}{j}
 &=\begin{cases}
 \BoundaryRow{i}{j}{\BankToken{i}{\kappa}}{z},
     &\kappa\le\BankSize{i},\\
 \BoundaryRow{i}{j}{\DraftToken{i}}{z},
     &\BankSize{i}<\kappa\le\MaxBankSize.
 \end{cases}
 \label{app:eq:isolated-boundary-row-family}
\end{align}
Every separating $\tau$ therefore contains the same bank-plus-tail exact
subfamily.  Its diameter is
\begin{equation}
 D^{\mathrm{ex}}_{\History}(i,j)
 :=\max_{a,b\in\TokenBank{i}\cup\{\TailRepresentative{i}\}}
 \dTV\bigl(\BoundaryRow{i}{j}{a}{z},
            \BoundaryRow{i}{j}{b}{z}\bigr).
 \label{app:eq:queried-exact-diameter}
\end{equation}
Choose columns $\kappa,\kappa'$ whose exact rows attain this finite
bank-plus-tail maximum.  Padding can add rows but cannot remove these
columns.  Applying \eqref{app:eq:oracle-tv-radius} to both rows gives
\begin{align*}
 \ScreenDiameter{\tau}{i}{j}
 &\ge\dTV\bigl(\ProbeObservedRow{\tau}{\kappa}{i}{j},
                  \ProbeObservedRow{\tau}{\kappa'}{i}{j}\bigr)\\
 &\ge\dTV\bigl(\ProbeExactRow{\tau}{\kappa}{i}{j},
                  \ProbeExactRow{\tau}{\kappa'}{i}{j}\bigr)\\
 &\quad-\dTV\bigl(\ProbeObservedRow{\tau}{\kappa}{i}{j},
                    \ProbeExactRow{\tau}{\kappa}{i}{j}\bigr)\\
 &\quad-\dTV\bigl(\ProbeObservedRow{\tau}{\kappa'}{i}{j},
                    \ProbeExactRow{\tau}{\kappa'}{i}{j}\bigr)\\
 &\ge D^{\mathrm{ex}}_{\History}(i,j)-2\RowTVError.
\end{align*}
If $D^{\mathrm{ex}}_{\History}(i,j)>4\RowTVError$, this is larger than
$2\RowTVError$, so $\ScreenVote{\tau}{i}{j}=1$ for every separating $\tau$.
The strict majority then puts $i$ in $\CandidateRow{j}$ by
\eqref{app:eq:provisional-row}.
Taking the contrapositive,
\begin{equation}
 i\notin\CandidateRow{j}
 \quad\Longrightarrow\quad
 D^{\mathrm{ex}}_{\History}(i,j)\le4\RowTVError.
 \label{app:eq:missed-queried-diameter}
\end{equation}
To rule out this missed edge, it suffices to show
$\DirectedResponse{i}{j}{z}\le\ScreenResolution<\EdgeSignal$,
contradicting (UEN\textcolor{red}{; \cref{app:eq:UEN}}).  We establish this full-response bound next.

\emph{Recovering the full response diameter.}
The tail representative $\TailRepresentative{i}$ and every token outside
the vocabulary bank $\TokenBank{i}$ lie in
$\TailSet{i}{\TailThreshold}$ by
Lemma~\ref{app:lem:preprocessing-certificates}.  Feasibility of
$\TailThreshold$ and (RT\textcolor{red}{; \cref{app:eq:RT}}) therefore give
\begin{equation}
 \max_{a,b\notin\TokenBank{i}}
 \dTV\bigl(\BoundaryRow{i}{j}{a}{z},
            \BoundaryRow{i}{j}{b}{z}\bigr)
 \le\min\{1,\ResponseConstant\TailThreshold^{\ResponseExponent}\}
 \le\TailTolerance.
 \label{app:eq:tail-diameter-at-screen-boundary}
\end{equation}
Suppose $i\notin\CandidateRow{j}$.  In the mixed case
$a\in\TokenBank{i}$ and $b\notin\TokenBank{i}$, the triangle inequality
through the tail representative gives
\begin{align*}
 &\dTV\bigl(\BoundaryRow{i}{j}{a}{z},\BoundaryRow{i}{j}{b}{z}\bigr)\\
 &\quad\le
 \dTV\bigl(\BoundaryRow{i}{j}{a}{z},
           \BoundaryRow{i}{j}{\TailRepresentative{i}}{z}\bigr)
 +\dTV\bigl(\BoundaryRow{i}{j}{\TailRepresentative{i}}{z},
             \BoundaryRow{i}{j}{b}{z}\bigr)\\
 &\quad\le 4\RowTVError+\TailTolerance.
\end{align*}
The first distance uses \eqref{app:eq:missed-queried-diameter}; the second
uses \eqref{app:eq:tail-diameter-at-screen-boundary}.  Interchanging $a,b$
handles the other mixed case.  Together with the bank--bank and tail--tail
bounds, this gives, for arbitrary $a,b\in\Vocab$,
\begin{equation}
 \dTV\bigl(\BoundaryRow{i}{j}{a}{z},
            \BoundaryRow{i}{j}{b}{z}\bigr)
 \le
 \begin{cases}
  4\RowTVError,&a,b\in\TokenBank{i},\\
  4\RowTVError+\TailTolerance,
    &\text{exactly one of $a,b$ lies in }\TokenBank{i},\\
  \TailTolerance,&a,b\notin\TokenBank{i}.
 \end{cases}
 \label{app:eq:screen-three-case-diameter}
\end{equation}
The maximum of the three upper bounds in
\eqref{app:eq:screen-three-case-diameter} is $\ScreenResolution$ by
\eqref{app:eq:diameter-screen-resolution}.  Hence a missed true edge would
have directed response at most $\ScreenResolution$, contradicting (UEN\textcolor{red}{; \cref{app:eq:UEN}}) and
$\ScreenResolution<\EdgeSignal$.

The same three comparisons also give the quantitative link used in the
intuition, without assuming that the edge was missed:
\begin{align*}
 \DirectedResponse{i}{j}{z}
 &\le\max\{D^{\mathrm{ex}}_{\History}(i,j),
           D^{\mathrm{ex}}_{\History}(i,j)+\TailTolerance,
           \TailTolerance\}\\
 &=D^{\mathrm{ex}}_{\History}(i,j)+\TailTolerance,\\
 \ScreenDiameter{\tau}{i}{j}
 &\ge D^{\mathrm{ex}}_{\History}(i,j)-2\RowTVError\\
 &\ge\DirectedResponse{i}{j}{z}-\TailTolerance-2\RowTVError\\
 &\ge\EdgeSignal-\TailTolerance-2\RowTVError>2\RowTVError.
\end{align*}
Here the three entries are the bank--bank, mixed, and tail--tail cases;
the last line uses (UEN\textcolor{red}{; \cref{app:eq:UEN}}) for this true edge.

\emph{4. Empty vocabulary banks and row selection.}
If $\BankSize{i}=0$, every source token lies in the certified tail, so
\eqref{app:eq:tail-diameter-at-screen-boundary} bounds the entire directed
response by
$\TailTolerance\le\ScreenResolution<\EdgeSignal$.  Such an $i$ cannot be
adjacent to $j$ by (UEN\textcolor{red}{; \cref{app:eq:UEN}}).  This also covers $\MaxBankSize=0$: then every local
vocabulary bank $\TokenBank{i}$ is empty, every returned row is defined to
be empty, and no residual edge
incident to a low-degree readout can exist.

\emph{5. Preserve the exact candidate row under public selection.}
We have proved that $\CandidateRow{j}$ equals the true neighbor set.  Its size
is at most $\DegreeCutoff$, so every admissible selector in
Definition~\ref{app:def:row-selector}, as applied in \AlgoBlockRef{aggregate},
preserves it by the nonoverflow identity condition.  Therefore
$\ScreenRow{j}=\mathcal N_{\ResidualForest}(j)$.
\end{proof}

\subsection{\texorpdfstring{\textcolor{red}{Steps 4--6: }}{Steps 4--6: }Peeling, terminal certification, and resources}
\label{app:upper-proof-structure}

The screen contract now yields the contraction in
\cref{main:eq:peeling-contraction} and sound stopping \textcolor{red}{in
\AlgoBlockNamedRef{peel}, certifying the graph in \AlgoBlockNamedRef{terminal}}.
We then count resources \textcolor{red}{across the complete sampler, including
\AlgoBlockNamedRef{centroid} and \AlgoGuardRef}, distinguishing
successful-path estimates from unconditional implementation caps.
\begin{example}[One peel commit, then two centroid batches]
\label{ex:algorithm-peel-centroids-20260918}
Let $\ResidualPositions=[15]$, $\DegreeCutoff=9$, and let the current
residual forest have edge set
\[
 E=\{\{1,i\}:2\le i\le9\}
   \cup\{\{1,10\},\{10,11\},\{11,12\},
          \{1,13\},\{13,14\},\{14,15\}\}.
\]
Vertex $1$ has degree $10$; every other vertex has degree at most $2$.
Suppose the reached screens are successful.  By
\AppLowDegreeScreenRef, $\ScreenRow{u}=\mathcal N_{\ResidualForest}(u)$
for $u\ne1$, whereas the row $\ScreenRow{1}$ need not be exact.
Nevertheless, \eqref{app:eq:claim-and-peel-set} gives
\begin{align*}
 \ClaimCount{1}
 &=\sum_{u\ne1}\mathbf1\{1\in\mathcal N_{\ResidualForest}(u)\}=10>9/2,\\
 \ClaimCount{v}
 &=|\mathcal N_{\ResidualForest}(v)\setminus\{1\}|
     +\mathbf1\{v\in\ScreenRow{1}\}
 \le2+1=3<9/2 \quad(v\ne1).
\end{align*}
Hence $\PeelSet=\{1\}$, regardless of the high-degree row's contents.
In \AlgoBlockRef{peel}, the sampler freezes this set and commits
vertex $1$ alone.  The residual
components are then
\[
 \{2\},\ldots,\{9\},\qquad
 10\text{--}11\text{--}12,\qquad 13\text{--}14\text{--}15.
\]
The next pass through \AlgoBlockRef{screen} uses the new history, before
the stopping test in \AlgoBlockRef{peel}.
Every residual degree is now at most $2$, so all its rows are exact.
Every incoming count is at most $2<9/2$, giving an empty peel set and
the exact terminal forest in \AlgoBlockRef{terminal}.
The successive batches of \AlgoBlockRef{centroid} are
\[
 B_1=\{2,\ldots,9,11,14\},\quad |B_1|=10,
 \qquad
 B_2=\{10,12,13,15\},\quad |B_2|=4.
\]
The eight isolates and the two path centers form $B_1$; deleting them
leaves the four singleton components in $B_2$.
Thus the example uses $1+2=3$ commit rounds.
This count excludes screen submissions and is not the total oracle depth.
\Cref{fig:algorithm-workflow-20260918} keeps the vertex positions fixed
throughout these steps.
\end{example}

\begin{figure}[H]
 \centering
 \includegraphics[width=\linewidth]{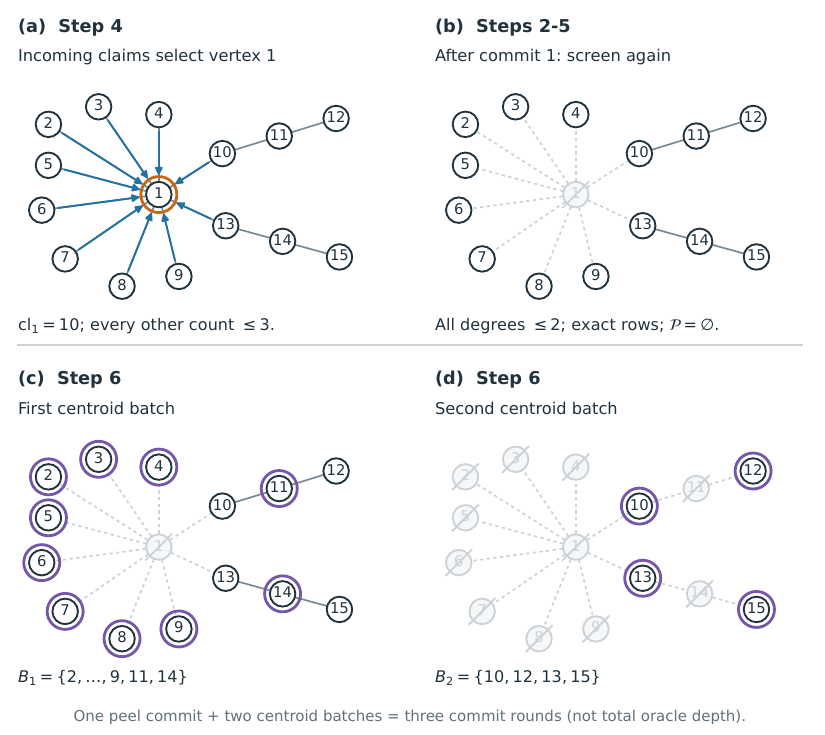}
 \caption{The successful-screen path of
 \cref{ex:algorithm-peel-centroids-20260918}, with $N=15$ and $d=9$.
 Orange and violet rings denote the next peel or centroid commit;
 pale crossed vertices are already committed. The fills here are
 neutral, not random colors. The step labels refer to
 Algorithm~\ref{app:alg:complete-sampler}. A fresh screen separates (a) from (b);
 the terminal forest then needs no further discovery. All vertices
 remain individually visible.}
 \label{fig:algorithm-workflow-20260918}
\end{figure}

\paragraph{Intuition.}
A high-degree vertex that receives at most $\DegreeCutoff/2$ claims has
at most $\DegreeCutoff/2$ low-degree neighbors, because those neighbors
report it exactly.  It must therefore have more than $\DegreeCutoff/2$
neighbors inside the high-degree subforest.  A forest has fewer than
twice as many total degrees as vertices, so only a fraction at most
$4/\DegreeCutoff$ of high-degree vertices can survive one phase.
When $\DegreeCutoff=N^{\DegreeExponent+o(1)}$ with fixed
$\DegreeExponent>0$, the required number of phases is
$O_{\DegreeExponent}(1)$.

\Needspace{12\baselineskip}
\begin{lemma}[Successful-path contraction and sound stopping\textcolor{red}{; \AlgoBlockRef{peel}, \AlgoBlockRef{terminal}}]
\label{app:lem:successful-path-structure}
Under the setup of Appendix~\ref{app:setup}, suppose preprocessing is feasible
and $\ScreenResolution<\EdgeSignal$.  Consider an execution path on which
every reached screen is successful in the sense of
\eqref{app:eq:screen-success-event}.  Let
$\HighDegreeCount{k}$ be the number of vertices whose residual degree exceeds
$\DegreeCutoff$ at the beginning of peel phase $k$.  After a nonempty peel
phase,
\begin{equation}
 \HighDegreeCount{k+1}
 \le\frac{4}{\DegreeCutoff}\HighDegreeCount{k},
 \label{app:eq:proof-peeling-contraction}
\end{equation}
with strict inequality whenever $\HighDegreeCount{k}>0$.
Moreover, either peel-loop stopping test in \AlgoBlockRef{peel} of
Algorithm~\ref{app:alg:complete-sampler}---an empty peel set or the
peel-phase cap---implies that the terminal residual
maximum degree is at most $\DegreeCutoff$.  The terminal certified graph is
therefore the true residual forest, so no cycle repair is performed.
\end{lemma}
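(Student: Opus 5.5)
I will work at a fixed successful path and derive all three conclusions from \AppLowDegreeScreenRef, which applies at every reached screen because each screen on this path is successful.

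\emph{Claims on the current forest.} Fix a peel phase $k$ with history $\History$, and write $L$ for the residual vertices of degree at most $\DegreeCutoff$ and $\HighDegreeSet$ for the rest. By \AppLowDegreeScreenRef, $\ScreenRow{u}=\mathcal N_{\ResidualForest}(u)$ for every $u\in L$. Consequently every $v$ satisfies $\ClaimCount{v}\ge|\mathcal N_{\ResidualForest}(v)\cap L|$, since each low-degree neighbor of $v$ lists $v$. Every row also obeys the selector cap $|\ScreenRow{u}|\le\DegreeCutoff$, but the contraction argument only needs the lower bound on claims.

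\emph{Contraction.} Let $v\in\HighDegreeSet$ be a vertex not in $\PeelSet$, so $\ClaimCount{v}\le\DegreeCutoff/2$. Then $v$ has at most $\DegreeCutoff/2$ neighbors in $L$, and since $d_{\ResidualForest}(v)>\DegreeCutoff$ it has more than $\DegreeCutoff/2$ neighbors in $\HighDegreeSet$. Let $S$ be the set of such unpeeled high-degree vertices. Summing degrees in the induced forest $\ResidualForest[\HighDegreeSet]$ gives
\[
 |S|\,\DegreeCutoff/2<\sum_{v\in\HighDegreeSet}d_{\ResidualForest[\HighDegreeSet]}(v)\le2(|\HighDegreeSet|-1)<2|\HighDegreeSet|
\]
whenever $S\ne\emptyset$. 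Hence $|S|<(4/\DegreeCutoff)|\HighDegreeSet|$ if $\HighDegreeSet\neq\emptyset$, and $|S|=0$ trivially otherwise. The phase commits all of $\PeelSet$, and deleting vertices cannot raise residual degrees. So every vertex of degree above $\DegreeCutoff$ at the next phase start lies in $S$, which gives \eqref{app:eq:proof-peeling-contraction} with the stated strictness. The point to check carefully is that the peel set is frozen from the screen at the start of the phase, while the next count $\HighDegreeCount{k+1}$ is taken on the forest left after those singleton commits. Monotonicity of degree under vertex deletion closes this gap.

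\emph{Stopping.} I expect this to be the main subtlety. If the peel set is empty, the argument above with $\PeelSet=\emptyset$ gives $|\HighDegreeSet|<(4/\DegreeCutoff)|\HighDegreeSet|$. Since $\DegreeCutoff\ge9$, this forces $\HighDegreeSet=\emptyset$. If instead the phase cap $\PeelPhaseCap$ is reached, the terminal screen sees the history after $\PeelPhaseCap$ completed phases, so iterating the contraction gives
\[
 \HighDegreeCount{\PeelPhaseCap}\le(4/\DegreeCutoff)^{\PeelPhaseCap}\HighDegreeCount{0}.
\]
I would bound $\HighDegreeCount{0}<2N/(\DegreeCutoff+1)$, because a forest on $N$ vertices has degree sum below $2N$. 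The definition \eqref{app:eq:phase-cap} makes $(8/\DegreeCutoff)^{\PeelPhaseCap}\le(\DegreeCutoff+1)/(2N)$. The cap uses $\DegreeCutoff/8$ rather than $\DegreeCutoff/4$, which only helps: $(4/\DegreeCutoff)^{\PeelPhaseCap}\le(8/\DegreeCutoff)^{\PeelPhaseCap}$, so $\HighDegreeCount{\PeelPhaseCap}<1$ and hence equals $0$. The strict inequality in the contraction handles the edge cases, including $\PeelPhaseCap=0$. Because the algorithm screens before testing the stop condition, the terminal rows come from a successful screen at the terminal history $\TerminalHistory$, where every vertex has low degree. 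By \AppLowDegreeScreenRef\ every terminal row is then the exact neighborhood. So $\CertifiedEdges$, the union of reported neighbor pairs, equals the true residual edge set. The certified graph is therefore a forest and the cycle-repair loop performs no commits.
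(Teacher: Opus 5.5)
Your proof is correct and follows the paper's argument step for step. Exact low-degree rows force every unpeeled high-degree vertex to have more than $\DegreeCutoff/2$ neighbors inside the high-degree core, and the forest degree sum on that core gives the $4/\DegreeCutoff$ contraction. The bound $\HighDegreeCount{0}<2N/(\DegreeCutoff+1)$, the $\log(\DegreeCutoff/8)$ phase cap, and the screen-before-stop ordering then give exact terminal rows.

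The paper's proof has one extra final step. It uses the row-size cap to show $|\PeelSet|<4|\ResidualPositions|/\DegreeCutoff$, so the round-cap fallback can never interrupt peeling on a successful path and a stopping test is actually reached. Your conditional reading of the statement does not need this, but it is the part later cited by the resource lemma. Also, at $\PeelPhaseCap=0$ the strictness comes from the initial bound $\HighDegreeCount{0}<2N/(\DegreeCutoff+1)$, not from the contraction.
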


\begin{proof}
\emph{1. High-degree neighbors of an unpeeled vertex.}
Fix a phase and define its high-degree set by
\begin{equation}
 \HighDegreeSet:=\{v\in\ResidualPositions:
                   d_{\ResidualForest}(v)>\DegreeCutoff\}.
 \label{app:eq:current-high-degree-set}
\end{equation}
At the beginning of phase $k$, $\HighDegreeCount{k}=|\HighDegreeSet|$.
Recall from \AlgoBlockNamedRef{peel} that the incoming claim count is
$\ClaimCount{v}=\sum_{u\in\ResidualPositions}\mathbf 1\{v\in\ScreenRow{u}\}$
and the peel threshold is $\ClaimCount{v}>\DegreeCutoff/2$.
For $v\in\HighDegreeSet\setminus\PeelSet$, every low-degree neighbor
reports $v$ exactly by \AppLowDegreeScreenRef.  Hence
\begin{align*}
 |\mathcal N_{\ResidualForest}(v)\setminus\HighDegreeSet|
 &=\sum_{u\in\ResidualPositions\setminus\HighDegreeSet}
       \mathbf 1\{v\in\mathcal N_{\ResidualForest}(u)\}\\
 &=\sum_{u\in\ResidualPositions\setminus\HighDegreeSet}
       \mathbf 1\{v\in\ScreenRow{u}\}
 \le\ClaimCount{v}\le\DegreeCutoff/2,\\
 d_{\ResidualForest[\HighDegreeSet]}(v)
 &=d_{\ResidualForest}(v)
       -|\mathcal N_{\ResidualForest}(v)\setminus\HighDegreeSet|\\
 &\ge d_{\ResidualForest}(v)-\ClaimCount{v}
 >\DegreeCutoff/2.
\end{align*}
The first equality uses the undirected nature of the forest.  Claims from
high-degree rows can only increase the incoming count, so they cannot
invalidate the bound for an unpeeled vertex.  We have shown that
\begin{equation}
 d_{\ResidualForest[\HighDegreeSet]}(v)
 >\frac{\DegreeCutoff}{2}.
 \label{app:eq:survivor-high-neighbors}
\end{equation}

\emph{2. Contraction after one peel phase.}
\emph{Empty survivor set.}
If $\HighDegreeSet\setminus\PeelSet=\emptyset$, no current high-degree vertex
survives the phase.  Deleting vertices cannot increase residual degrees, so
$\HighDegreeCount{k+1}=0$; this proves
\eqref{app:eq:proof-peeling-contraction}, strictly when
$\HighDegreeSet\ne\emptyset$.

\emph{Nonempty survivor set.}
Suppose instead that $\HighDegreeSet\setminus\PeelSet\ne\emptyset$.
Summing \eqref{app:eq:survivor-high-neighbors} over these unpeeled high-degree
vertices and using the forest degree-sum identity in
\cref{app:lem:forest-basics} gives
\begin{equation}
 \frac{\DegreeCutoff}{2}
 |\HighDegreeSet\setminus\PeelSet|
 <\sum_{v\in\HighDegreeSet\setminus\PeelSet}
   d_{\ResidualForest[\HighDegreeSet]}(v)
 \le2|E(\ResidualForest[\HighDegreeSet])|
 <2|\HighDegreeSet|.
 \label{app:eq:proof-high-degree-sum}
\end{equation}
Let $H'=(G',x_{G'})$ be the history after all singleton commits in the frozen
peel set of this phase, and recall the induced residual forest
$\ResidualForestAt{G'}=\TargetForest[\ResidualPositionsAt{G'}]$ from
\eqref{app:eq:history}.  Define
$\HighDegreeSetAt{H'}:=\{v\in\ResidualPositionsAt{G'}:
 d_{\ResidualForestAt{G'}}(v)>\DegreeCutoff\}$.
Deleting vertices cannot increase residual degrees:
\[
 \HighDegreeSetAt{H'}\subseteq\HighDegreeSet\setminus\PeelSet,
 \qquad
 \HighDegreeCount{k+1}=|\HighDegreeSetAt{H'}|
 \le|\HighDegreeSet\setminus\PeelSet|.
\]
Dividing \eqref{app:eq:proof-high-degree-sum} by $\DegreeCutoff/2$ now gives
$|\HighDegreeSet\setminus\PeelSet|<(4/\DegreeCutoff)|\HighDegreeSet|$,
and hence the strict form of \eqref{app:eq:proof-peeling-contraction}.  If
$\HighDegreeSet=\emptyset$, then the first case above gives equality
$\HighDegreeCount{k+1}=\HighDegreeCount{k}=0$.

\emph{3. Sound stopping at the phase cap.}
Initially, \cref{app:lem:forest-basics} makes the forest degree sum smaller
than $2N$, while each high-degree vertex has degree at least
$\DegreeCutoff+1$.  Hence
\begin{equation}
 \HighDegreeCount{0}<\frac{2N}{\DegreeCutoff+1}.
 \label{app:eq:initial-high-count}
\end{equation}
If the right-hand side is at most one, integrality already gives
$\HighDegreeCount{0}=0$.
Otherwise, once $\HighDegreeCount{k}=0$, deleting further vertices keeps every
later high-degree count equal to zero.  If this never occurs before the phase
cap, then $\HighDegreeCount{k}>0$ in every preceding phase, so the strict part
of \eqref{app:eq:proof-peeling-contraction} may be iterated.  In either case,
after $\PeelPhaseCap$ completed phases,
\begin{equation}
 \HighDegreeCount{\PeelPhaseCap}
 <\left(\frac4{\DegreeCutoff}\right)^{\PeelPhaseCap}
   \frac{2N}{\DegreeCutoff+1}
 \le
 \left(\frac8{\DegreeCutoff}\right)^{\PeelPhaseCap}
   \frac{2N}{\DegreeCutoff+1}
 \le1,
 \label{app:eq:phase-cap-eliminates-high-core}
\end{equation}
To check the last inequality, $\DegreeCutoff\ge9$ gives
$\log(\DegreeCutoff/8)>0$, and the ceiling in
\eqref{app:eq:phase-cap} gives
\begin{align*}
 \PeelPhaseCap\log(\DegreeCutoff/8)
 &\ge \PositivePart{\log(2N/(\DegreeCutoff+1))},\\
 \left(\frac8{\DegreeCutoff}\right)^{\PeelPhaseCap}
       \frac{2N}{\DegreeCutoff+1}
 &=\exp\!\left\{\log\frac{2N}{\DegreeCutoff+1}
                -\PeelPhaseCap\log(\DegreeCutoff/8)\right\}\le1.
\end{align*}
The integer in \eqref{app:eq:phase-cap-eliminates-high-core} is therefore zero.

\emph{4. Sound stopping with an empty peel set.}
If instead $\PeelSet=\emptyset$ while $\HighDegreeSet$ is nonempty, then
\eqref{app:eq:proof-high-degree-sum} holds with
$\HighDegreeSet\setminus\PeelSet=\HighDegreeSet$.  It would imply
$\DegreeCutoff/2<2$, contradicting $\DegreeCutoff\ge9$.  Hence an empty peel
set also certifies the absence of high-degree vertices.

\emph{5. \textcolor{red}{Bound peel-set size and certify the terminal graph.}}
We first check that the hard round cap cannot interrupt peeling.
\textcolor{red}{Fix a phase at history $\History$ with
$n:=|\ResidualPositions|\ge1$; if no vertex remains, the algorithm has
already returned.}
Low-degree exactness\textcolor{red}{~(\AppLowDegreeScreenRef)} and the row-size cap\textcolor{red}{~%
(\cref{app:def:row-selector})} imply
$|\ScreenRow{u}|\le d_{\ResidualForest}(u)$ for every $u$:
there is equality at low degree, while
$|\ScreenRow{u}|\le\DegreeCutoff<d_{\ResidualForest}(u)$ at high degree.
Counting each reported ordered pair once\textcolor{red}{, first at its
recipient and then in its reporting row,} gives
{\color{red}
\begin{align}
 \sum_{v\in\ResidualPositions}\ClaimCount{v}
 &=\sum_{v\in\ResidualPositions}\sum_{u\in\ResidualPositions}
       \mathbf 1\{v\in\ScreenRow{u}\}\notag\\
 &=\sum_{u\in\ResidualPositions}\sum_{v\in\ResidualPositions}
       \mathbf 1\{v\in\ScreenRow{u}\}\notag\\
 &=\sum_{u\in\ResidualPositions}|\ScreenRow{u}|\notag\\
 &\le\sum_{u\in\ResidualPositions}d_{\ResidualForest}(u)
 =2|E(\ResidualForest)|<2n.
 \label{app:eq:total-claim-bound}
\end{align}}
\textcolor{red}{The second equality interchanges finite sums; the last line
uses the forest degree sum in \cref{app:lem:forest-basics}.
Recall $\PeelSet=\{v\in\ResidualPositions:\ClaimCount{v}>\DegreeCutoff/2\}$
from \cref{app:eq:claim-and-peel-set}.
If $\PeelSet=\emptyset$, then $|\PeelSet|=0<4n/\DegreeCutoff$.
Otherwise its strict threshold gives}
{\color{red}
\begin{equation}
 \begin{aligned}
 \frac{\DegreeCutoff}{2}|\PeelSet|
 &<\sum_{v\in\PeelSet}\ClaimCount{v}
 \le\sum_{v\in\ResidualPositions}\ClaimCount{v}<2n,\\
 |\PeelSet|&<\frac{4n}{\DegreeCutoff}\le\frac{4N}{\DegreeCutoff}.
 \end{aligned}
 \label{app:eq:peel-set-size-proof}
\end{equation}}
\textcolor{red}{\AlgoBlockNamedRef{peel} freezes this set before its singleton
commits, so the phase uses exactly $|\PeelSet|$ rounds if completed and no
more if interrupted.}
With at most $\PeelPhaseCap$ phases, immediately before any planned peel
commit the completed-round counter satisfies
{\color{red}
\begin{equation}
 r<\frac{4N\PeelPhaseCap}{\DegreeCutoff}<\HardRoundCap-1,
 \label{app:eq:peel-before-hard-cap}
\end{equation}}
by \eqref{app:eq:screen-call-and-round-caps}.
\textcolor{red}{Thus \AlgoGuardRef\ cannot interrupt peeling.}
If all positions have already been committed, the residual graph is empty.
Otherwise peeling reaches one of the two tests identified in the statement.
Each pass executes \AlgoBlockRef{screen} before either stopping test
in \AlgoBlockRef{peel}.  At the
terminal history every residual vertex is consequently covered by
\AppLowDegreeScreenRef; the terminal screen is successful by
hypothesis, so all terminal rows are exact.  The
symmetrized edge set in \AlgoBlockNamedRef{terminal} is therefore
exactly $E(\ResidualForestAt{\star})$.  In particular, the cycle-repair loop
is skipped on the successful path.
\end{proof}

\begin{example}[A high-degree vertex can survive a peel phase]
\label{ex:algorithm-survivor-20260918}
This separate example illustrates the row contract used in
\cref{app:lem:successful-path-structure}, rather than specifying a
target--oracle pair.  Let $\DegreeCutoff=9$ and take the $65$ vertices
\[
 U=\{z\}\cup\{h_i:1\le i\le6\}
   \cup\{u_{i,j}:1\le i\le6,\ 1\le j\le9\}
   \cup\{s_1,s_2,s_3,s_4\},
\]
with edges
\[
 E=\{\{z,h_i\}:1\le i\le6\}
 \cup\{\{h_i,u_{i,j}\}:1\le i\le6,\ 1\le j\le9\}
 \cup\{\{z,s_j\}:1\le j\le4\}.
\]
At the history under consideration, take $\ResidualPositions=U$ and
$\ResidualForest=F=(U,E)$.
The center $z$ and all six hubs have degree $10$; all other vertices
are leaves.  Take the following row outputs, which obey the contract
\emph{exact at degree at most $d$, size at most $d$ elsewhere}:
\[
 \ScreenRow{z}=\ScreenRow{h_i}=\emptyset,\qquad
 \ScreenRow{u_{i,j}}=\{h_i\},\qquad
 \ScreenRow{s_j}=\{z\}.
\]
Computing the incoming counts, rather than reading a high-degree row's
size as its degree, gives
\begin{align*}
 \ClaimCount{z}&=4\le9/2, & \ClaimCount{h_i}&=9>9/2,\\
 \ClaimCount{u_{i,j}}&=\ClaimCount{s_j}=0,
 & \PeelSet&=\{h_1,\ldots,h_6\}.
\end{align*}
Thus $z$ is not peeled.  Before the commits its high-degree neighbors
are exactly $\{h_1,\ldots,h_6\}$, so their number is $6>9/2$, as required by
\eqref{app:eq:survivor-high-neighbors}.  After the six frozen singleton
commits in \AlgoBlockRef{peel}, its remaining neighbors are $\{s_1,\ldots,s_4\}$:
\begin{align*}
 d_{F[U\setminus\PeelSet]}(z)&=4,\\
 |\{v\in U:d_F(v)>9\}|&=7,\\
 |\{v\in U\setminus\PeelSet:d_{F[U\setminus\PeelSet]}(v)>9\}|&=0.
\end{align*}
The center survives as a vertex but ceases to be high-degree.
The $54$ hub leaves become isolated; none was committed in this phase.
\Cref{fig:algorithm-survivor-20260918} draws each of them separately.
\end{example}

\begin{figure}[H]
 \centering
 \includegraphics[width=\linewidth]{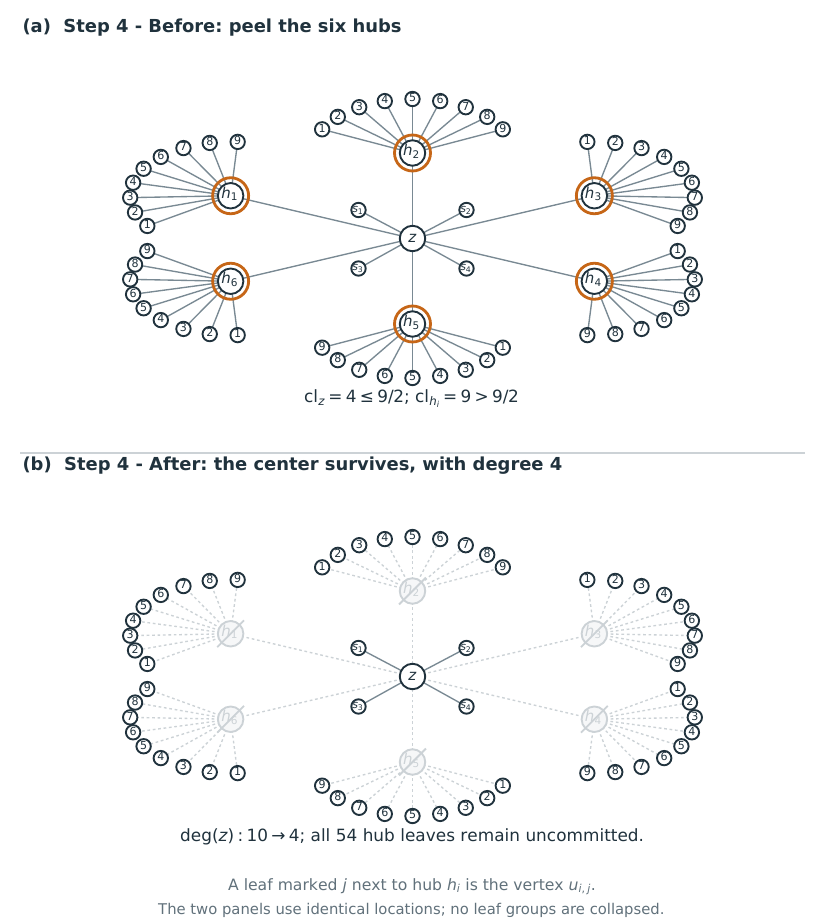}
 \caption{The separate contraction example of
 \cref{ex:algorithm-survivor-20260918}. The six orange-ringed hubs are
 committed one at a time in \AlgoBlockRef{peel}, with no intervening screen or peel-set
 recomputation. Their $54$ leaves are drawn individually, not bundled.
 The unpeeled center $z$ changes from degree $10$ to degree $4$.
 The lower panel keeps the same locations and crosses out only the
 committed hubs.}
 \label{fig:algorithm-survivor-20260918}
\end{figure}

\paragraph{Intuition.}
\textcolor{red}{The total counterfactual-submission count is bounded by the
product of the screen-loop bounds:}
\[
 \textcolor{red}{\CounterfactualQueries\le}
 \underbrace{\ScreenColumnCount}_{\text{\textcolor{red}{source columns}}}\qquad
 \underbrace{\ScreenCallCap}_{\text{\textcolor{red}{screen calls}}}\qquad
 \underbrace{\RandomColoringCount}_{\text{\textcolor{red}{colorings/screen}}}\qquad
 \underbrace{\RandomColorCount}_{\text{source colors}}\qquad
 \underbrace{(\RandomColorCount+\ChunkCount)}_{\text{\textcolor{red}{chunks/coloring}}}.
\]
For rounds, \textcolor{red}{\cref{app:eq:peel-set-size-proof} bounds each peel set by}
$4N/\DegreeCutoff$; after the peel phases, centroid recursion needs only
$\lceil\log_2(N+1)\rceil$ layers.  The public cap extends the round bound to
failed-screen paths.  Total masked-state submissions, including
preprocessing and commits, are $\CounterfactualQueries+\CommitRounds+1$.

\Needspace{14\baselineskip}
\begin{lemma}[Finite query and round counts\textcolor{red}{; \AlgoBlockRef{preprocess}--\AlgoBlockRef{centroid}}]
\label{app:lem:upper-resources}
Suppose preprocessing is feasible and
$\ScreenResolution<\EdgeSignal$.  On every execution path of
$\SelectedPackedSampler$ with any admissible row selector and the standard draft generator,
\begin{align}
 \CounterfactualQueries
 &\le\ScreenColumnCount\ScreenCallCap\RandomColoringCount
       \RandomColorCount(\RandomColorCount+\ChunkCount),
 \label{app:eq:proof-finite-Q}\\
 \CommitRounds
 &\le\HardRoundCap.
 \label{app:eq:proof-finite-R}
\end{align}
where
$\HardRoundCap=\lceil4N\PeelPhaseCap/\DegreeCutoff\rceil
+\lceil\log_2(N+1)\rceil+2$ is the public commit-round cap and
$\ScreenCallCap=\PeelPhaseCap+1$ is the screen-call cap from
\eqref{app:eq:screen-call-and-round-caps};
$\PeelPhaseCap$ is the peel-phase cap in \eqref{app:eq:phase-cap}.
In \eqref{app:eq:proof-finite-Q}, $\RandomColorCount$ and
$\RandomColoringCount$ are the fixed numbers of colors and independent
colorings per screen from \eqref{app:eq:random-color-parameters}.
The subscript $\mathrm{rnd}$ labels the randomized-color construction,
not a round index.
On a path on which every reached screen is successful, the sharper bound
\begin{equation}
 \CommitRounds
 <\frac{4N\PeelPhaseCap}{\DegreeCutoff}
   +\left\lceil\log_2(N+1)\right\rceil
 \label{app:eq:successful-path-rounds}
\end{equation}
holds, with no cycle repair or round-cap fallback.
The adaptive oracle depth, including the initial all-mask preprocessing
stage, satisfies
\begin{equation}
 \OracleDepth\le1+\CommitRounds+\ScreenCallCap.
 \label{app:eq:finite-oracle-depth}
\end{equation}
\end{lemma}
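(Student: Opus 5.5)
The proof counts loops in Algorithm~\ref{app:alg:complete-sampler} and Algorithm~\ref{app:alg:packed-screen} directly. The query bound \eqref{app:eq:proof-finite-Q} and the round cap \eqref{app:eq:proof-finite-R} hold pathwise for any selector. The sharper successful-path bound \eqref{app:eq:successful-path-rounds} comes from \cref{app:lem:successful-path-structure}.

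\emph{Queries.} First I bound the number of screen calls by $\ScreenCallCap=\PeelPhaseCap+1$. The counter $k$ increases after each completed peel phase. The loop breaks when $k=\PeelPhaseCap$, and that break test comes after the screen of the same pass, so at most $\PeelPhaseCap+1$ screens are run. A \AlgoGuardRef\ return or an emptied residual set only reduces this count, and terminal repair and centroid steps run no screens. Within one screen, each coloring contributes one submission per triple (nonempty readout chunk, source color other than the readout color, column). There are at most $\RandomColorCount-1$ source colors and $\ScreenColumnCount$ columns. Color class $c$ yields $\lceil n_c/\ReadoutChunkSize\rceil\le 1+n_c/\ReadoutChunkSize$ chunks if nonempty and none otherwise. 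Summing over $c$ gives at most $\RandomColorCount+N/\lceil N/\ChunkCount\rceil\le\RandomColorCount+\ChunkCount$ chunks. Multiplying the factors gives \eqref{app:eq:proof-finite-Q}. When $\MaxBankSize=0$ the screen submits nothing, which agrees with $\ScreenColumnCount=0$. Preprocessing and commits are excluded from $\CounterfactualQueries$ by definition.

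\emph{Rounds on every path.} The counter $r$ is incremented exactly once per commit. Every commit in the peel, repair, and centroid loops is preceded by the check $r=\HardRoundCap-1$; when the check fires, the algorithm makes one all-residual commit and returns. Hence $r$ never exceeds $\HardRoundCap$, which proves \eqref{app:eq:proof-finite-R}. The algorithm also terminates, since every commit is nonempty and removes residual positions.

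\emph{Successful paths and depth.} On paths where every reached screen succeeds, \cref{app:lem:successful-path-structure} applies. Each phase peels fewer than $4N/\DegreeCutoff$ vertices by \eqref{app:eq:peel-set-size-proof}, and there are at most $\PeelPhaseCap$ phases, so peeling uses fewer than $4N\PeelPhaseCap/\DegreeCutoff$ rounds. The terminal graph is then the true residual forest and no cycle repair occurs. Each centroid layer deletes one centroid per component, which at least halves component sizes by \cref{app:lem:forest-basics}. A component of size $n$ is therefore exhausted after at most $\lfloor\log_2 n\rfloor+1\le\lceil\log_2(N+1)\rceil$ layers. The combined count stays below $\HardRoundCap-1$, so the guard never fires, and \eqref{app:eq:successful-path-rounds} follows.

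For \eqref{app:eq:finite-oracle-depth}, preprocessing is one stage. Each screen is one stage, because the colors and draft are fixed before its replies and all its probe states are therefore predetermined. Each commit is one stage. The main care point is the phase-counter bookkeeping: the last phase is followed by one extra screen, which is exactly the $+1$ in $\ScreenCallCap$.
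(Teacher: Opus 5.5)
Your proposal is correct and follows the paper's proof. You use the same loop counting for $\CounterfactualQueries$: at most $\ScreenCallCap=\PeelPhaseCap+1$ screens, at most $\RandomColorCount+\ChunkCount$ chunks per coloring, and $\ScreenColumnCount$ columns times fewer than $\RandomColorCount$ source colors per chunk. You also use the same guard-based induction on $r$ for $\CommitRounds\le\HardRoundCap$, the same appeal to \cref{app:lem:successful-path-structure} plus centroid halving for the successful-path bound, and one oracle stage each for preprocessing, every screen, and every commit.

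The one step you leave implicit, which the paper states, is that the standard draft reuses the all-mask rows. It therefore adds no submissions to $\CounterfactualQueries$, so screen probes are the only counted probes.
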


\begin{proof}

\emph{1. Count submissions over chunks, columns, and screens.}
Fix one screen execution at history $\History$.  Recall the quantities
counted in its loops:
\begin{itemize}
\item $\ScreenQueries$ denotes the number of masked-state submissions in
this one call to \AlgoBlockNamedRef{screen}, all made in
\AlgoBlockRef{probes}; preprocessing and commit submissions are excluded.
\item $\MaxBankSize=\max_{i\in\PositionSet}|\TokenBank{i}|$ is the largest
local vocabulary-bank size fixed in \cref{app:def:preprocessing}.
The number of submitted source columns is
$\ScreenColumnCount=\MaxBankSize+1$ when $\MaxBankSize\ge1$, and
$\ScreenColumnCount=0$ otherwise, as in \eqref{app:eq:screen-column-count}.
\item $\ColorCount=\RandomColorCount=8(\DegreeCutoff+1)$ is the number of
colors, and $\ColoringCount=\RandomColoringCount$ is the number of
independent colorings supplied by \AlgoBlockNamedRef{draft};
their choices are \eqref{app:eq:random-color-parameters}.
\item $\ChunkCount\in\PositionSet$ is the readout-chunk parameter from
\cref{app:algorithm-overview}: it sets
$\ReadoutChunkSize=\lceil N/\ChunkCount\rceil$, the maximum chunk size.
It is not the actual number of chunks.
\end{itemize}
For one coloring $\tau$, write $n_c:=|\ColorClass{\tau}{c}|$ and let
$\ChunksPerColoring{\tau}$ be the total number of readout chunks across
its color classes.  The deterministic partition
\eqref{app:eq:readout-chunks} in \AlgoBlockRef{chunks} creates exactly
$\lceil n_c/\ReadoutChunkSize\rceil$ chunks for every nonempty color class.
Therefore
\begin{align}
 \ChunksPerColoring{\tau}
 &=\sum_{c:n_c>0}
   \left\lceil\frac{n_c}{\ReadoutChunkSize}\right\rceil\notag\\
 &\le\sum_{c:n_c>0}
   \left(1+\frac{n_c}{\ReadoutChunkSize}\right)\notag\\
 &\le\ColorCount+\frac{|\ResidualPositions|}{\ReadoutChunkSize}
 \le\ColorCount+\ChunkCount.
 \label{app:eq:proof-chunk-count}
\end{align}
For each chunk, the bank columns and source-specific tail column in
\AlgoBlockRef{probes} use
$(\MaxBankSize+1)(\ColorCount-1)$ submissions.  When
$\MaxBankSize\ge1$, this is at most
$\ScreenColumnCount\ColorCount$.  Thus \AlgoBlockRef{chunks} supplies at
most $\ColorCount+\ChunkCount$ chunks per coloring, and
\AlgoBlockRef{probes} submits at most $\ScreenColumnCount\ColorCount$
states per chunk.  Summing over the $\ColoringCount$ colorings from
\AlgoBlockRef{draft} gives
\begin{equation}
 \ScreenQueries
 \le
 \underbrace{\ScreenColumnCount}_{\text{\textcolor{red}{source columns}}}\qquad
 \underbrace{\ColoringCount}_{\text{\textcolor{red}{colorings/screen}}}\qquad
 \underbrace{\ColorCount}_{\text{source colors}}\qquad
 \underbrace{(\ColorCount+\ChunkCount)}_{\text{chunks per coloring}}.
 \label{app:eq:proof-one-screen-count}
\end{equation}
When $\MaxBankSize=0$, the algorithm submits no screen state and both sides
are zero.  The screen runs initially and once after each of at most
$\PeelPhaseCap$ completed peel phases, hence at most $\ScreenCallCap$ times.
Substituting the randomized values of $\ColoringCount$ and $\ColorCount$
proves \eqref{app:eq:proof-finite-Q}.  The standard draft reuses preprocessing
and adds no query. Cycle repair runs no further screen: each singleton
repair is one commit submission, charged to $\CommitRounds$, not to
$\CounterfactualQueries$.

\emph{Returned oracle rows.}
For the standard zero-query draft, each counterfactual submission reads
only its chunk, of size at most $\ReadoutChunkSize=\lceil N/\ChunkCount\rceil$.
Preprocessing reads $N$ rows, and the disjoint commit batches
$B_1,\ldots,B_{\CommitRounds}$ partition $\PositionSet$
(\cref{app:def:admissible-algorithm}), so $\sum_{t=1}^{\CommitRounds}|B_t|=N$.
The main construction takes $\ChunkCount=\DegreeCutoff$.
Consequently the total number of returned oracle rows is at most
\[
 \underbrace{\lceil N/\ChunkCount\rceil\CounterfactualQueries}_{\text{screen submissions}}
 +\underbrace{N}_{\text{preprocessing}}
 +\underbrace{\sum_{t=1}^{\CommitRounds}|B_t|}_{\text{commits}}
 =\lceil N/\ChunkCount\rceil\CounterfactualQueries+2N.
\]

\emph{2. Count adaptive oracle stages.}
Once the history, draft, and colors are fixed, all submissions in a screen
can be issued in one parallel oracle stage \textcolor{red}{in
\AlgoBlockRef{probes}, because their states do not depend on replies from
that screen}.  Each commit requires one stage,
and preprocessing \textcolor{red}{in \AlgoBlockRef{preprocess}} requires one. This proves
\eqref{app:eq:finite-oracle-depth}.

\emph{3. Bound peeling rounds on a successful path.}
\textcolor{red}{On a path where every reached screen succeeds,
\eqref{app:eq:peel-set-size-proof} in
\cref{app:lem:successful-path-structure} shows that every nonempty peel phase in
\AlgoBlockNamedRef{peel} commits fewer than $4N/\DegreeCutoff$ vertices,
one per round.  At most $\PeelPhaseCap$ such phases therefore use fewer than
$4N\PeelPhaseCap/\DegreeCutoff$ rounds.
The bound \eqref{app:eq:peel-before-hard-cap} also shows that
\AlgoGuardRef\ cannot interrupt these commits.}

\emph{4. Bound terminal centroid rounds and exclude repair and fallback.}
By Lemma~\ref{app:lem:successful-path-structure}, the terminal certified graph
is the true residual forest.  Lemma~\ref{app:lem:residual-markov} makes its
connected components conditionally independent.  At any layer of
\AlgoBlockNamedRef{centroid}, let
the current components be $T_1,\ldots,T_m$, with selected centroids
$\ComponentCentroid{1},\ldots,\ComponentCentroid{m}$, and set
$B=\{\ComponentCentroid{1},\ldots,\ComponentCentroid{m}\}$.  Then
\begin{equation}
 \TargetLaw(X_B=x_B\mid X_G=x_G)
 =\prod_{\ell=1}^m
   \TargetLaw\bigl(X_{\ComponentCentroid{\ell}}
   =x_{\ComponentCentroid{\ell}}\mid X_G=x_G\bigr).
 \label{app:eq:centroid-batch-factorization}
\end{equation}
Thus their exact singleton rows define the exact joint batch conditional.
By
\cref{app:lem:forest-basics}, removing a centroid leaves components of size at
most half the parent size, so after
$\ell$ terminal batches every remaining component has size at most $N/2^\ell$.
For $\ell=\lceil\log_2(N+1)\rceil$ this is smaller than one.  Adding the peeling
and terminal rounds proves \eqref{app:eq:successful-path-rounds}.  Its
right-hand side is smaller than $\HardRoundCap-1$ by
\eqref{app:eq:screen-call-and-round-caps}; hence the hard-cap check is never
triggered during the terminal stage either. The terminal graph is already
a forest, so the cycle-repair loop adds no rounds.

\emph{5. Enforce the round cap on every path.}
It remains to verify the unconditional randomized round bound enforced by
\AlgoGuardRef. Initially $r=0\le\HardRoundCap-1$. Before every peel, repair,
or centroid commit, the algorithm checks whether $r=\HardRoundCap-1$.
If equality holds, it commits all remaining positions in the reserved
final round and returns. Otherwise the integer counter satisfies
$r\le\HardRoundCap-2$, and the next commit leaves
$r+1\le\HardRoundCap-1$.
Every such commit is nonempty and irrevocable, so it removes at least
one remaining position; cycle repair cannot continue indefinitely.
If it ends without the cap fallback, the remaining working graph is a
forest and centroid completion applies. Induction over commit operations
therefore gives $\CommitRounds\le\HardRoundCap$ on every path, proving
\eqref{app:eq:proof-finite-R}.
\end{proof}

\subsection{Exact-commit hybrid and adaptive error composition}
\label{app:upper-proof-error}

This subsection completes the statistical part of the upper bound.
\Cref{app:lem:exact-hybrid} identifies a structurally safe exact-row hybrid
with the target law.  Separately,
\cref{app:lem:adaptive-hellinger-upper} bounds the distance between any
product-commit decision rule and its exact-row hybrid, whether or not its
batches are structurally safe.  A safe-decision-rule comparison then accounts
for failed screens in the implemented randomized sampler.

\paragraph{Structural safety.}
At a history $\History=(G,x_G)$, let $\ResidualForest$ be the true residual
forest on $\ResidualPositions=\PositionSet\setminus G$.
A complete decision rule is \emph{structurally safe} if, at every realized
history, each batch $B\subseteq\ResidualPositions$ it commits satisfies
\[
 |B\cap V(T)|\le1
 \qquad\text{for every connected component }T\text{ of }\ResidualForest.
\]
In particular, every singleton batch is safe.
This is a property used in the analysis, not permission for the implemented
sampler to inspect the hidden forest.

\paragraph{The two executions and their output laws.}
Fix one complete decision rule, the frozen oracle $\FrozenOracle$, and a
decision-rule seed $\ControllerSeed=w$, including the colors and all
auxiliary draft randomness.  The product-commit draws remain random after
this conditioning.  For a chosen batch $B$ and the history-compatible
commit state $y$ in \eqref{app:eq:commit-state}, compare:
\begin{itemize}
\item \emph{Oracle execution.}
A batch assignment $z\in\Vocab^B$ has mass
$\prod_{j\in B}\OracleRowMass{y}{j}{z_j}$.
Recall from \textcolor{red}{\cref{app:def:output-risk}} that its full output law is
$\SamplerOutputLaw{w}{\FrozenOracle}(x)
 =\Pr(\SamplerOutput=x\mid\ControllerSeed=w)$, with $\FrozenOracle$ fixed
and all product-commit draws integrated out.
\item \emph{Exact-commit hybrid.}
Replace only this commit mass by
$\prod_{j\in B}\ExactRowMass{y}{j}{z_j}$, using the exact rows in
\eqref{app:eq:exact-row}.  Keep the decision rule, draft generator, query
rules, and all frozen-oracle replies unchanged.
Denote this execution's full output law by $\ExactCommitOutputLaw{w}$.
\end{itemize}
Both are laws on $\Vocab^N$; $\otimes$ refers to the product within each
commit, not to independence of all output coordinates.
The hybrid can still depend on $\FrozenOracle$ through its decisions;
that dependence is suppressed in $\ExactCommitOutputLaw{w}$.
The shared rules do not imply shared realized transcripts: different commit
outcomes can lead to different later queries and batches.

Structural safety identifies the hybrid's product at each commit with the
target's joint batch conditional; for an unsafe batch they need not agree.
Thus \cref{app:lem:exact-hybrid} uses safety to prove
$\ExactCommitOutputLaw{w}=\TargetLaw$, whereas
\cref{app:lem:adaptive-hellinger-upper} compares the two output laws without
requiring safety.  Both lemmas also apply to the analysis-only decision rule
introduced below.

\paragraph{Intuition.}
A safe batch takes at most one vertex from each independent residual
component.  Its product of exact rows is therefore the exact joint
conditional.  Multiplying these joint conditionals along the adaptive
partition is the ordinary chain rule, so the output law is exactly
$\TargetLaw$.

\begin{lemma}[Exact hybrid equals the target law]
\label{app:lem:exact-hybrid}
For every structurally safe complete decision rule, the exact-commit hybrid
satisfies
\begin{equation}
 \ExactCommitOutputLaw{w}=\TargetLaw
 \qquad\text{for every decision-rule seed }w.
 \label{app:eq:exact-hybrid-target}
\end{equation}
\end{lemma}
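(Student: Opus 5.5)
The plan is to mirror Steps~1--2 of the proof of \cref{app:lem:adaptive-midpoint-lower}, with the exact-commit kernel in place of the midpoint kernel.

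\emph{Reconstruct the path.} Fix the seed $w$ and a complete output $x\in\Vocab^N$. Replaying the seed-fixed decision rule while feeding it $z_t(x):=x_{B_t(h_{t-1}(x))}$ at each commit determines the histories $h_{t-1}(x)$, batches $B_t$, committed sets $G_t(x)$ and terminal round $T(x)$. This works because probe replies are frozen-oracle outputs and so are deterministic given the history. By admissibility the batches are disjoint with union $\PositionSet$. Commit values are the only randomness once $w$ is fixed, so the hybrid's output mass factorizes along this unique path:
\[
 \ExactCommitOutputLaw{w}(x)=\prod_{t=1}^{T(x)}\prod_{j\in B_t}\ExactRowMass{y_t}{j}{x_j},
\]
where $y_t$ is the commit state at $h_{t-1}(x)$.

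\emph{Use safety to identify each factor.} At history $h_{t-1}$ with committed values $x_{G_{t-1}}$, safety puts the vertices of $B_t$ in distinct connected components of $\ResidualForest$. By \cref{app:lem:residual-markov}, conditional on $X_{G_{t-1}}=x_{G_{t-1}}$ the variables in distinct components are independent, so
\[
 \TargetLaw\bigl(X_{B_t}=z\mid X_{G_{t-1}}=x_{G_{t-1}}\bigr)=\prod_{j\in B_t}\TargetLaw\bigl(X_j=z_j\mid X_{G_{t-1}}=x_{G_{t-1}}\bigr).
\]
Each singleton factor on the right is exactly $\ExactRowMass{y_t}{j}{z_j}$, because the commit state reveals precisely $x_{G_{t-1}}$ and masks every residual position. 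Hence the hybrid commit kernel equals the true joint batch conditional.

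\emph{Telescope.} The product of these joint conditionals telescopes, exactly as in Step~2 of \cref{app:lem:adaptive-midpoint-lower}:
\[
 \ExactCommitOutputLaw{w}(x)=\prod_{t}\frac{\TargetLaw(X_{G_t}=x_{G_t})}{\TargetLaw(X_{G_{t-1}}=x_{G_{t-1}})}=\TargetLaw(x).
\]
Strict positivity keeps every denominator nonzero. Since this holds for every $x$ and every $w$, we obtain \cref{app:eq:exact-hybrid-target}.

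\emph{Main obstacle.} The delicate point is justifying that the adaptive transcript adds nothing beyond $X_{G_{t-1}}=x_{G_{t-1}}$. Batches and probes depend on earlier commits, but only through $x_{G_{t-1}}$ and the fixed $w$. The path-reconstruction argument of \cref{app:lem:adaptive-midpoint-lower} handles this, and I would invoke it here. A second point to check is that the batch-selection step contributes no likelihood factor, since it is deterministic given the history.
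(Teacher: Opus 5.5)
Your proposal is correct and follows essentially the same route as the paper: structural safety plus the residual Markov property identifies each exact-row product with the true joint batch conditional, and the adaptive chain rule then telescopes along the unique fixed-seed path determined by $x$. The ``main obstacle'' you flag is milder here than in the midpoint lemma. The hybrid's commit kernel is by definition a product of exact rows at the commit state, which depends on the history only through $x_{G_{t-1}}$, so the telescoping is a pointwise identity and needs no conditioning argument.
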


\begin{proof}

\emph{1. Identify the exact joint law of a safe batch.}
For a singleton batch the product of exact singleton rows is trivially its
exact joint conditional.  For a nonsingleton safe batch, its vertices lie in
distinct true residual components.  Lemma~\ref{app:lem:residual-markov}
therefore identifies their exact joint conditional with the product of their
exact singleton conditionals, as in
\eqref{app:eq:centroid-batch-factorization}.

\emph{2. Apply the chain rule along each adaptive path.}
For fixed $w$, draft generation is deterministic as a function of the observed
past, even when its rule is randomized before conditioning on $w$.
For a full assignment $x\in\Vocab^N$, let $B_t(x)$ be the batch selected along
the unique hybrid path consistent with $x$, and let $G_{t-1}(x)$ be the set
committed before that batch, with $G_0(x)=\varnothing$ and
$G_t(x)=G_{t-1}(x)\cup B_t(x)$.  The batches form an adaptive ordered partition
of $\PositionSet$.  The preceding paragraph identifies every hybrid batch law
with the corresponding exact joint conditional, so the conditional chain
rule gives
\begin{align}
 \ExactCommitOutputLaw{w}(x)
 &=\prod_t
 \TargetLaw\!\left(
  X_{B_t(x)}=x_{B_t(x)}
 \mid X_{G_{t-1}(x)}=x_{G_{t-1}(x)}
 \right)\notag\\
 &=\prod_t
 \frac{\TargetLaw(X_{G_t(x)}=x_{G_t(x)})}
      {\TargetLaw(X_{G_{t-1}(x)}=x_{G_{t-1}(x)})}\notag\\
 &=\TargetLaw(x).
 \label{app:eq:adaptive-exact-chain-rule}
\end{align}
The batch partition may depend on earlier committed values, but for each fixed
$x$ the displayed factors are precisely the chain-rule factors along that
path.  Consecutive numerators and denominators cancel; the first denominator
is the probability of the empty constraint, hence one, and the last numerator
is $\TargetLaw(x)$ because every coordinate has been committed.
This proves the equality pointwise.
\end{proof}

We next compare the full output laws $\ExactCommitOutputLaw{w}$ (exact-row
commits) and $\SamplerOutputLaw{w}{\FrozenOracle}$ (oracle-row commits) for
the same decision rule and fixed seed.  For probability masses $P,Q$ on a
finite set, write
\begin{equation}
 \HellingerAffinity(P,Q)
 :=\sum_x\sqrt{P(x)Q(x)}
 =1-\sqHellinger(P,Q).
 \label{app:eq:affinity-definition}
\end{equation}

\paragraph{Intuition.}
The error budget is attached to committed coordinates, not to probes or
rounds.  Each coordinate is committed once, so the total budget is
\[
 N\cdot\frac{(\OracleRadius)^2}{2N}=\frac{(\OracleRadius)^2}{2}.
\]
For adaptive batches, backward induction on the execution tree justifies
this accounting.  At each node, charge the current batch and then use the
budget of the remaining coordinates.

\begin{lemma}[Adaptive squared-Hellinger composition]
\label{app:lem:adaptive-hellinger-upper}
{\color{red}Assume \cref{app:ass:oracle-accuracy} (A2): every valid row pair obeys
$\sqHellinger(\ExactRow{y}{j},\OracleRow{y}{j})
 \le(\OracleRadius)^2/(2N)$.  For every complete decision rule using the
product-commit rule and every fixed decision-rule seed $w$, regardless of
structural safety,}
\begin{equation}
 \sqHellinger\bigl(
  \ExactCommitOutputLaw{w},
  \SamplerOutputLaw{w}{\FrozenOracle}
 \bigr)
 \le\frac{(\OracleRadius)^2}{2}.
 \label{app:eq:global-upper-hellinger}
\end{equation}
\end{lemma}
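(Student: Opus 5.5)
The plan is to prove the affinity lower bound
$\HellingerAffinity(\ExactCommitOutputLaw{w},\SamplerOutputLaw{w}{\FrozenOracle})\ge1-(\OracleRadius)^2/2$
by backward induction on the seed-fixed execution tree, following the same path factorization as in \cref{app:lem:adaptive-midpoint-lower}. Fix $w$. Probes and the draft are deterministic functions of the visible history, and frozen replies do not depend on which execution is run. So, exactly as in Step~1 of that lemma, every full output $x$ determines a unique path with batches $B_t(x)$ and commit states $y_t(x)$. Both output masses then factor along this common path:
$\ExactCommitOutputLaw{w}(x)=\prod_t\prod_{j\in B_t}\ExactRowMass{y_t}{j}{x_j}$ and $\SamplerOutputLaw{w}{\FrozenOracle}(x)=\prod_t\prod_{j\in B_t}\OracleRowMass{y_t}{j}{x_j}$. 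Batch selection contributes no likelihood factor. Safety is never used, since we compare only the two product kernels and not the target.

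For each reachable pre-commit history $h$ with residual set $U(h)$, define the value function
\[
 V(h):=\sum_{\text{completions from }h}\ \prod_{\text{later commits}}\sqrt{p\,\widehat p},
\]
the affinity of the two continuation laws. The claim I will prove by induction on $|U(h)|$ is
\[
 V(h)\ge1-\frac{(\OracleRadius)^2}{2N}\,|U(h)|.
\]
The base case $U(h)=\emptyset$ gives $V=1$. For the step, let the next batch be $B=B(h)$ with state $y$. Write $p_B=\bigotimes_{j\in B}\ExactRow{y}{j}$ and $\widehat p_B=\bigotimes_{j\in B}\OracleRow{y}{j}$. Then
\[
 V(h)=\sum_z\sqrt{p_B(z)\widehat p_B(z)}\;V(h\circ z).
\]
Every successor has $|U|$ reduced by $|B|$. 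The induction hypothesis bounds each successor value below by the same constant $c=1-\tfrac{(\OracleRadius)^2}{2N}(|U(h)|-|B|)$. If $c<0$ the claim is trivial, since affinities are nonnegative. Otherwise $V(h)\ge c\,\HellingerAffinity(p_B,\widehat p_B)$. By the product rule of \cref{app:lem:finite-hellinger-calculus},
\[
 \HellingerAffinity(p_B,\widehat p_B)=\prod_{j\in B}\bigl(1-\sqHellinger(\ExactRow{y}{j},\OracleRow{y}{j})\bigr)\ge1-|B|\frac{(\OracleRadius)^2}{2N},
\]
which uses (A2) and $\prod(1-a_j)\ge1-\sum a_j$. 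Multiplying the two lower bounds and dropping the positive cross term closes the induction. Applying it at the root, where $|U|=N$, gives the stated bound. The cap fallback and the cycle-repair commits are ordinary product commits, so they are covered automatically.

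The main obstacle is making the value-function recursion rigorous when the intervening probes and the batch choice depend on earlier committed values. The crux is that, for fixed $w$ and frozen $\FrozenOracle$, the history $h\circ z$ is a deterministic function of $(h,z)$ in both executions. I will state this carefully, reusing the path-reconstruction argument, so that the same tree and the same subsequent batches serve both laws. I will also check that the termination padding of empty batches, with factor one, leaves the recursion valid.
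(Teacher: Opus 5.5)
Your proposal is correct and follows essentially the same route as the paper. Both arguments run a backward induction over the seed-fixed execution tree on the affinity of the two continuation laws, use the one-step recursion through the next batch, charge each batch $|B|(\OracleRadius)^2/(2N)$ via the product-affinity identity and (A2), and evaluate at the root with $|U|=N$. The differences are cosmetic: you multiply lower bounds on affinities, which forces your sign case split, whereas the paper adds the defects $1-\mathrm{Aff}$ directly; and you obtain the common tree by replay rather than by defining continuation kernels from stored states.
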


\begin{proof}

\emph{1. Compare the two kernels at a common node.}
Fix $\FrozenOracle$ and $w$ throughout the proof.
\hypertarget{app-execution-tree-state}{}
An \emph{execution-tree node} is a stored algorithm state immediately before
a commit, or a terminal state after completion; it is not a vertex of the
hidden forest.  A pre-commit state records
\[
 \mathfrak h=(\History,\ScreenTranscript,\text{decision-rule state}),
 \qquad
 \History=(G,x_G),\quad
 \ResidualPositions=\PositionSet\setminus G.
\]
Here $\ScreenTranscript$ contains the preceding query/reply and commit
record, and the internal state includes any current draft and counters.
We use this complete stored state to define continuation kernels directly,
without replaying the decision rule from its initial state.
With $\FrozenOracle$ and $w$ fixed, all operations between commits are
deterministic.  Thus, from the \emph{same stored state} $\mathfrak h$, both
executions receive the same frozen-oracle replies and select the same
batch $B(\mathfrak h)\subseteq\ResidualPositions$ and commit state
$y(\mathfrak h)$ from \eqref{app:eq:commit-state}.
Comparing kernels at this common state does not assert that two separately
sampled executions have identical paths.
For a batch assignment $z\in\Vocab^{B(\mathfrak h)}$, their kernels are
\begin{align}
 \ExactBatchMass{\mathfrak h}{z}
 &:=\prod_{j\in B(\mathfrak h)}
       \ExactRowMass{y(\mathfrak h)}{j}{z_j},\notag\\
 \OracleBatchMass{\mathfrak h}{z}
 &:=\prod_{j\in B(\mathfrak h)}
       \OracleRowMass{y(\mathfrak h)}{j}{z_j}.
 \label{app:eq:exact-oracle-batch-laws}
\end{align}
Both kernels are probability masses on $\Vocab^{B(\mathfrak h)}$:
\begin{align*}
 \sum_{z\in\Vocab^{B(\mathfrak h)}}\ExactBatchMass{\mathfrak h}{z}
 &=\prod_{j\in B(\mathfrak h)}
       \sum_{a\in\Vocab}\ExactRowMass{y(\mathfrak h)}{j}{a}=1,\\
 \sum_{z\in\Vocab^{B(\mathfrak h)}}\OracleBatchMass{\mathfrak h}{z}
 &=\prod_{j\in B(\mathfrak h)}
       \sum_{a\in\Vocab}\OracleRowMass{y(\mathfrak h)}{j}{a}=1.
\end{align*}
The finite sums factor by distributivity over the Cartesian product.
The first kernel is a product of exact rows, whether or not it equals the
target's joint batch conditional.  \textcolor{red}{Hence this comparison needs
no structural safety condition.}

\emph{2. Charge the current batch to its coordinates.}
Apply the product identity in \cref{app:lem:finite-hellinger-calculus}
with index set $B(\mathfrak h)$, each coordinate space equal to $\Vocab$,
and component laws $\ExactRow{y(\mathfrak h)}{j}$ and
$\OracleRow{y(\mathfrak h)}{j}$.  This gives
\begin{equation}
 \HellingerAffinity(\ExactBatchLaw{\mathfrak h},
                    \OracleBatchLaw{\mathfrak h})
 =\prod_{j\in B(\mathfrak h)}
  \left[1-\sqHellinger\bigl(
        \ExactRow{y(\mathfrak h)}{j},
        \OracleRow{y(\mathfrak h)}{j}\bigr)\right].
 \label{app:eq:upper-batch-affinity}
\end{equation}
Using $1-\prod_j(1-u_j)\le\sum_j u_j$ for $u_j\in[0,1]$ and
\textcolor{red}{\eqref{app:eq:A2}},
\begin{align}
 1-\HellingerAffinity(\ExactBatchLaw{\mathfrak h},
                      \OracleBatchLaw{\mathfrak h})
 &=\sqHellinger(\ExactBatchLaw{\mathfrak h},
                \OracleBatchLaw{\mathfrak h})
 \label{app:eq:local-batch-cost-definition}\\
 &\le\sum_{j\in B(\mathfrak h)}
     \sqHellinger\bigl(\ExactRow{y(\mathfrak h)}{j},
                      \OracleRow{y(\mathfrak h)}{j}\bigr)\notag\\
 &\le |B(\mathfrak h)|\frac{(\OracleRadius)^2}{2N}.
 \label{app:eq:local-batch-cost-bound}
\end{align}

\emph{3. Derive the suffix-affinity recursion.}
Take a stored execution state $\mathfrak h$ from part~1 of this proof, with committed
history $\History=(G,x_G)$ and remaining set
$\ResidualPositions=\PositionSet\setminus G$.
Here a \emph{suffix} is the assignment of all still-uncommitted coordinates,
not necessarily a contiguous block of positions.  Define:
\begin{itemize}
\item $\ExactSuffixLaw{\mathfrak h}$: the law of those remaining values
when execution resumes from this entire stored state using exact-row commits.
\item $\OracleSuffixLaw{\mathfrak h}$: the corresponding law when it
resumes from the same state using oracle-row commits.
\end{itemize}
Both are probability laws on $\Vocab^{\ResidualPositions}$, with
$\FrozenOracle$ and $w$ still fixed.  The recursion below defines these
continuation laws from the batch kernels in
\eqref{app:eq:exact-oracle-batch-laws}; at a terminal state
$\ResidualPositions=\varnothing$, each puts mass one on the empty assignment.
Set
\begin{equation}
 \SuffixAffinity{\mathfrak h}
 :=\HellingerAffinity(\ExactSuffixLaw{\mathfrak h},
                      \OracleSuffixLaw{\mathfrak h}).
 \label{app:eq:suffix-affinity}
\end{equation}
After a batch value $z\in\Vocab^{B(\mathfrak h)}$, write
$\mathfrak h\oplus z$ for the next pre-commit node (or terminal node),
including all intervening deterministic operations.  It has remaining
coordinates $\ResidualPositions\setminus B(\mathfrak h)$.
For $x'\in\Vocab^{\ResidualPositions\setminus B(\mathfrak h)}$,
the suffix assignment $(z,x')$ has masses
\begin{align*}
 \ExactSuffixMass{\mathfrak h}{(z,x')}
 &=\ExactBatchMass{\mathfrak h}{z}\,
   \ExactSuffixMass{\mathfrak h\oplus z}{x'},\\
 \OracleSuffixMass{\mathfrak h}{(z,x')}
 &=\OracleBatchMass{\mathfrak h}{z}\,
   \OracleSuffixMass{\mathfrak h\oplus z}{x'}.
\end{align*}
These are normalized: summing first over $x'$ gives one for each child
law, and summing over $z$ then gives one by the batch normalization above.
There are at most $N$ nonempty commits, so backward induction starts at
the empty suffix and defines every law.
Summing the geometric means first over $x'$ now yields
\begin{align}
 \SuffixAffinity{\mathfrak h}
 &=\sum_z\sqrt{\ExactBatchMass{\mathfrak h}{z}
               \OracleBatchMass{\mathfrak h}{z}}
       \sum_{x'}\sqrt{
        \ExactSuffixMass{\mathfrak h\oplus z}{x'}
        \OracleSuffixMass{\mathfrak h\oplus z}{x'}}\notag\\
 &=\sum_z\sqrt{\ExactBatchMass{\mathfrak h}{z}
               \OracleBatchMass{\mathfrak h}{z}}
       \SuffixAffinity{\mathfrak h\oplus z}.
 \label{app:eq:suffix-affinity-recursion}
\end{align}
At a terminal node the suffix is empty and its affinity is one.

\emph{4. Bound the remaining loss by backward induction.}
At a node with committed set $G$, every complete continuation partitions
$\ResidualPositions$ into disjoint future batches.  Thus its total
coordinate budget is
\begin{equation}
 \sum_{\text{future }t}|B(\mathfrak h_t)|
       \frac{(\OracleRadius)^2}{2N}
 =|\ResidualPositions|\frac{(\OracleRadius)^2}{2N}.
 \label{app:eq:remaining-path-budget}
\end{equation}
We prove by backward induction that this budget bounds the suffix loss:
\begin{equation}
 1-\SuffixAffinity{\mathfrak h}
 \le|\ResidualPositions|\frac{(\OracleRadius)^2}{2N}.
 \label{app:eq:adaptive-affinity-induction}
\end{equation}
The terminal case has both sides zero.  At a nonterminal node, subtract
the recursion from one and add and subtract the batch affinity:
\begin{align*}
 1-\SuffixAffinity{\mathfrak h}
 &=1-\HellingerAffinity(\ExactBatchLaw{\mathfrak h},
                        \OracleBatchLaw{\mathfrak h})\\
 &\quad+\sum_z
   \sqrt{\ExactBatchMass{\mathfrak h}{z}\OracleBatchMass{\mathfrak h}{z}}
        \bigl(1-\SuffixAffinity{\mathfrak h\oplus z}\bigr)\\
 &\le |B(\mathfrak h)|\frac{(\OracleRadius)^2}{2N}
   +\bigl(|\ResidualPositions|-|B(\mathfrak h)|\bigr)
      \frac{(\OracleRadius)^2}{2N}
      \sum_z\sqrt{\ExactBatchMass{\mathfrak h}{z}
                  \OracleBatchMass{\mathfrak h}{z}}\\
 &\le|\ResidualPositions|\frac{(\OracleRadius)^2}{2N}.
\end{align*}
The first inequality uses the batch bound and the induction hypothesis:
every child has exactly $|\ResidualPositions|-|B(\mathfrak h)|$
uncommitted coordinates.  \textcolor{red}{In the last inequality, the
remaining sum is the batch affinity, at most $\sqrt{1\cdot1}=1$ by
Cauchy--Schwarz and the two kernel normalizations in part~1.  The resulting
coordinate count is $|B(\mathfrak h)|+(|\ResidualPositions|-|B(\mathfrak h)|)
=|\ResidualPositions|$.}

\emph{5. Evaluate the budget at the root.}
The root $\mathfrak h_{\emptyset}$ is the common state before the first
commit, after the initial deterministic operations.  It has $G=\varnothing$
and $\ResidualPositions=\PositionSet$, so its continuation laws are the
full output laws recalled above:
\[
 \ExactSuffixLaw{\mathfrak h_{\emptyset}}=\ExactCommitOutputLaw{w},
 \qquad
 \OracleSuffixLaw{\mathfrak h_{\emptyset}}
 =\SamplerOutputLaw{w}{\FrozenOracle}.
\]
In particular, $|\ResidualPositions|=N$.  Equivalently, along every complete
path the disjoint commit batches satisfy
\begin{equation}
 \sum_t|B(\mathfrak h_t)|=N.
 \label{app:eq:path-batches-partition-N}
\end{equation}
Therefore
\begin{equation}
 \sqHellinger(\ExactCommitOutputLaw{w},
              \SamplerOutputLaw{w}{\FrozenOracle})
 =1-\SuffixAffinity{\mathfrak h_{\emptyset}}
 \le N\frac{(\OracleRadius)^2}{2N}
 =\frac{(\OracleRadius)^2}{2}.
 \label{app:eq:root-path-budget}
\end{equation}
Counterfactual operations affect which common node is reached; they do not
generate committed coordinates and add no term to this budget.
\end{proof}

We now separate oracle error from screen failures by comparing with an
analysis-only safe decision rule, whose seed-averaged discrepancy from the
sampler is charged to the failed-screen probability.

Recall the \hyperlink{app-reached-screen-failure}{reached-call event}
$E_s=\{R_s\text{ occurs and }\ScreenSuccess{H^{(s)}}\text{ fails}\}$:
$R_s$ means the implemented sampler reaches call $s$, and $H^{(s)}$ is
its history.  Failure means that the separating-color majority condition
\eqref{app:eq:screen-success-event} is violated.
Thus $\ScreenFailureEvent=\bigcup_{s=1}^{\ScreenCallCap}E_s$.

\paragraph{Intuition.}
The safe decision rule follows the sampler until a screen fails and then
finishes by singleton commits.  Its batches are always safe, so the previous
two lemmas bound its TV error by $\OracleRadius$.  A common-path coupling
charges the discrepancy between the two outputs only to screen failure:
\[
 \mathbb E_W\mathrm{TV}(\hbox{target},\hbox{sampler})
 \le\OracleRadius+\ScreenFailureBudget.
\]
The first contribution is controlled for every fixed seed; the failure
contribution is bounded only after averaging over the colors and commit
draws, as verified below.
\textcolor{red}{We compare the full output laws: conditioning the implemented
output on screen success would also condition its reached histories.}

\begin{lemma}[Safe decision rule and failed-screen coupling]
\label{app:lem:safe-controller-comparison}
Fix a target--oracle pair satisfying the finite hypotheses of
Theorem~\ref{app:thm:finite-upper}.  There is an analysis-only structurally
safe decision rule whose conditional output law
$\SafeSamplerOutputLaw{w}{\FrozenOracle}$ satisfies
\begin{align}
 \dTV\bigl(\TargetLaw,
       \SafeSamplerOutputLaw{w}{\FrozenOracle}\bigr)
 &\le \OracleRadius
 &&\text{for every }w,
 \label{app:eq:safe-seedwise-TV}\\
 \mathbb E_{\ControllerSeed}
 \dTV\bigl(
   \SamplerOutputLaw{\ControllerSeed}{\FrozenOracle},
   \SafeSamplerOutputLaw{\ControllerSeed}{\FrozenOracle}
 \bigr)
 &\le\ScreenFailureBudget.
 \label{app:eq:implemented-safe-TV}
\end{align}
\end{lemma}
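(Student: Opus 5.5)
The plan is to define the safe rule explicitly and then handle \eqref{app:eq:safe-seedwise-TV} and \eqref{app:eq:implemented-safe-TV} by separate arguments.

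\emph{The safe rule.} Fix a seed $w$, including the colors and the draft randomness. The analysis-only decision rule copies $\RandomizedPackedSampler$ operation by operation until the first reached screen call $s$ with $E_s$. It may read the hidden forest, since it is only used in the analysis. From that point it discards the remaining plan and commits every residual position as a singleton, in public position order.

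Safety before the switch has to be checked. Up to that point every reached screen has succeeded. Peel commits are singletons, so they are safe. If no failure ever occurs, \cref{app:lem:successful-path-structure} applies: the terminal graph is the true residual forest, no cycle repair happens, and \AlgoGuardRef\ never fires. Each centroid batch therefore takes one vertex per true component and is safe. Hence the rule is structurally safe on every path.

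\emph{Proof of \eqref{app:eq:safe-seedwise-TV}.} By \cref{app:lem:exact-hybrid}, the exact-commit hybrid of the safe rule equals $\TargetLaw$. By \cref{app:lem:adaptive-hellinger-upper}, its oracle-commit law is within squared Hellinger $(\OracleRadius)^2/2$ of that hybrid. The Hellinger-to-TV comparison in \cref{app:lem:finite-hellinger-calculus}, $\dTV\le\sqrt{2}\,h$ under the normalization \eqref{app:eq:distances}, then gives $\dTV\le\OracleRadius$ for every $w$.

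\emph{Proof of \eqref{app:eq:implemented-safe-TV}.} Run both rules on one probability space: the same seed $W$ and the same primitive commit randomness. Couple the commit draws so that, before the switch, the two rules submit identical states and draw identical values. This works because they use the same frozen-oracle replies and the same product kernel at each common history. On the event that no reached screen fails, the two outputs then coincide pathwise. For fixed $w$, coupling gives $\dTV(\SamplerOutputLaw{w}{\FrozenOracle},\SafeSamplerOutputLaw{w}{\FrozenOracle})\le\Pr(\ScreenFailureEvent\mid W=w)$, where the probability is over commit draws. Averaging over $W$ and applying \eqref{app:eq:adaptive-screen-failure} from \cref{app:lem:random-screen-success} bounds the total by $\ScreenFailureBudget$.

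\emph{Main obstacle: measurability of the failure event.} For a fixed $w$, the failure event must be determined by the common prefix of the execution, so that the coupling bound holds seedwise and its average is exactly $\Pr(\ScreenFailureEvent)$. The colors are part of $w$, while the histories $H^{(s)}$ depend on the commit draws; $E_s$ is a function of $H^{(s)}$ and the colors. Both rules share the prefix up to and including the first failed screen, so the event is decided identically in both executions. I would verify this by induction over operations, using the fact that the switch occurs only after a failure.
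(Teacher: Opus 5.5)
Your proposal is correct and follows the paper's proof essentially step for step. It uses the same switch-to-singletons safe rule, \cref{app:lem:exact-hybrid,app:lem:adaptive-hellinger-upper} with $\dTV\le\sqrt{2\sqHellinger}$ for the seedwise bound, and the common-prefix coupling averaged through the adaptive screen-failure bound. The only difference is cosmetic: the paper's safe rule also switches before any \AlgoGuardRef\ fallback, whereas yours relies on the valid observation that no fallback can occur before the switch, since a fallback ends the run (so it cannot precede a failed screen) and is excluded on failure-free paths by \cref{app:lem:upper-resources}.
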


\begin{proof}

\emph{1. Define the analysis-only safe decision rule.}
The safe decision rule simulates Algorithm~\ref{app:alg:complete-sampler} with
the same fixed public row selector, draft generator, auxiliary seed, and fresh colorings while every
reached screen satisfies
\eqref{app:eq:screen-success-event}.  It is additionally allowed to inspect
the hidden residual forest.  Immediately after the first unsuccessful
execution of \AlgoBlockRef{screen}, and also before any \AlgoGuardRef\
fallback, it stops using the screen
output and commits all remaining coordinates as singletons in increasing
position order.  More formally, number the successive simulation checkpoints
(screen returns and pre-fallback checks) by $n$, and define
\[
 \tau_{\mathrm{sw}}
 :=\inf\left\{n:
 \begin{array}{l}
 \text{an unsuccessful screen has just returned, or}\\
 \text{the simulated sampler is about to activate a fallback}
 \end{array}\right\},
 \qquad \inf\varnothing:=\infty.
\]
Before this checkpoint the two rules agree; from this checkpoint onward
the safe rule uses only the singleton completion just specified.
The checkpoint index is not the peel-phase index.
Hidden-forest inspection defines only this comparison law; it is not
available to the implemented sampler.
Each singleton commit uses the same frozen oracle $\FrozenOracle$ at the
updated committed history.  Writing $\widehat X^{\mathrm{safe}}$ for its
output, define
\[
 \SafeSamplerOutputLaw{w}{\FrozenOracle}(x)
 :=\Pr(\widehat X^{\mathrm{safe}}=x\mid\ControllerSeed=w),
\]
with the target and oracle fixed and all product-commit draws integrated out.

\emph{2. Check structural safety in all three stages.}
\begin{itemize}
\item Before a switch, the peel commits in \AlgoBlockRef{peel} are singletons.
\item If the terminal stage is
reached without a switch, \cref{app:lem:successful-path-structure} identifies
the graph constructed in \AlgoBlockRef{terminal} with the true residual
forest, so every centroid batch in \AlgoBlockRef{centroid} contains one
vertex from each true residual component.
\item After a switch, all
batches are singletons.
\end{itemize}
Hence the safe decision rule is structurally safe on
every path.

\emph{3. Bound oracle error for each fixed seed.}
Apply \cref{app:lem:exact-hybrid,app:lem:adaptive-hellinger-upper}
to this safe rule and its own product-of-exact-rows hybrid.
The first lemma identifies that hybrid with $\TargetLaw$; the second
compares it with the safe rule's oracle output.  Thus, for each fixed $w$,
\begin{equation}
 \sqHellinger\bigl(
  \TargetLaw,\SafeSamplerOutputLaw{w}{\FrozenOracle}
 \bigr)
 \le\frac{(\OracleRadius)^2}{2}.
 \label{app:eq:safe-global-hellinger}
\end{equation}
By \eqref{app:eq:aux-tv-hellinger},
\begin{equation}
 \dTV\bigl(\TargetLaw,
       \SafeSamplerOutputLaw{w}{\FrozenOracle}\bigr)
 \le\sqrt{2\sqHellinger\bigl(
  \TargetLaw,\SafeSamplerOutputLaw{w}{\FrozenOracle}
 \bigr)}
 \le\OracleRadius,
\end{equation}
which proves \eqref{app:eq:safe-seedwise-TV}.

\emph{4. Couple the sampler to its own safe decision rule.}
For \eqref{app:eq:implemented-safe-TV}, fix the full decision-rule seed $w$ and couple the
two conditional executions as follows.  Before the safe decision rule switches,
whenever their histories agree, use
the same draw from their common product-commit law.  Thus their histories and
all drafts and deterministic frozen-oracle replies remain identical until the safe
decision rule switches.  By \cref{app:lem:upper-resources}, in the
implemented execution,
\[
 \{\text{cycle repair or a hard round-cap fallback occurs}\}
 \subseteq\ScreenFailureEvent.
\]
An empty peel set or the peel-phase cap triggers the normal loop exit in
\AlgoBlockRef{peel}.  On $\ScreenFailureEvent^{\mathsf c}$ no switch occurs
and the outputs coincide; repair can start only after the safe rule has
switched to singleton completion.  For the coupled outputs
$\SamplerOutput$ and $\widehat X^{\mathrm{safe}}$, the common prefix gives
the pathwise inclusions
\[
 \{\SamplerOutput\ne\widehat X^{\mathrm{safe}}\}
 \subseteq\{\tau_{\mathrm{sw}}<\infty\}
 \subseteq\ScreenFailureEvent.
\]
Indeed, a first switch caused by a failed screen occurs in that common
prefix.  A first switch caused by a fallback with no preceding failure is
excluded by \cref{app:lem:upper-resources}.  After a switch the two
executions may be completed with any coupling of their respective laws.
The first common-path failure is exactly the first failure of the
implemented execution, since the coupled histories agree up to that point.
\textcolor{red}{For any coupling $(X,Y)$ of laws $P,Q$, the coupling inequality
is $\dTV(P,Q)\le\Pr(X\ne Y)$.  Applying it to these two outputs gives}
{\color{red}
\begin{equation}
 \dTV\bigl(
   \SamplerOutputLaw{w}{\FrozenOracle},
   \SafeSamplerOutputLaw{w}{\FrozenOracle}
 \bigr)
 \le
 \Pr(\SamplerOutput\ne\widehat X^{\mathrm{safe}}
       \mid\ControllerSeed=w)
 \le
 \Pr(\ScreenFailureEvent\mid\ControllerSeed=w),
 \label{app:eq:conditional-safe-coupling}
\end{equation}
}
where the probability integrates the coupled commit draws.

\emph{5. Average the failed-screen probability over seeds.}
Averaging \eqref{app:eq:conditional-safe-coupling} over the seed gives
\begin{align*}
 &\mathbb E_W\dTV\bigl(\SamplerOutputLaw{W}{\FrozenOracle},
                        \SafeSamplerOutputLaw{W}{\FrozenOracle}\bigr)\\
 &\quad\le\mathbb E_W\Pr(\ScreenFailureEvent\mid W)
 =\Pr(\ScreenFailureEvent)
 =\Pr\!\left(\bigcup_{s=1}^{\ScreenCallCap}E_s\right)
 \le\ScreenFailureBudget.
\end{align*}
The equality averaging conditional probabilities is the law of total
expectation over colors, draft randomness, and commit draws; the final
inequality is \eqref{app:eq:adaptive-screen-failure}.  This proves
\eqref{app:eq:implemented-safe-TV}, without a seedwise failure bound.
\end{proof}

\begin{proof}[Proof of Theorem~\ref{app:thm:finite-upper}]

\emph{1. Account for submissions, commits, and depth.}
With the standard zero-query generator, the pathwise query and round bounds are
\eqref{app:eq:proof-finite-Q}--\eqref{app:eq:proof-finite-R} from
\cref{app:lem:upper-resources}.  The all-mask preprocessing state is excluded
from $\CounterfactualQueries$ by
\textcolor{red}{\cref{app:def:resources}}. Charging it gives
$\ReportedQueries=\CounterfactualQueries+1$, as in
\eqref{app:eq:reported-query-count}.
The depth bound is \eqref{app:eq:finite-oracle-depth}.

\emph{2. Combine oracle error and failed-screen error.}
For every target--oracle pair, the TV triangle inequality followed by
\cref{app:lem:safe-controller-comparison} gives
\begin{align}
 &\mathbb E_{\ControllerSeed}
 \dTV\bigl(
  \TargetLaw,
  \SamplerOutputLaw{\ControllerSeed}{\FrozenOracle}
 \bigr)\notag\\
 &\quad\le
 \mathbb E_{\ControllerSeed}
 \dTV\bigl(
  \TargetLaw,
  \SafeSamplerOutputLaw{\ControllerSeed}{\FrozenOracle}
 \bigr)
 +\mathbb E_{\ControllerSeed}
 \dTV\bigl(
  \SafeSamplerOutputLaw{\ControllerSeed}{\FrozenOracle},
  \SamplerOutputLaw{\ControllerSeed}{\FrozenOracle}
 \bigr)\notag\\
 &\quad\le
 \OracleRadius+\ScreenFailureBudget.
 \label{app:eq:randomized-upper-TV-proof}
\end{align}
This proves \eqref{app:eq:finite-upper-seed-risk} for each fixed oracle.
Taking the supremum over
$\FrozenOracle\in\OracleClass(\TargetLaw;\OracleRadius)$ gives
\eqref{app:eq:finite-upper-risk}.  With the choice
\eqref{app:eq:finite-failure-budget-choice}, its
right-hand side is $3\OracleRadius/2$.
\end{proof}


\section{\KLCaseTitle: forward-KL upper bound}
\label{app:kl-extension}

We prove \KLCase\ of \cref{main:thm:upper-curve}, with the finite
guarantee in \cref{app:thm:finite-kl-upper}.
The same sampler uses the row-KL and screen-failure budgets defined below.
Its analysis uses a reference process that draws each batch from the true
\emph{joint} conditional, even for unsafe batches.
This differs from the product-of-exact-rows hybrid used for TV.

\begin{proposalcontext}{Relation to the TV formulation}
The Hellinger/TV and forward-KL guarantees use the same product-commit rule
with different public accuracy calibrations.  The Hellinger condition (A2)
applies to the TV case.
\end{proposalcontext}

\subsection{Oracle condition and output objective}
\label{app:kl-setup}

We use the same frozen row family, probe/commit operations, and resource
counts as in
\textcolor{red}{\cref{app:def:frozen-oracle,app:def:admissible-algorithm,app:def:resources}},
with the following forward-KL accuracy condition and output risk, corresponding
to \cref{main:ass:oracle-accuracy,main:def:algorithm-risk-resources}.

\begin{definition}[Uniform forward-KL oracle and seed-averaged KL risk]
\label{app:def:kl-oracle-risk}
For a strictly positive target law and $\KLRowBudget\ge0$, let
$\KLOracleClass(\TargetLaw;\KLRowBudget)$ consist of the strictly positive
deterministic frozen maps of Definition~\ref{app:def:frozen-oracle} such that
\begin{equation}
 \sup_{(y,j):\,j\notin\ObservedSet{y}}
 \KLDivergence{\ExactRow{y}{j}}{\OracleRow{y}{j}}
 \le\KLRowBudget.
 \label{app:eq:kl-row-condition}
\end{equation}
For a complete admissible decision rule,
\textcolor{red}{use the output law of \cref{app:def:output-risk} and define}
\begin{equation}
 \KLSeedRisk(\mathcal A;\TargetLaw,\FrozenOracle)
 :=\mathbb E_{\ControllerSeed}
   \KLDivergence{\TargetLaw}
     {\SamplerOutputLaw{\ControllerSeed}{\FrozenOracle}}.
 \label{app:eq:kl-seed-risk}
\end{equation}
The decision-rule seed excludes the primitive randomness used to draw commit
values.  All KL divergences use natural logarithms and the displayed forward
direction: target or exact conditional first, generated law or oracle second.
\end{definition}

The bound in \eqref{app:eq:kl-row-condition} is uniform over \emph{all} masked
states, including counterfactual and draft-filled inputs, not merely an
average under a data--mask or generated-trajectory law.  Lemma
\ref{app:lem:kl-hellinger} implies
\begin{equation}
 \KLRowBudget\le\frac{(\OracleRadius)^2}{N}
 \quad\Longrightarrow\quad
 \KLOracleClass(\TargetLaw;\KLRowBudget)
 \subseteq\OracleClass(\TargetLaw;\OracleRadius).
 \label{app:eq:kl-oracle-inclusion}
\end{equation}
Thus all existing preprocessing and screen certificates apply with this
public Hellinger radius.  Neither the converse inclusion nor an output-KL
bound from (A2) alone is asserted.  Reverse row KL is sufficient for (A2),
but is not the assumption used for the forward-KL result here.

\subsection{Adaptive joint-reference identity}
\label{app:kl-joint-reference}

Learning error and conditional total correlation also appear in the adaptive
planner decomposition of \citet[Proposition~2]{lavenant2025error}
(\href{https://arxiv.org/html/2510.25544v2}{arXiv:2510.25544v2}).
\textcolor{red}{For completeness, we prove the fixed-seed equality for our
frozen-oracle interface, keeping counterfactual operations in the adaptive
execution path.}

\emph{The joint-reference process.}
Fix the frozen oracle and a decision-rule seed $w$.  Keep the decision rule,
counterfactual queries, oracle replies, public row selector, and draft-generation rule fixed, but
replace each commit kernel by the \emph{true joint conditional} of that
batch given the committed history.  Denote this analysis-only process by
$\JointReferenceLaw{w}$; its dependence on the target, oracle, and decision rule
is suppressed.  Unlike $\ExactCommitOutputLaw{w}$, it draws a dependent
batch jointly when the decision rule selects an unsafe batch.  It need not be
implementable by the available oracle.

\emph{Its conditional batch law and total correlation.}
Here $\mathfrak h$ is the
\hyperlink{app-execution-tree-state}{stored pre-commit execution-tree state}
from the proof of Lemma~\ref{app:lem:adaptive-hellinger-upper}: it includes
the committed history, transcript, and internal decision-rule state.
At this node, let $G(\mathfrak h)$ and
$x_{G(\mathfrak h)}$ be the committed set and values, $B(\mathfrak h)$
the next batch, and $y(\mathfrak h)$ the corresponding commit state.  Put
\[
 \JointBatchLaw{\mathfrak h}
 :=\mathcal L_{\TargetLaw}
   (X_{B(\mathfrak h)}\mid
    X_{G(\mathfrak h)}=x_{G(\mathfrak h)}).
\]
Its singleton marginals are $\ExactRow{y(\mathfrak h)}{j}$.
The conditional batch's total correlation is
\[
 \mathrm{TC}(\JointBatchLaw{\mathfrak h})
 :=\KLDivergence{\JointBatchLaw{\mathfrak h}}
  {\bigotimes_{j\in B(\mathfrak h)}\ExactRow{y(\mathfrak h)}{j}}.
\]

\paragraph{Intuition.}
For one batch, insert the product of its true singleton marginals into
the likelihood ratio:
\[
 \log\frac{p(X_B)}{\prod_jq_j(X_j)}
 =\log\frac{p(X_B)}{\prod_jp_j(X_j)}
  +\sum_j\log\frac{p_j(X_j)}{q_j(X_j)}.
\]
Expectation under the true joint law gives the batch's total correlation
plus its row-KL errors.  The adaptive chain rule then adds these quantities
along the true-joint reference path.

\begin{lemma}[Joint-reference chain rule and KL decomposition
\textcolor{red}{(cf.\ {\citealp[Eq.~(7) and Appendix~A.1]{benhamu2025accelerated}})}]
\label{app:lem:adaptive-joint-kl}
For every complete admissible decision rule and every fixed seed $w$, the output
law under $\JointReferenceLaw{w}$ equals $\TargetLaw$, even if its batches
are not structurally safe.  Moreover,
\begin{equation}
\begin{aligned}
 &\KLDivergence{\TargetLaw}{\SamplerOutputLaw{w}{\FrozenOracle}}\\
 &\quad=\mathbb E_{\JointReferenceLaw{w}}\sum_t
 \left[
  \mathrm{TC}(\JointBatchLaw{\mathfrak h_t})
  +\sum_{j\in B(\mathfrak h_t)}
    \KLDivergence{\ExactRow{y(\mathfrak h_t)}{j}}{\OracleRow{y(\mathfrak h_t)}{j}}
 \right].
\end{aligned}
 \label{app:eq:adaptive-joint-kl}
\end{equation}
Here $t$ runs over the nonempty commits on the reference path.  Under
\eqref{app:eq:kl-row-condition}, the sum of row-KL terms is at most
$N\KLRowBudget$ on every such path.  The sum of total-correlation terms
is at most $N\log\VocabSize$ on every path and is zero whenever all batches
are structurally safe for the target forest.
\end{lemma}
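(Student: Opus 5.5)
The plan mirrors the adaptive midpoint identity (\cref{app:lem:adaptive-midpoint-lower}): first the chain rule along a single execution path, then the recursion over the execution tree. Fix $w$ and the frozen oracle. As in Step~1 of that proof, every full assignment $x\in\Vocab^N$ determines a unique reference path. Replay the seed-fixed decision rule and the deterministic frozen-oracle probes, and at each commit supply $x_{B(\mathfrak h_t)}$. This yields nodes $\mathfrak h_t$, batches $B_t$ and sets $G_t$ that are disjoint and exhaustive by the completion rule. Along this path the reference masses are the factors $\JointBatchLaw{\mathfrak h_t}(x_{B_t})=\TargetLaw(X_{G_t}=x_{G_t})/\TargetLaw(X_{G_{t-1}}=x_{G_{t-1}})$. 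Their product telescopes to $\TargetLaw(x)$, exactly as in \eqref{app:eq:adaptive-exact-chain-rule}, so the output law of $\JointReferenceLaw{w}$ is $\TargetLaw$. Safety is never used, because each factor is the true joint conditional rather than a product of rows. Along the same path, the implemented output mass is $\SamplerOutputLaw{w}{\FrozenOracle}(x)=\prod_t\prod_{j\in B_t}\OracleRowMass{y(\mathfrak h_t)}{j}{x_j}$, as in \eqref{app:eq:hard-product-path-factorization}.

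For the decomposition, take the log-likelihood ratio pointwise:
\[
\log\frac{\TargetLaw(x)}{\SamplerOutputLaw{w}{\FrozenOracle}(x)}
=\sum_t\left[\log\frac{\JointBatchLaw{\mathfrak h_t}(x_{B_t})}{\prod_{j\in B_t}\ExactRowMass{y(\mathfrak h_t)}{j}{x_j}}
+\sum_{j\in B_t}\log\frac{\ExactRowMass{y(\mathfrak h_t)}{j}{x_j}}{\OracleRowMass{y(\mathfrak h_t)}{j}{x_j}}\right].
\]
Here I insert the product of true singleton marginals. These are positive, and they are the marginals of $\JointBatchLaw{\mathfrak h_t}$ because conditioning on $X_{G}=x_G$ is the same as revealing $y(\mathfrak h_t)$. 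Next I take expectations under $X\sim\TargetLaw$, which is the reference path law. The number of terms varies with the path, so I pad to $N$ rounds with empty batches, as in Step~3 of the midpoint proof. I then apply the tower property round by round: the node $\mathfrak h_t$ is measurable with respect to the values committed before round $t$, and conditionally on that node the batch values have law $\JointBatchLaw{\mathfrak h_t}$. The first summand therefore contributes $\mathrm{TC}(\JointBatchLaw{\mathfrak h_t})$, and each row term contributes $\KLDivergence{\ExactRow{y(\mathfrak h_t)}{j}}{\OracleRow{y(\mathfrak h_t)}{j}}$, since under the reference law the coordinate $X_j$ has the exact row as its marginal. Summing gives \eqref{app:eq:adaptive-joint-kl}. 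All sums are finite and the masses are strictly positive, so no integrability issue arises.

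The remaining bounds are short. Each path's batches partition $\PositionSet$, so \eqref{app:eq:kl-row-condition} bounds the row-KL part by $N\KLRowBudget$ on every path. For total correlation, $\mathrm{TC}(p)=\sum_j H(p_j)-H(p)\le\sum_{j\in B}H(p_j)\le|B|\log\VocabSize$, and summing over the partition gives at most $N\log\VocabSize$. If a batch is structurally safe, \cref{app:lem:residual-markov} (the factorization in \eqref{app:eq:centroid-batch-factorization}) makes $\JointBatchLaw{\mathfrak h}$ equal to the product of its marginals, so its total correlation is zero.

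I expect the main obstacle to be the bookkeeping in the second step. One must show that conditioning on the reached stored state adds nothing beyond $X_{G}=x_G$, so that the conditional batch law really is $\JointBatchLaw{\mathfrak h_t}$. The reconstruction argument from the midpoint lemma handles this, because for fixed $w$ and a fixed frozen oracle the state is a deterministic function of the committed values. One must also handle variable path lengths, which the padding handles.
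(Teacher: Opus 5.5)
Your proof is correct and follows the paper's argument. The common steps are: a unique fixed-seed path for each output, telescoping to show the reference output law is $\TargetLaw$, the cylinder (reconstruction) identity for the conditional batch law at a reached node, and the entropy and residual-Markov bounds on total correlation. The only difference is cosmetic: you insert the product of singleton marginals into the pathwise log-ratio before taking expectations, whereas the paper first groups the log-ratio into batch KL divergences by reached nodes and then splits each batch KL into total correlation plus row KLs.
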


\begin{proof}
{\color{red}

\emph{1. Identify the reference output law along each adaptive path.}
With $w$ and the frozen map fixed, all operations between commits are
deterministic functions of the observed past.  Every full assignment $x$
therefore determines a unique execution path.  For a reachable node
$\mathfrak h$, the full assignments that visit it form the cylinder
\[
 \{u\in\Vocab^N:\mathfrak h\text{ lies on the fixed-seed path of }u\}
 =
 \{u\in\Vocab^N:u_{G(\mathfrak h)}=x_{G(\mathfrak h)}\}.
\]
For the forward inclusion, visiting the node requires its committed
values.  For the reverse inclusion, induct along the path to
$\mathfrak h$: every earlier committed set is a subset of
$G(\mathfrak h)$, so these assignments produce the same earlier commit
values, states, frozen-oracle replies, and next actions.
Counterfactual replies are functions of the submitted states, not of the
uncommitted realization of $u$.  Thus the transcript imposes no further
restriction on that realization.

For the ordered partition $(B_t(x))_t$ chosen along this path, the adaptive
chain rule gives $\prod_t \TargetLaw\!\left(
  X_{B_t(x)}=x_{B_t(x)}\mid
  X_{G_{t-1}(x)}=x_{G_{t-1}(x)}\right)=\TargetLaw(x)$.
The product telescopes through successive committed sets exactly as in
\eqref{app:eq:adaptive-exact-chain-rule}.
This is the reference output mass, so the reference output has law
$\TargetLaw$.  The cylinder identity then gives
$\JointReferenceLaw{w}(\text{reach }\mathfrak h)
 =\TargetLaw(X_{G(\mathfrak h)}=x_{G(\mathfrak h)})>0$ and the node
conditional
$\JointReferenceLaw{w}(X_{B(\mathfrak h)}=z\mid\text{reach }\mathfrak h)
 =\TargetLaw(X_{B(\mathfrak h)}=z\mid
                     X_{G(\mathfrak h)}=x_{G(\mathfrak h)})
 =p(z\mid\mathfrak h)$,
where $p(z\mid\mathfrak h)$ is the mass of
$\JointBatchLaw{\mathfrak h}$.
The actual output mass along the same path is
$\SamplerOutputLaw{w}{\FrozenOracle}(x)
 =\prod_t\prod_{j\in B_t(x)}
  \OracleRowMass{y(\mathfrak h_t)}{j}{x_j}$.

\emph{2. Group the pathwise log ratios by reached nodes.}
Strict positivity makes both masses positive.  Since the reference output
has law $\TargetLaw$ and its conditional at each reached node $\mathfrak h$
is $p(\cdot\mid\mathfrak h)$, grouping paths by these nodes gives
\begin{align*}
 &\KLDivergence{\TargetLaw}{\SamplerOutputLaw{w}{\FrozenOracle}}\\
 &=\sum_x\TargetLaw(x)
   \sum_{\mathfrak h\text{ on the path of }x}
   \log\frac{p(x_{B(\mathfrak h)}\mid\mathfrak h)}
             {\prod_{j\in B(\mathfrak h)}
               \OracleRowMass{y(\mathfrak h)}{j}{x_j}}\\
 &=\sum_{\mathfrak h}
   \Pr_{\JointReferenceLaw{w}}\{\text{reach }\mathfrak h\}
   \sum_z p(z\mid\mathfrak h)
   \log\frac{p(z\mid\mathfrak h)}
             {\prod_{j\in B(\mathfrak h)}
               \OracleRowMass{y(\mathfrak h)}{j}{z_j}}\\
 &=\mathbb E_{\JointReferenceLaw{w}}\sum_t
   \KLDivergence{\JointBatchLaw{\mathfrak h_t}}
     {\bigotimes_{j\in B(\mathfrak h_t)}\OracleRow{y(\mathfrak h_t)}{j}}.
\end{align*}
Here $\mathfrak h$ ranges over nonterminal nodes of the fixed-seed execution
tree.  Every path has at most $N$ nonempty commits and each batch has finitely
many outcomes, so the sums are finite.  Each node contributes exactly when
the path visits it; no common path length is required.

\emph{3. Decompose the batch KL.}
For a positive batch law $p$ with singleton marginals $p_j$, recall
$\mathrm{TC}(p)
 =\mathbb E_p\log\frac{p(X_B)}{\prod_jp_j(X_j)}$.
Inserting the product of these marginals into the likelihood ratio gives
$\KLDivergence{p}{\bigotimes_jq_j}
 =\mathrm{TC}(p)
  +\sum_j\mathbb E_p\log\frac{p_j(X_j)}{q_j(X_j)}$.
For each $j$, expanding the expectation gives
$\mathbb E_p\log\frac{p_j(X_j)}{q_j(X_j)}
 =\sum_{a\in\Vocab}\bigl(\sum_{x_B:x_j=a}p(x_B)\bigr)
     \log\frac{p_j(a)}{q_j(a)}$.
Since $\sum_{x_B:x_j=a}p(x_B)=p_j(a)$, this equals
$\sum_{a\in\Vocab}p_j(a)\log\frac{p_j(a)}{q_j(a)}
 =\KLDivergence{p_j}{q_j}$.
Thus $\KLDivergence{p}{\bigotimes_jq_j}
 =\mathrm{TC}(p)+\sum_j\KLDivergence{p_j}{q_j}$, which proves
\eqref{app:eq:adaptive-joint-kl} after substitution into Step~2.

\emph{4. Bound the pathwise sums.}
Uniform row accuracy and
$\sum_t|B(\mathfrak h_t)|=N$ give the $N\KLRowBudget$ bound.
\hypertarget{app-finite-entropy}{}
For a finite probability mass $p$, write
$H(p):=-\sum_zp(z)\log p(z)$, with $0\log0:=0$.
For a batch law on $\Vocab^B$ with marginals $p_j$, expanding gives
$\mathrm{TC}(p)
 =\sum_zp(z)\log p(z)
   -\sum_{j\in B}\sum_{a\in\Vocab}
      \bigl(\sum_{z:z_j=a}p(z)\bigr)\log p_j(a)$.
Substituting $\sum_{z:z_j=a}p(z)=p_j(a)$ yields
$\mathrm{TC}(p)=\sum_{j\in B}H(p_j)-H(p)$.
Here $H(p)\ge0$ because $p(z)\le1$, and
$H(p_j)\le\log\VocabSize$ follows from
$\KLDivergence{p_j}{\operatorname{Unif}(\Vocab)}
 =\log\VocabSize-H(p_j)\ge0$.
Apply this calculation with $p=\JointBatchLaw{\mathfrak h}$ and
$p_j=\ExactRow{y(\mathfrak h)}{j}$ to obtain
$0\le\mathrm{TC}(\JointBatchLaw{\mathfrak h})
 \le|B(\mathfrak h)|\log\VocabSize$.
Summing over the disjoint batches gives $N\log\VocabSize$.
Finally, a structurally safe batch
contains at most one vertex per true residual component (or is a singleton).
Lemma~\ref{app:lem:residual-markov} makes its true conditional a product, so
its total correlation is zero.
}
\end{proof}

\subsection{Finite bound and the epsilon calibration}
\label{app:kl-finite}

As in the finite \HellingerCase\ bound of \cref{app:thm:finite-upper},
preprocessing uses the standard floor;
feasibility and screen separation remain explicit hypotheses.
With the accuracy calibration in \cref{app:cor:kl-epsilon}, the theorem
below yields \KLCase\ of \cref{main:thm:upper-curve} through
\cref{app:cor:fixed-accuracy-main}.

\paragraph{Intuition.}
Every path commits $N$ coordinates, so the row-error sum is at most
$N\KLRowBudget$.  Successful reference paths use only safe batches and
have zero batch total correlation.  On all other paths the total
correlation is at most $N\log\VocabSize$.  Thus
\[
 \hbox{expected output KL}
 \le N\KLRowBudget
     +N\log\VocabSize\,
        \Pr_{\text{joint reference and seed}}(\ScreenFailureEvent).
\]
The fresh-color argument bounds this last probability by
$\ScreenFailureBudget$ under the reference process as well.

\begin{theorem}[\KLCaseTitle: finite randomized forward-KL upper bound]
\label{app:thm:finite-kl-upper}
We work in \KLCase, with the oracle class and output risk of
\cref{app:def:kl-oracle-risk}.
Fix $N\ge10$, $\VocabSize\ge2$, and the target, response, and algorithmic
setup of Appendices~\ref{app:setup} and~\ref{app:algorithm}.  Let
$\OracleRadius\in[0,1]$ and $0\le\KLRowBudget\le(\OracleRadius)^2/N$.
Use Algorithm~\ref{app:alg:complete-sampler} with the standard zero-query
draft, the public radius $\OracleRadius$, and
$\ScreenFailureBudget\in(0,1)$.  Suppose preprocessing is feasible and
$\ScreenResolution<\EdgeSignal$.  For every
$\FrozenOracle\in\KLOracleClass(\TargetLaw;\KLRowBudget)$,
\begin{equation}
 \KLSeedRisk(\RandomizedPackedSampler;\TargetLaw,\FrozenOracle)
 \le N\KLRowBudget+N\log\VocabSize\,\ScreenFailureBudget.
 \label{app:eq:finite-kl-risk}
\end{equation}
On every path, including failed-screen paths, the resources satisfy
\begin{align}
 \CounterfactualQueries
 &\le\ScreenColumnCount\ScreenCallCap\RandomColoringCount
       \RandomColorCount(\RandomColorCount+\ChunkCount),\notag\\
 \CommitRounds&\le\HardRoundCap,
 \qquad
 \OracleDepth\le1+\CommitRounds+\ScreenCallCap.
 \label{app:eq:finite-kl-resources}
\end{align}
The parameters on the right are the original finite parameters in
\eqref{app:eq:screen-column-count}, \eqref{app:eq:random-color-parameters},
and \eqref{app:eq:screen-call-and-round-caps}.  Charging the initial all-mask
state gives $\ReportedQueries=\CounterfactualQueries+1$ by
\eqref{app:eq:reported-query-count}, so its submission bound is the
displayed bound for $\CounterfactualQueries$ plus one.
If feasibility and separation hold
for every oracle in this KL class, the supremum of the risk in
\eqref{app:eq:finite-kl-risk} over that class has the same bound.
\end{theorem}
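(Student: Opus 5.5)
The plan is to combine the joint-reference KL identity of \cref{app:lem:adaptive-joint-kl} with the fresh-color failure bound, after checking that all structural lemmas proved in \HellingerCase\ remain available. The resource bounds in \eqref{app:eq:finite-kl-resources} need no new work. Since $\KLRowBudget\le(\OracleRadius)^2/N$, the inclusion \eqref{app:eq:kl-oracle-inclusion} places $\FrozenOracle$ in $\OracleClass(\TargetLaw;\OracleRadius)$. Hence the row-TV bound \eqref{app:eq:oracle-tv-radius} holds, and \cref{app:lem:preprocessing-certificates,app:lem:low-degree-screen,app:lem:successful-path-structure,app:lem:upper-resources} apply verbatim; the last of these gives the query, round and depth bounds pathwise. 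The charged count $\ReportedQueries=\CounterfactualQueries+1$ is then immediate from \eqref{app:eq:reported-query-count}.

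For the risk, I would fix a seed $w$ and apply \eqref{app:eq:adaptive-joint-kl}. The row-KL part is at most $N\KLRowBudget$ on every reference path, because the batches partition $\PositionSet$. The total-correlation part is at most $N\log\VocabSize$ on every path, and it vanishes on any reference path whose batches are all structurally safe. The pathwise claim I need is that on a reference path where every reached screen succeeds, all batches are structurally safe. Peel commits are singletons. By \cref{app:lem:successful-path-structure}, no repair or \AlgoGuardRef\ fallback occurs, and the terminal graph equals the true residual forest, so each centroid batch takes one vertex per true component. That lemma is a deterministic statement about any path on which all reached screens succeed; this holds whatever kernel generated the commit values. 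This yields
\[
 \KLDivergence{\TargetLaw}{\SamplerOutputLaw{w}{\FrozenOracle}}
 \le N\KLRowBudget+N\log\VocabSize\,
   \Pr_{\JointReferenceLaw{w}}(\ScreenFailureEvent).
\]
Averaging over $W$ then reduces the theorem to $\mathbb E_W\Pr_{\JointReferenceLaw{W}}(\ScreenFailureEvent)\le\ScreenFailureBudget$.

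The main obstacle is this last bound, because the failure event is now measured under the analysis-only reference process rather than the implemented one. I would rerun the proof of \cref{app:lem:random-screen-success} for the process in which the coloring blocks are drawn fresh and commits come from true joint conditionals. At each reached screen call $s$, the history, transcript and draft are measurable before the current color block is revealed. That block is independent of the past, since the reference process changes only commit kernels and not the color law. Hence $\Pr(E_s\mid\mathcal F_{s-1})\le\mathbf 1_{R_s}\ScreenFailureBudget/\ScreenCallCap$ holds exactly as before, and summing over $s\le\ScreenCallCap$ gives the bound.

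One subtlety needs care: $\JointReferenceLaw{w}$ conditions on the whole seed, including the colors. So the averaging must be done jointly over seed and reference draws, as in step~5 of \cref{app:lem:safe-controller-comparison}, rather than seedwise. I expect this to be handled by the tower property, writing $\mathbb E_W\Pr_{\JointReferenceLaw{W}}(\cdot)$ as a single probability under the joint law of seed and reference commits. The final supremum statement follows because the bound is uniform over oracles in the class.
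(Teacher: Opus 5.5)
Your proposal is correct and follows essentially the same route as the paper's proof. You transfer the preprocessing, screen, structural and resource lemmas through \eqref{app:eq:kl-oracle-inclusion}, bound the fixed-seed KL via \cref{app:lem:adaptive-joint-kl} with total correlation vanishing on successful reference paths (the successful-path certificates are deterministic in the realized commit values), and rerun the fresh-color argument of \cref{app:lem:random-screen-success} under the joint law $\mathbb E_W\JointReferenceLaw{W}$ of seed and independent reference commit draws. One small attribution point: that \AlgoGuardRef\ is never triggered during the centroid stage is shown in step~4 of \cref{app:lem:upper-resources}, which the paper cites alongside \cref{app:lem:successful-path-structure}, rather than in \cref{app:lem:successful-path-structure} alone.
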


\begin{proof}

\emph{1. Transfer the oracle and resource guarantees.}
\textcolor{red}{By \cref{app:lem:kl-hellinger} and
\eqref{app:eq:kl-row-condition}, every valid $(y,j)$ satisfies}
{\color{red}
\[
 \sqHellinger(\ExactRow{y}{j},\OracleRow{y}{j})
 \le\tfrac12\KLDivergence{\ExactRow{y}{j}}{\OracleRow{y}{j}}
 \le\tfrac12\KLRowBudget
 \le\frac{(\OracleRadius)^2}{2N}.
\]
}
\textcolor{red}{Thus the oracle meets \cref{app:ass:oracle-accuracy} (A2),
as recorded in \eqref{app:eq:kl-oracle-inclusion}, and
\cref{app:lem:upper-resources} transfers the preprocessing, screen, and
resource guarantees.  The output-KL bound instead uses the preceding
\cref{app:lem:adaptive-joint-kl}.}

\emph{2. Bound the KL cost at a fixed seed.}
\textcolor{red}{Using the call notation of
\cref{app:lem:random-screen-success} on the joint-reference execution,
define the reached-screen failure event by}
{\color{red}
\[
 \ScreenFailureEvent
 :=\bigcup_{s=1}^{\ScreenCallCap}
   \{R_s\text{ occurs and }\ScreenSuccess{H^{(s)}}\text{ fails}\}.
\]
}
On $\ScreenFailureEvent^{\mathsf c}$, the deterministic successful-path
certificates in Lemmas~\ref{app:lem:successful-path-structure}
and~\ref{app:lem:upper-resources} imply that neither cycle repair nor
\AlgoGuardRef\ is activated, and every commit is a singleton peel in
\AlgoBlockRef{peel} or a safe
centroid batch in \AlgoBlockRef{centroid}.  These statements hold
for every sequence of realized values, so they also hold for reference
commits.  Thus the total-correlation cost vanishes on
$\ScreenFailureEvent^{\mathsf c}$, while the pathwise bound from
Lemma~\ref{app:lem:adaptive-joint-kl} applies on the failure event:
\[
 \sum_t\mathrm{TC}(\JointBatchLaw{\mathfrak h_t})
 =\mathbf 1_{\ScreenFailureEvent}
   \sum_t\mathrm{TC}(\JointBatchLaw{\mathfrak h_t})
 \le N\log\VocabSize\,\mathbf 1_{\ScreenFailureEvent}.
\]
Repair batches are singletons, so each has zero total correlation.
Their coordinates still enter the row-KL sum only once, as for every
other commit; \textcolor{red}{on every path,}
{\color{red}
\[
 \sum_t\sum_{j\in B(\mathfrak h_t)}
 \KLDivergence{\ExactRow{y(\mathfrak h_t)}{j}}
              {\OracleRow{y(\mathfrak h_t)}{j}}
 \le\KLRowBudget\sum_t|B(\mathfrak h_t)|
 =N\KLRowBudget.
\]
}
Taking the reference expectation replaces the indicator by
$\JointReferenceLaw{w}(\ScreenFailureEvent)$.  Adding the pathwise row-KL
budget $N\KLRowBudget$ in Lemma~\ref{app:lem:adaptive-joint-kl} therefore
gives, for each fixed $w$,
\begin{equation}
 \KLDivergence{\TargetLaw}{\SamplerOutputLaw{w}{\FrozenOracle}}
 \le N\KLRowBudget
   +N\log\VocabSize\,\JointReferenceLaw{w}(\ScreenFailureEvent).
 \label{app:eq:kl-fixed-seed-failure}
\end{equation}

\emph{3. Average the screen failure under the joint-reference law.}
It remains to average the failure term over the decision-rule seed.  Generate
reference commit values with a fresh primitive random source independent of
the color blocks and draft randomness.  Just before each reached color
draw in \AlgoBlockRef{draft}, condition on the reference history and
completed draft generation,
but not on the current or future colors.  The residual forest and draft are
then fixed and the new colors remain independent and uniform.  The proof of
Lemma~\ref{app:lem:random-screen-success} uses only these facts, not the law
of preceding commit values: for each ordered pair, at most $2\DegreeCutoff$
color equalities are forbidden, each with probability
$1/\RandomColorCount$, and independent repetition gives the failure bound
$\exp\{-\RandomColoringCount/8\}$.  Union bounds over fewer than $N^2$
pairs give the same conditional bound at a reached reference call.
To pass from that conditional estimate to the seed average, define the
joint probability of an event $A$ by
\[
 \Pr_{\mathrm{joint}}(A):=
       \mathbb E_W\JointReferenceLaw{W}(A).
\]
Form $\mathcal T_{s-},f^{(s)},\mathcal F_{s-1},R_s,E_s$ exactly as in
\cref{app:lem:random-screen-success}, now from the reference execution.
In particular, $\mathcal F_{s-1}=\sigma(\mathcal T_{s-},f^{(s)})$
excludes the current and future colors, and
$\ScreenFailureEvent=\bigcup_{s=1}^{\ScreenCallCap}E_s$.  Then
\[
 \Pr_{\mathrm{joint}}(E_s\mid\mathcal F_{s-1})
 \le\mathbf 1_{R_s}\frac{\ScreenFailureBudget}{\ScreenCallCap}.
\]
The union bound and conditional expectation give
\begin{align*}
 \mathbb E_W\JointReferenceLaw{W}(\ScreenFailureEvent)
 &=\Pr_{\mathrm{joint}}\!\left(
       \bigcup_{s=1}^{\ScreenCallCap}E_s\right)\\
 &\le\sum_{s=1}^{\ScreenCallCap}
       \mathbb E_{\mathrm{joint}}\!
        \left[\Pr_{\mathrm{joint}}(E_s\mid\mathcal F_{s-1})\right]\\
 &\le\frac{\ScreenFailureBudget}{\ScreenCallCap}
       \sum_{s=1}^{\ScreenCallCap}\Pr_{\mathrm{joint}}(R_s)
 \le\ScreenFailureBudget.
\end{align*}
\textcolor{red}{This sums conditional failure probabilities over adaptively
reached calls; it does not require the calls to be independent.}
We have therefore proved
\begin{equation}
 \mathbb E_{\ControllerSeed}
   \JointReferenceLaw{\ControllerSeed}(\ScreenFailureEvent)
 \le\ScreenFailureBudget.
 \label{app:eq:joint-reference-screen-failure}
\end{equation}
This is a joint probability over fresh colors and reference commit draws;
it is \emph{not} a failure-probability bound conditional on the full seed
$w$.  Averaging \eqref{app:eq:kl-fixed-seed-failure} proves
\eqref{app:eq:finite-kl-risk}.  In particular, failed-screen paths are charged
through their finite total-correlation cost, not by attempting to convert a
TV coupling bound into KL.  No minimum atom size or bounded likelihood ratio
is assumed beyond the existing strict positivity.
\end{proof}

\begin{corollary}[\KLCaseTitle\ with output KL at most epsilon]
\label{app:cor:kl-epsilon}
Fix $0<\KLTargetTolerance\le1$ and use the preceding finite target and
algorithmic setup.  Choose
\begin{equation}
 \KLRowBudget:=\frac{\KLTargetTolerance}{2N},
 \qquad
 \ScreenFailureBudget:=
       \frac{\KLTargetTolerance}{2N\log\VocabSize}.
 \label{app:eq:kl-epsilon-calibration}
\end{equation}
Run the existing algorithm with its public radius set to $\KLOracleRadius$.
If its preprocessing is feasible and $\ScreenResolution<\EdgeSignal$ with
this radius, then for every
$\FrozenOracle\in\KLOracleClass(\TargetLaw;\KLRowBudget)$,
\begin{equation}
 \KLSeedRisk(\RandomizedPackedSampler;\TargetLaw,\FrozenOracle)
 \le\KLTargetTolerance,
 \label{app:eq:kl-epsilon-output}
\end{equation}
with the pathwise bounds in \eqref{app:eq:finite-kl-resources}.  Thus the
per-row budget scales as $\KLTargetTolerance/N$, not
$(\KLTargetTolerance)^2/N$.
\end{corollary}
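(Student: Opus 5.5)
The plan is to apply \cref{app:thm:finite-kl-upper} with the calibration \eqref{app:eq:kl-epsilon-calibration} and then verify the arithmetic. First we check the hypotheses of that theorem. Take the public radius $\OracleRadius:=\KLOracleRadius=\sqrt{\KLTargetTolerance/2}$. Since $0<\KLTargetTolerance\le1$, this gives $\OracleRadius\in(0,1]$. The row budget then satisfies $\KLRowBudget=\KLTargetTolerance/(2N)=(\OracleRadius)^2/N$, which is exactly the condition $0\le\KLRowBudget\le(\OracleRadius)^2/N$ required by \cref{app:thm:finite-kl-upper}.

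Next we check the screen-failure budget. Because $\VocabSize\ge2$ and $N\ge10$, we have $2N\log\VocabSize\ge20\log2>1$. Combined with $\KLTargetTolerance\le1$, this gives $\ScreenFailureBudget=\KLTargetTolerance/(2N\log\VocabSize)\in(0,1)$. Preprocessing feasibility and the separation $\ScreenResolution<\EdgeSignal$ are computed with this same radius, and both are assumed in the corollary's statement. The draft generator is the standard zero-query draft. Every oracle in $\KLOracleClass(\TargetLaw;\KLRowBudget)$ is therefore covered by \cref{app:thm:finite-kl-upper}.

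Substituting into \eqref{app:eq:finite-kl-risk} gives
\[
 \KLSeedRisk\le N\cdot\frac{\KLTargetTolerance}{2N}
 +N\log\VocabSize\cdot\frac{\KLTargetTolerance}{2N\log\VocabSize}
 =\frac{\KLTargetTolerance}{2}+\frac{\KLTargetTolerance}{2}=\KLTargetTolerance,
\]
which is \eqref{app:eq:kl-epsilon-output}. The pathwise resource bounds \eqref{app:eq:finite-kl-resources} carry over unchanged, because the resource part of the theorem does not depend on the accuracy parameters. Only the value of $\RandomColoringCount$ changes, and it becomes the \KLCase\ entry of \eqref{app:eq:accuracy-color-parameters}.

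The only delicate point is consistency of the radius. The decision rule (thresholds $\RowTVError=\OracleRadius/\sqrt N$, bank construction, vote threshold) must use $\KLOracleRadius$. Step~1 of the theorem's proof needs the oracle's row-TV error to be at most $\RowTVError$. That step gives $\sqHellinger\le\KLRowBudget/2=(\OracleRadius)^2/(2N)$, which places the oracle in $\OracleClass(\TargetLaw;\KLOracleRadius)$ via \eqref{app:eq:kl-oracle-inclusion}. Hence all certificates hold at this radius. Finally, the closing remark follows directly from the calibration: the per-row budget is $\KLTargetTolerance/(2N)$, which is linear in $\KLTargetTolerance$.
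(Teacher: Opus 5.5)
Your proposal is correct and follows essentially the same route as the paper's proof. Both apply \cref{app:thm:finite-kl-upper} with public radius $\KLOracleRadius\in(0,1]$, so that $\KLRowBudget=(\KLOracleRadius)^2/N$, and check that the failure budget lies in $(0,1)$ for $N\ge10$, $\VocabSize\ge2$. Substituting into \eqref{app:eq:finite-kl-risk} then makes each of the two terms equal to $\KLTargetTolerance/2$.
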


\begin{proof}
The auxiliary radius is in $[0,1]$,
$\KLRowBudget=(\KLOracleRadius)^2/N$, and the displayed failure budget is
in $(0,1)$ for $N\ge10$, $\VocabSize\ge2$.  Each term on the right of
\eqref{app:eq:finite-kl-risk} is $\KLTargetTolerance/2$.
\end{proof}

The auxiliary radius $\KLOracleRadius$ is an input calibration for \KLCase,
not a redefinition of the \HellingerCase\ parameter $\OracleRadius$.

\begin{proposalcontext}{Additional metric consequences}
Pinsker's
inequality and Jensen's inequality imply from \eqref{app:eq:kl-epsilon-output}
that $\mathbb E_W\dTV(\TargetLaw,\SamplerOutputLaw{W}{\FrozenOracle})^2
\le\KLTargetTolerance/2$ and the mean TV is at most
$\sqrt{\KLTargetTolerance/2}$.  This explains the KL versus squared-TV
scaling convention in the comparison; it is not a two-sided equivalence of
the metrics.  Convexity also bounds the forward KL of the seed-marginalized
output by \eqref{app:eq:kl-seed-risk}; the latter, stronger objective is the
one proved here.
\end{proposalcontext}

\ifincludeexponents
The independent-signal KL rate and its direct-parameter form are retained
in Appendix~\ref{app:independent-signal}.
\fi

\section{From finite bounds to the main theorems}
\label{app:benchmark-specializations}

We derive the main-text Theorems~\ref{main:thm:upper-curve}
and~\ref{main:thm:lower-envelope} directly from the finite results.
Under the main-text scaling in \textcolor{red}{\cref{main:ass:main-scaling}
(Scaling)}, the row-TV
allowance $\TargetTolerance/(2\sqrt N)$ and vocabulary floor
$\VocabSize^{-1}$ are $o(\EdgeSignal)$.
\Cref{app:lem:benchmark-feasibility} verifies the resulting finite
upper-bound design conditions.
Next, \cref{app:cor:signal-calibrated-finite-lower} calibrates the finite
lower bound with an independent Hellinger radius and checks the matching
witness's class inclusion at the actual finite public values.
With both ingredients in place, \cref{app:cor:fixed-accuracy-main} derives
both main theorems with accuracy fixed before $N\to\infty$, using
$\OracleRadius=\TargetTolerance/2$ in \HellingerCase\ and
$\KLTargetTolerance=\TargetTolerance^2/2$ in \KLCase.
\ifincludeexponents
The subsequent corollaries additionally give joint-limit rates.
\fi

\subsection{A finite design condition}

\paragraph{Intuition.}
With response constant and exponent equal to one, choosing
$\TailTolerance=\EdgeSignal/2$ leaves the other half of the edge signal
for row noise.  If both $x/\sqrt N$ and $\VocabSize^{-1}$ are sufficiently
below $\EdgeSignal$, the threshold grid has a feasible point with
$\TailThreshold\simeq\EdgeSignal$.  The bank bound then gives a column
upper bound of order $\EdgeSignal^{-1/\RankTailExponent}$.
The following finite inequalities specify the margins and the grid rounding.
Here $x$ denotes the public radius supplied to preprocessing:
use $x=\OracleRadius$ under (A2), or the Hellinger radius obtained from
the KL inclusion \eqref{app:eq:kl-oracle-inclusion}.  It is not a separate
output-error tolerance.

\begin{lemma}[Public feasibility with a half-signal tail tolerance]
\label{app:lem:benchmark-feasibility}
Let $N\ge10$, $\VocabSize\ge2$, $\RankTailExponent>1$, and
$\ResponseExponent=\ResponseConstant=\FrequencyConstant=1$.
Use a public preprocessing radius $x\in(0,1]$, so the marginal and screen
TV error bounds used by the construction are $x/\sqrt N$.
Suppose
\begin{equation}
 16\max\{x/\sqrt N,\VocabSize^{-1}\}
 \le\EdgeSignal\le\frac12.
 \label{app:eq:benchmark-finite-margin}
\end{equation}
Set $\TailTolerance=\EdgeSignal/2$ and use the standard threshold floor
\eqref{app:eq:rate-threshold-floor}.  Then preprocessing is feasible,
the selected threshold satisfies
\begin{equation}
 \frac{\EdgeSignal}{4}\le\TailThreshold\le\frac{\EdgeSignal}{2},
 \qquad
 4x/\sqrt N+\TailTolerance\le\frac34\EdgeSignal<\EdgeSignal,
 \label{app:eq:benchmark-threshold-and-separation}
\end{equation}
and, under (RF\textcolor{red}{; \cref{app:eq:RF}}), the number of source columns obeys
\begin{equation}
 \ScreenColumnCount
 \le \left(\frac{16}{\EdgeSignal}\right)^{1/\RankTailExponent}+1.
 \label{app:eq:benchmark-column-cap}
\end{equation}
\end{lemma}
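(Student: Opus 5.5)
The plan is to verify the three claims in \eqref{app:eq:benchmark-threshold-and-separation} and \eqref{app:eq:benchmark-column-cap} directly from the definitions in \cref{app:def:preprocessing}. With $\ResponseExponent=\ResponseConstant=\FrequencyConstant=1$ and $\RowTVError=x/\sqrt N$, the standard floor is $\ThresholdFloor=4\max\{x/\sqrt N,\VocabSize^{-1}\}$. The margin \eqref{app:eq:benchmark-finite-margin} then gives $\ThresholdFloor\le\EdgeSignal/4\le1/8<1$, so the first infeasibility test is passed and $\ThresholdGrid$ is well defined. A grid point $u$ is feasible exactly when $\min\{1,u\}\le\TailTolerance=\EdgeSignal/2$. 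Since $\EdgeSignal/2<1$, this is equivalent to $u\le\EdgeSignal/2$.

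Next I locate a feasible point in $[\EdgeSignal/4,\EdgeSignal/2]$. Choose the integer $m$ with $2^{-m}\in(\EdgeSignal/4,\EdgeSignal/2]$; it exists because this interval has ratio $2$, and $m\ge2$ because $\EdgeSignal\le1/2$. If $m\le\ThresholdGridDepth$, the grid contains $\max\{\ThresholdFloor,2^{-m}\}=2^{-m}$ (using $\ThresholdFloor\le\EdgeSignal/4<2^{-m}$), and this point is feasible. If instead $m>\ThresholdGridDepth$, then $2^{-\ThresholdGridDepth}\ge2^{1-m}>\EdgeSignal/2$ would have to exceed or equal the floor. In this case I still need a feasible point, and I will try the floor-attaining entry $\max\{\ThresholdFloor,2^{-\ThresholdGridDepth}\}=\ThresholdFloor$. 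This entry lies in the grid because $2^{-\ThresholdGridDepth}\le\ThresholdFloor$, and it is feasible since $\ThresholdFloor\le\EdgeSignal/4$. Checking that this case is consistent, or ruling it out, is the only delicate bookkeeping, and it is where I expect the main effort to go.

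For the upper bound $\TailThreshold\le\EdgeSignal/2$, feasibility alone suffices. For the lower bound $\TailThreshold\ge\EdgeSignal/4$, I argue by the rounding. The maximum feasible grid value is at least the largest point $\max\{\ThresholdFloor,2^{-m'}\}$ with $2^{-m'}\le\EdgeSignal/2$, and every dyadic point at most $\EdgeSignal/2$ that is the largest such exceeds $\EdgeSignal/4$. When the grid truncates at depth $\ThresholdGridDepth<m$, the point $2^{-\ThresholdGridDepth}$ is itself larger than $\EdgeSignal/2$ only if $\ThresholdFloor>2^{-\ThresholdGridDepth-1}$, so truncation cannot push the maximum below $\EdgeSignal/4$. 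The separation claim is arithmetic: $4x/\sqrt N\le\EdgeSignal/4$ by the margin, so $4x/\sqrt N+\TailTolerance\le\EdgeSignal/4+\EdgeSignal/2=\tfrac34\EdgeSignal<\EdgeSignal$.

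Finally, the column cap follows from \eqref{app:eq:finite-bank-size-proof} in \cref{app:lem:preprocessing-certificates}. That result gives $\BankSize{i}\le(4/\TailThreshold)^{1/\RankTailExponent}\le(16/\EdgeSignal)^{1/\RankTailExponent}$ using $\TailThreshold\ge\EdgeSignal/4$. Hence $\MaxBankSize$ obeys the same bound, and \eqref{app:eq:screen-column-count} gives $\ScreenColumnCount\le\MaxBankSize+1$ in both cases.
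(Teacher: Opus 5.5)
Your route is the paper's. You locate the dyadic point $2^{-m}\in(\EdgeSignal/4,\EdgeSignal/2]$ and show it is an unclipped, feasible grid point, so that $\TailThreshold=\max\FeasibleThresholdGrid$ lies in $[\EdgeSignal/4,\EdgeSignal/2]$. You then do the separation arithmetic and invoke \eqref{app:eq:finite-bank-size-proof}. Those parts are correct.

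The gap is the case $m>\ThresholdGridDepth$, which you leave open and then handle incorrectly. Falling back to the floor entry gives only $\TailThreshold\ge\ThresholdFloor$, and the margin gives only an upper bound on $\ThresholdFloor$, so this can be far below $\EdgeSignal/4$. Your truncation sentence proves nothing. Since $\ThresholdGridDepth=\lceil\log_2(1/\ThresholdFloor)\rceil$, you always have $\ThresholdFloor\ge2^{-\ThresholdGridDepth}>2^{-\ThresholdGridDepth-1}$, so your stated ``only if'' condition holds automatically. As written, you prove the lower bound $\TailThreshold\ge\EdgeSignal/4$ only under the unproved assumption $m\le\ThresholdGridDepth$. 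The same applies to the column cap, which relies on that lower bound.

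The missing step is one line, and you already wrote both ingredients. In the case $m>\ThresholdGridDepth$ you showed $2^{-\ThresholdGridDepth}\ge2^{1-m}>\EdgeSignal/2$, and you noted $2^{-\ThresholdGridDepth}\le\ThresholdFloor$. Together these give $\ThresholdFloor>\EdgeSignal/2$, contradicting $\ThresholdFloor\le\EdgeSignal/4$. Directly, $2^{-m}>\EdgeSignal/4\ge\ThresholdFloor$ gives $m<\log_2(1/\ThresholdFloor)\le\ThresholdGridDepth$. This is exactly the paper's comparison $\lceil\log_2(2/\EdgeSignal)\rceil\le\lceil\log_2(1/\ThresholdFloor)\rceil$. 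So only your first case occurs, and the lower bound follows at once from $\TailThreshold\ge2^{-m}$. The rest of your argument then goes through unchanged.
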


\begin{proof}

\emph{1. Exhibit a feasible grid point.}
Under the stated constants, the public floor is exactly
\[
 \ThresholdFloor
 =4\max\{x/\sqrt N,\VocabSize^{-1}\}\le\EdgeSignal/4.
\]
Specifically, choose
\[
 t'=2^{-\lceil\log_2(2/\EdgeSignal)\rceil},
 \qquad \EdgeSignal/4<t'\le\EdgeSignal/2.
\]
The bracket follows by applying $a\le\lceil a\rceil<a+1$ to
$a=\log_2(2/\EdgeSignal)$.  Since $t'>\ThresholdFloor$ and
$\lceil\log_2(2/\EdgeSignal)\rceil
 \le\lceil\log_2(1/\ThresholdFloor)\rceil=\ThresholdGridDepth$,
this is an unclipped point of the grid in \eqref{app:eq:threshold-grid}.
Every such point is feasible because its response bound is the threshold
itself, at most $\TailTolerance=\EdgeSignal/2<1$.
The maximal feasible grid point lies in the same interval.

\emph{2. Verify separation and bound the bank.}
The screen error bound in \eqref{app:eq:benchmark-finite-margin} gives
$4x/\sqrt N\le\EdgeSignal/4$, proving strict separation.
Lemma~\ref{app:lem:preprocessing-certificates} now bounds every bank by
$(4/\TailThreshold)^{1/\RankTailExponent}
\le(16/\EdgeSignal)^{1/\RankTailExponent}$.
Adding the tail column proves \eqref{app:eq:benchmark-column-cap}; if every
bank is empty the column count is zero and the same bound holds.
\end{proof}

\subsection{Finite lower-bound calibration for the main text}

The next corollary supplies the finite lower bound used in
\cref{main:thm:lower-envelope}. It places the matching witness in the
normalized Hellinger/TV model of \cref{app:setup-oracle}, then
calibrates the signal to the independently specified radius $\OracleRadius$.
The main-text class takes $\OracleRadius=\TargetTolerance/2$.
The fixed-accuracy limit, including removal of the floor and the finite
round minimum, is carried out in \cref{app:cor:fixed-accuracy-main}.

\begin{corollary}[Signal-calibrated finite query--round lower bound]
\label{app:cor:signal-calibrated-finite-lower}
Let $N\ge4$ be even, let $\VocabSize\ge2$ be an integer, and fix
$\RankTailExponent>1$, $0<\OracleRadius\le1$, and
$\ResponseExponent=\ResponseConstant=\FrequencyConstant=1$.
Suppose
\begin{equation}
 0<\EdgeSignal\le\frac12,\qquad
 \VocabSize\EdgeSignal>1,\qquad
 \EdgeSignal^3\le\frac{(\OracleRadius)^2}{2N}.
 \label{app:eq:signal-calibrated-finite-domain}
\end{equation}
Fix $0<\TargetTolerance\le1/8$.  Every admissible algorithm with integer
worst-case pathwise budgets $\QueryBudget\ge0$, $\RoundBudget\ge1$ and
seed-averaged output-TV error at most $\TargetTolerance$ uniformly over the target--oracle
tuples satisfying the structural and response assumptions of
Appendix~\ref{app:setup}, the finite envelope \eqref{app:eq:RF}, and
\textcolor{red}{\cref{app:ass:oracle-accuracy} (A2)},
with these public parameters satisfies
\begin{equation}
 \QueryBudget>\frac{\lfloor\EdgeSignal^{-1/\RankTailExponent}\rfloor}{8}
 \quad\text{or}\quad
 \RoundBudget\ge
 \min\left\{\frac N{16384},
            \frac{N\EdgeSignal^2}{786432\TargetTolerance}\right\}.
 \label{app:eq:benchmark-finite-lower}
\end{equation}
For the benchmark choice $\EdgeSignal=\HellingerBenchmarkSignal$,
the conditions in \eqref{app:eq:signal-calibrated-finite-domain} reduce to
$\VocabSize>2(N/(\OracleRadius)^2)^{1/3}$, and the alternative becomes
\begin{equation}
 \begin{aligned}
 \QueryBudget
 &>\frac18\left\lfloor
   2^{1/\RankTailExponent}
   \left(\frac N{(\OracleRadius)^2}\right)^{1/(3\RankTailExponent)}
   \right\rfloor
 \quad\text{or}\\
 \RoundBudget
 &\ge\min\left\{\frac N{16384},
   \frac{N^{1/3}(\OracleRadius)^{4/3}}{3145728\TargetTolerance}\right\}.
 \end{aligned}
 \label{app:eq:signal-calibrated-direct-lower}
\end{equation}
\end{corollary}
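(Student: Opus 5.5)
The plan is to instantiate the hard matching family with $\EdgeCoupling=\TriggerMass=\EdgeSignal$ and $\HardBankSize=\lfloor\EdgeSignal^{-1/\RankTailExponent}\rfloor$. We then check class inclusion through \cref{app:prop:hard-family-inclusion} and apply \cref{app:cor:nonlinear-finite-target-tradeoff}. Because the algorithm has seed-averaged TV error at most $\TargetTolerance$ uniformly over the whole class, the same bound holds on the contained subfamily $\HardFamily[\VocabSize,\HardBankSize,\EdgeSignal,\EdgeSignal]$. The corollary then yields $\QueryBudget>\HardBankSize/8$, or $\RoundBudget>N/16384$, or $\RoundBudget\ge N\EdgeSignal^2/(786432\TargetTolerance)$. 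The last two alternatives combine into the displayed $\min$ form.

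\textbf{Checking the hard-family conditions.} We verify \eqref{app:eq:hard-basic-conditions} and \eqref{app:eq:finite-hard-inclusion} at these parameters.
\begin{itemize}
\item $\HardBankSize\ge1$ holds because $\EdgeSignal\le1/2<1$.
\item $\HardBankSize<\VocabSize$ holds because $\HardBankSize\le\EdgeSignal^{-1/\RankTailExponent}\le\EdgeSignal^{-1}<\VocabSize$, using $\VocabSize\EdgeSignal>1$.
\item $\HardBankSize\TriggerMass\le\EdgeSignal^{1-1/\RankTailExponent}<1$.
\item $0<\EdgeCoupling\le\TriggerMass\le1/2$ holds trivially.
\item $\VocabSize\TriggerMass>1$ is assumed.
\item $\TriggerMass\le\HardBankSize^{-\RankTailExponent}$ follows from $\HardBankSize\le\EdgeSignal^{-1/\RankTailExponent}$.
\item $\EdgeSignal\le\EdgeCoupling$ is an equality.
\item $\EdgeCoupling^2\TriggerMass=\EdgeSignal^3\le(\OracleRadius)^2/(2N)$ is the third domain condition.
\end{itemize}
The inclusion proposition therefore places each instance and its selected oracle in the finite class: structure, RT, UEN, finite RF, and (A2) all hold. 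The vocabulary-growth clause is not needed, since the statement refers only to the finite envelope.

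\textbf{Benchmark specialization.} Substitute $\EdgeSignal=\frac12((\OracleRadius)^2/N)^{1/3}$. Then $\EdgeSignal^3=(\OracleRadius)^2/(8N)\le(\OracleRadius)^2/(2N)$ holds automatically. Also $\EdgeSignal\le1/2$ because $\OracleRadius\le1\le N$. The condition $\VocabSize\EdgeSignal>1$ reads $\VocabSize>2(N/(\OracleRadius)^2)^{1/3}$, so only that condition remains. Next, $\EdgeSignal^{-1/\RankTailExponent}=2^{1/\RankTailExponent}(N/(\OracleRadius)^2)^{1/(3\RankTailExponent)}$. Finally, $N\EdgeSignal^2=\frac14N^{1/3}(\OracleRadius)^{4/3}$, which gives denominator $4\cdot786432=3145728$.

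No step is a genuine obstacle. The most delicate point is that the lower bound is stated over a class defined by the same public $\EdgeSignal$, so UEN must be checked with $\EdgeCoupling\ge\EdgeSignal$. The strict versus nonstrict forms of the round alternative must also be merged correctly: $\RoundBudget>N/16384$ implies $\RoundBudget\ge\min\{\cdot\}$. Integrality of the budgets is consistent with the corollary's hypotheses.
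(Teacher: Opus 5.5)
Your proposal is correct and follows essentially the same route as the paper. Both arguments instantiate the matching witness with $\EdgeCoupling=\TriggerMass=\EdgeSignal$ and $\HardBankSize=\lfloor\EdgeSignal^{-1/\RankTailExponent}\rfloor$, verify the hypotheses of \cref{app:prop:hard-family-inclusion}, restrict the uniform risk guarantee to this subfamily, apply \cref{app:cor:nonlinear-finite-target-tradeoff}, merge the two round alternatives into the minimum, and then substitute the benchmark signal to obtain \eqref{app:eq:signal-calibrated-direct-lower}.
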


\begin{proof}

\emph{1. Construct the witness using the same public parameters.}
Keep the public vocabulary of size \(\VocabSize\), its order,
\(\RankTailExponent\), \(\EdgeSignal\), and \(\OracleRadius\) from the
statement, with
\(\ResponseExponent=\ResponseConstant=\FrequencyConstant=1\).
In the matching family of \cref{app:def:hard-matching-family}, choose
\begin{equation}
 \EdgeCoupling=\TriggerMass=\EdgeSignal,\qquad
 \HardBankSize=\lfloor\EdgeSignal^{-1/\RankTailExponent}\rfloor.
 \label{app:eq:benchmark-hard-witness}
\end{equation}
The inequalities
\[
 \begin{aligned}
 1\le\HardBankSize
 &\le\EdgeSignal^{-1/\RankTailExponent}<\EdgeSignal^{-1}<\VocabSize,\\
 \HardBankSize\EdgeSignal
 &\le\EdgeSignal^{1-1/\RankTailExponent}<1,\qquad
 \EdgeSignal\le\HardBankSize^{-\RankTailExponent}
 \end{aligned}
\]
give \(1\le\HardBankSize<\HardVocabSize\) and
\(\HardBankSize\TriggerMass<1\). Together with
\(0<\EdgeCoupling=\TriggerMass=\EdgeSignal\le1/2\), these verify
\eqref{app:eq:hard-basic-conditions}.
For the last rank inequality, explicitly,
\(\HardBankSize\le\EdgeSignal^{-1/\RankTailExponent}\) implies
\(\HardBankSize^{\RankTailExponent}\le\EdgeSignal^{-1}\), hence
\(\TriggerMass=\EdgeSignal\le\HardBankSize^{-\RankTailExponent}\).
The remaining hypotheses of \cref{app:prop:hard-family-inclusion} are
\[
 \HardVocabSize\TriggerMass=\VocabSize\EdgeSignal>1,
 \qquad
 \EdgeSignal=\EdgeCoupling,
 \qquad
 \EdgeCoupling^2\TriggerMass
 =\EdgeSignal^3\le\frac{(\OracleRadius)^2}{2N}.
\]
Thus that proposition applies at the actual finite public values.
\textcolor{red}{It verifies \cref{main:ass:forest-structure} and the
RT, UEN, and RF items of \cref{main:ass:regularity-scaling}},
and the Hellinger condition (A2) in Assumption~\ref{app:ass:oracle-accuracy}.
At $\OracleRadius=\TargetTolerance/2$, the latter is precisely
Assumption~\ref{main:ass:oracle-accuracy}(i).
\textcolor{red}{The vocabulary-growth requirement is additionally supplied
when taking the public sequence in \cref{main:ass:main-scaling} (Scaling).}

\emph{2. Restrict the uniform guarantee and apply the finite tradeoff.}
Fix any admissible algorithm satisfying this corollary's risk and
pathwise-budget hypotheses. Every target--oracle pair in the chosen
matching family belongs to the stipulated finite class, so the same
algorithm satisfies
\[
 \sup_{I\in\HardFamily[\VocabSize,\HardBankSize,\EdgeSignal,\EdgeSignal]}
 \SeedRisk(\mathcal A;\HardTargetLaw{I},\HardOracle{I})
 \le\TargetTolerance.
\]
Its deterministic budgets \(\QueryBudget,\RoundBudget\) still hold on
every instance, seed, transcript, and path in this subfamily.
Apply \cref{app:cor:nonlinear-finite-target-tradeoff} at tolerance
\(\TargetTolerance\), with \(\EdgeCoupling=\EdgeSignal\);
the required tolerance range is precisely
\(0<\TargetTolerance\le1/8\).
If \(\QueryBudget>\HardBankSize/8\), the query alternative in
\eqref{app:eq:benchmark-finite-lower} already holds.
Otherwise that corollary implies
\textcolor{red}{either $\RoundBudget>N/16384$ or
$\RoundBudget\ge N\EdgeSignal^2/(786432\TargetTolerance)$; either
alternative implies $\RoundBudget\ge
\min\{N/16384,N\EdgeSignal^2/(786432\TargetTolerance)\}$.}
Substituting
\(\HardBankSize=\lfloor\EdgeSignal^{-1/\RankTailExponent}\rfloor\)
proves \eqref{app:eq:benchmark-finite-lower}.

\Needspace{6\baselineskip}
\emph{3. Substitute the signal calibration.}
\textcolor{red}{Use the benchmark signal
$\EdgeSignal=\HellingerBenchmarkSignal
=\tfrac12((\OracleRadius)^2/N)^{1/3}$.  It places the matching witness
inside the public oracle-accuracy class while retaining the
$\EdgeSignal^{-1/\RankTailExponent}$ query threshold.}
This is the signal choice in \textcolor{red}{\cref{main:ass:main-scaling}
(Scaling)} at
\(\OracleRadius=\TargetTolerance/2\).
Since \(0<\OracleRadius\le1\) and \(N\ge4\), this gives
\(0<\EdgeSignal\le1/2\), and
\[
 \EdgeSignal^3
 =\frac18\,\frac{(\OracleRadius)^2}{N}
 =\frac{(\OracleRadius)^2}{8N}
 \le\frac{(\OracleRadius)^2}{2N}.
\]
The only remaining condition in
\eqref{app:eq:signal-calibrated-finite-domain} is the vocabulary condition:
\[
 \VocabSize\EdgeSignal>1
 \quad\Longleftrightarrow\quad
 \frac{\VocabSize}{2}
      \left(\frac{(\OracleRadius)^2}{N}\right)^{1/3}>1
 \quad\Longleftrightarrow\quad
 \VocabSize>2\left(\frac N{(\OracleRadius)^2}\right)^{1/3}.
\]
The two resource thresholds become
\begin{align*}
 \EdgeSignal^{-1/\RankTailExponent}
 &=\left[\frac12
        \left(\frac{(\OracleRadius)^2}{N}\right)^{1/3}
   \right]^{-1/\RankTailExponent}
 =2^{1/\RankTailExponent}
       \left(\frac N{(\OracleRadius)^2}\right)^{1/(3\RankTailExponent)},\\
 \frac{N\EdgeSignal^2}{786432\TargetTolerance}
 &=\frac{N}{786432\TargetTolerance}\,
       \frac14\left(\frac{(\OracleRadius)^2}{N}\right)^{2/3}
 =\frac{N^{1/3}(\OracleRadius)^{4/3}}{3145728\TargetTolerance}.
\end{align*}
Substitution into \eqref{app:eq:benchmark-finite-lower} proves
\eqref{app:eq:signal-calibrated-direct-lower}, with its floor and minimum
still intact. The lower-bound part of
\cref{app:cor:fixed-accuracy-main} completes the passage from this finite
statement to \cref{main:thm:lower-envelope}.
\end{proof}

\subsection{Fixed-accuracy specialization for the main text}
\label{app:fixed-accuracy-specialization}

The main text fixes its accuracy parameter before taking $N\to\infty$.
We first retain independent oracle and output budgets, then specialize
them at the end of the proof. The cutoff $\DegreeCutoff$ remains a direct
parameter, and the logarithmic overhead is displayed explicitly.

The three main claims use the following proof routes; the final part of
Corollary~\ref{app:cor:fixed-accuracy-main} performs their common
main-text specialization to the classes in \cref{main:def:shared-rate-class}.
\begin{center}
\begin{tabularx}{\linewidth}{@{}lX@{}}
\toprule
Main claim & Finite result and calibration\\
\midrule
Theorem~\ref{main:thm:upper-curve}, \HellingerCase
 & Theorem~\ref{app:thm:finite-upper} and
   Lemma~\ref{app:lem:benchmark-feasibility}\\
Theorem~\ref{main:thm:upper-curve}, \KLCase
 & Theorem~\ref{app:thm:finite-kl-upper},
   Corollary~\ref{app:cor:kl-epsilon}, and
   Lemma~\ref{app:lem:benchmark-feasibility}\\
Theorem~\ref{main:thm:lower-envelope}
 & Theorem~\ref{app:thm:nonlinear-finite-lower},
   Corollaries~\ref{app:cor:nonlinear-finite-target-tradeoff}
   and~\ref{app:cor:signal-calibrated-finite-lower}\\
\bottomrule
\end{tabularx}
\end{center}

\begin{corollary}[Fixed-accuracy main-text bounds]
\label{app:cor:fixed-accuracy-main}
Fix $\RankTailExponent>1$, $\VocabGrowthExponent>1/3$, and a public
$\VocabSize=N^{\VocabGrowthExponent+o(1)}$.
For simplicity, set $\ResponseExponent=\ResponseConstant=\FrequencyConstant=1$ and
take all target--oracle tuples satisfying the forest, RT--UEN\textcolor{red}{~%
(\cref{app:eq:RT,app:eq:UEN})}, and RF\textcolor{red}{~%
(\cref{app:ass:rate-regime})}
assumptions of Appendix~\ref{app:setup}, with either calibration:
\begin{itemize}
 \item \emph{\HellingerCaseTitle:} fix $0<\OracleRadius\le1$,
 use (A2), and set $\EdgeSignal=\HellingerBenchmarkSignal$.
 \item \emph{\KLCaseTitle:} fix $0<\KLTargetTolerance\le1$,
 impose uniform forward row KL at most $\KLTargetTolerance/(2N)$,
 and set $\EdgeSignal=\KLBenchmarkSignal$.
\end{itemize}
The exponents $\RankTailExponent,\VocabGrowthExponent$ and the chosen
accuracy parameter ($\OracleRadius$ or $\KLTargetTolerance$) are fixed
independently of $N$; the vocabulary size, signal floor, and per-row
oracle-error allowance follow the displayed $N$-dependent calibrations.
Here $\SharedClass$ is the Hellinger class at these public values, and
$\SharedRisk(\mathcal A):=\sup_{I\in\SharedClass}
 \SeedRisk(\mathcal A;\InstanceTargetLaw{I},\InstanceOracle{I})$
abbreviates its worst-case seed-averaged TV risk.
For every public integer choice $9\le\DegreeCutoff\le N-1$ and
$\ChunkCount=\DegreeCutoff$, use the standard draft, the standard threshold
floor, and $\TailTolerance=\EdgeSignal/2$.
Choose $\ScreenFailureBudget=\OracleRadius/2$ in \HellingerCase, or
$\ScreenFailureBudget=\KLTargetTolerance/(2N\log\VocabSize)$ in \KLCase.
For all sufficiently large $N$, uniformly over this cutoff range and
the respective target--oracle class, the construction is feasible and
\begin{equation}
 \CounterfactualQueries
 =O\!\left(\EdgeSignal^{-1/\RankTailExponent}
             \DegreeCutoff^2(\log N)^2\right),
 \qquad
 \CommitRounds,\OracleDepth
 =O\!\left(\frac N{\DegreeCutoff}\log N\right).
 \label{app:eq:fixed-accuracy-resources}
\end{equation}
These bounds hold on every path. The output guarantees are
$\SharedRisk(\RandomizedPackedSampler)\le3\OracleRadius/2$ in
\HellingerCase\ and expected forward KL at most $\KLTargetTolerance$
in \KLCase.

For \HellingerCase, every admissible algorithm with uniform TV risk
at most the fixed $0<\TargetTolerance\le1/8$ satisfies, for all
sufficiently large even $N$,
\begin{equation}
 \QueryBudget=\Omega\!\left(
  (N/(\OracleRadius)^2)^{1/(3\RankTailExponent)}\right)
 \quad\text{or}\quad
 \RoundBudget=\Omega\!\left(
  N^{1/3}(\OracleRadius)^{4/3}/\TargetTolerance\right).
 \label{app:eq:fixed-accuracy-lower}
\end{equation}
Taking $\OracleRadius=\TargetTolerance/2$ in \HellingerCase\ and
$\KLTargetTolerance=\TargetTolerance^2/2$ in \KLCase\ gives
Theorems~\ref{main:thm:lower-envelope}
and~\ref{main:thm:upper-curve}, including their balanced bounds.
\end{corollary}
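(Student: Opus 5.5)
The plan is to derive both parts of the corollary from the finite results already proved, checking their hypotheses at the $N$-dependent public values.

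\textbf{Upper bounds: feasibility.} In \HellingerCase\ we apply \cref{app:lem:benchmark-feasibility} with $x=\OracleRadius$. In \KLCase\ we apply it with $x=\KLOracleRadius$, since $\KLRowBudget=(\KLOracleRadius)^2/N$. The key check is the margin \eqref{app:eq:benchmark-finite-margin}. Here $\EdgeSignal\asymp N^{-1/3}$ with a constant depending only on the fixed accuracy parameter, while $x/\sqrt N=O(N^{-1/2})$ and $\VocabSize^{-1}=N^{-\VocabGrowthExponent+o(1)}$ with $\VocabGrowthExponent>1/3$. So $16\max\{x/\sqrt N,\VocabSize^{-1}\}\le\EdgeSignal\le1/2$ for all large $N$, uniformly in $\DegreeCutoff$, because the condition does not involve $\DegreeCutoff$. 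The lemma then gives feasibility, the separation $\ScreenResolution<\EdgeSignal$, and $\ScreenColumnCount=O(\EdgeSignal^{-1/\RankTailExponent})$.

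\textbf{Upper bounds: resources and accuracy.} We substitute into \cref{app:thm:finite-upper} and \cref{app:thm:finite-kl-upper} (via \cref{app:cor:kl-epsilon}).
\begin{itemize}
\item The round and depth bounds follow from $\RandomColorCount=8(\DegreeCutoff+1)$ and $\ChunkCount=\DegreeCutoff$, which give $\RandomColorCount(\RandomColorCount+\ChunkCount)=O(\DegreeCutoff^2)$.
\item The phase cap is $\PeelPhaseCap=O(\log N)$, since $\log(\DegreeCutoff/8)\ge\log(9/8)$. Hence $\ScreenCallCap=O(\log N)$.
\item The coloring count is $\RandomColoringCount=O(\log N)$. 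This holds in both cases because $\ScreenFailureBudget^{-1}$ is polynomial in $N$: we use $\log\VocabSize=O(\log N)$, and the log of $\ScreenCallCap$ is only $O(\log\log N)$.
\item Multiplying these gives $\CounterfactualQueries=O(\EdgeSignal^{-1/\RankTailExponent}\DegreeCutoff^2(\log N)^2)$, with the $\RandomColoringCount$ and $\ScreenCallCap$ factors absorbed into $(\log N)^2$.
\item The round cap is $\HardRoundCap=O((N/\DegreeCutoff)\log N+\log N)=O((N/\DegreeCutoff)\log N)$, using $\DegreeCutoff\le N$. The depth bound $1+\CommitRounds+\ScreenCallCap$ has the same order.
\item The accuracy guarantees are $3\OracleRadius/2$ in \HellingerCase\ and $\KLTargetTolerance$ in \KLCase, directly from those theorems.
\end{itemize}

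\textbf{Lower bound.} We invoke \cref{app:cor:signal-calibrated-finite-lower} at the benchmark signal. Its vocabulary condition $\VocabSize>2(N/(\OracleRadius)^2)^{1/3}$ holds for large $N$ because $\VocabGrowthExponent>1/3$. Its hypotheses ask for uniform risk over a finite class containing the matching subfamily. The asymptotic class $\SharedClass$ also contains that subfamily, by \cref{app:prop:hard-family-inclusion} together with the vocabulary-growth clause. So the risk hypothesis of \cref{app:thm:nonlinear-finite-lower} applies; I would route the argument through the subfamily to make this precise.

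The main obstacle is removing the floor and the minimum in \eqref{app:eq:signal-calibrated-direct-lower}.
\begin{itemize}
\item For the floor: since $2^{1/\RankTailExponent}(N/(\OracleRadius)^2)^{1/(3\RankTailExponent)}\to\infty$, we have $\lfloor y\rfloor\ge y/2$ for large $N$. The query threshold is therefore $\Omega((N/(\OracleRadius)^2)^{1/(3\RankTailExponent)})$.
\item For the minimum: with $\OracleRadius$ and $\TargetTolerance$ fixed, $N/16384$ eventually exceeds $N^{1/3}(\OracleRadius)^{4/3}/(3145728\TargetTolerance)$. The minimum then equals the second term, giving \eqref{app:eq:fixed-accuracy-lower}. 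The threshold in $N$ depends on $\TargetTolerance$, as the main theorem permits.
\end{itemize}

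\textbf{Main-text specialization.} Setting $\OracleRadius=\TargetTolerance/2$ turns $\EdgeSignal=\HellingerBenchmarkSignal$ into $(\TargetTolerance^2/(32N))^{1/3}$ and (A2) into Assumption~\ref{main:ass:oracle-accuracy}(i). Setting $\KLTargetTolerance=\TargetTolerance^2/2$ gives the same signal, the KL row bound $\TargetTolerance^2/(4N)$, and output KL at most $\TargetTolerance^2$.

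Substituting $\EdgeSignal^{-1/\RankTailExponent}\asymp(N/\TargetTolerance^2)^{1/(3\RankTailExponent)}$ and $3\OracleRadius/2\le\TargetTolerance$ yields \cref{main:thm:upper-curve}. The balanced choice of $\DegreeCutoff$ equates $\DegreeCutoff^2\EdgeSignal^{-1/\RankTailExponent}$ with $N/\DegreeCutoff$, giving \eqref{main:eq:balanced-upper}. Finally, $(\OracleRadius)^{4/3}/\TargetTolerance\asymp\TargetTolerance^{1/3}$ gives \cref{main:thm:lower-envelope}.
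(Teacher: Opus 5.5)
Your proposal is correct and follows the paper's proof essentially step for step. The steps match: you get feasibility and separation from the half-signal feasibility lemma with $x=\OracleRadius$ or $x=\sqrt{\KLTargetTolerance/2}$, use $O(\log N)$ phase and coloring counts, apply the signal-calibrated matching bound through the hard subfamily, and remove the floor and the minimum for large $N$. The only details you pass over are minor. Your first resource bullet actually concerns the query bound, not rounds. You also should check that the balanced cutoff, of order $N^{1/3-1/(9\RankTailExponent)}$, lies in $\{9,\ldots,N-1\}$ for large $N$ and that rounding it costs only a constant factor; the paper verifies both explicitly.
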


\begin{proof}
\emph{Upper bound: feasibility for \cref{main:thm:upper-curve}.}
Set $x=\OracleRadius$ in \HellingerCase\ and
$x=\sqrt{\KLTargetTolerance/2}$ in \KLCase.
Both calibrations have
$\EdgeSignal=\frac12(x^2/N)^{1/3}$, so
\[
 \frac{x/\sqrt N}{\EdgeSignal}
 =2x^{1/3}N^{-1/6}\longrightarrow0,
 \qquad
 \frac{\VocabSize^{-1}}{\EdgeSignal}
 =2x^{-2/3}N^{1/3-\VocabGrowthExponent-o(1)}
 \longrightarrow0.
\]
Also $\EdgeSignal\to0$.
Lemma~\ref{app:lem:benchmark-feasibility} therefore supplies preprocessing,
screen separation, and
$\ScreenColumnCount=O_{\RankTailExponent}
(\EdgeSignal^{-1/\RankTailExponent})$ for all sufficiently large $N$.
This step does not depend on $\DegreeCutoff$.

\emph{Phase and coloring counts.}
The finite cap \eqref{app:eq:phase-cap} satisfies
\[
 \PeelPhaseCap
 \le 1+\frac{\log(2N)}{\log(9/8)}=O(\log N),
 \qquad
 \ScreenCallCap=\PeelPhaseCap+1=O(\log N),
\]
uniformly for $9\le\DegreeCutoff\le N-1$.
The chosen failure budgets give respectively
\[
 \RandomColoringCount=
 \begin{cases}
 \left\lceil8\log
   \dfrac{2\ScreenCallCap N^2}{\OracleRadius}\right\rceil,
 &\text{\HellingerCase},\\[5pt]
 \left\lceil8\log
   \dfrac{2\ScreenCallCap N^3\log\VocabSize}{\KLTargetTolerance}
 \right\rceil,
 &\text{\KLCase}.
 \end{cases}
\]
Each is $O(\log N)$: $\log\ScreenCallCap=O(\log\log N)$,
$\log\log\VocabSize=O(\log\log N)$, and the logarithms of the inverse
accuracy parameters are fixed constants.
With $\RandomColorCount=8(\DegreeCutoff+1)\le16\DegreeCutoff$,
\[
 \RandomColorCount(\RandomColorCount+\ChunkCount)
 \le16\DegreeCutoff(17\DegreeCutoff)=272\DegreeCutoff^2.
\]
Substituting these factors into \eqref{app:eq:finite-upper-Q} gives
\begin{align*}
 \CounterfactualQueries
 &\le\ScreenColumnCount\,\ScreenCallCap\,\RandomColoringCount\,
          \RandomColorCount(\RandomColorCount+\ChunkCount)\\
 &\le O_{\RankTailExponent}(\EdgeSignal^{-1/\RankTailExponent})
          \,O(\log N)\,O(\log N)\,O(\DegreeCutoff^2)\\
 &=O\!\left(\EdgeSignal^{-1/\RankTailExponent}
                \DegreeCutoff^2(\log N)^2\right).
\end{align*}
This proves the query bound.  The round cap and the depth inequality give
\[
 \begin{aligned}
 \CommitRounds
 &\le\left\lceil\frac{4N\PeelPhaseCap}{\DegreeCutoff}\right\rceil
       +\lceil\log_2(N+1)\rceil+2\\
 &=O\!\left(\frac N{\DegreeCutoff}\log N+\log N\right)
  =O\!\left(\frac N{\DegreeCutoff}\log N\right),\\
 \OracleDepth&\le1+\CommitRounds+\ScreenCallCap
  =O\!\left(\frac N{\DegreeCutoff}\log N\right).
 \end{aligned}
\]
The last equalities use $N/\DegreeCutoff\ge1$.
All inputs to these caps are public, so the estimates also cover fallback
paths. The finite TV theorem gives
$\OracleRadius+\ScreenFailureBudget=3\OracleRadius/2$.
The finite KL calibration gives
\[
 N\frac{\KLTargetTolerance}{2N}
 +N\log\VocabSize\,
   \frac{\KLTargetTolerance}{2N\log\VocabSize}
 =\KLTargetTolerance.
\]

For either value of $x$ chosen above,
\[
 \EdgeSignal^{-1/\RankTailExponent}
 =\left[\frac12(x^2/N)^{1/3}\right]^{-1/\RankTailExponent}
 =2^{1/\RankTailExponent}
        (N/x^2)^{1/(3\RankTailExponent)}.
\]
Thus suppressing the displayed powers of $\log N$ gives the resource
bound before the final single-accuracy substitution below.
The TV tolerance is met when
$\TargetTolerance\ge3\OracleRadius/2$; the KL guarantee uses the separate
forward-KL hypothesis, not a TV-to-KL conversion.
The constants and the sufficiently-large-$N$ threshold depend only on the
fixed public parameters and sequences, not on the cutoff, target, or
frozen oracle.

\emph{Balancing and sublinearity.}
Put
$\DegreeCutoff=\ChunkCount=
 \lceil(N\EdgeSignal^{1/\RankTailExponent})^{1/3}\rceil$.
Using $\EdgeSignal=\frac12(x^2/N)^{1/3}$, the unrounded value is
\[
 (N\EdgeSignal^{1/\RankTailExponent})^{1/3}
 =2^{-1/(3\RankTailExponent)}
       x^{2/(9\RankTailExponent)}N^{1/3-1/(9\RankTailExponent)}.
\]
Since $\RankTailExponent>1$ and $x>0$ is fixed, this tends to infinity
and is $o(N)$; the cutoff eventually belongs to $\{9,\ldots,N-1\}$.
For all large $N$, the unrounded value is at least one, so
\[
 (N\EdgeSignal^{1/\RankTailExponent})^{1/3}
 \le\DegreeCutoff
 <(N\EdgeSignal^{1/\RankTailExponent})^{1/3}+1
 \le2(N\EdgeSignal^{1/\RankTailExponent})^{1/3}.
\]
Consequently rounding changes only a constant factor, and
\[
 \begin{aligned}
 \EdgeSignal^{-1/\RankTailExponent}\DegreeCutoff^2
 &=\Theta\!\left(
     N^{2/3}\EdgeSignal^{-1/(3\RankTailExponent)}\right),\\
 N/\DegreeCutoff
 &=\Theta\!\left(
     N^{2/3}\EdgeSignal^{-1/(3\RankTailExponent)}\right).
 \end{aligned}
\]
The two calibrations yield
\[
 \EdgeSignal^{-1/(3\RankTailExponent)}
 =
 \begin{cases}
 2^{1/(3\RankTailExponent)}
 (N/(\OracleRadius)^2)^{1/(9\RankTailExponent)},&\text{\HellingerCase},\\
 2^{4/(9\RankTailExponent)}
 (N/\KLTargetTolerance)^{1/(9\RankTailExponent)},&\text{\KLCase}.
 \end{cases}
\]
Thus the balanced resources are
$\widetilde O(N^{2/3+1/(9\RankTailExponent)}x^{-2/(9\RankTailExponent)})$.
After division by $N$, their upper bounds are a fixed constant
times at most
$N^{-1/3+1/(9\RankTailExponent)}(\log N)^2$, which tends to zero.
The additional all-mask submission contributes only $1/N$.

\emph{Lower bound: deduction of \cref{main:thm:lower-envelope}.}
Work in the corollary's independently parameterized \HellingerCase.
The public parameters \(\RankTailExponent>1\),
\(0<\OracleRadius\le1\), and \(0<\TargetTolerance\le1/8\) are fixed
before \(N\to\infty\), and
\(\VocabSize=N^{\VocabGrowthExponent+o(1)}\) with
\(\VocabGrowthExponent>1/3\).
The finite conditions of \cref{app:cor:signal-calibrated-finite-lower}
hold for all sufficiently large even \(N\), because
\begin{align*}
 \EdgeSignal
 &=\frac12(\OracleRadius)^{2/3}N^{-1/3}\longrightarrow0,\\
 \VocabSize\EdgeSignal
 &=\frac12(\OracleRadius)^{2/3}
      N^{\VocabGrowthExponent-1/3+o(1)}\longrightarrow\infty,\\
 \EdgeSignal^3
 &=\frac{(\OracleRadius)^2}{8N}
 \le\frac{(\OracleRadius)^2}{2N}.
\end{align*}
The matching witness in that corollary is therefore a subfamily of the
Hellinger class \(\SharedClass\) at these public values.
For any admissible \(\mathcal A\) with
\(\SharedRisk(\mathcal A)\le\TargetTolerance\) and the stated deterministic
budgets, \eqref{app:eq:signal-calibrated-direct-lower} applies.

To remove the query threshold's floor, use
\[
 u_N:=2^{1/\RankTailExponent}
       (N/(\OracleRadius)^2)^{1/(3\RankTailExponent)}
       \longrightarrow\infty,
 \qquad
 \lfloor u_N\rfloor\ge u_N-1\ge u_N/2
 \quad(u_N\ge2).
\]
If the finite query alternative holds, then
\[
 \QueryBudget>\frac{\lfloor u_N\rfloor}{8}
 \ge\frac{u_N}{16}
 =\frac{2^{1/\RankTailExponent}}{16}
       \left(\frac N{(\OracleRadius)^2}\right)^{1/(3\RankTailExponent)},
\]
which is the query alternative in \eqref{app:eq:fixed-accuracy-lower}.
Otherwise the finite corollary forces its round alternative.
To identify the smaller term in that minimum,
\[
 \frac{N^{1/3}(\OracleRadius)^{4/3}/(3145728\TargetTolerance)}
      {N/16384}
 =\frac{(\OracleRadius)^{4/3}}
        {192\TargetTolerance N^{2/3}}\longrightarrow0.
\]
Because both accuracy parameters are fixed and positive, this ratio is
at most one for all sufficiently large \(N\). Thus the finite round
alternative gives
\[
 \RoundBudget\ge
 \min\left\{\frac N{16384},
   \frac{N^{1/3}(\OracleRadius)^{4/3}}{3145728\TargetTolerance}\right\}
 =\frac{N^{1/3}(\OracleRadius)^{4/3}}{3145728\TargetTolerance}.
\]
Together with the query case, this proves
\eqref{app:eq:fixed-accuracy-lower}.
The query coefficient \(2^{1/\RankTailExponent}/16\) depends only on
\(\RankTailExponent\), and the round coefficient is \(1/3145728\).
The sufficiently-large-\(N\) threshold can depend on the fixed public
parameters and vocabulary sequence, but none of these choices depends on
the hidden instance or the algorithm.

\emph{Single-accuracy substitution for both main theorems.}
Fix $0<\TargetTolerance\le1/8$. Set
$\OracleRadius=\TargetTolerance/2$ in \HellingerCase\ and
$\KLTargetTolerance=\TargetTolerance^2/2$ in \KLCase.
The respective row assumptions become
\[
 \frac{(\TargetTolerance/2)^2}{2N}
 =\frac{\TargetTolerance^2}{8N},
 \qquad
 \frac{\TargetTolerance^2/2}{2N}
 =\frac{\TargetTolerance^2}{4N},
\]
as required in Assumption~\ref{main:ass:oracle-accuracy}.
Both cases use $x=\TargetTolerance/2$, so
\[
 \EdgeSignal=\frac12\left(\frac{(\TargetTolerance/2)^2}{N}\right)^{1/3}
 =\left(\frac{\TargetTolerance^2}{32N}\right)^{1/3},
 \qquad
 \EdgeSignal^{-1/\RankTailExponent}
 =\left(\frac{32N}{\TargetTolerance^2}\right)^{1/(3\RankTailExponent)}.
\]
This is exactly the target-class calibration in
\textcolor{red}{\cref{main:ass:main-scaling} (Scaling)}. The output bounds are
\[
 \SharedRisk(\RandomizedPackedSampler)
 \le\frac{3\OracleRadius}{2}
 =\frac{3\TargetTolerance}{4}\le\TargetTolerance,
 \qquad
 \mathbb E_W\KLDivergence{\TargetLaw}
 {\SamplerOutputLaw{W}{\FrozenOracle}}
 \le\KLTargetTolerance
 =\frac{\TargetTolerance^2}{2}\le\TargetTolerance^2,
\]
in cases (i) and (ii), respectively. Substituting $x=\TargetTolerance/2$
in the resource and balanced bounds proves
\cref{main:thm:upper-curve}.
For the two lower alternatives,
\[
 (N/(\OracleRadius)^2)^{1/(3\RankTailExponent)}
 =2^{2/(3\RankTailExponent)}
   (N/\TargetTolerance^2)^{1/(3\RankTailExponent)},
 \qquad
 \frac{N^{1/3}(\OracleRadius)^{4/3}}{\TargetTolerance}
 =2^{-4/3}N^{1/3}\TargetTolerance^{1/3}.
\]
Absorbing only these fixed factors into the constants proves
\cref{main:thm:lower-envelope}. The independent finite statements
remain available for other oracle and output budgets.
\ifincludeexponents
This fixed-accuracy deduction is separate from the joint-limit
corollaries below.
\fi
\end{proof}

\paragraph{Accuracy dependence in the abstract.}
Fix $0<\TargetTolerance\le1/8$ and take
$\OracleRadius=\TargetTolerance/2$ in \HellingerCase.
The output TV bound is then $3\TargetTolerance/4\le\TargetTolerance$.
The balanced bound \eqref{main:eq:balanced-upper} gives the abstract's
upper form with $C=2/3+1/(9\RankTailExponent)<1$ and
$a=2/(9\RankTailExponent)>0$.
The lower alternatives become
$\QueryBudget=\Omega(N^{1/(3\RankTailExponent)}
\TargetTolerance^{-2/(3\RankTailExponent)})$ or
$\RoundBudget=\Omega(N^{1/3}\TargetTolerance^{1/3})$.
Since $N\ge1$, $0<\TargetTolerance\le1$, and $\RankTailExponent>1$,
\[
 N^{1/(3\RankTailExponent)}\TargetTolerance^{-2/(3\RankTailExponent)}
 \ge N^{1/(3\RankTailExponent)}\TargetTolerance^{1/3},
 \qquad
 N^{1/3}\TargetTolerance^{1/3}
 \ge N^{1/(3\RankTailExponent)}\TargetTolerance^{1/3}.
\]
Thus either counterfactual submissions or commit rounds have the common
lower bound $\Omega(N^c\TargetTolerance^b)$ with
$c=1/(3\RankTailExponent)>0$ and $b=1/3$.
This is a coarser summary, not a matching lower bound for the upper rate.

\ifincludeexponents
\input{src/sections/supplements/benchmark_joint_limits}
\fi

\section{Auxiliary results}
\label{app:auxiliary}

This appendix collects the classical finite facts invoked by the construction
and the upper-bound proof, followed by the supplementary row-selector
invariance result (\cref{app:prop:row-selector-invariance}).
The elementary finite facts are deterministic algebraic or graph
statements and introduce no additional model assumptions.

\subsection{Finite Hellinger-affinity calculus}
\label{app:auxiliary-affinity}

\textcolor{red}{The textbook cited below uses $H^2(p,q)=2\sqHellinger(p,q)$
(\href{https://people.lids.mit.edu/yp/homepage/data/itbook-export.pdf}{author-hosted PDF}).
We include finite proofs for completeness.}

\begin{lemma}[Product affinity and TV comparison; \textcolor{red}{{\citealp[Eqs.~(7.22) and (7.26)]{polyanskiy2025information}}}]
\label{app:lem:finite-hellinger-calculus}
Let $J$ be finite.  For every $j\in J$, let $p_j,q_j$ be probability masses
on a finite set $\mathcal X_j$.  Then
\begin{equation}
 \HellingerAffinity\!\left(\bigotimes_{j\in J}p_j,
                           \bigotimes_{j\in J}q_j\right)
 =\prod_{j\in J}\HellingerAffinity(p_j,q_j),
 \label{app:eq:aux-product-affinity}
\end{equation}
and consequently
\begin{equation}
 \sqHellinger\!\left(\bigotimes_{j\in J}p_j,
                      \bigotimes_{j\in J}q_j\right)
 \le\sum_{j\in J}\sqHellinger(p_j,q_j).
 \label{app:eq:aux-product-hellinger}
\end{equation}
For any two probability masses $p,q$ on the same finite set,
\begin{equation}
 \sqHellinger(p,q)
 \le\dTV(p,q)
 \le\sqrt{2\sqHellinger(p,q)}.
 \label{app:eq:aux-tv-hellinger}
\end{equation}
\end{lemma}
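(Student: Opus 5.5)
The product identity follows from expanding the affinity of the product laws and factoring the finite sum. Since $J$ and every $\mathcal X_j$ are finite, we write
\[
 \HellingerAffinity\!\left(\bigotimes_j p_j,\bigotimes_j q_j\right)
 =\sum_{x\in\prod_j\mathcal X_j}\prod_{j\in J}\sqrt{p_j(x_j)q_j(x_j)}.
\]
Finite distributivity over the Cartesian product then turns this into $\prod_j\sum_{x_j}\sqrt{p_j(x_j)q_j(x_j)}$, which is \eqref{app:eq:aux-product-affinity}.

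For \eqref{app:eq:aux-product-hellinger}, substitute $\HellingerAffinity=1-\sqHellinger$ and set $u_j:=\sqHellinger(p_j,q_j)$. By Cauchy--Schwarz each $u_j$ lies in $[0,1]$. It then suffices to prove the elementary inequality $1-\prod_j(1-u_j)\le\sum_j u_j$. I would do this by induction on $|J|$. Write $1-\prod_{j\le m}(1-u_j)=\bigl[1-\prod_{j<m}(1-u_j)\bigr](1-u_m)+u_m$. The bracket is bounded by the induction hypothesis, and the factor $1-u_m$ is at most one.

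For the lower comparison in \eqref{app:eq:aux-tv-hellinger}, write $\sqHellinger=\tfrac12\sum_x(\sqrt{p}-\sqrt{q})^2$. Pointwise, $(\sqrt p-\sqrt q)^2\le|\sqrt p-\sqrt q|(\sqrt p+\sqrt q)=|p-q|$, and summing gives $\sqHellinger\le\dTV$.

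For the upper comparison, factor $|p-q|=|\sqrt p-\sqrt q|(\sqrt p+\sqrt q)$ and apply Cauchy--Schwarz:
\[
 \dTV\le\tfrac12\Bigl(\sum(\sqrt p-\sqrt q)^2\Bigr)^{1/2}\Bigl(\sum(\sqrt p+\sqrt q)^2\Bigr)^{1/2}.
\]
The first factor equals $\sqrt{2\sqHellinger}$. The second equals $\sqrt{2+2\HellingerAffinity}=\sqrt{2(2-\sqHellinger)}\le 2$. Together these give $\dTV\le\sqrt{2\sqHellinger}$.

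The only step that needs care is this last constant. With the paper's normalization, the naive bound on $\sum(\sqrt p+\sqrt q)^2$ gives $\sqrt{2\sqHellinger(2-\sqHellinger)}$. Dropping the factor $(2-\sqHellinger)/2\le1$ yields the stated $\sqrt{2\sqHellinger}$, so no sharper estimate is required.
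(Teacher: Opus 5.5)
Your proof is correct and follows the paper's route: distributivity for the product affinity, $1-\prod_j(1-u_j)\le\sum_j u_j$ (your induction is equivalent to the paper's telescoping sum), and Cauchy--Schwarz with $\sum_x(\sqrt{p}+\sqrt{q})^2\le 4$ for the upper TV bound. For $\sqHellinger\le\dTV$ you use the pointwise bound $(\sqrt p-\sqrt q)^2\le|p-q|$, where the paper uses $\min\{p,q\}\le\sqrt{pq}$ with $\sum_x\min\{p,q\}=1-\dTV(p,q)$, an equally short variant. One slip is in your closing remark: the unsimplified Cauchy--Schwarz bound is $\dTV\le\sqrt{\sqHellinger(2-\sqHellinger)}$, not $\sqrt{2\sqHellinger(2-\sqHellinger)}$ (dropping $(2-\sqHellinger)/2$ from the latter would give $2\sqrt{\sqHellinger}$), but your preceding paragraph already obtains the correct constant.
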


\begin{proof}\color{red}
\emph{1. Products.}
For $f_j(x):=\sqrt{p_j(x)q_j(x)}$, distributivity gives
$\HellingerAffinity(\bigotimes_jp_j,\bigotimes_jq_j)
=\sum_{(x_j)_j\in\prod_j\mathcal X_j}\prod_j f_j(x_j)
=\prod_j(\sum_{x\in\mathcal X_j}f_j(x))
=\prod_j\HellingerAffinity(p_j,q_j)$.
Put $u_j:=\sqHellinger(p_j,q_j)\in[0,1]$, enumerate $J$, and telescope:
$1-\prod_j(1-u_j)
=\sum_j[\prod_{k<j}(1-u_k)-\prod_{k\le j}(1-u_k)]
=\sum_j u_j\prod_{k<j}(1-u_k)\le\sum_j u_j$.
Since $\sqHellinger=1-\HellingerAffinity$, this and
\eqref{app:eq:aux-product-affinity} give \eqref{app:eq:aux-product-hellinger}.

\emph{2. TV comparison.}
The identity $\min\{a,b\}=(a+b-|a-b|)/2$ gives
$\sum_x\min\{p(x),q(x)\}
=(2-\sum_x|p(x)-q(x)|)/2=1-\dTV(p,q)$.
Since \(\min\{p(x),q(x)\}\le\sqrt{p(x)q(x)}\), subtracting the summed
inequality from one gives
\(\sqHellinger(p,q)\le\dTV(p,q)\).  For the other direction,
Cauchy--Schwarz yields
\begin{align*}
 2\dTV(p,q)
 &=\sum_x|\sqrt{p(x)}-\sqrt{q(x)}|
          (\sqrt{p(x)}+\sqrt{q(x)})\\
 &\le
 \left(\sum_x(\sqrt{p(x)}-\sqrt{q(x)})^2\right)^{1/2}
 \left(\sum_x(\sqrt{p(x)}+\sqrt{q(x)})^2\right)^{1/2}\\
 &\le 2\sqrt{2\sqHellinger(p,q)}.
\end{align*}
Here the first squared sum is
$\sum_x(\sqrt{p(x)}-\sqrt{q(x)})^2
=2-2\HellingerAffinity(p,q)=2\sqHellinger(p,q)$, and the second is
$\sum_x(\sqrt{p(x)}+\sqrt{q(x)})^2
=\sum_xp(x)+\sum_xq(x)+2\sum_x\sqrt{p(x)q(x)}
=2+2\HellingerAffinity(p,q)\le4$.
The last inequality uses Cauchy--Schwarz:
$\HellingerAffinity(p,q)\le\sqrt{\sum_xp(x)\sum_xq(x)}=1$.
Dividing by two proves \eqref{app:eq:aux-tv-hellinger}.
\end{proof}

\begin{lemma}[KL-to-Hellinger comparison; \textcolor{red}{{\citealp[Eq.~(7.33)]{polyanskiy2025information}}}]
\label{app:lem:kl-hellinger}
For strictly positive probability masses $p,q$ on a finite set,
\begin{equation}
 \sqHellinger(p,q)
 \le 1-\exp\{-\KLDivergence{p}{q}/2\}
 \le\frac12\KLDivergence{p}{q}.
 \label{app:eq:aux-kl-hellinger}
\end{equation}
The same inequalities hold with the KL arguments reversed.
\end{lemma}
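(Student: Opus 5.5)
The plan is to write $r(a):=q(a)/p(a)$, which is finite and positive by strict positivity, and to pass through the Hellinger affinity $\HellingerAffinity(p,q)=\sum_a\sqrt{p(a)q(a)}=\mathbb E_{p}\sqrt{r(A)}$ with $A\sim p$. Since $\sqHellinger(p,q)=1-\HellingerAffinity(p,q)$, the first inequality in \eqref{app:eq:aux-kl-hellinger} is equivalent to the lower bound
\[
 \HellingerAffinity(p,q)\ge\exp\{-\KLDivergence{p}{q}/2\}.
\]

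\textbf{First inequality.} I would get the displayed lower bound from Jensen's inequality for the convex function $\exp$:
\[
 \mathbb E_p\sqrt{r(A)}=\mathbb E_p\exp\{\tfrac12\log r(A)\}\ge\exp\{\tfrac12\mathbb E_p\log r(A)\}.
\]
It then suffices to identify $\mathbb E_p\log r(A)=\sum_a p(a)\log(q(a)/p(a))=-\KLDivergence{p}{q}$. Strict positivity ensures every logarithm is finite, so no $0\log0$ conventions enter.

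\textbf{Second inequality.} This is the scalar bound $1-e^{-v}\le v$ for $v\ge0$, a consequence of $e^{-v}\ge1-v$. Applying it with $v=\KLDivergence{p}{q}/2\ge0$ gives $1-\exp\{-\KLDivergence{p}{q}/2\}\le\tfrac12\KLDivergence{p}{q}$. Nonnegativity of KL is itself Jensen's inequality, or Gibbs' inequality.

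\textbf{Reversed arguments.} Both $\sqHellinger$ and $\HellingerAffinity$ are symmetric in $(p,q)$. I would therefore rerun the Jensen step with the roles of $p$ and $q$ exchanged, writing $\HellingerAffinity=\mathbb E_q\sqrt{p/q}$, which yields the same chain with $\KLDivergence{q}{p}$.

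I expect no real obstacle here. The only care needed is to confirm that positivity makes the expectations well defined and finite, and that the direction of Jensen's inequality is the correct one for the convex exponential.
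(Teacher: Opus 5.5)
Your proposal is correct and follows the paper's own proof. It uses Jensen's inequality for the exponential on $\HellingerAffinity(p,q)=\mathbb E_{p}\exp\{\tfrac12\log(q/p)\}$, then the scalar bound $1-e^{-v}\le v$, and handles the reversed statement by exchanging $p,q$ using the symmetry of the Hellinger affinity.
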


\begin{proof}\color{red}
This is the order-$1/2$ R\'enyi/KL comparison in our Hellinger
normalization (see also \citealp{vanerven2014renyi}).
Jensen's inequality for the convex exponential gives
$\HellingerAffinity(p,q)
=\mathbb E_{X\sim p}\exp\{-\tfrac12\log\frac{p(X)}{q(X)}\}
\ge\exp\{-\KLDivergence{p}{q}/2\}$.
Subtract from one and use $1-e^{-x}\le x$ for $x\ge0$.
The reversed statement follows by exchanging $p,q$ and using the symmetry
of Hellinger distance.
\end{proof}

\begin{remark}[No upper KL bound from Hellinger alone]
\label{app:rem:no-hellinger-kl-converse}
Strict positivity without a uniform lower bound on the masses does not
give a converse upper bound on KL.  For $0<u<1/4$, consider
\[
 p_u=(1-u,u),\qquad
 q_u=(1-u e^{-1/u^2},u e^{-1/u^2}).
\]
Both laws are strictly positive, and
\[
 \sqHellinger(p_u,q_u)\le\dTV(p_u,q_u)
 =u(1-e^{-1/u^2})\le u\longrightarrow0.
\]
Nevertheless,
\[
 \KLDivergence{p_u}{q_u}
 =\frac1u+(1-u)\log\frac{1-u}{1-u e^{-1/u^2}}
 \ge\frac1u-u\longrightarrow\infty,
\]
For the last inequality, $1-u e^{-1/u^2}\le1$ implies
$\log((1-u)/(1-u e^{-1/u^2}))\ge\log(1-u)$, while
\[
 -\log(1-u)=\int_0^u\frac{dt}{1-t}
 \le\frac{u}{1-u}
 \quad\Longrightarrow\quad (1-u)\log(1-u)\ge-u.
\]
Swapping the two laws proves the corresponding failure for reverse KL.
\end{remark}

\subsection{Elementary forest facts}
\label{app:auxiliary-forest}

\begin{lemma}[Forest degree sum and tree centroid]
\label{app:lem:forest-basics}
Let $H=(W,E_H)$ be a finite nonempty forest with $c(H)$ connected
components.  Then
\begin{equation}
 |E_H|=|W|-c(H),
 \qquad
 \sum_{v\in W}d_H(v)=2|E_H|<2|W|.
 \label{app:eq:aux-forest-degree-sum}
\end{equation}
Moreover, every finite nonempty tree $T$ has a vertex $v$ such that every
component of $T\setminus\{v\}$ has at most $|V(T)|/2$ vertices.
\end{lemma}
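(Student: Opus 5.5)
The two claims are separate, and I will prove them in order with elementary counting and an extremal choice of vertex.

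\emph{Degree sum.} I will prove $|E_H|=|W|-c(H)$ one component at a time. A finite tree $T$ has exactly $|V(T)|-1$ edges. To see this, induct on $|V(T)|$: a tree with at least two vertices has a leaf, since an endpoint of a longest path has degree one. Deleting that leaf removes one vertex and one edge and leaves a tree. Summing over the $c(H)$ components gives the edge count. Each edge contributes to exactly two degrees, so $\sum_v d_H(v)=2|E_H|$. Because $H$ is nonempty, $c(H)\ge1$, and therefore $2|E_H|=2|W|-2c(H)<2|W|$.

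\emph{Centroid.} Let $T$ have $n$ vertices, and for each $v$ let $m(v)$ be the size of the largest component of $T\setminus\{v\}$. Choose $v$ minimizing $m(v)$, and suppose for contradiction that $m(v)>n/2$. Let $C$ be the component achieving $m(v)$, and let $u$ be the unique neighbour of $v$ in $C$.

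Now look at the components of $T\setminus\{u\}$. They are of two kinds.
\begin{itemize}
\item The component containing $v$ consists of $V(T)\setminus C$. It has $n-|C|<n/2<|C|$ vertices.
\item Every other component lies inside $C\setminus\{u\}$, so it has at most $|C|-1$ vertices.
\end{itemize}
Hence $m(u)<|C|=m(v)$, which contradicts the minimality of $v$. So $m(v)\le n/2$. Equivalently, one can walk from any vertex toward its heavy component until no component exceeds $n/2$; this terminates by the same strict decrease.

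The only delicate point is the identity for the component of $T\setminus\{u\}$ containing $v$. It equals $V(T)\setminus C$ because removing the edge $\{u,v\}$ splits $T$ into exactly two pieces, namely $C$ and its complement. I expect this to be the main (small) obstacle. Once it is granted, the remaining steps are routine.
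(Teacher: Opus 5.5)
Your proposal is correct and follows essentially the same route as the paper. You count edges component by component and double-count for the degree sum, and for the centroid you take a vertex minimizing the largest component and shift to its neighbour in a component of size greater than $|V(T)|/2$. The only addition is that you spell out the leaf induction for the tree edge count and the identity that $v$'s component in $T\setminus\{u\}$ is $V(T)\setminus C$, both of which the paper takes as standard.
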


\begin{proof}
Each tree component with $n_\ell$ vertices has $n_\ell-1$ edges.  Summing
over the components gives the first identity; the degree-sum identity then
gives the remaining claims in \eqref{app:eq:aux-forest-degree-sum}.

For the centroid claim, put $n:=|V(T)|$ and choose $v$ minimizing the largest component size
after deleting $v$.  If a component $S$ of $T\setminus\{v\}$ had more than
$|V(T)|/2$ vertices, let $w$ be the neighbor of $v$ in $S$.  After deleting
$w$, the component containing $v$ has $|V(T)|-|S|<|V(T)|/2$ vertices, while
every other component is a proper subset of $S$ and hence has at most
$|S|-1$ vertices.  Thus the largest component after deleting $w$ has size
at most
\[
 \max\{n-|S|,\ |S|-1\}<|S|,
\]
whereas the largest component after deleting $v$ has size $|S|$
(all other components together have $n-1-|S|<|S|$ vertices).
This contradicts the choice of $v$, proving the centroid claim.
\end{proof}

\subsection{Public row-selector invariance}

\paragraph{Intuition.}
The structural proof uses two selector properties: exact candidate rows
of size at most $\DegreeCutoff$ are preserved, and every returned row
has size at most $\DegreeCutoff$.  Consequently the same successful-path and all-path arguments apply
to every public selector with this contract.  The statistical comparisons
are then constructed for that selector's own decision rule.

\begin{proposition}[Invariance under public row selection]
\label{app:prop:row-selector-invariance}
Replace the standard top-vote selector by any rule in
Definition~\ref{app:def:row-selector}, keeping all other screen parameters,
fresh-color rules, and decision-rule caps unchanged.
Under the finite hypotheses of Theorem~\ref{app:thm:finite-upper},
$\SelectedPackedSampler$ has the same low-degree screen contract,
successful-path contraction and stopping guarantees, all-path resource
bounds, and expected-TV bound $\OracleRadius+\ScreenFailureBudget$.
Under the additional oracle hypotheses of
Theorem~\ref{app:thm:finite-kl-upper}, its expected forward KL is at most
$N\KLRowBudget+N\log\VocabSize\,\ScreenFailureBudget$.
Consequently both main-text benchmark guarantees retain their original
calibrations and rates.  This statement compares guarantees, not the
realized paths or errors of two selectors.
\end{proposition}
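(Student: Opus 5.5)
The plan is to reread every place where the proofs of \cref{app:thm:finite-upper,app:thm:finite-kl-upper} touch the selector, and to check that only two properties of $\RowSelector$ are ever used. The first is \emph{nonoverflow identity}: $\RowSelector(j,C,\DegreeCutoff;\ScreenTranscript)=C$ whenever $|C|\le\DegreeCutoff$. The second is the \emph{size cap}: every returned row has at most $\DegreeCutoff$ elements. Both properties are part of Definition~\ref{app:def:row-selector}.

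I would then go through the chain in order.

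\begin{itemize}
\item \emph{Screen contract.} In \AppLowDegreeScreenRef, steps 1--4 concern only $\CandidateRow{j}$, which does not depend on the selector. Step~5 uses only nonoverflow identity.
\item \emph{Structural guarantees.} In \cref{app:lem:successful-path-structure}, the contraction and stopping arguments use the low-degree exactness just established, together with $|\ScreenRow{u}|\le d_{\ResidualForest}(u)$ in \eqref{app:eq:total-claim-bound}. That inequality follows from exactness at low degree and from the size cap at high degree. I would also note that a selector may drop high-degree candidates arbitrarily, since only incoming claims from low-degree rows are used in \eqref{app:eq:survivor-high-neighbors}.
\item \emph{Resources.} \cref{app:lem:upper-resources} counts submissions independently of the rows. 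Its round caps are public, and the selector makes no submission.
\item \emph{Screen success.} \cref{app:lem:random-screen-success} is about colors only. It stays valid because the selector reads only the past transcript and no future colors, so the conditioning field $\mathcal F_{s-1}$ is unchanged.
\end{itemize}

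For the statistics, I would rebuild the analysis-only safe decision rule of \cref{app:lem:safe-controller-comparison} from $\SelectedPackedSampler$ itself, simulating it with the same selector. Structural safety again comes from the successful-path lemma, now for this selector. \cref{app:lem:exact-hybrid,app:lem:adaptive-hellinger-upper} apply to any complete decision rule, and the coupling argument is unchanged. This gives the bound $\OracleRadius+\ScreenFailureBudget$. For KL, \cref{app:lem:adaptive-joint-kl} holds for every admissible rule. Step~2 of the proof of \cref{app:thm:finite-kl-upper} uses only the successful-path certificates, which hold for reference commits as well. The failure bound under the joint-reference law is again selector-free. This gives $N\KLRowBudget+N\log\VocabSize\,\ScreenFailureBudget$.

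Finally, \cref{app:cor:fixed-accuracy-main} uses only these finite statements and \cref{app:lem:benchmark-feasibility}, so the calibrations and rates carry over verbatim.

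The main obstacle is admissibility. I must confirm that a selector depending on the transcript still yields an admissible decision rule, with decisions measurable in the observable past, and that it does not break the determinism of the fixed-seed execution tree used in \cref{app:lem:adaptive-hellinger-upper,app:lem:adaptive-joint-kl}. I would settle this by noting that Definition~\ref{app:def:row-selector} makes $\RowSelector$ a deterministic measurable function of $\ScreenTranscript$, which is part of the stored state $\mathfrak h$.
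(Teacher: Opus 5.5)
Your proposal is correct and follows essentially the same route as the paper's proof. You isolate the nonoverflow identity and the size cap as the only selector properties consumed (by the screen contract, \eqref{app:eq:survivor-high-neighbors}, and \eqref{app:eq:total-claim-bound}), keep the fresh-color conditioning because the selector reads only the past transcript, and rebuild the safe decision rule and the joint-reference process from the chosen selector's own execution rather than comparing two selectors; your admissibility remark matches the paper's observation that the selector is a deterministic function of the public transcript that makes no submission or commit and reads no future randomness.
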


\begin{proof}

Recall the selector contract of \cref{app:def:row-selector}:
\[
 \begin{gathered}
 \RowSelector(j,C,\DegreeCutoff;\ScreenTranscript)\subseteq C,
 \qquad
 |\RowSelector(j,C,\DegreeCutoff;\ScreenTranscript)|\le\DegreeCutoff,\\
 |C|\le\DegreeCutoff
 \quad\Longrightarrow\quad
 \RowSelector(j,C,\DegreeCutoff;\ScreenTranscript)=C.
 \end{gathered}
\]
The selector is a deterministic function of the available public transcript,
makes no submission or commit, and does not read future randomness.
The dependencies of the proof are as follows.

\begin{center}
\small
\begin{tabularx}{\linewidth}{@{}>{\raggedright\arraybackslash}p{.28\linewidth}
 >{\raggedright\arraybackslash}p{.30\linewidth}
 >{\raggedright\arraybackslash}X@{}}
\toprule
Property retained & Argument using it & Guarantee retained\\
\midrule
Identity on nonoverflowing rows
 & \AppLowDegreeScreenRef & Low-degree exactness\\
Row-size cap and low-degree exactness
 & \Cref{app:lem:successful-path-structure,app:lem:upper-resources}
 & Contraction, stopping, resource bounds\\
Fresh colors conditional on the past
 & \Cref{app:lem:random-screen-success} & Failure budget\\
This selector's own safe rule
 & \Cref{app:lem:exact-hybrid,app:lem:adaptive-hellinger-upper,app:lem:safe-controller-comparison}
 & Expected TV bound\\
This selector's own joint reference
 & \Cref{app:lem:adaptive-joint-kl,app:thm:finite-kl-upper}
 & Expected forward-KL bound\\
\bottomrule
\end{tabularx}
\end{center}

\emph{1. Preserve the low-degree screen contract.}
On a successful screen, the proof of
\AppLowDegreeScreenRef{} identifies the candidate row with
the true neighborhood before applying the selector in \AlgoBlockRef{aggregate}.
Its size is at most
$\DegreeCutoff$, so the nonoverflow identity preserves it.  Every other
returned row still has size at most $\DegreeCutoff$; no claim that it
contains only true neighbors is needed.

\emph{2. Transfer the structural and resource bounds.}
These two properties are exactly those used in
\eqref{app:eq:survivor-high-neighbors} and
\eqref{app:eq:total-claim-bound}.  Thus the contraction, terminal exactness,
and successful-path round count above apply to the decision rule with
the chosen selector.
In particular, neither cycle repair nor the round-cap fallback activates
on a successful path.
The selector adds no submission or commit, and the unchanged caps bound
resources on every path, including failed screens.

\emph{3. Retain the conditional screen-failure budget.}
Conditioning on this decision rule's realized past, including previous
selector outputs, still leaves the next colors drawn in \AlgoBlockRef{draft}
independent and uniform.  Lemma~\ref{app:lem:random-screen-success} therefore gives the
same failure budget without a union bound over selectors or histories.

\emph{4. Use the chosen selector's own safe decision rule for TV.}
For TV, construct the safe decision rule of
Lemma~\ref{app:lem:safe-controller-comparison} using this same selector.
The common-path coupling compares $\SelectedPackedSampler$ with its own
safe decision rule using the same selector; it does not compare different
selectors.  Structural safety and the generic adaptive Hellinger bound
give $\OracleRadius$, and a failed screen contributes at most
$\ScreenFailureBudget$.

\emph{5. Use the chosen selector's own joint-reference process for KL.}
For KL, instead construct the joint-reference process of
Appendix~\ref{app:kl-joint-reference} for the decision rule with the chosen
selector.  Write $\ScreenFailureEvent$ for the event that a reached
screen fails in this selector's joint-reference process.
On its reference paths, the costs in
\eqref{app:eq:adaptive-joint-kl} satisfy
\begin{align*}
 \sum_t\sum_{j\in B(\mathfrak h_t)}
  \KLDivergence{\ExactRow{y(\mathfrak h_t)}{j}}
               {\OracleRow{y(\mathfrak h_t)}{j}}
 &\le\KLRowBudget\sum_t|B(\mathfrak h_t)|=N\KLRowBudget,\\
 \sum_t\mathrm{TC}(\JointBatchLaw{\mathfrak h_t})
 &\le N\log\VocabSize\,\mathbf 1_{\ScreenFailureEvent}.
\end{align*}
The second line is zero on a successful reference path by the structural
argument; on any path, the sum is at most $N\log\VocabSize$.
Conditioning before each fresh color block in this same reference process
gives \eqref{app:eq:joint-reference-screen-failure}.  Therefore
\begin{align*}
 \mathbb E_W\KLDivergence{\TargetLaw}{\SamplerOutputLaw{W}{\FrozenOracle}}
 &\le N\KLRowBudget+
       N\log\VocabSize\,\mathbb E_W\JointReferenceLaw{W}(\ScreenFailureEvent)\\
 &\le N\KLRowBudget+N\log\VocabSize\,\ScreenFailureBudget.
\end{align*}
All output and reference laws here belong to the chosen selector.
No comparison with the standard top-vote selector, or conversion from the
TV coupling, is used.
\end{proof}

\begin{proposalcontext}{Interpreting selector invariance}
The resource guarantees do not bound a selector's local arithmetic or
memory costs.
\end{proposalcontext}

\ifincludeexponents
\input{src/sections/proofs/lower_oracle_floor}
\fi

\ifincludeexponents
\input{src/sections/appendix_independent_signal}
\fi
\section{Existing guarantees in the units of this paper}
\label{app:comparison}

We separate two tasks: accounting for oracle submissions, and evaluating
the dependence quantities in the representative schedule bounds of
Table~\ref{tab:resource-comparison} and Section~\ref{sec:related-work}.
The latter uses one explicit path, evaluated for TC, DTC, and effective TC.
The comparison substitutes into
published sufficient upper bounds; it is not a lower bound on another
sampler or a tight worst-case complexity over forests.
Appendix~\ref{app:oracle-comparison} records the source-specific oracle
conditions and output objectives.

\subsection{Submissions and parallel depth}
\label{app:comparison-states}

One submitted masked state returns all requested singleton rows; repeated
submissions and different inputs in a parallel batch are charged separately.
A query for one conditional row can therefore be implemented by one such
submission, so a published row-query upper bound is also a full-output
submission upper bound.  This conversion need not be tight.
A product-unmasking schedule with $K$ updates needs at most $K$ submissions
and depth $K$.  Our completed feasible run uses
$\CounterfactualQueries+\CommitRounds+1$ submissions, including preprocessing.
All these counts exclude local arithmetic and vocabulary readout costs.

Parallel verification does not require sequential prefix queries.
Write $\mu$ for the target law used by the external verification sampler,
$h$ for its accepted history, and
$z=(z_1,\ldots,z_b)$ for an ordered candidate block, with
$z_{<j}=(z_1,\ldots,z_{j-1})$.
Once $z$ and $h$ are available, every input in
the chain-rule product
\[
 \mu(z\mid h)=\prod_j\mu(z_j\mid h,z_{<j})
\]
is known and the conditional rows can be queried in parallel.
This is consistent with the sublinear-depth Anari results summarized in
the table; their exact resource statements are recorded only in
Appendix~\ref{app:comparison-exact-oracles}.

\subsection{One path with linear TC and DTC}
\label{app:comparison-path}

All entropies and mutual informations below are under $X\sim P$.
Using the \hyperlink{app-finite-entropy}{finite entropy}
$H(p)=-\sum_xp(x)\log p(x)$, write $H(A):=H(\mathcal L_P(A))$ and,
for finite random variables $A,B,C$,
\[
 \begin{aligned}
 H(A\mid C)&:=\sum_cP(C=c)H(P_{A\mid C=c}),\\
 I(A;B\mid C)&:=\sum_cP(C=c)
   \KLDivergence{P_{AB\mid C=c}}
                {P_{A\mid C=c}\otimes P_{B\mid C=c}}.
 \end{aligned}
\]
The sums run over positive-probability values of $C$;
omitting $C$ gives unconditional mutual information.
Here $P_{AB\mid C=c}$ denotes the conditional joint law and
$P_{A\mid C=c},P_{B\mid C=c}$ its marginals.
With $X_{-i}:=(X_j)_{j\ne i}$, write
\[
 \mathrm{TC}(P):=\sum_iH(X_i)-H(X),\qquad
 \mathrm{DTC}(P):=H(X)-\sum_iH(X_i\mid X_{-i}).
\]
For every forest law, choose a rooted ordering in which each parent
precedes its children.  In the factorization \eqref{app:eq:S0}, a nonroot
vertex $j$, conditional on earlier vertices, has law
$\EndpointKernel{\ParentOf{j}}{j}(\cdot\mid X_{\ParentOf{j}})$:
none of its descendants has appeared yet, and its parent separates it
from the other earlier vertices.  Roots belong to independent components
and retain their marginal laws.  The entropy chain rule therefore gives
\[
 H(X)=\sum_{r\in\TargetRoots}H(X_r)
       +\sum_{j\notin\TargetRoots}H(X_j\mid X_{\ParentOf{j}}).
\]
Subtracting from $\sum_iH(X_i)$ cancels the
root entropies and yields
\begin{align}
 \mathrm{TC}(P)
 &=\sum_{j\notin\TargetRoots}
      [H(X_j)-H(X_j\mid X_{\ParentOf{j}})]\notag\\
 &=\sum_{j\notin\TargetRoots}I(X_j;X_{\ParentOf{j}}).
 \label{app:eq:forest-tc-identity}
\end{align}
A forest need not have large TC: the matching lower-bound witness becomes
weakly coupled as accuracy changes.  The following different witness
has constant dependence per edge while retaining the response and tail
assumptions.  All background tokens share one type, making rare-token
exchanges invisible to the conditional rows.

\paragraph{Intuition.}
The vocabulary has only two dependence-relevant types.  The type chain
has a fixed nonzero dependence between adjacent positions, while token
variation within a type contributes only independent emissions.
Consequently each edge contributes a positive constant to TC, and each
interior position contributes a positive constant to DTC:
\[
 \mathrm{TC}(P)\simeq N,\qquad \mathrm{DTC}(P)\simeq N.
\]
The background vocabulary can grow without changing these dependence
scales.  The proof checks both class membership and the entropy identities.

\begin{proposition}[A shared-class path with linear TC and DTC]
\label{app:prop:path-in-class}
Fix $\RankTailExponent>1$ and put $p=2^{-\RankTailExponent}$.
For every $N,\VocabSize\ge3$, let
$\Vocab=\{a_1,a_2\}\sqcup\mathcal B$, $|\mathcal B|=\VocabSize-2$, with
\[
 \pi(a_1)=1-p,\qquad \pi(a_2)=p/2,\qquad
 \pi(b)=\frac{p}{2(\VocabSize-2)}\quad(b\in\mathcal B).
\]
Define $\tau(a_1)=\mathsf A$ and $\tau(a)=\mathsf B$ for $a\ne a_1$.
The type masses are $P_{\mathsf A}=1-p$, $P_{\mathsf B}=p$.
In the type order $(\mathsf A,\mathsf B)$, set
\begin{equation}
 J=\begin{pmatrix}1-3p/2&p/2\\p/2&p/2\end{pmatrix},
 \qquad
 g(t,t')=\frac{J(t,t')}{P_tP_{t'}},\qquad
 K(y\mid x)=\pi(y)g(\tau(x),\tau(y)).
 \label{app:eq:two-type-kernel}
\end{equation}
Then $P(x)=\pi(x_1)\prod_{j=2}^NK(x_j\mid x_{j-1})$ is a strictly
positive path law with all marginals $\pi$.
There is a constant $\omega_\star(p)>0$, independent of $N,\VocabSize$,
such that this law satisfies (RT\textcolor{red}{; \cref{app:eq:RT}})--(UEN\textcolor{red}{; \cref{app:eq:UEN}}) with
$\ResponseConstant=\ResponseExponent=1$ and every
$\EdgeSignal\le\omega_\star(p)$, and (RF\textcolor{red}{; \cref{app:eq:RF}}) with $\FrequencyConstant=1$.
Its exact oracle satisfies both oracle accuracy conditions.
Moreover,
\begin{equation}
 \mathrm{TC}(P)=\Theta(N),\qquad \mathrm{DTC}(P)=\Theta(N),
 \label{app:eq:path-tc-dtc}
\end{equation}
where the constants depend only on $p$.
Consequently this path belongs to each main-text benchmark class for all
sufficiently large $N$ along that class's vocabulary and accuracy regime.
\end{proposition}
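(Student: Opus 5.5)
The plan has four parts: basic properties of the law, class membership, the oracle conditions, and the entropy estimates. \emph{Basic properties.} Verify that $J$ is a symmetric joint type law whose type marginals are $(P_{\mathsf A},P_{\mathsf B})=(1-p,p)$: the row sums are $1-p$ and $p$. From this, $\sum_y K(y\mid x)=\sum_{t'}J(\tau(x),t')/P_{\tau(x)}=1$. Likewise $\sum_x\pi(x)K(y\mid x)=\pi(y)$, so $\pi$ is stationary and every marginal equals $\pi$. Detailed balance $\pi(x)K(y\mid x)=\pi(y)K(x\mid y)$ follows from the symmetry of $g$, giving (S0)--(C0) on the path forest. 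Strict positivity needs only $1-3p/2>0$, which holds because $p\le1/2$. Finally, the pair $(\tau(X_j))_j$ is a two-state Markov chain, and given the types the tokens are independent emissions: $a_1$ for type $\mathsf A$, and $\pi$ restricted to $\{a_2\}\cup\mathcal B$ for type $\mathsf B$.

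\emph{Class membership.} For (RF), the rank-one mass is $1-p\le1$ and the rank-two mass is $p/2\le 2^{-\RankTailExponent}$. Each background mass is $p/(2(\VocabSize-2))$; since $p\le1/2$ this is at most $\VocabSize^{-1}$ once $\VocabSize\ge3$, so the envelope holds with $\FrequencyConstant=1$. For (RT) and (UEN), compute the boundary row $\BoundaryRow{i}{j}{a}{z}$. If $i,j$ are not adjacent, Lemma~\ref{app:lem:residual-markov} makes it independent of $a$, so the diameter is zero. If $\{i,j\}$ is an edge, the row is proportional to $\pi(b)g(\tau(a),\tau(b))\cdot(\text{factor from }j\text{'s other neighbour})$, so it depends on $a$ only through $\tau(a)$. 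Every $\mathsf B$-type token has mass at most $p/2<1-p$. Hence any tail $\TailSet{i}{t}$ with at least two tokens either consists only of $\mathsf B$-type tokens, giving diameter $0$, or contains $a_1$. In the latter case $t\ge1-p\ge1/2$, and the diameter is at most $1\le2t$. To get the stated bound $\ResponseConstant=1$ I will instead show directly that the diameter is at most $1-p\le t$ in this case.

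For (UEN), the response equals the TV distance between the two rows with $\tau(a)=\mathsf A$ and $\tau(a)=\mathsf B$. This reduces to a two-type computation in which the other neighbour contributes a factor $g(\tau(b),\tau(z_k))$ with $\tau(z_k)\in\{\mathsf A,\mathsf B\}$. That gives finitely many cases. Each yields a positive quantity, because $g(\mathsf A,\cdot)\ne g(\mathsf B,\cdot)$, which follows from $\det J\ne0$, i.e.\ $(1-3p/2)p/2\ne p^2/4$, i.e.\ $p\ne1/2$. This holds because $\RankTailExponent>1$ forces $p<1/2$. The minimum over these cases defines $\omega_\star(p)$, independent of $N$ and $\VocabSize$.

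\emph{Oracle conditions and entropy estimates.} The exact oracle has zero error, so both oracle accuracy conditions hold trivially. For TC, apply \eqref{app:eq:forest-tc-identity}. Because token-within-type emissions are independent given the types, $I(X_j;X_{j-1})=I(\tau_j;\tau_{j-1})$. This equals the mutual information of $J$, which is a positive constant $c_1(p)$ since $J$ is not a product law. Hence $\mathrm{TC}=(N-1)c_1(p)$. For DTC, write $\mathrm{DTC}=H(X)-\sum_i H(X_i\mid X_{-i})$. The path Markov property gives $H(X_i\mid X_{-i})=H(X_i\mid X_{i-1},X_{i+1})$, and the chain rule gives $H(X)=H(X_1)+\sum_{j\ge2}H(X_j\mid X_{j-1})$. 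Grouping, each interior $i$ contributes $H(X_i\mid X_{i-1})-H(X_i\mid X_{i-1},X_{i+1})=I(X_i;X_{i+1}\mid X_{i-1})$, up to $O(1)$ boundary terms. Emission entropies cancel, so this again reduces to a type quantity $I(\tau_i;\tau_{i+1}\mid\tau_{i-1})=c_2(p)>0$, positive because $\tau_{i+1}$ depends on $\tau_i$ given $\tau_{i-1}$. The boundary terms are bounded by type entropies, which are at most $\log 2$ and do not grow with $\VocabSize$. This gives $\Theta(N)$, and the final membership claim follows since $\omega_\star(p)$ is a constant while the benchmark signal $\EdgeSignal\to0$.

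\emph{Main obstacle.} The delicate step is the (UEN) lower bound uniformly over boundaries, together with the exact RT constant. I expect the explicit two-by-two case analysis with $\det J\ne0$ to settle the former. For the latter I would first try to bound the $a_1$-versus-$\mathsf B$ diameter by $1-p$ using the specific entries of $g$.
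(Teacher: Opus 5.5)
Your route is essentially the paper's: stationarity and detailed balance from the symmetric $J$, and reduction of every boundary row to a type-odds computation (within a type the token law is always $\pi(y)/P_t$). For (UEN) you use a finite case analysis over the other neighbour's type, with positivity from $\det J=p(1-2p)/2>0$. You split tails at $1-p$ for (RT), use the forest identity for TC, and use the path identity $\mathrm{DTC}=I(X_1;X_2)+\sum_{i=2}^{N-1}I(X_i;X_{i+1}\mid X_{i-1})$ reduced to type-level quantities. All of that is sound.

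Two small points on (UEN). What you actually need is that $g(\mathsf A,\cdot)$ and $g(\mathsf B,\cdot)$ are not proportional, since the row is renormalized; that is exactly $\det J\neq0$, so your stated reason is the right one. You should also include the path endpoint, where $j$ has no other neighbour.

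The genuine gap is the (RT) constant. For an edge and $t\ge 1-p$ the tail is the whole vocabulary, so you must show the full directed response is at most $1-p$. Your argument only yields $1\le 2t$, which is $\ResponseConstant=2$, and the bound with constant one is left as something you ``would try''. This is the one step where the specific entries of $J$, not just $\det J\neq0$, matter, so it has to be done.

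It does go through. Let $q\in\{1,\tfrac{2-3p}{1-p},\tfrac{p}{1-p}\}$ be the other neighbour's factor. Then the type-$\mathsf A$ odds at $j$ are $\tfrac{2-3p}{p}\,q$ for source $a_1$ and exactly $q$ for a type-$\mathsf B$ source. Because the two rows agree within each type, their TV distance is the difference of type-$\mathsf A$ probabilities:
\[
 D(q)=\frac{x^2q}{1+x^2q}-\frac{q}{1+q}=\frac{(x^2-1)q}{(1+x^2q)(1+q)},\qquad x^2:=\frac{2-3p}{p}>1 .
\]
The identity $(1+x^2q)(1+q)-(x+1)^2q=(1-xq)^2\ge0$ gives $D(q)\le (x-1)/(x+1)$ for every $q>0$. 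Next, $(x-1)/(x+1)\le 1-p$ is equivalent to $px\le 2-p$, hence to $p(2-3p)\le(2-p)^2$, hence to $4-6p+4p^2\ge0$, which always holds. The paper reaches the same supremum $(x-1)/(x+1)$ by maximizing over $q$ with the quotient rule. With this step inserted, your proof is complete.
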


\begin{proof}
\emph{1. Kernel normalization and stationary marginal.}
For a type $t$, the definition of its mass is
$P_t=\sum_{x:\tau(x)=t}\pi(x)$.
Here $0<p<1/2$, and the four entries of $J$ are positive.  Its row sums are
\[
 (1-3p/2)+p/2=1-p=P_{\mathsf A},\qquad
 p/2+p/2=p=P_{\mathsf B}.
\]
Since $J$ is symmetric, its column sums are the same.
For a fixed token $x$ of type $t=\tau(x)$, group the sum over $y$ by type:
\begin{align*}
 \sum_yK(y\mid x)
 &=\sum_{t'\in\{\mathsf A,\mathsf B\}}
       \sum_{y:\tau(y)=t'}\pi(y)\frac{J(t,t')}{P_tP_{t'}}\\
 &=\sum_{t'\in\{\mathsf A,\mathsf B\}}
       P_{t'}\frac{J(t,t')}{P_tP_{t'}}
 =\frac1{P_t}\sum_{t'}J(t,t')=1.
\end{align*}
Thus $K$ is a probability kernel.  For a fixed $y$ of type $t'$,
the column-sum identity similarly yields
\begin{align*}
 \sum_x\pi(x)K(y\mid x)
 &=\pi(y)\sum_{t\in\{\mathsf A,\mathsf B\}}
       \sum_{x:\tau(x)=t}\pi(x)\frac{J(t,t')}{P_tP_{t'}}\\
 &=\pi(y)\frac1{P_{t'}}\sum_tJ(t,t')=\pi(y).
\end{align*}
Starting the path from $\pi$ therefore gives marginal $\pi$ at every
position, by induction along the path.  Also,
\[
 \pi(x)K(y\mid x)
 =\pi(x)\pi(y)\frac{J(\tau(x),\tau(y))}
                      {P_{\tau(x)}P_{\tau(y)}}
 =\pi(y)K(x\mid y),
\]
which verifies endpoint compatibility.  Positivity of $\pi$ and $K$
gives strict positivity of the path law.

\emph{2. Boundary-conditioned rows and the edge signal.}
Substituting the kernel formula into the path law gives
\[
 P(x)=\left(\prod_{j=1}^N\pi(x_j)\right)
      \left(\prod_{j=2}^Ng(\tau(x_{j-1}),\tau(x_j))\right).
\]
For fixed values outside readout $j$, all factors not incident to $j$
cancel.  Write $\mathcal N_F(j)=\{j-1,j+1\}\cap[N]$ for its path
neighbors.  Its normalized conditional row is
\[
 P(X_j=y\mid X_{-j}=x_{-j})
 =\frac{\pi(y)\prod_{k\in\mathcal N_F(j)}
                   g(\tau(y),\tau(x_k))}
        {\sum_z\pi(z)\prod_{k\in\mathcal N_F(j)}
                   g(\tau(z),\tau(x_k))}.
\]
Within type $t$, every factor $g$ is constant, so the conditional token
law given the type is always $\pi(y)/P_t$ for $\tau(y)=t$.
Summing the numerator separately over the two types shows that the
conditional odds of type $\mathsf A$ against type $\mathsf B$ are
\begin{align*}
 &\frac{P(\tau(X_j)=\mathsf A\mid X_{-j}=x_{-j})}
        {P(\tau(X_j)=\mathsf B\mid X_{-j}=x_{-j})}\\
 &\qquad=
 \frac{P_{\mathsf A}\prod_{k\in\mathcal N_F(j)}g(\mathsf A,\tau(x_k))}
      {P_{\mathsf B}\prod_{k\in\mathcal N_F(j)}g(\mathsf B,\tau(x_k))}
 =\rho\prod_{k\in\mathcal N_F(j)}r_{\tau(x_k)},\\
 &\rho:=\frac{P_{\mathsf A}}{P_{\mathsf B}}=\frac{1-p}{p},
 \qquad r_t:=\frac{g(\mathsf A,t)}{g(\mathsf B,t)}
 \quad(t\in\{\mathsf A,\mathsf B\}).
\end{align*}
Thus $\rho$ is the marginal type-odds ratio, and $r_t$ is the
multiplicative contribution of a neighbor of type $t$.
Their values follow from the four entries
\[
 g(\mathsf A,\mathsf A)=\frac{1-3p/2}{(1-p)^2},\quad
 g(\mathsf A,\mathsf B)=g(\mathsf B,\mathsf A)=\frac1{2(1-p)},\quad
 g(\mathsf B,\mathsf B)=\frac1{2p}:
\]
\begin{align*}
 \PathOddsA&=
 \frac{g(\mathsf A,\mathsf A)}{g(\mathsf B,\mathsf A)}
 =
 \frac{(1-3p/2)/(1-p)^2}{1/[2(1-p)]}
 =\frac{2-3p}{1-p},\\
 \PathOddsB&=
 \frac{g(\mathsf A,\mathsf B)}{g(\mathsf B,\mathsf B)}
 =
 \frac{1/[2(1-p)]}{1/(2p)}
 =\frac{p}{1-p}.
\end{align*}
In particular, $\PathOddsA-\PathOddsB=2(1-2p)/(1-p)>0$.

Fix a directed edge $i\to j$ and a complete boundary
$z\in\Vocab^{[N]\setminus\{i,j\}}$.
The two rows to compare are precisely
\begin{align*}
 \BoundaryRow{i}{j}{a_1}{z}
 &=\mathcal L_P(X_j\mid X_i=a_1,X_{[N]\setminus\{i,j\}}=z),\\
 \BoundaryRow{i}{j}{a_2}{z}
 &=\mathcal L_P(X_j\mid X_i=a_2,X_{[N]\setminus\{i,j\}}=z).
\end{align*}
Here $\tau(a_1)=\mathsf A$ and $\tau(a_2)=\mathsf B$.
Put $f(o)=o/(1+o)$, the type-$\mathsf A$ probability at odds $o$, and
define the contribution of the other neighbors by
\[
 q:=\prod_{k\in\mathcal N_F(j)\setminus\{i\}}r_{\tau(z_k)}
 \in\{1,\PathOddsA,\PathOddsB\}.
\]
The empty product is one; a path readout has at most one other neighbor.
The type-$\mathsf A$ weights of these two rows are, respectively,
\begin{align*}
 u&:=P(\tau(X_j)=\mathsf A\mid X_i=a_1,X_{[N]\setminus\{i,j\}}=z)
       =f(\rho\PathOddsA q),\\
 u'&:=P(\tau(X_j)=\mathsf A\mid X_i=a_2,X_{[N]\setminus\{i,j\}}=z)
       =f(\rho\PathOddsB q).
\end{align*}
Within each type, both rows have the same token law $\pi(y)/P_t$.
Expanding TV over the disjoint type supports therefore gives
\begin{align*}
 &\dTV\bigl(\BoundaryRow{i}{j}{a_1}{z},
             \BoundaryRow{i}{j}{a_2}{z}\bigr)\\
 &\quad=\frac12\left(
    \sum_{\tau(y)=\mathsf A}|u-u'|\frac{\pi(y)}{P_{\mathsf A}}
   +\sum_{\tau(y)=\mathsf B}|(1-u)-(1-u')|
                            \frac{\pi(y)}{P_{\mathsf B}}\right)\\
 &\quad=\frac12\bigl(|u-u'|+|(1-u)-(1-u')|\bigr)
 =u-u'=:D(q),\\
 &D(q)=f(\rho\PathOddsA q)-f(\rho\PathOddsB q)
 =\frac{\rho q(\PathOddsA-\PathOddsB)}
        {(1+\rho\PathOddsA q)(1+\rho\PathOddsB q)}>0.
\end{align*}
The two sums of $\pi(y)/P_t$ equal one by the definition of $P_t$.
The conditional row depends on the source token only through its type,
so $a_1,a_2$ attain the full directed response:
\[
 \DirectedResponse{i}{j}{z}
 =\max_{c,c'\in\Vocab}
   \dTV\bigl(\BoundaryRow{i}{j}{c}{z},
              \BoundaryRow{i}{j}{c'}{z}\bigr)
 =D(q).
\]
Define
$\omega_\star(p):=\min\{D(1),D(\PathOddsA),D(\PathOddsB)\}>0$.
This minimum depends only on $p$, not on $N,\VocabSize$ or the boundary.
The same calculation applies to either orientation of every path edge,
so the required condition \eqref{app:eq:UEN} is
\[
 \min\bigl\{\DirectedResponse{i}{j}{z},
             \DirectedResponse{j}{i}{z}\bigr\}
 \ge\omega_\star(p)\ge\EdgeSignal
 \qquad\text{(UEN\textcolor{red}{; \cref{app:eq:UEN}})}.
\]

\emph{3. Tail response and the rank--frequency envelope.}
Recall that $\PosMarginal{i}=\pi$ at every position, so the marginal
tail in \eqref{app:eq:tail} is
$\TailSet{i}{t}=\{c\in\Vocab:\pi(c)\le t\}$.
For a nonedge $\{i,j\}\notin\TargetEdges$, the conditional row does
not involve the source value, and hence for every $c,c'\in\Vocab$,
\[
 \BoundaryRow{i}{j}{c}{z}=\BoundaryRow{i}{j}{c'}{z},
 \qquad
 \dTV\bigl(\BoundaryRow{i}{j}{c}{z},
            \BoundaryRow{i}{j}{c'}{z}\bigr)=0.
\]
For an edge, the two threshold ranges are as follows.
\begin{itemize}
 \item If $t<1-p=\pi(a_1)$, then
 $\TailSet{i}{t}\subseteq\{a_2\}\cup\mathcal B=\tau^{-1}(\mathsf B)$.
 Thus all permitted source tokens have the same type and give the same
 readout row; their maximum TV distance is zero.
 \item If $t\ge1-p$, then $\TailSet{i}{t}=\Vocab$, since $1-p$ is the
 largest marginal mass.  The full directed response is $D(q)$ from
 step~2, so it remains to prove $D(q)\le1-p$.
\end{itemize}

Here is the needed bound with its maximization made explicit.
Set $x=\sqrt{\PathOddsA/\PathOddsB}>1$ and
$\zeta=\rho\PathOddsB q>0$.  Then
\[
 D(q)=\frac{x^2\zeta}{1+x^2\zeta}-\frac\zeta{1+\zeta}
     =\frac{(x^2-1)\zeta}{(1+x^2\zeta)(1+\zeta)}.
\]
The denominator is $1+(1+x^2)\zeta+x^2\zeta^2$.
The quotient rule gives
\[
 \frac{d}{d\zeta}
 \frac{(x^2-1)\zeta}{(1+x^2\zeta)(1+\zeta)}
 =\frac{(x^2-1)(1-x^2\zeta^2)}
        {(1+x^2\zeta)^2(1+\zeta)^2}.
\]
The derivative is positive for $\zeta<1/x$ and negative for $\zeta>1/x$.
Substituting $\zeta=1/x$ therefore gives
\[
 \sup_{q>0}D(q)
 =\frac{(x^2-1)/x}{(1+x)(1+1/x)}
 =\frac{x-1}{x+1}.
\]
All quantities are positive, and
\[
 \frac{x-1}{x+1}\le1-p
 \quad\Longleftrightarrow\quad
 px\le2-p
 \quad\Longleftrightarrow\quad
 p^2x^2\le(2-p)^2.
\]
Finally,
\[
 p^2x^2=p(2-3p),\qquad
 (2-p)^2-p(2-3p)=4-6p+4p^2>0
 \quad(0<p<1/2).
\]
Combining the nonedge and edge cases gives, for every distinct $i,j$,
complete boundary $z$, and $t\in[0,1]$,
\[
 \begin{aligned}
 &\max_{c,c'\in\TailSet{i}{t}}
 \dTV\bigl(\BoundaryRow{i}{j}{c}{z},
            \BoundaryRow{i}{j}{c'}{z}\bigr)\\
 &\qquad\le
 \begin{cases}
  0,&0\le t<1-p,\\
  1-p,&1-p\le t\le1
 \end{cases}
 \le t=\ResponseConstant t^{\ResponseExponent}
 \qquad\text{(RT\textcolor{red}{; \cref{app:eq:RT}})},
 \end{aligned}
\]
where $\ResponseConstant=\ResponseExponent=1$, and the maximum is zero
for a tail with at most one token, as in \eqref{app:eq:RT}.

For (RF\textcolor{red}{; \cref{app:eq:RF}}), $a_1$ has mass $1-p>1/2$ and is the largest atom.
The mass of $a_2$ is $p/2=2^{-\RankTailExponent-1}\le2^{-\RankTailExponent}$.
Each background mass obeys, since $\VocabSize\ge3$,
\[
 \frac{p}{2(\VocabSize-2)}
 \le\frac1{4(\VocabSize-2)}
 \le\frac1{\VocabSize};
 \qquad
 \VocabSize\le4(\VocabSize-2)
 \ \Longleftrightarrow\ 3\VocabSize\ge8.
\]
The background masses are no larger than $p/2$.
Therefore, with the ranked tokens of \eqref{app:eq:hidden-ranks}, every
position $i$ and rank $k\in[\VocabSize]$ satisfy
\begin{align*}
 \PosMarginal{i}(\RankedToken{i}{k})
 &\le
 \begin{cases}
  1,&k=1,\\
  2^{-\RankTailExponent},&k=2,\\
  \VocabSize^{-1},&3\le k\le\VocabSize
 \end{cases}
 \\
 &\le k^{-\RankTailExponent}+\VocabSize^{-1}
 =\FrequencyConstant(k^{-\RankTailExponent}+\VocabSize^{-1})
 \qquad\text{(RF\textcolor{red}{; \cref{app:eq:RF}})},
\end{align*}
with $\FrequencyConstant=1$.
If $\VocabSize=3$, the possible tie at rank two has mass $p/2$ on
either token, so the same bounds apply under the public tie-breaking order.
This is the finite envelope \eqref{app:eq:RF}; the asymptotic part of
(RF\textcolor{red}{; \cref{app:ass:rate-regime}}) is supplied by the prescribed sequence
$\VocabSize=N^{\VocabGrowthExponent+o(1)}$.
The exact oracle has zero Hellinger and KL error.

\emph{4. Total correlation.}
Let $T_j=\tau(X_j)$.  Its transition probability follows by grouping $K$:
\[
 P(T_{j+1}=t'\mid T_j=t)
 =\sum_{\tau(y)=t'}\pi(y)g(t,t')
 =\frac{J(t,t')}{P_t}.
\]
Since its marginal is $P_t$, the adjacent type joint law is $J$.
The adjacent token joint law is
\[
 P(X_j=x,X_{j+1}=y)
 =J(t,t')\frac{\pi(x)}{P_t}\frac{\pi(y)}{P_{t'}},
 \qquad t=\tau(x),\quad t'=\tau(y).
\]
Its likelihood ratio relative to $\pi(x)\pi(y)$ is
$J(t,t')/(P_tP_{t'})$, which depends only on the types.
Grouping the mutual-information sum by type therefore gives
\[
 I(X_j;X_{j+1})
 =\sum_{t,t'}J(t,t')\log\frac{J(t,t')}{P_tP_{t'}}
 =I(T_j;T_{j+1})=:c.
\]
\hypertarget{app-positive-pair-mi}{}
To see why this constant is positive, the displayed sum is
$\KLDivergence{J}{(P_tP_{t'})_{t,t'}}$.
KL is zero exactly when its two laws agree.  Such equality would make
$J$ a product table and hence rank one, but
\[
 \det J=(1-3p/2)(p/2)-(p/2)^2
       =\frac{p(1-2p)}2>0.
\]
It depends only on $p$, so \eqref{app:eq:forest-tc-identity} gives
$\mathrm{TC}(P)=(N-1)c=\Theta(N)$.

\emph{5. Dual total correlation.}
The chain rule and conditional Markov property give, respectively,
\[
 H(X)=H(X_1)+\sum_{i=2}^NH(X_i\mid X_{i-1})
\]
and
\[
 \sum_iH(X_i\mid X_{-i})
 =H(X_1\mid X_2)
  +\sum_{i=2}^{N-1}H(X_i\mid X_{i-1},X_{i+1})
  +H(X_N\mid X_{N-1}).
\]
Subtracting cancels the final-coordinate term and yields
\begin{align}
 \mathrm{DTC}(P)
 &=H(X_1)-H(X_1\mid X_2)\notag\\
 &\quad+\sum_{i=2}^{N-1}
   \bigl[H(X_i\mid X_{i-1})
         -H(X_i\mid X_{i-1},X_{i+1})\bigr]\notag\\
 &=I(X_1;X_2)+\sum_{i=2}^{N-1}
                   I(X_i;X_{i+1}\mid X_{i-1}).
 \label{app:eq:path-dtc-identity}
\end{align}
To evaluate an interior term, fix the previous type $t$.
The conditional joint table of $(T_i,T_{i+1})$ is
\[
 P(T_i=u,T_{i+1}=v\mid T_{i-1}=t)
 =\frac{J(t,u)}{P_t}\frac{J(u,v)}{P_u}.
\]
This is obtained from $J$ by multiplying row $u$ by the positive factor
$J(t,u)/(P_tP_u)$.  Its determinant is consequently
\[
 \frac{J(t,\mathsf A)J(t,\mathsf B)}
      {P_t^2P_{\mathsf A}P_{\mathsf B}}\det J>0.
\]
The two conditional type variables are therefore dependent for either
$t$, and their conditional mutual information is a positive constant
depending only on $p$.

It remains to relate this type calculation to the token term.
Conditioning on $X_{i-1}=x$ affects the future law only through $\tau(x)=t$.
For this calculation write $\Pr_t(\cdot)=\Pr(\cdot\mid T_{i-1}=t)$.
For tokens $y,z$ of types $u,v$, the emission factors cancel as follows:
\begin{align*}
 &\frac{\Pr_t(X_i=y,X_{i+1}=z)}
        {\Pr_t(X_i=y)\Pr_t(X_{i+1}=z)}\\
 &\quad=
 \frac{\Pr_t(T_i=u,T_{i+1}=v)\,[\pi(y)/P_u]\,[\pi(z)/P_v]}
      {\Pr_t(T_i=u)\,[\pi(y)/P_u]\,
       \Pr_t(T_{i+1}=v)\,[\pi(z)/P_v]}\\
 &\quad=
 \frac{\Pr_t(T_i=u,T_{i+1}=v)}
      {\Pr_t(T_i=u)\Pr_t(T_{i+1}=v)}.
\end{align*}
Grouping the token sum by $(u,v)$ therefore gives
\begin{align*}
 &I(X_i;X_{i+1}\mid T_{i-1}=t)\\
 &=\sum_{u,v}\Pr_t(T_i=u,T_{i+1}=v)
   \log\frac{\Pr_t(T_i=u,T_{i+1}=v)}
             {\Pr_t(T_i=u)\Pr_t(T_{i+1}=v)}\\
 &\qquad\times
   \underbrace{\left(\sum_{y:\tau(y)=u}\frac{\pi(y)}{P_u}\right)}_{=1}
   \underbrace{\left(\sum_{z:\tau(z)=v}\frac{\pi(z)}{P_v}\right)}_{=1}\\
 &=I(T_i;T_{i+1}\mid T_{i-1}=t).
\end{align*}
The two sums equal one because $P_u=\sum_{\tau(y)=u}\pi(y)$ and similarly
for $P_v$.  The conditional future law is the same for every previous token
of type $t$, so averaging over the previous token, or equivalently over its
type with weights $P_t$, yields
\[
 I(X_i;X_{i+1}\mid X_{i-1})
 =I(T_i;T_{i+1}\mid T_{i-1}).
\]
Every interior term is the same positive constant by stationarity and is
at most $\log2$, since the types are binary.
Together with $I(X_1;X_2)=c>0$, the identity above proves
$\mathrm{DTC}(P)=\Theta(N)$.

All constants used for the signal and dependence depend only on
$p=2^{-\RankTailExponent}$.  Each main benchmark has
$\EdgeSignal\to0$, so eventually $\EdgeSignal\le\omega_\star(p)$;
its prescribed vocabulary sequence supplies the remaining asymptotic
part of (RF\textcolor{red}{; \cref{app:ass:rate-regime}}).  This proves membership in those benchmark classes.
\end{proof}

\paragraph{What the substitution establishes.}
For this path, the TC/DTC sufficient counts listed in
Table~\ref{tab:resource-comparison} become
$\widetilde O(1+N/\varepsilon)$ at forward-KL tolerance $\varepsilon$.
These displayed guarantees alone do not yield a sublinear count at fixed
accuracy.  They do not establish that those samplers require linear work:
exact singleton updates already give an $N$-call cap.
Our case-(ii) benchmark also applies, with the balanced count in
\eqref{main:eq:balanced-upper} on its sublinear range.
The calculation concerns one class member, not every forest or every
dependence-adaptive schedule.

\begin{proposalcontext}{Other dependence measures}
No corresponding quantitative comparison is made here for entropy-based
bounds or refined unmasking growth complexity.
\end{proposalcontext}

\subsection{Effective total correlation on the same path}
\label{app:comparison-effective-tc}

To compare with \citet{dmitriev2026efficient}, let $Y(t)$ be an
independently masked copy of $X\sim P$: each $Y_i(t)$ equals $X_i$
with probability $e^{-t}$ and equals $\MASK$ otherwise.
Write $Y_{-(i,j)}(t)$ for all coordinates except $i,j$.
Their effective total correlation, written $\EffectiveTC$ here, is
\begin{equation}
 \EffectiveTC(P)
 :=\int_0^\infty\min\{1,t\}\MaskedPairInformation_P(t)\,dt,
 \qquad
 \MaskedPairInformation_P(t)
 :=\sum_{i\ne j}I\bigl(Y_i(t);Y_j(t)\mid Y_{-(i,j)}(t)\bigr).
 \label{app:eq:effective-tc-definition}
\end{equation}
These are equation~(16) of
\href{https://arxiv.org/html/2602.15008v2}{arXiv:2602.15008v2}.
Its Lemma~16 gives
$\EffectiveTC(P)\le\min\{\mathrm{TC}(P),\mathrm{DTC}(P)\}$.
Large TC and DTC alone do not imply large effective TC: that paper's
Proposition~5 supplies a counterexample. We therefore evaluate
$\EffectiveTC$ on our path directly.

\paragraph{Intuition.}
For an interior edge, reveal its two endpoints and their two outer
neighbors. This occurs with probability $e^{-4t}$.
Fixing those neighbors isolates a dependent two-vertex conditional law,
regardless of other reveals. Summing over the $N-3$ interior edges gives
$\MaskedPairInformation_P(t)\gtrsim_{\RankTailExponent}Ne^{-4t}$;
integrating over a fixed time interval keeps a linear contribution.

\begin{corollary}[Linear effective TC on the comparison path]
\label{app:cor:path-effective-tc}
For the path law in Proposition~\ref{app:prop:path-in-class},
with fixed $\RankTailExponent>1$,
$\EffectiveTC(P)=\Theta(N)$ as $N\to\infty$.
The constants depend only on $\RankTailExponent$, not on $\VocabSize$.
\end{corollary}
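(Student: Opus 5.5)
The plan is to sandwich $\EffectiveTC(P)$ between two linear bounds. The upper bound is immediate: Lemma~16 of \citet{dmitriev2026efficient} gives $\EffectiveTC(P)\le\mathrm{TC}(P)$, and Proposition~\ref{app:prop:path-in-class} gives $\mathrm{TC}(P)=(N-1)c$ with $c$ depending only on $p=2^{-\RankTailExponent}$. All the work is in the lower bound, where I want $\MaskedPairInformation_P(t)\ge c' (N-3)e^{-4t}$ for a constant $c'>0$ depending only on $p$. Integrating that over $t\in[1,2]$, where $\min\{1,t\}=1$, then gives $\EffectiveTC(P)\ge c'(N-3)(e^{-4}-e^{-8})/4=\Omega(N)$.

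For the pointwise bound I drop all nonnegative terms except those for ordered interior edges $(i,i+1)$ with $2\le i\le N-2$. Fix such an edge and write $W=Y_{-(i,i+1)}(t)$. Let $S$ be the event that both outer neighbors $i-1$ and $i+2$ are revealed in $W$; the event $S$ is a function of $W$.
\begin{enumerate}
 \item Conditional mutual information averages over values of $W$, so
 \[
  I(Y_i;Y_{i+1}\mid W)\ge \sum_{w\in S}\Pr(W=w)\,I(Y_i;Y_{i+1}\mid W=w).
 \]
 \item On $S$, the path Markov property (Lemma~\ref{app:lem:residual-markov}) says that, given $X_{i-1},X_{i+2}$, the pair $(X_i,X_{i+1})$ is conditionally independent of every other coordinate. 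It is also independent of the mask indicators, which are independent of $X$. Hence the conditional law of $(X_i,X_{i+1})$ given $W=w$ depends only on $(x_{i-1},x_{i+2})$.
 \item The masks of $i$ and $i+1$ are independent of everything else. A coordinate that is revealed with probability $e^{-t}$ and otherwise $\MASK$ still carries information, and the chain rule gives
 \[
  I(Y_i;Y_{i+1}\mid W=w)=e^{-2t}\,I(X_i;X_{i+1}\mid X_{i-1},X_{i+2}\text{ fixed}).
 \]
 \item $\Pr(S)=e^{-2t}$. Together with the $e^{-2t}$ from the previous step, each interior edge contributes at least $e^{-4t}$ times an average over the outer neighbor types of $I(X_i;X_{i+1}\mid x_{i-1},x_{i+2})$.
\end{enumerate}

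The key remaining step is to show that this conditional mutual information is a positive constant independent of $\VocabSize$. As in step~5 of Proposition~\ref{app:prop:path-in-class}, the emission factors $\pi(y)/P_u$ cancel in the likelihood ratio. So the token mutual information equals the type mutual information $I(T_i;T_{i+1}\mid T_{i-1}=s,T_{i+2}=s')$. The conditional type table is proportional to
\[
 J(s,u)J(u,v)J(v,s')/(P_uP_v),
\]
which is $J$ with its rows and columns rescaled by positive factors. Its determinant is therefore a positive multiple of $\det J=p(1-2p)/2>0$, so the two types are dependent. For each of the four choices of $(s,s')$ this gives a positive mutual information, and their minimum $c'(p)$ depends only on $\RankTailExponent$.

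I expect the main obstacle to be the bookkeeping in steps 2–3. One has to verify that conditioning on the whole masked vector $W$, and not just the two neighbors, really reduces to fixing $x_{i-1},x_{i+2}$. The masks are independent of $X$ and $\MASK$ values are uninformative, so this should hold. One also has to handle the factor $e^{-2t}$ from the possibly masked endpoints correctly. The rest is the integration above.
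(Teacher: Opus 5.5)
Your proposal is correct and follows essentially the same route as the paper. Both arguments keep interior edges whose two outer neighbors are visible (weight $e^{-2t}$) and pick up a further $e^{-2t}$ from the endpoint masks. They then reduce to the conditional type table, whose determinant is a positive multiple of $\det J$, and integrate, with the upper bound taken from Lemma~16 of \citet{dmitriev2026efficient}. The differences are cosmetic and do not affect the conclusion: you bound above by $\mathrm{TC}(P)$ rather than $\mathrm{DTC}(P)$, use the exact token--type equality where the paper uses data processing, keep one orientation per edge, and integrate only over $t\in[1,2]$.
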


\begin{proof}
\begin{samepage}
\emph{1. Expand over the visible coordinates.}
Take $N\ge4$, fix $t>0$, and put $\lambda=e^{-t}$.
For a pair $i\ne j$, the visible set outside the pair is
$S\subseteq[N]\setminus\{i,j\}$ with probability
$\lambda^{|S|}(1-\lambda)^{N-2-|S|}$.
Put $M_k:=\mathbf 1\{Y_k(t)\ne\MASK\}$, so the $M_k$ are independent
$\operatorname{Ber}(\lambda)$ variables, independent of $X$.
Fix an exterior mask pattern with visible set $S$ and revealed values
$X_S=x_S$.  The conditioning event is
\[
 \mathcal C=\{M_k=\mathbf 1\{k\in S\}\ \text{for every }k\notin\{i,j\},
                \ X_S=x_S\}.
\]
For this calculation abbreviate $Y_k(t)$ to $Y_k$.
Since $M_i$ is determined by $Y_i$ and $M_j$ by $Y_j$, two applications
of the conditional chain rule give
\begin{align*}
 I(Y_i;Y_j\mid\mathcal C)
 &=I(M_i;Y_j\mid\mathcal C)
   +I(Y_i;Y_j\mid M_i,\mathcal C)\\
 &=I(Y_i;M_j\mid M_i,\mathcal C)
   +I(Y_i;Y_j\mid M_i,M_j,\mathcal C)\\
 &=\lambda^2 I(X_i;X_j\mid X_S=x_S).
\end{align*}
Here $I(M_i;Y_j\mid\mathcal C)=0$ and
$I(Y_i;M_j\mid M_i,\mathcal C)=0$ by mask independence.
The last conditional mutual information averages over $(M_i,M_j)$.
It is zero whenever either endpoint is masked, because that $Y$ value is
constant; when both are visible it equals
$I(X_i;X_j\mid X_S=x_S)$, with weight $\lambda^2$.
Averaging over $x_S$ and then over exterior mask patterns gives
\begin{equation}
 I\bigl(Y_i(t);Y_j(t)\mid Y_{-(i,j)}(t)\bigr)
 =\lambda^2
   \sum_{S\subseteq[N]\setminus\{i,j\}}
    \lambda^{|S|}(1-\lambda)^{N-2-|S|}
    I(X_i;X_j\mid X_S).
 \label{app:eq:masked-pair-information}
\end{equation}
\end{samepage}

\emph{2. Retain a positive dependence across each interior edge.}
Recall the two types $T_i=\tau(X_i)$ and their transition matrix
$\bar K(a,b)=J(a,b)/P_a$ from the proof of
Proposition~\ref{app:prop:path-in-class}.
All entries are positive, and
$\det\bar K=\det J/(P_{\mathsf A}P_{\mathsf B})>0$.
For an interior pair $i,i+1$, where $2\le i\le N-2$, fix the outer
types $T_{i-1}=a,T_{i+2}=b$.
The conditional two-by-two table is
\begin{align*}
 &\Pr(T_i=u,T_{i+1}=v\mid T_{i-1}=a,T_{i+2}=b)
 =\frac{\bar K(a,u)\bar K(u,v)\bar K(v,b)}{Z_{a,b}},\\
 &Z_{a,b}:=\sum_{u,v}\bar K(a,u)\bar K(u,v)\bar K(v,b)>0.
\end{align*}
Here and below type indices range over $\{\mathsf A,\mathsf B\}$.
This table is obtained from $\bar K$ by multiplying row $u$ by
$\bar K(a,u)$, column $v$ by $\bar K(v,b)$, and the whole table
by $Z_{a,b}^{-1}$. Its determinant is
\[
 \frac{\bar K(a,\mathsf A)\bar K(a,\mathsf B)
       \bar K(\mathsf A,b)\bar K(\mathsf B,b)}
      {Z_{a,b}^{\,2}}\det\bar K>0.
\]
Thus the conditional pair has positive mutual information for each
boundary type pair, by the \hyperlink{app-positive-pair-mi}{KL and rank-one
criterion} used above.  There are only four such pairs.  By stationarity,
the positive constant
\[
 c_{\mathrm{pair}}
 :=\min_{a,b}I(T_2;T_3\mid T_1=a,T_4=b)>0
\]
depends only on $p=2^{-\RankTailExponent}$.

If $\{i-1,i+2\}\subseteq S\subseteq[N]\setminus\{i,i+1\}$,
path factorization shows that other
observed coordinates affect only factors outside this pair.
Conditional on $X_S=x_S$, the type pair therefore has the displayed
table with $a=\tau(x_{i-1})$, $b=\tau(x_{i+2})$.
Applying data processing to the two type maps at each $x_S$ and then
averaging gives
\begin{align*}
 I(X_i;X_{i+1}\mid X_S)
 &=\sum_{x_S}P(X_S=x_S)
       I(X_i;X_{i+1}\mid X_S=x_S)\\
 &\ge\sum_{x_S}P(X_S=x_S)
       I(T_i;T_{i+1}\mid X_S=x_S)\\
 &\ge\sum_{x_S}P(X_S=x_S)c_{\mathrm{pair}}
 =c_{\mathrm{pair}}.
\end{align*}
The total mask weight of these sets is
\[
 \sum_{\substack{S\subseteq[N]\setminus\{i,i+1\}\\
                 \{i-1,i+2\}\subseteq S}}
   \lambda^{|S|}(1-\lambda)^{N-2-|S|}
 =\lambda^2\bigl(\lambda+(1-\lambda)\bigr)^{N-4}
 =\lambda^2.
\]
The first $\lambda^2$ requires the two outer neighbors to be visible;
each remaining coordinate contributes $\lambda+(1-\lambda)=1$.
Substituting into \eqref{app:eq:masked-pair-information} yields
$I(Y_i(t);Y_{i+1}(t)\mid Y_{-(i,i+1)}(t))
 \ge c_{\mathrm{pair}}\lambda^4$.

\emph{3. Sum the edges and integrate.}
There are $N-3$ such edges and two orientations per edge in
$\MaskedPairInformation_P(t)$. Nonnegativity of the other terms gives
\[
 \MaskedPairInformation_P(t)\ge
 2(N-3)c_{\mathrm{pair}}e^{-4t}.
\]
Hence
\[
 \EffectiveTC(P)\ge
 2(N-3)c_{\mathrm{pair}}
 \int_0^\infty\min\{1,t\}e^{-4t}\,dt
 =\Omega_{\RankTailExponent}(N).
\]
The integral is a positive finite absolute constant.
The reverse bound follows from
$\EffectiveTC(P)\le\mathrm{DTC}(P)=O_{\RankTailExponent}(N)$,
using the cited Lemma~16 and Proposition~\ref{app:prop:path-in-class}.
\end{proof}

\section{Source locators and conventions for Table~\ref{tab:resource-comparison}}
\label{app:oracle-comparison}

This appendix is a verification aid for Table~\ref{tab:resource-comparison},
not a separate survey.  For each external row, it records the linked version,
the theorem or definition supporting the entry, and only the conversion needed
to read the table.  Numbering below refers to the linked version and may differ
from another bibliographic version.

\paragraph{Common conventions.}
For an external randomized schedule $M$ with conditional output law $Q_M$,
\emph{Mean KL/TV} denotes $\mathbb E_M D(P,Q_M)$, not
$D(P,\mathbb E_MQ_M)$; an unqualified TV guarantee concerns the randomized
output law.  The marker \emph{exp.} applies to both resource
columns, while \emph{same} means that the submission bound also bounds depth.
Appendix~\ref{app:comparison-states} gives the conversion from conditional-row
queries or unmasking updates to our masked-state submissions.  Fixed accuracy
and polynomial vocabulary are imposed only when comparing growth in $N$.
Uniform counterfactual-row error and an average under a data--mask or trajectory
law are different oracle assumptions; Table~\ref{tab:resource-comparison}
does not identify them.

\subsection{Distribution-general sampler rows}
\label{app:comparison-exact-oracles}

\begin{description}
\item[\Citet{anari2024parallel}.]
\href{https://arxiv.org/pdf/2408.09442v1}{arXiv:2408.09442v1},
Theorem~2 and Algorithm~4: exact conditional rows and exact output, with
$O(N)$ expected row queries and $\widetilde O(N^{2/3})$ expected depth for
polynomial vocabulary.  Appendix~\ref{app:comparison-states} supplies the
submission conversion used in the table.

\item[\Citet{anari2026autospeculation}.]
\href{https://arxiv.org/pdf/2511.07869v1}{arXiv:2511.07869v1},
Theorem~27 and footnote~3: $O(N\log N)$ expected full-row queries and
$O(\sqrt{N\log\VocabSize\,(\log N)^3})$ expected depth.
Remark~32, equation~(25), gives the noisy-row TV entry:
\begin{equation}
 e\le\min\{\tau/N,N^{-3/2}\}
 \quad\Longrightarrow\quad
 \mathrm{TV}(P,Q)\le\tau.
 \label{app:eq:autospec-native-tv}
\end{equation}
The condition is uniform TV accuracy of the normalized rows; the conclusion is
TV for the output law.  Exact rows give exact output.
\end{description}

\subsection{Dependence-based schedule rows}
\label{app:comparison-unmasking-errors}

\begin{description}
\item[\Citet{li2025convergence}.]
\href{https://arxiv.org/html/2505.21400v2}{arXiv:2505.21400v2},
Definition~1, equation~(7), defines the predictor error averaged over target
data, random masks, and the weighted time index.  Theorem~1, equation~(8), and
Corollary~1, equation~(9), give for near-balanced blocks
\[
 \mathbb E_M\mathrm{KL}(P\Vert Q_M)
 \le \frac{C}{K}\bigl(\mathrm{TC}(P)+\mathrm{DTC}(P)\bigr)
      +\varepsilon_{\rm train},
\]
where $M$ is the random schedule and $K$ is its number of updates.  The table
allocates a constant accuracy budget to $\varepsilon_{\rm train}$ and
substitutes the path calculation in
Proposition~\ref{app:prop:path-in-class}; no uniform counterfactual-row
condition is inferred from this average.

\item[\Citet{chen2026optimal}.]
\href{https://arxiv.org/html/2511.04647v2}{arXiv:2511.04647v2},
Definition~1.1 specifies schedule-averaged forward KL, Definition~2.1 defines
one partial-assignment query returning all remaining conditional rows, and
Theorem~1.9 gives, for a supplied TC or DTC upper bound $\widehat T$,
\begin{equation}
 \mathbb E_M\mathrm{KL}(P\Vert Q_M)\le\varepsilon,
 \qquad
 K\le2+(1+\log N)(1+\lceil\widehat T/\varepsilon\rceil).
 \label{app:eq:chen-native-bound}
\end{equation}
The table uses exact rows and a constant-factor supplied bound.  On the path of
Proposition~\ref{app:prop:path-in-class}, both TC and DTC are $\Theta(N)$.
\end{description}

\paragraph{Baselines and this paper.}
Exact singleton sampling uses $N$ submissions and depth $N$ by the chain rule.
For marker $a$, one exact product batch has forward KL
$\mathrm{TC}(P)$, which is linear on the path of
Proposition~\ref{app:prop:path-in-class}; it is only a resource baseline.
Our two rows are the balanced bounds in
Theorem~\ref{main:thm:upper-curve} and
\eqref{main:eq:balanced-upper}: case (i) uses uniform row squared-Hellinger
error and reports mean TV, while case (ii) uses uniform row forward KL and
reports mean forward KL.  Their exact finite statements are
Theorems~\ref{app:thm:finite-upper} and
\ref{app:thm:finite-kl-upper}.

\subsection{Supplementary citation map}
\label{app:related-work-supplement}

The following citations supply context only; they are not used to derive a row
of Table~\ref{tab:resource-comparison}.

{\color{red}
\paragraph{Modeling background.}
\begin{itemize}
\item \emph{Forest approximation.}
\citet{liu2011forestdensity} study nonparametric density estimation using
forests, allowing the true continuous distribution to lie outside that
family (\href{https://jmlr.org/papers/v12/liu11a.html}{abstract}).
This motivates forests as an approximation class; our target instead
satisfies exact forest factorization.

\item \emph{Prediction without exact structure recovery.}
\citet{bresler2020predictions} study tree Ising prediction using
\emph{small-set TV}, which compares marginals on small subsets
(\href{https://arxiv.org/abs/1604.06749}{abstract}).
This differs from the global TV objective of
\citet{daskalakis2021treeising} and our case (i).

\item \emph{Learned conditional distributions.}
\citet{heckerman2000dependency} allow separately learned local conditionals
that need not be compatible with a joint law
(\href{https://www.jmlr.org/papers/volume1/heckerman00a/heckerman00a.pdf\#page=7}{Section 3,
pp.~55--56}). Their inference procedure uses ordered pseudo-Gibbs updates;
our interface supplies masked-state rows for irreversible commits.
\end{itemize}

\paragraph{Algorithmic antecedents.}
\label{app:algorithmic-antecedents}
The construction in Section~\ref{sec:main-algorithm} connects the following
design principles to hidden-forest sampling.
\begin{itemize}
\item \emph{Shared evaluations (Steps 2--3).}
\citet{coleman1983estimation} use graph coloring to reduce function
evaluations for sparse Jacobian estimation (see the paper's
\href{https://epubs.siam.org/doi/10.1137/0720013}{abstract}).
Our packed probes share evaluations across source--readout pairs;
the separation conditions (R)/(S) in Section~\ref{sec:main-packed-screen}
make their conditional replies reproduce single-source tests.

\item \emph{Hashing and majority recovery (Step 3).}
\citet{bshouty2018exact},
\href{https://arxiv.org/pdf/1706.06934v1\#page=17}{arXiv v1, Section 4.3,
Theorem 8, Figure 1, steps 1--3}, hash variables, lift identified buckets
back to variables, and aggregate by majority.
Our bucket tests use conditional-row diameters; we recover low-degree
neighborhoods rather than a Boolean function.

\item \emph{Peeling (Step 4).}
\citet{becker2010referee},
\href{https://arxiv.org/pdf/1009.4447v2\#page=9}{arXiv v2, Section 3.1, p.~9},
reconstruct a forest by decoding leaves from exact degrees and neighbor-ID
sums, then removing them and updating these summaries.
Here, low-degree reports instead select singleton commits that shrink
the high-degree core (\cref{main:eq:peeling-contraction}), without requiring
complete neighborhoods for the committed vertices.

\item \emph{Separator layers (Steps 5--6).}
\citet{iyer1988optimal} relate vertex ranking to separator-tree height
(see their
\href{https://research.ibm.com/publications/optimal-node-ranking-of-trees}{abstract}).
In our centroid schedule, earlier layers have higher ranks: their committed
vertices separate same-layer vertices into distinct residual components.
We use centroid recursion, not their optimal-ranking algorithm.
\end{itemize}
}

\paragraph{\textcolor{red}{Additional context.}}
Diffusion foundations and discrete or language formulations include
\citet{sohldickstein2015deep,ho2020denoising,song2021score,
campbell2022continuous,li2022diffusionlm,he2023diffusionbert}.
Parallel-time and complementary complexity perspectives include
\citet{shih2023parallel,chen2024sublinear,yao2026parallelintime,
feng2025theoretical,jiang2026optimal,zhang2026generation,liang2026sharp}.

Adaptive order, lookahead, confidence, and schedule analyses include
\citet{li2024promises,kim2025train,hayakawa2026demystifying,
lee2025lookahead,fu2025bits,lavenant2025error,zhao2026adaptation,
cai2026confidence,jys}.
For the specific round statement cited in the main text, the locator for
\citet{fu2025bits} is
\href{https://arxiv.org/html/2511.21103v1}{arXiv:2511.21103v1},
Assumption~3.1 and Theorem~3.2.
Dependence-based, remasking, and within-batch modeling perspectives include
\citet{dmitriev2026efficient,dmitriev2026remasking,wainwright2026geometry,
bansal2026randomwalks,hayakawa2025distillation,liu2024discrete,
lezama2022discrete}.

Related oracle, learning, and convergence settings include approximate-density
and conditional-sampling access
\citep{golowich2026testtime,canonne2015conditional,canonne2021random,
chen2021junta,blanca2023coordinate}, distributed sampling
\citep{feng2020local}, learned-denoiser error
\citep{wakasugi2025state}, and discrete-diffusion convergence
\citep{zhang2025convergence,ren2025discrete,liang2025absorb}.
These works use objectives or access models different from the fixed
singleton-conditional interface studied here; no quantitative conversion is
claimed beyond the four source rows above.

\subsection{Reading Table~\ref{tab:resource-comparison}}
\label{app:comparison-table-notes}

\begin{itemize}
\item The external entries are sufficient upper bounds, not lower bounds on
their samplers.  The two Anari rows retain their distribution-general
guarantees.  For the TC/DTC rows, Appendix~\ref{app:comparison-path}
substitutes one path with
$\mathrm{TC}(P)=\mathrm{DTC}(P)=\Theta(N)$; this does not imply that those
samplers require linear work.

\item Marker $b$ is exactly the pair of row-precision requirements in
\eqref{app:eq:autospec-native-tv}.  For
\citet{chen2026optimal}, $\widehat T$ is a supplied TC or DTC upper bound.
For the Li--Cai row, $K=N$ removes the factorization term in its Theorem~1;
for the Chen et al. row, exact singleton sampling gives the $N$-call cap.

\item The scaling comparison fixes positive output tolerance and polynomial
vocabulary.  Constant allocations to source-specific learning or sampling
error do not change the displayed powers of $N$.

\item Our rows use $\RankTailExponent>1$ and the balanced examples of
\eqref{main:eq:balanced-upper}.  Their common leading resource power is
$2/3+1/(9\RankTailExponent)<1$; the target class and oracle assumptions remain
part of the comparison.
\end{itemize}

\section{Ideal-target fixed-accuracy protocol}
\label{app:experiments}

\subsection{Targets}
\label{app:experiment-targets}

Let $\Vocab=\{0,\ldots,2047\}$, with $\phi(0)=1$, $\phi(1)=-1$,
and $\phi(a)=0$ otherwise. On a hidden forest
$\TargetForest=(\PositionSet,\TargetEdges)$, the target law is
\begin{equation}
 \TargetLaw(x)=\frac{1}{Z_F}
 \prod_{i=1}^N\psi_i(x_i)
 \prod_{\{i,j\}\in\TargetEdges}
 \bigl(1+w_{ij}\phi(x_i)\phi(x_j)\bigr),
 \qquad x\in\Vocab^N,
 \label{app:eq:experiment-target}
\end{equation}
where $Z_F$ is the normalizer. For independent
$h_i\sim\operatorname{Unif}[-1,1]$, its unary factors are
\begin{equation}
 \psi_i(a)=\frac{1}{0.4e^{h_i}+0.4e^{-h_i}+0.2}
 \begin{cases}
 0.4e^{h_i},&a=0,\\
 0.4e^{-h_i},&a=1,\\
 0.2/2046,&a\in\{2,\ldots,2047\}.
 \end{cases}
 \label{app:eq:experiment-unary}
\end{equation}
For each target draw, a master field vector is shared by prefixes across
sizes and forest families, and vertex labels are randomly permuted.
Edge weights are given in Table~\ref{tab:experiment-targets}.
The frozen conditional oracle is exact,
$\OracleRow{y}{j}=\ExactRow{y}{j}$, and the sampler is not given the edges.
We test $N\in\{8192,10240,12288,14336,16384\}$.

\begin{table}[htbp]
 \centering
 \caption{Ideal target families and fixed absolute accuracy thresholds.
 Here $\Delta_\star(N)=\max\{16,\lceil\sqrt N\rceil\}$, and
 $K_\star=10^{-10}L_0$ uses the reference in
 Appendix~\ref{app:experiment-accuracy}.}
 \label{tab:experiment-targets}
 \begin{tabular}{@{}llcc@{}}
 \toprule
 Family & Structure before relabeling & $w_F(N)$ & $K_\star$ \\
 \midrule
 Matching & Disjoint pairs & $0.5$ & $6.375289\times10^{-9}$ \\
 Path & One path & $0.25$ & $2.801962\times10^{-9}$ \\
 Binary tree & Complete binary tree & $1/6$ & $1.226162\times10^{-9}$ \\
 Growing stars & At most $\Delta_\star(N)$ leaves per star
   & $0.5/\Delta_\star(N)$ & $5.138930\times10^{-12}$ \\
 \bottomrule
 \end{tabular}
\end{table}

\subsection{Resources and accuracy}
\label{app:experiment-accuracy}

\begin{equation}
 Q_{\mathrm{tot}}=Q_{\mathrm{pre}}+\CounterfactualQueries+\CommitRounds,
 \label{app:eq:experiment-total-rounds}
\end{equation}
counts preprocessing submissions, counterfactual submissions, and nonempty
commit rounds. Each submission is charged once, including repeated states;
rows sharing a submission share its cost. For random sampling with $B$
balanced batches, $Q_{\mathrm{tot}}=\CommitRounds=B$.
The proposal has no external $Q_{\mathrm{tot}}\le N$ cap.

At history $H_{t-1}=(G_{t-1},x_{G_{t-1}})$ before batch $B_t$, let $y_t$ reveal
$x_{G_{t-1}}$ and mask the other positions, and put
$q_{t,i}=\OracleRow{y_t}{i}$ and
$P_t=\TargetLaw(X_{B_t}\in\cdot\mid X_{G_{t-1}}=x_{G_{t-1}})$,
using the commit index of \cref{app:def:admissible-algorithm}. We measure
\begin{equation}
 K=\sum_{t=1}^{\CommitRounds}
 \KLDivergence{\bigotimes_{i\in B_t}q_{t,i}}{P_t}.
 \label{app:eq:experiment-kl}
\end{equation}
This is a product-to-joint sum along the sampled history, distinct from
the target-to-output risk $\KLSeedRisk$.
Passive evaluation adds no oracle rounds: after conditioning and
marginalization, boundary factors of arity at most four are enumerated
exactly; larger factors use 128 independent Monte Carlo draws each.
$\operatorname{SE}_{\mathrm{MC}}$ is the conditional standard error of
the total estimate. A completed run passes the empirical criterion
\begin{equation}
 K+2\operatorname{SE}_{\mathrm{MC}}\le K_\star,
 \qquad K_\star=10^{-10}L_0.
 \label{app:eq:experiment-accuracy}
\end{equation}
For each forest family, the reference target $P^{(0)}$ has $N=2048$ and
\begin{equation}
 L_0=\KLDivergence{\bigotimes_{i=1}^{2048}P_i^{(0)}}{P^{(0)}},
 \qquad P_i^{(0)}=\mathcal L_{P^{(0)}}(X_i).
 \label{app:eq:experiment-reference-kl}
\end{equation}
Thus each absolute threshold in Table~\ref{tab:experiment-targets}
is fixed across sizes and methods.

\subsection{Sampling and evaluation}
\label{app:experiment-proposal}

\paragraph{Proposal.}
\textcolor{red}{We tune color counts, degree cutoffs, and thresholds empirically,
without enforcing the theoretical parameter prescriptions
(\cref{app:algorithm-overview,app:eq:random-color-parameters}); the target family
also need not satisfy (RT; \cref{app:eq:RT}) with
$\ResponseConstant=\ResponseExponent=1$.}
Within each $(\text{forest},N)$ cell, we screen \textcolor{red}{candidate settings},
then evaluate six low-cost diverse candidates on target
draws $171,172,173$ crossed with sampler streams $271,272$.
We freeze the minimum-mean-$Q_{\mathrm{tot}}$ candidate that completes
and passes on all six pairs at development threshold $10^{-3}L_0$.
Screening and development use respectively 32 and 64 Monte Carlo draws
per nonenumerated factor; evaluation uses 128.
All 120 fresh proposal evaluation runs have structural exact zero
discrepancy, $K=\operatorname{SE}_{\mathrm{MC}}=0$.

\paragraph{Random baseline.}
\label{app:experiment-random}
For each evaluation run, fix a random permutation and commit random stream
across budgets; budget $B$ partitions the permutation into $B$ balanced
nonempty batches. Since adjacent partitions need not be nested, among
observed budgets $b_1<\cdots<b_m=N$ select the first passing suffix:
\begin{equation}
 B_{\mathrm{safe}}=b_{k_\star},\qquad
 k_\star=\min\bigl\{k:\text{every observed }b_\ell,\ \ell\ge k,
                    \text{ passes \eqref{app:eq:experiment-accuracy}}\bigr\}.
 \label{app:eq:experiment-random-suffix}
\end{equation}
The adjacent unsafe--safe bracket is refined to width at most $0.05N$
(observed maximum: $0.03125N$); $B=N$ is the exact singleton endpoint.
This is per-run selection from each evaluation curve, not a globally frozen
budget. Reported cost is $B_{\mathrm{safe}}$ and excludes search costs,
just as proposal development is excluded.

\paragraph{Replicates.}
\label{app:experiment-replicates}
Each point uses target draws $196,197,198$, crossed with sampler streams
$292,293$ for the proposal and disjoint streams $294,295$ for random.
Means and min--max ranges use these same six runs.

\paragraph{Computation.}
\label{app:experiment-reproducibility}
Eight workers ran on an Apple-silicon Mac mini (macOS 15.7.7,
Python 3.13.15, NumPy 2.2.2) and a Linux 5.4.0 host (Python 3.10.12,
NumPy 1.26.4; one BLAS/OpenMP thread per worker).
The proposal/random study took about 64.7 minutes; the per-run random
budget searches took about 12.0 minutes in total.

\end{document}